\tikzset{>={angle 60}}
\newtheorem{thm}{Theorem}[section] 
\newtheorem{lemma}[thm]{Lemma} 
\newtheorem{prop}[thm]{Proposition}
\newtheorem{cor}[thm]{Corollary}
\newtheorem{ass}{Assumption}
\theoremstyle{definition} 
\newtheorem{defn}[thm]{Definition}
\newtheorem{rmk}[thm]{Remark}
\newcommand{\norm}[1]{\lVert#1\rVert}
\newcommand{\Norm}[1]{\left\lVert#1\right\rVert}
\newcommand{\abs}[1]{\left\lvert#1\right\rvert}
\newcommand{\E}[1]{\mathbb{E}[#1]}
\newcommand{\EE}[2]
{\mathbb{E}_{#1}[#2]}
\newcommand{\EEbig}[2]{\mathbb{E}_{#1}\left[#2\right]}
\newcommand*{\rd}{\mathop{}\!\mathrm{d}}
\newcommand\bz{\bm{z}}
\newcommand\bA{\mathbf{A}}
\newcommand\bI{\mathbf{I}}
\newcommand\bP{\mathbf{P}}
\newcommand\bS{\mathbf{S}}
\newcommand\bV{\mathbf{V}}
\newcommand\bW{\mathbf{W}}
\newcommand\bZ{\mathbf{Z}}
\newcommand\ep{\varepsilon}
\DeclareMathOperator{\TV}{TV}
\DeclareMathOperator{\Unif}{Unif}
\DeclareMathOperator{\Var}{Var}
\DeclareMathOperator{\Vol}{Vol}
\DeclareMathOperator{\sgn}{sgn}
\DeclareMathOperator{\diag}{diag}
\DeclareMathOperator{\poly}{poly}
\DeclareMathOperator{\supp}{supp}
\DeclareMathOperator{\thres}{thres}
\DeclareMathOperator{\sm}{softmax}
\DeclareMathOperator{\mix}{mix}
\DeclareMathOperator{\inn}{in}
\DeclareMathOperator{\out}{out}
\DeclareMathOperator{\pre}{pre}
\DeclareMathOperator{\PPO}{PPO}
\DeclareMathOperator{\dist}{dist}
\DeclareMathOperator{\clip}{clip}
\DeclareMathOperator{\prev}{prev}
\DeclareMathOperator{\SD}{SDA}
\DeclareMathOperator{\nbd}{nbd}
\newcommand\calA{\mathcal{A}}
\newcommand\calP{\mathcal{P}}
\newcommand\calR{\mathcal{R}}
\newcommand\calV{\mathcal{V}}
\newcommand\calI{\mathcal{I}}
\newcommand\BB{\mathcal{B}}
\newcommand\DD{\mathcal{D}}
\newcommand\HH{\mathcal{H}}
\newcommand\XX{\mathcal{X}}
\newcommand\MM{\mathcal{M}}
\newcommand\NN{\mathbb{N}}
\newcommand\ZZ{\mathbb{Z}}
\newcommand\RR{\mathbb{R}}
\newcommand\PP{\mathbb{P}}
\setlist[itemize,enumerate]{}
\newenvironment{manualtheorem}[1]{%
  \IfBlankTF{#1}
    {}
    {}%
  \manualtheoreminner
}{\endmanualtheoreminner}
\title{
\makebox[\textwidth][c]{Metastable Dynamics of Chain-of-Thought Reasoning:}\\
Provable Benefits of Search, RL and Distillation}
\author{
Juno Kim\thanks{University of Tokyo and RIKEN AIP. \texttt{junokim@g.ecc.u-tokyo.ac.jp}.} ,\,\,
Denny Wu\thanks{New York University and Flatiron Institute. \texttt{dennywu@nyu.edu}.} ,\,\,
Jason D.~Lee\thanks{Princeton University. \texttt{jasonlee@princeton.edu}.} ,\,\,
Taiji Suzuki\thanks{University of Tokyo and RIKEN AIP. \texttt{taiji@mist.i.u-tokyo.ac.jp}.}
 \vspace{-2.5mm}
}
\begin{document}
\etocdepthtag.toc{mtchapter}
\etocsettagdepth{mtchapter}{subsection}
\etocsettagdepth{mtappendix}{none}

\maketitle

\begin{abstract}
A key paradigm to improve the reasoning capabilities of large language models (LLMs) is to allocate more inference-time compute to search against a verifier or reward model. This process can then be utilized to refine the pretrained model or distill its reasoning patterns into more efficient models. In this paper, we study inference-time compute by viewing chain-of-thought (CoT) generation as a metastable Markov process: easy reasoning steps (e.g., algebraic manipulations) form densely connected clusters, while hard reasoning steps (e.g., applying a relevant theorem) create sparse, low-probability edges between clusters, leading to phase transitions at longer timescales. Under this framework, we prove that implementing a search protocol that rewards sparse edges improves CoT by decreasing the expected number of steps to reach different clusters. In contrast, we establish a limit on reasoning capability when the model is restricted to local information of the pretrained graph. We also show that the information gained by search can be utilized to obtain a better reasoning model: $(1)$~the pretrained model can be directly finetuned to favor sparse edges via policy gradient methods, and moreover $(2)$~a compressed \emph{metastable representation} of the reasoning dynamics can be distilled into a smaller, more efficient model.
\vspace{-2mm}
\end{abstract}

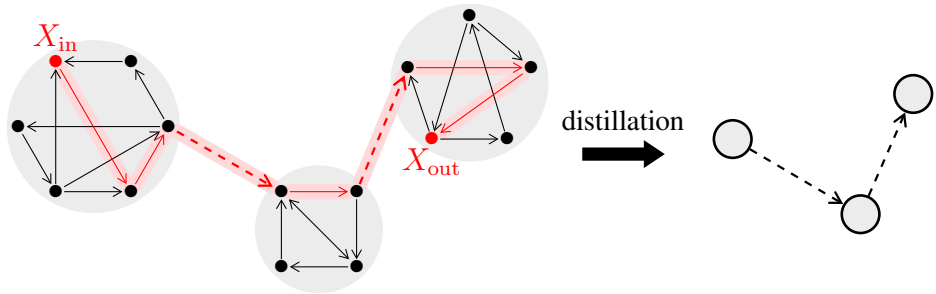
\begin{figure*}[t]
\centering
\begin{tikzpicture}

\node[inner sep = 0pt] (1) at (0,0) {\(\bullet\)};
\node[inner sep = 0pt] (2) at (1,0) {\(\bullet\)};
\node[inner sep = 0pt] (3) at (1.5,0.866) {\(\bullet\)};
\node[inner sep = 0pt] (4) at (1,1.732) {\(\bullet\)};
\node[color=red, inner sep = 0pt] (5) at (0,1.732) {\(\bullet\)};
\node[inner sep = 0pt] (6) at (-0.5,0.866) {\(\bullet\)};

\node[inner sep = 0pt] (7) at (3,0) {\(\bullet\)};
\node[inner sep = 0pt] (8) at (4,0) {\(\bullet\)};
\node[inner sep = 0pt] (9) at (4,-1) {\(\bullet\)};
\node[inner sep = 0pt] (10) at (3,-1) {\(\bullet\)};

\node[color=red, inner sep = 0pt] (11) at (5,0.7) {\(\bullet\)};
\node[inner sep = 0pt] (12) at (6,0.7) {\(\bullet\)};
\node[inner sep = 0pt] (13) at (4.68,1.65) {\(\bullet\)};
\node[inner sep = 0pt] (14) at (6.32,1.65) {\(\bullet\)};
\node[inner sep = 0pt] (15) at (5.5,2.34) {\(\bullet\)};

\begin{scope}[on background layer]
\fill[gray!15] (0.5,0.866) circle (1.15);
\fill[gray!15] (3.5,-0.5) circle (0.85);
\fill[gray!15] (5.5,1.45) circle (1.05);

\draw[line width=2mm, red!15, line cap=round] (5) -- (2) -- (3) -- (7) -- (8) -- (13) -- (14) -- (11);
\end{scope}

\draw[->] (1) edge (2) edge (3) edge (5);
\draw[color=red, ->] (2) edge (3);
\draw[->] (3) edge (4) edge (6);
\draw[->] (4) edge (5);
\draw[color=red, ->] (5) edge (2);
\draw[->] (6) edge (1);

\draw[color=red, ->] (7) edge (8);
\draw[<->] (7) edge (9);
\draw[->] (8) edge (9);
\draw[->] (9) edge (10);
\draw[->] (10) edge (7);

\draw[->] (11) edge (12) edge (13);
\draw[->] (15) edge (11) edge (14);
\draw[->] (12) edge (15);
\draw[color=red, ->] (13) edge (14);
\draw[color=red, ->] (14) edge (11);

\draw[color=red, line width=0.3mm, dashed, ->] (3) edge (7);
\draw[color=red, line width=0.3mm, dashed, ->] (8) edge (13);

\node[draw, circle, line width=1pt, minimum size=5mm, fill=gray!15] (A) at (9,0.7) {};
\node[draw, circle, line width=1pt, minimum size=5mm, fill=gray!15] (B) at (10.7,-0.3) {};
\node[draw, circle, line width=1pt, minimum size=5mm, fill=gray!15] (C) at (11.4,1.3) {};

\draw[dashed, ->, line width=0.3mm] (A) edge (B);
\draw[dashed, ->, line width=0.3mm] (B) edge (C);

\draw[-{Triangle[width=10pt,length=8pt]}, line width=5pt] (7.0,0.5) -- (8.1,0.5) node[midway, above, yshift=2.5pt] {{\small distillation}};

\node[color=red] at (0,2.05) {$X_{\inn}$};
\node[color=red] at (5,0.4) {$X_{\out}$};
 
\end{tikzpicture}
\caption{\small \textit{(Left)} Example of metastable graph with three clusters. Each state represents a logical assertion and edges correspond to reasoning steps. Solid and dashed arrows indicate easy (within-cluster) and hard (inter-cluster) reasoning steps, respectively. The goal of the reasoner is to retrieve a valid CoT path from $X_{\inn}$ to $X_{\out}$ (highlighted). Search aims to use CoT generated from the pretrained model to explore the linguistic model and identify hard steps, which can then be used to fine-tune the pretrained model via RL to improve its generation. \textit{(Right)} The coarse-grained dynamics of CoT at long timescales can be represented by a meta-chain on the set of clusters and distilled into a smaller model, which can generate reasoning paths more efficiently.}
\label{introfig}
\end{figure*}

\section{Introduction}

Pretraining and inference constitute two distinct computational phases in large language models (LLMs). The pretraining phase, during which the model learns from vast amounts of text data through next-token prediction \citep{radford2018improving}, is well known for its high computational demands, and its scaling behavior has been extensively studied \citep{kaplan2020scaling,hoffmann2022training,dubey2024llama}. 
On the other hand, inference (running the trained model to generate responses) was traditionally considered computationally inexpensive, until a recent paradigm shift demonstrating that model reasoning capabilities can drastically improve by allocating more computational resources during inference time \citep{jaech2024openai,guo2025deepseek,team2025kimi}. Hence it is crucial to understand the advantages scaling inference computation can provide beyond those achieved through pretraining \citep{jones2021scaling,snell2024scaling,wu2024inference}. 

Reasoning LLMs follow the chain-of-thought (CoT) \citep{nye2021show,wei2022chain} format where intermediate reasoning steps are iteratively generated before arriving at a final answer. Various reinforcement learning (RL) based approaches \citep{bai2022constitutional} have been proposed to improve CoT quality at inference time, such as process reward modeling \citep{lightman2023let,uesato2022solving}, Monte-Carlo Tree Search (MCTS)  \citep{silver2018general,feng2023alphazero,trinh2024solving,Xie24}, and data self-generation \citep{zelikman2022star,kumar2024training}. Theoretically, the benefit of (sufficiently long) CoT has been studied in terms of expressive power and statistical efficiency \citep{merrill2023expresssive,li2024chain,kim2024transformers,wen2024sparse}. 

Motivated by the discrete and sequential nature of CoT, we follow \citet{xu2019can,sanford2024understanding,abbe2024far,besta2024graph} and consider learning on graphs as an ideal abstraction of complex reasoning tasks. We model pretraining as the process of discovering the graph structure, or the \textit{linguistic} (world) model, upon which a \textit{reasoning} (inference) component is implemented to search for a valid path between states. Building on the observation that intermediate reasoning steps vary in difficulty, we assume the underlying graph consists of dense clusters connected by sparse, low-probability edges representing ``hard" reasoning steps. At a high level, this division parallels the System 1 vs. System 2 distinction discussed in \citet{kahneman2011thinking,xiang2025towards}. We further model CoT generation as a Markov process and characterize hitting/escape times by leveraging \emph{metastability theory} \citep{Bovier02,Betz16}, which describes systems with multiple locally stable states separated by high energy barriers, leading to a timescale separation between local and global transitions (e.g., a reasoner may become stuck at a critical reasoning step for an extended period). Our toy model captures key phenomena observed in the training of reasoning LLMs:

\begin{itemize}
    \item \textit{Benefit of search and RL.} Inference-time search elicits reasoning capabilities beyond pretraining \citep{jones2021scaling,yao2024tree,snell2024scaling}. Roughly speaking, running search on the pretrained graph identifies important reasoning steps, and then RL can improve the base linguistic model by modifying the graph and reweighting the corresponding transition probabilities. 
    \item \textit{Benefit of distillation.} Reasoning patterns can be distilled into a smaller model \citep{hsieh2023distilling,gandhi2024stream,guo2025deepseek}. By training on curated CoT data of the larger model, we can efficiently represent the reasoning dynamics with a much smaller meta-chain that compresses the dense clusters (representing ``easy'' steps). 
\end{itemize}

\subsection{Our Contributions}

We study the metastable Markov process underlying CoT generation (see Figure~\ref{fig1}) which provides insights into the roles of pretraining, search, RL, and distillation. Our contributions are summarized as follows. 

\begin{itemize}
\item In Section~\ref{sec:metastable}, we introduce a perturbed Markov chain model for CoT reasoning that differentiates between easy and hard reasoning steps through a dense-sparse structure. We develop a quantitative analysis of its metastable dynamics over long timescales by deriving tight bounds on the expected hitting times of target states. 
\item In Section~\ref{sec:search}, we demonstrate that inference-time search based on intrinsic reward improves hitting times by identifying key reasoning steps, whose generation can be enhanced directly or by fine-tuning the base model with RL. Moreover, optimization guarantees for pretraining and RL (PPO-Clip) are provided for a simple softmax model.
\item In Section~\ref{sec:distill}, we show that a compressed version of the CoT dynamics can be distilled to a smaller model by only learning the macroscopic cluster transitions. We prove that this representation efficiently maps out paths through clusters while preserving essential dynamical quantities of the original chain.
\item Finally, in Section~\ref{sec:hard} we prove that large test time compute (unbounded search) is necessary to solve a computational version of the path-finding task, by introducing a new statistical query (SQ) complexity measure that accounts for additional information the learner can access (e.g., CoT path, local search data).
\end{itemize}

All proofs are deferred to the appendix. A discussion of additional related works is provided in Appendix~\ref{app:related}. Metastable dynamics and hitting times are studied in Appendices~\ref{app:prelim}-\ref{app:perturb}, optimization dynamics are analyzed in Appendix~\ref{app:opt}, and learning-theoretic lower bounds are given in Appendix~\ref{app:hard}.
\section{Metastable Dynamics and Reasoning}
\label{sec:metastable}

\subsection{CoT as Markov Chains}

Our key insight to understanding inference-time search is to frame CoT reasoning as a metastable Markov process over an underlying linguistic model. Each state represents a logical assertion (e.g., a sentence or mathematical expression rather than a single token), and state transitions correspond to reasoning steps. The model distinguishes between \textbf{easy/trivial reasoning steps}, which form dense local clusters of roughly equivalent meaning, and \textbf{hard reasoning steps}, which form sparse connections between clusters of small probability $O(\ep)$. Reasoning paths sampled from this process typically spend a long time in each cluster before making a nontrivial jump to another cluster. This leads to a dynamical separation between fast and slow timescales, which we quantitatively study by tuning the degree $\ep$ of perturbation.

The setup is formalized as follows. Let $X^\ep=(X_t^\ep)_{t\ge 0}$ be a perturbed family of discrete-time stationary Markov chains on a (large but finite) state space $S$ with transition kernel $p^\ep$, such that $p^\ep$ uniformly converges to $p^0$ as $\ep\to 0$. We assume $X^\ep$ is recurrent for all $\ep\ge 0$ and irreducible for all $\ep>0$; also, $X^0$ is reducible and decomposes $S$ into $K$ disjoint $p^0$-ergodic components $C_1,\cdots,C_K$. We set $M:=\max_k |C_k|$ and assume that $\min_k|C_k| =\Theta(M)$ and $K\le\poly(M)$. Moreover, we denote the stochastic complement of $p^\ep$ corresponding to $C_k$ by the matrix $\bS_{kk}^\ep$; see Appendix~\ref{app:prelim} for definitions. The stationary distributions of $p^\ep,\bS_{kk}^\ep$ are denoted by $\pi^\ep,\pi_k^\ep$ and we set $\mu_k:=\pi_k^0$.

\begin{ass}[dense clusters]\label{ass:cluster}
For each $\bS_{kk}^\ep$, the pseudo-spectral gap $\gamma^\dagger(\bS_{kk}^\ep) \ge\gamma>0$ and the stationary measure $\pi_k^\ep$ satisfies $\pi_k^\ep(x)=\Theta(1/M)$ for all $x\in C_k$.
\end{ass}

We give verifiable conditions on the unperturbed kernel $p^0$ which guarantee Assumption~\ref{ass:cluster} in Proposition~\ref{thm:spectralgap}.

We further denote $E_0=\supp p^0$ and assume that $E=\supp p^\ep$ is fixed for all $\ep>0$. A reasoning path $X_{0:T}$ is termed \emph{valid} if $(X_{t-1},X_t)\in E$ for all $t\in[T]$. The set of sparse edges is denoted by $E_s=E\setminus E_0$. 

\begin{ass}[sparse edges]\label{ass:sparse}
There are at most $d_{\out}$ sparse edges from each of at most $n_{\out}$ sources in $C_k$, and there is at most one sparse edge between any two distinct clusters, with at least one sparse edge from each cluster. Moreover, $p^\ep(y|x)\propto\ep$ for each $(x,y)\in E_s$ with proportionality constant bounded above and below w.r.t. $M,K$, and $p^\ep(z|x)$ for $(x,z)\in E_0$ all decrease proportionally with $\ep$.
\end{ass}

\subsection{Reasoning Task}

The reasoner is given a pair of input and output states $(X_{\inn},X_{\out})$ sampled from a distribution $\DD$ on $S\times S$. The goal of the reasoner is to find a valid path from $X_{\inn}$ to $X_{\out}$. We are thus interested in the hitting time of CoT generation to understand inference-time computation. The overall difficulty of the task is measured by the minimum number of hard reasoning steps needed to reach $X_{\out}$ from $X_{\inn}$;  longer reasoning chains will require more sparse transitions. We assume the average difficulty of the task is lower bounded:

\begin{ass}\label{ass:sep}
For $(X_{\inn},X_{\out})\sim\DD$ and any valid path $X_{0:T}$ with $X_0=X_{\inn}$ and $X_T=X_{\out}$, it holds that
\begin{align*}
&\EE{\DD}{\min\abs{X_{0:T}\cap E_s}} =\Omega(K).
\end{align*}
\end{ass}

Note that $X_{\out}$ is already known in our setting. For example, for theorem proving, $X_{\inn}$ is the problem statement and $X_{\out}$ is the QED symbol; or when asked a ``why" question, $X_{\out}$ could be the conclusion, ``That is why..." Nonetheless, for many reasoning problems the answer is unknown and must be deduced or computed. We incorporate this aspect by introducing a `logical computation' task in Section~\ref{sec:hard}.

\subsection{Metastable Dynamics}

The \textit{hitting time} and \textit{return time} of $X^\ep$ to a set $A\subseteq S$ are defined as $\tau_A^\ep=\inf\{t\ge 0: X_t^\ep\in A\}$, $\bar{\tau}_A^\ep=\inf\{t> 0: X_t^\ep\in A\}$, respectively. Probabilities and expectations conditioned on the initial state $x$ are denoted as $\PP_x,\mathbb{E}_x$, etc.

In the context of perturbed Markov chains, a subset $M\subset S$ is defined as a \emph{metastable system} \citep{Bovier02} if
\begin{equation}\label{eq:bovier}
\lim_{\ep\to 0} \sup_{x\in M,y\notin M} \frac{\PP_x(\bar{\tau}_{M\setminus\{x\}}^\ep < \bar{\tau}_x^\ep)}{\PP_y(\bar{\tau}_M^\ep < \bar{\tau}_y^\ep)} = 0.
\end{equation}
That is, it is much easier to return to $M$ than to transition between different states in $M$. The following result, obtained from our perturbative analysis in Appendices~\ref{app:prelim}-\ref{app:perturb}, will motivate the distillation scheme described in Section~\ref{sec:distill}.

\begin{prop}\label{thm:bovier}
Any subset $S_\circ=\{x_1,\cdots,x_K\}\subset S$ of cluster representatives $x_k\in C_k$ constitutes a metastable system for $X^\ep$ in the sense of \eqref{eq:bovier} as $M\to\infty$.
\end{prop}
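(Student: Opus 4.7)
The plan is to verify Bovier's condition \eqref{eq:bovier} by establishing an upper bound of order $\ep$ on the numerator and a lower bound polynomial in $1/M$ on the denominator uniformly in $x\in S_\circ$ and $y\notin S_\circ$; their ratio then vanishes as $\ep\to 0$ for each fixed $M$, which is the intended asymptotic in the large-cluster regime.

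For the numerator, fix $x=x_k\in S_\circ\cap C_k$. Since $S_\circ\setminus\{x_k\}\subset\bigcup_{j\ne k}C_j$, reaching this set from $x_k$ forces the chain to first exit $C_k$, so
\[
\PP_{x_k}\bigl(\bar{\tau}_{S_\circ\setminus\{x_k\}}^\ep<\bar{\tau}_{x_k}^\ep\bigr)\le\PP_{x_k}\bigl(\bar{\tau}_{C_k^c}^\ep<\bar{\tau}_{x_k}^\ep\bigr).
\]
I would decompose the right-hand side into excursions from $x_k$. By Assumption~\ref{ass:cluster} the mean excursion length is $1/\pi^\ep(x_k)=\Theta(M)$, and the expected number of visits to each sparse-source state in $C_k$ during one excursion is $\Theta(1)$ (ratio of stationary measures). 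Combined with Assumption~\ref{ass:sparse}, which gives $\Theta(\ep)$ escape probability per visit to a sparse source (with at most $n_{\out}d_{\out}$ such edges), a first-order expansion yields per-excursion escape probability $O(\ep)$, hence the numerator is $O(\ep)$ uniformly in $x_k$.

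For the denominator, fix $y\notin S_\circ$, say $y\in C_j$. Since $x_j\in S_\circ\cap C_j$,
\[
\PP_y\bigl(\bar{\tau}_{S_\circ}^\ep<\bar{\tau}_y^\ep\bigr)\ge\PP_y\bigl(\bar{\tau}_{x_j}^\ep<\bar{\tau}_y^\ep\bigr),
\]
a within-cluster hitting event governed up to $O(\ep)$ corrections by the stochastic complement $\bS_{jj}^\ep$. Invoking the uniform pseudo-spectral gap $\gamma^\dagger(\bS_{jj}^\ep)\ge\gamma$ from Assumption~\ref{ass:cluster}, standard spectral bounds yield $\mathbb{E}_y[\tau_{x_j}^\ep]=\poly(M)$, and combined with $\mathbb{E}_y[\bar{\tau}_y^\ep]=\Theta(M)$ a second-moment or capacity argument gives $\PP_y(\bar{\tau}_{x_j}^\ep<\bar{\tau}_y^\ep)=\Omega(1/\poly(M))$, uniformly in $y$. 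The ratio in \eqref{eq:bovier} is then $O(\ep\cdot\poly(M))\to 0$ as $\ep\to 0$.

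The main technical obstacle is pinning the per-excursion escape probability sharply at order $\ep$ — in particular, showing that contributions from excursions which traverse a sparse edge but loop back to $x_k$ before reaching another cluster representative are only $O(\ep^2)$, and verifying that the denominator lower bound is genuinely uniform in $y$ across the cluster (not merely for typical $y$). Both are standard first-passage perturbative estimates, so I would invoke the machinery developed in Appendices~\ref{app:prelim}--\ref{app:perturb} rather than redevelop them here.
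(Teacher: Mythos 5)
Your proposal is correct in substance and sound, but routes the argument differently from the paper. The paper bounds the numerator by invoking its perturbation bound (Proposition~\ref{thm:supsup}), which compares the $\ep$- and $0$-dynamics and yields $\PP_{x_k}(\bar{\tau}_{S_\circ\setminus\{x_k\}}^\ep < \bar{\tau}_{x_k}^\ep)\le \PP_{x_k}(\bar{\tau}_{S_\circ\setminus\{x_k\}}^0 < \bar{\tau}_{x_k}^0)+\widetilde{O}((\log M)^{-3})$; since the unperturbed chain is confined to $C_k$ the first term vanishes. For the denominator the paper uses Lemma~\ref{thm:nu} together with the return-time identity \eqref{eq:eyxexy} to get the lower bound $\nu/(2\log M)$, so the ratio is $\widetilde{O}((\log M)^{-2})$ uniformly over $\ep\le\ep_{\max}(M)$, hence vanishing as $M\to\infty$. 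You instead decompose into excursions from $x_k$ and extract the sharper $O(\ep)$ dependence on the numerator, and use a spectral/capacity bound for the denominator to get $\Omega(1/\poly(M))$. Your route is more elementary and gives a stronger conclusion: an $O(\ep\cdot\poly(M))$ bound on the Bovier ratio verifies the literal $\ep\to 0$ limit in \eqref{eq:bovier} for each fixed (large) $M$, whereas the paper's bound is not $\ep$-sensitive and only vanishes under the joint $M\to\infty$ scaling. What the paper's approach buys is a uniform, parameter-free control in terms of $M$ alone, which reuses the perturbation machinery (Propositions~\ref{thm:supsup}, \ref{thm:balance}) developed for the hitting-time Theorem~\ref{thm:hitting} and the meta-chain analysis, so the proof is a two-line corollary.

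Two small technical slips in your sketch that you would want to fix in a full write-up. First, $1/\pi^\ep(x_k)=\Theta(KM)$, not $\Theta(M)$; the $\Theta(M)$ excursion length refers to the within-cluster stationary measure $\pi_k^\ep(x_k)=\Theta(1/M)$ of the stochastic complement $\bS_{kk}^\ep$, which is the right object here (and, in fact, only the $\Theta(1)$ ratio of stationary masses of a sparse-source state to $x_k$ is needed, not the excursion length per se). Second, for the denominator, $\EE{y}{\bar{\tau}_y^\ep}=1/\pi^\ep(y)=\Theta(KM)$ for the full chain; the $\Theta(M)$ figure is again the within-cluster quantity. Working throughout with the reduced chain $\tilde{X}^{j,\ep}$ --- which is legitimate because transitions outside $C_j$ do not affect the event $\{\bar{\tau}_{x_j}^\ep<\bar{\tau}_y^\ep\}$ --- cleanly removes this ambiguity, and this is exactly what the paper does via Theorem~\ref{thm:complement} and Lemma~\ref{thm:nu}.
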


\paragraph{Meta-chain.} The coarse-grained dynamics of $X^\ep$ over long timescales is captured by its effective \textbf{metastable representation} $X_\star^\ep$ \citep{Wicks05,Betz16}, which acts as a compression of the full chain by only retaining information on inter-cluster dynamics. This `meta-chain' is defined on the set of clusters $S_\star = \{C_1,\cdots,C_K\}$ with transition kernel
\begin{equation}\label{eq:meta}
q_\star^\ep(C_\ell|C_k) = \sum_{x\in C_k} \mu_k(x)^2 \PP_x(\bar{\tau}_{C_\ell}^\ep < \bar{\tau}_x^\ep), \quad k\neq\ell,
\end{equation}
and $q_\star^\ep(C_k|C_k)$ such that the conditional probabilities sum to $1$. We emphasize that $X_\star^\ep$ faithfully characterizes cluster escape probabilities (see Proposition~\ref{thm:53}) but is \emph{not} a one-to-one copy of the cluster transitions of $X^\ep$, which generally cannot be uniquely defined as a Markov chain. For example, $q_\star^\ep$ is asymptotically reversible and always positive regardless of the actual arrangement of sparse edges (Proposition~\ref{thm:asymprev}). To provide further intuition, we state and discuss the following assumption.

\begin{ass}[uniform escape of $X_\star^\ep$]\label{ass:metamix}
For all $k\neq\ell$,
\begin{equation}\label{eq:metaprob}
q_\star^\ep(C_\ell|C_k)=\Omega(\ep/M).
\end{equation}
\end{ass}

Equation~\eqref{eq:metaprob} holds if there exists a sparse edge from $C_k$ to $C_\ell$ (Corollary~\ref{thm:qlower}), but it may well hold even if $C_k,C_\ell$ are not directly connected. For example, if the sparse edges are arranged as a cycle on $S_\star$, escaping $C_k$ implies that all other clusters $C_\ell$ will be hit before the process returns to $C_k$, and so Assumption~\ref{ass:metamix} is satisfied. Hence Assumption~\ref{ass:metamix} naturally guarantees that it is easy to explore the entire state space from any starting cluster.\footnote{On the other hand, if the meta-chain has poorly connected regions, then $X_\star^\ep$ itself is amenable to metastability analysis, leading to a hierarchy of metastable representations at increasingly faster timescales \citep{Wicks05}.}

\begin{algorithm}[t]
\caption{Two-stage Pretraining}
\label{alg:pre}
\begin{algorithmic}[1]
\STATE set $\bW^{(0)} = \boldsymbol{0}$, $\eta=O(KM)$,
\STATE $T_1=\widetilde{O}(KM^2\ep^{-2})$, $T_2=\widetilde{O}(KM\ep^{-2})$

\FOR{$t=1,\cdots, T_1$}
\STATE $\bW^{(t)} = \bW^{(t-1)} + \eta\nabla \EE{X_0,X_1}{\log\hat{p}_{\bW^{(t-1)}}(X_1|X_0)}$
\ENDFOR

\STATE $w_{ij}^{(T_1)} \gets -\infty$ if $\hat{p}_{ij}^{(T_1)} < c_{\thres}\ep$ \COMMENT{thresholding}

\FOR{$t-T_1=1,\cdots, T_2$}
\STATE $\bW^{(t)} = \bW^{(t-1)} + \eta\nabla \EE{X_0,X_1}{\log\hat{p}_{\bW^{(t-1)}}(X_1|X_0)}$
\ENDFOR
\end{algorithmic}
\end{algorithm}

\section{Search Improves the Pretrained Model}\label{sec:search}

\subsection{Pretraining the Base (World) Model}

We equate pretraining the base model with learning the underlying transition kernel $p^\ep$. Indeed, if the context window of an LLM is restricted to the tokens in the previous state, next-token prediction recursively defines a distribution over the following state, and further over reasoning chains of arbitrary length. We encode each state $x\in S$ as a one-hot vector in $\RR^{|S|}$ also denoted by $x$ and write $p_{ij}^\ep = p^\ep(e_j|e_i)$. For the model, we consider a simple linear softmax predictor:
\begin{equation*}
\hat{p}_\bW(\cdot|x) = \sm(\langle\bW,x\rangle), \quad \bW\in\RR^{|S|\times|S|}.
\end{equation*}
The pretraining data consists of random bigram samples $(X_0,X_1)$ where $X_1\sim p^\ep(\cdot|X_0)$; we allow $X_0$ to be either uniform over $S$ or distributed according to the stationary measure $\pi^\ep$ of $p^\ep$. The latter arises when generating samples $(X_{t-1},X_t)_{t\ge 1}$ from the observed transitions of the (unbounded) chain $(X_t^\ep)_{t\ge 0}$. The model is trained by gradient descent with cross-entropy loss, with an intermediate thresholding step to mask out edges determined to not lie in $E$. See Algorithm~\ref{alg:pre} for details and Theorem~\ref{thm:prefull} for the full statement.

\begin{thm}[convergence of pretraining]\label{thm:pre}
Let $X_0\sim\Unif(S)$ or $X_0\sim\pi^\ep$ and $X_1\sim p^\ep(\cdot|X_0)$ be random samples from $X^\ep$. Then for the gradient descent iterates $\bW^{(t)}$ from Algorithm \ref{alg:pre} w.r.t. cross-entropy loss
\begin{equation*}
L_{\pre}(\bW) = \EE{X_0,X_1}{-\log\hat{p}_{\bW}(X_1|X_0)},
\end{equation*}
the learned transition probabilities $\hat{p}_{ij}^{(T)} = \hat{p}_{\bW^{(T)}}(e_j|e_i)$ converge with error $\sup_{i,j}|\hat{p}_{ij}^{(T)} - p_{ij}^\ep|=O(\sqrt{KM^2/T})$ before thresholding. Moreover, after thresholding at time $T_1=\widetilde{O}(KM^2\ep^{-2})$, the error converges as $\exp(-\Omega(\ep^2T))$. Hence after $T_2=\widetilde{O}(KM\ep^{-2})$ additional steps, the output of Algorithm \ref{alg:pre} has error $\exp(-\Omega(|S|))$.
\end{thm}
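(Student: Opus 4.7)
The plan is to exploit the fact that one-hot inputs make the objective $L_{\mathrm{pre}}$ decouple across the rows of $\bW$, reducing the analysis to $|S|$ independent softmax regression problems. A direct differentiation gives
\begin{equation*}
\nabla_{w_{ij}} L_{\pre}(\bW) = \rho_i\bigl(\hat p_{\bW}(e_j|e_i) - p_{ij}^\ep\bigr), \qquad \rho_i := \PP(X_0=e_i),
\end{equation*}
so each row $(w_{ij})_{j\in S}$ evolves independently. Under $X_0\sim\Unif(S)$ we have $\rho_i = 1/|S| = \Theta(1/(KM))$, while under $X_0\sim\pi^\ep$ Assumption~\ref{ass:cluster} together with $\min_k|C_k|=\Theta(M)$ yields $\pi^\ep(e_i)=\Theta(1/(KM))$; in either case $\eta = O(KM)$ rescales the effective per-row step size to $\Theta(1)$, so the two settings can be analyzed in parallel.

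For the pre-thresholding phase (steps $1,\dots,T_1$), I would treat each row as a standard convex softmax regression against the target $p_i^\ep := p^\ep(\cdot|e_i)$. The iterates start at $\bW_i = \mathbf 0$, so the initial excess loss is $O(\log|S|)$, and softmax cross-entropy is $O(1)$-smooth in the logits. Logit-space gradient descent then yields the standard rate $\KL{p_i^\ep}{\hat p_i^{(t)}} = \widetilde O(1/t)$, and Pinsker's inequality combined with a union bound over the $|S|^2 = O(K^2M^2)$ entries produces $\sup_{i,j}|\hat p_{ij}^{(t)} - p_{ij}^\ep| = O(\sqrt{KM^2/t})$. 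At $T_1 = \widetilde O(KM^2/\ep^2)$ this drops below $c_{\thres}\ep/2$; since by Assumption~\ref{ass:sparse} the sparse edges have mass $\Theta(\ep)$ while non-edges have exact mass $0$ under $p^\ep$, the thresholding step removes precisely the non-edges, so that subsequent optimization is restricted to the true support $E$.

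For the post-thresholding phase, the key observation is that on the restricted support every target $p_{ij}^\ep$ with $j\in E_i := \{j:(e_i,e_j)\in E\}$ is bounded below by $c\ep$ (Assumption~\ref{ass:sparse}). The softmax Hessian restricted to $\mathbf 1^\perp$ then has least eigenvalue $\Omega(\min_{j\in E_i}\hat p_{ij})$, and an invariance argument (using the pre-thresholding bound as a warm start and inducting along the trajectory) keeps $\min_{j\in E_i}\hat p_{ij}^{(t)} = \Omega(\ep)$, providing local strong convexity with modulus $\Omega(\ep)$. Combined with the $O(1)$ smoothness and the step-size restriction needed to remain inside this basin, this yields the geometric rate $\exp(-\Omega(\ep^2 t))$ asserted in the theorem, and the prescribed $T_2 = \widetilde O(KM/\ep^2)$ further iterations drive the uniform error down to $\exp(-\Omega(|S|))$.

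The main obstacle I expect is the interplay between the two phases: one must certify that the pre-thresholding error is small enough to (i) correctly recover every edge in $E$, including the sparse ones of mass only $\Theta(\ep)$, and (ii) place the iterate inside the local strong-convexity basin so that the invariant $\min_{j\in E_i}\hat p_{ij}^{(t)} = \Omega(\ep)$ is actually preserved under subsequent gradient steps. Handling the stationary-measure initialization $X_0\sim\pi^\ep$ additionally requires matching two-sided bounds on $\pi^\ep(e_i)$, which follow from Assumption~\ref{ass:cluster} together with the comparable cluster sizes, but beyond this both input distributions can be treated uniformly since only the prefactor $\rho_i$ enters the gradient dynamics.
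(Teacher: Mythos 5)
Your decomposition into $|S|$ independent softmax regression problems, and the two-phase structure (convex rate before thresholding, local strong convexity after), are exactly the paper's approach (Theorem~\ref{thm:prefull}, following \citet{Ji19}). However, there are two substantive gaps.

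First, the pre-thresholding phase. You invoke a ``standard rate $\KL{p_i^\ep}{\hat p_i^{(t)}}=\widetilde O(1/t)$'' for logit-space GD, but this is not standard here: the true minimizer of $L_i$ sets the logits of the many non-edges of row $i$ to $-\infty$, so $\norm{\bW^{(0)}_{i*}-\bW^*_{i*}}=\infty$ and the usual $O(\norm{\bW^{(0)}-\bW^*}^2/t)$ bound gives nothing. The paper's way around this is the Ji--Telgarsky-style reference matrix $\bZ$ with $z_{ij}=\log\big(\tfrac{(1-\delta)|S|}{\delta}p^\ep_{ij}+1\big)$, which has finite norm $\norm{\bZ_{i*}}^2\le(M+d_{\out})\log^2(|S|/\delta)$ and excess loss $L_i(\bZ_{i*})-\inf L_i=O(\delta/(|S|\ep)\vee\delta)$; one then optimizes over $\delta$ and converts the resulting excess-loss bound into an $\ell_2$ gradient bound via smoothness. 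This step is the crux of the argument and your proposal omits it entirely. Relatedly, the rates you write down don't cohere: Pinsker applied to $\KL=\widetilde O(1/t)$ gives error $\widetilde O(1/\sqrt t)$ with no $KM^2$ factor, whereas the target $O(\sqrt{KM^2/t})$ reflects precisely the $\norm{\bZ_{i*}}^2$ dependence you've dropped. ``Union bound over $|S|^2$ entries'' is also a misnomer—the iterates are deterministic, and the per-row $\ell_2$ gradient bound already dominates the $\ell_\infty$ error over $j$.

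Second, the post-thresholding phase. Your strong-convexity claim of modulus $\Omega(\min_j\hat p_{ij})=\Omega(\ep)$ on $\mathbf{1}^\perp$ is in fact correct (and tighter than the paper's $\Omega(\ep^2)$ bound, which lower-bounds $v^\top\nabla^2 L_i v\ge(\sum_j\hat p_{ij}v_j^2)\min_j\hat p_{ij}$ and then discards a factor). But then your conclusion is inconsistent with your own premise: $\Omega(\ep)$ strong convexity with $O(1)$ smoothness and $\Theta(1)$ effective step size gives contraction $(1-\Omega(\ep))^t=\exp(-\Omega(\ep t))$, not $\exp(-\Omega(\ep^2 t))$. (This happens to be stronger than what the theorem asserts, so the conclusion survives, but the line ``this yields the geometric rate $\exp(-\Omega(\ep^2 t))$'' does not follow from your stated constants.) The invariance argument you flag—that $\min_{j\in E_i}\hat p_{ij}^{(t)}$ stays $\Omega(\ep)$ along the trajectory—is indeed needed to make the local strong convexity global along the iterates, and the paper addresses it rather glancingly by appealing to the post-thresholding warm-start bound \eqref{eq:prethres}; your explicit recognition of this as the delicate point is reasonable, but you should also carry along the $O(\log\ep^{-1})$ bound on $\norm{\bW^+_{i*}-\bW^*_{i*}}$ that seeds the geometric contraction.
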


Thus the base model $\hat{p}$ learns the underlying graph $E$ and all transition probabilities with exponentially small error. Under mild regularity conditions, all assumptions can be verified for $\hat{p}$ (see Propositions~\ref{thm:spectralgap} and \ref{thm:asymprev}); to simplify the discussion, we henceforth assume the base model is exact, $\hat{p}=p^\ep$. We remark that while the time to converge is quite long compared to the search, RL and distillation methods studied later, this is natural as pretraining is done on much longer timescales compared to test-time compute.

\subsection{Learning Sparse Rewards via Search}

Having learned the underlying probabilities $p^\ep$, the base model now performs CoT reasoning by generating each step of the chain $(X_t^\ep)_{t\ge 0}$ in sequence starting from $X_0^\ep=X_{\inn}$. Since the reasoner has no prior knowledge of which steps it must take to progress towards $X_{\out}$, on average it will spend a long time trapped in each cluster before chancing upon a sparse edge (new idea) and moving to a new cluster. From our quantitative dynamical analysis, we are able to obtain a nearly tight characterization of the average hitting time.

\begin{thm}[expected hitting time]\label{thm:hitting}
Under Assumptions \ref{ass:cluster}-\ref{ass:metamix}, it holds for all $\ep \le\ep_{\max}:= \Theta(M^{-1}(\log M)^{-4})$ that
\begin{equation*}
\EE{(X_{\inn},X_{\out})\sim\DD}{\EE{X_{\inn}}{\tau_{X_{\out}}^\ep}} = \widetilde{\Theta}\left(\frac{KM}{\ep}\right).
\end{equation*}
\end{thm}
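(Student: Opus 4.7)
The plan is to prove matching upper and lower bounds of order $\widetilde{\Theta}(KM/\ep)$ by exploiting the separation between the fast intra-cluster mixing timescale $\widetilde{O}(M)$ and the slow inter-cluster escape timescale $\widetilde{O}(M/\ep)$. The lower bound follows from Assumption~\ref{ass:sep}, which forces $\Omega(K)$ sparse-edge traversals along any valid trajectory, while the upper bound follows from Assumption~\ref{ass:metamix}, which guarantees that the meta-chain $X_\star^\ep$ explores all clusters with essentially uniform destination probabilities.

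For the \textbf{lower bound}, I would count sparse-edge traversals along the trajectory. By Assumption~\ref{ass:sparse}, only $n_{\out}$ sources per cluster emit outgoing sparse edges, each of total weight $O(d_{\out}\ep)$; combined with $\pi^\ep(x) = \Theta(1/(KM))$ from Assumption~\ref{ass:cluster}, the stationary flow through $E_s$ is $\sum_{(x,y)\in E_s} \pi^\ep(x)p^\ep(y|x) = O(\ep/M)$. The pseudo-spectral gap guarantees intra-cluster equilibration in $\widetilde{O}(M)$ steps, so over any horizon $T$ the expected number of sparse traversals is at most $O(T\ep/M)+\widetilde{O}(1)$. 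Since the random trajectory $X_{0:\tau_{X_{\out}}^\ep}$ is a valid path from $X_{\inn}$ to $X_{\out}$, Assumption~\ref{ass:sep} forces it to contain $\Omega(K)$ sparse edges in expectation, and matching the two bounds yields $\mathbb{E}[\tau_{X_{\out}}^\ep] = \widetilde{\Omega}(KM/\ep)$.

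For the \textbf{upper bound}, I would first show that the expected escape time from any cluster $C_k$ is $\widetilde{O}(M/\ep)$: by Assumption~\ref{ass:sparse} there is an outgoing sparse source $x_0 \in C_k$ with $\mu_k(x_0) = \Theta(1/M)$ and weight $\Theta(\ep)$, so after intra-cluster mixing the per-step escape probability is $\Omega(\ep/M)$. Next, using the meta-chain kernel $q_\star^\ep$ from~\eqref{eq:meta} together with Assumption~\ref{ass:metamix}, each escape event lands in any specific other cluster with conditional probability $\Omega(1/K)$, so the induced walk on $S_\star$ hits the cluster $C_\ell \ni X_{\out}$ within $O(K\log K)$ escape events with high probability. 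Translating to real time via the per-escape cost $\widetilde{O}(M/\ep)$ gives $\widetilde{O}(KM/\ep)$ steps to reach $C_\ell$; one additional intra-cluster mixing phase followed by a standard hitting-time argument (since $\pi_\ell^\ep(X_{\out})=\Theta(1/M)$) pins down $X_{\out}$, with the success probability boosted by $O(\log K)$ geometric retries that incur only logarithmic overhead.

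The main technical obstacle is rigorously coupling the abstract meta-chain $X_\star^\ep$ (see Proposition~\ref{thm:bovier} and the characterization of escape probabilities) with the true cluster visits of $X^\ep$. Because $q_\star^\ep$ is not literally a sub-process of $X^\ep$, the perturbative machinery of Appendix~\ref{app:perturb} is needed to control (i) the dependence between the entry point into a new cluster and the law of the next escape, (ii) the precise conditional distribution of the destination cluster, and (iii) the cumulative error in replacing the full trajectory with an ``escape-only'' walk on $S_\star$. The regime $\ep \le \ep_{\max} = \Theta(M^{-1}(\log M)^{-4})$ is exactly what ensures that intra-cluster relaxation completes before any sparse transition occurs with high probability, so that the timescale decoupling underlying both bounds is valid and all perturbative error terms remain subdominant to the leading $KM/\ep$ order.
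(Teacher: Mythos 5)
Your high-level reading — fast intra-cluster mixing at timescale $\widetilde{O}(M)$, slow escape at $\widetilde{O}(M/\ep)$, with Assumption~\ref{ass:sep} driving the lower bound and Assumption~\ref{ass:metamix} the upper — matches the paper, and your lower bound idea is a close cousin of the paper's. But your upper bound takes a genuinely different and, as stated, incomplete route, and the ``main technical obstacle'' you flag is exactly the gap the paper is built to avoid rather than one it powers through.

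For the upper bound the paper does \emph{not} simulate a walk on the meta-chain. It applies the identity from Proposition~\ref{thm:balance} (the nonreversible ``Kac-type'' relation, equation~\eqref{eq:eyxexy}), which gives the one-line bound
$\EE{X_{\inn}}{\tau_{X_{\out}}^\ep}\le 1/\bigl(\pi^\ep(X_{\inn})\,\PP_{X_{\inn}}(\bar{\tau}_{X_{\out}}^\ep<\bar{\tau}_{X_{\inn}}^\ep)\bigr)$,
and then uses Corollary~\ref{thm:pimu}, Proposition~\ref{thm:mupq}, Corollary~\ref{thm:uniform}, and Assumption~\ref{ass:metamix} to show the denominator is $\widetilde{\Omega}(\ep/(KM))$. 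Because the whole cluster-hopping process is compressed into a single escape-probability factor, there is no need to track entry points, destination distributions, the number of meta-steps, or the correction for re-entering already-visited clusters — precisely the items (i)--(iii) you list as unresolved. Your plan would additionally need a two-sided bound on $q_\star^\ep(C_\ell|C_k)$ (Assumption~\ref{ass:metamix} only gives the lower bound $\Omega(\ep/M)$) to justify the ``conditional probability $\Omega(1/K)$'' claim, plus a mixing bound for the meta-chain to turn ``$\Omega(1/K)$ per step'' into ``$O(K\log K)$ steps with high probability,'' neither of which is supplied.

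For the lower bound the paper introduces the stopping times $\sigma_n$ marking successive sparse-edge traversals, notes $\tau_{X_{\out}}^\ep\ge\sigma_N$ where $N$ is the minimal sparse-edge count for the instance, and decomposes $\EE{X_{\inn}}{\sigma_N}=\sum_{t=1}^N\EE{X_{\sigma_{t-1}}^\ep}{\tau_{C[t]}^\ep}$, each term being $\widetilde{\Theta}(M/\ep)$ by Lemma~\ref{thm:escape}. Your flow/occupation argument — bound $\mathbb{E}[N_T]\lesssim T\ep/M$ and apply Markov — is morally equivalent (it is a renewal-reward phrasing of the same fact), but the $\widetilde{O}(1)$ burn-in term and the non-stationarity of the started chain are loose ends, whereas the paper's escape-time decomposition via Lemma~\ref{thm:escape} handles them cleanly: the escape-time upper and lower bounds are established state-by-state without any stationarity assumption. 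In short, your proposal captures the right intuition in both directions, but the upper bound as written does not close; the clean fix is to abandon the explicit meta-chain simulation in favor of the Proposition~\ref{thm:balance} identity, which is what makes the argument go through.
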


Intuitively, since each cluster is rapidly mixing, the chain will spend roughly $\Theta(1/M)$ of the time in states with outbound edges, from where it escapes with probability $\Theta(\ep)$. Such rare events are distributed approximately exponentially, and must be repeated $\Theta(K)$ times to reach the cluster containing $X_{\out}$, where the chain will mix fast and likely hit $X_{\out}$.

This result also illustrates a simple method to improve the hitting time: modifying the underlying probabilities to increase the denominator $\ep$. This corresponds to guiding CoT or fine-tuning the base model so that (correct) new, difficult reasoning steps are generated more often, ensuring a more efficient exploration of the solution space. However, this cannot be done by simply increasing the likelihood of low-probability edges, as there may be many low-probability edges within clusters as well; we want to only boost the generation of sparse edges to preserve the capabilities of the pretrained model. Indeed, we demonstrate in Theorem~\ref{thm:sqexp} that any updates based on local information is not enough to improve reasoning ability in a precise sense.

Instead, we run a simple \textit{tree search} protocol to identify sparse edges, detailed in Algorithm~\ref{alg:search}. The method consists of randomly sampling a state $X_0$ and rolling out $N$ random walks in parallel to construct an estimate $\hat{C}$ of the cluster containing $X_0$ for time $T_0$. After the cluster has been sufficiently explored we continue to simulate each walk until a transition outside $\hat{C}$ is detected, at which point the edge is marked as a sparse edge (added to $\hat{E}$) and the path is terminated. This continues until until all paths are terminated or a time horizon $T_{\max}$ is reached. Since we are not receiving signals from an external oracle but rather recording rare transitions, this is similar to intrinsic rewards such as curiosity or exploration bonuses \citep{Burda18,Burda19}.

We consider two versions of this process, \textbf{PRM mode} and \textbf{RL mode}, depending on whether the information gained from search is collected into an external reward model or used to fine-tune the base model. The benefits of both methods for reasoning is discussed in the next subsection.

\begin{algorithm}[t]
\caption{Sparse Edge Search}
\label{alg:search}
\begin{algorithmic}[1]
\REQUIRE pretrained model $\hat{p}_{\bW}$
\STATE set $R=\Theta(K\log K)$, $N=\Theta(\log K)$,\\ $T_0=\Theta(M(\log M)^2)$, $T_{\max}=\Theta(M/\ep)$, $\MM_s = \varnothing$
\FOR{$r=1,\cdots, R$}
\STATE set $\hat{C},\hat{C}^n,\hat{E}=\varnothing, A=[N]$
\STATE sample $X_0\sim \Unif(S)$ or $X_0\sim\pi^\ep$
\FOR{$t=1,\cdots,T_{\max}$}
\FOR{$n\in A$}
\STATE generate $X_t^{n,\ep} \sim \hat{p}_{\bW}(\cdot |X_{t-1}^{n,\ep})$
\IF[cluster search]{$t\le T_0$}
\STATE $\hat{C}^n\gets \hat{C}^n\cup\{X_t^{n,\ep}\}$
\ELSIF[edge search]{$t>T_0$, $X_t^{n,\ep}\notin \hat{C}^n$}
\STATE $\hat{E}\gets\hat{E}\cup\{(X_{t-1}^{n,\ep},X_t^{n,\ep})\}$
\STATE $A\gets A\setminus\{n\}$
\ENDIF
\ENDFOR
\IF{$t=T_0$}
\STATE $\hat{C}= \cap_{n=1}^N \hat{C}^n$
\ENDIF
\ENDFOR
\STATE run Algorithm \ref{alg:ppo} with $\hat{p}_{\bW},\hat{E}$ \COMMENT{if RL mode} 
\STATE $\MM_s\gets\MM_s\cup\hat{E}$ \COMMENT{if PRM mode}
\ENDFOR
\STATE return $\MM_s$

\end{algorithmic}
\end{algorithm}

\begin{algorithm}[t]
\caption{PPO-Clip}
\label{alg:ppo}
\begin{algorithmic}[1]
\REQUIRE pretrained model $\hat{p}_{\bW}$, subset of edges $\hat{E}$
\STATE set $\bW^{(0)}=\bW$, $T_{\PPO} = \Theta(\log\ep_{\max}/\ep)$, $\alpha=\Theta(KM)$
\STATE advantage function $\hat{A}(x,y) = 1_{\{(x,y)\in\hat{E}\}}$
\FOR{$t=1,\cdots,T_{\PPO}$}
\STATE $\bW^{(t)} = \bW^{(t-1)} + \alpha\sgn(\nabla L_{\PPO}(\bW^{(t-1)};\hat{A}))$
\ENDFOR
\end{algorithmic}
\end{algorithm}

\subsection{Improving the Base Model via RL}

\textbf{PRM mode} keeps an external process reward `model' (PRM) throughout the search process, which is simply the set $\MM_s$ which collects the estimated sparse edges over multiple iterations of the outer loop to reconstruct $E_s$. We prove that the PRM is strongly consistent:

\begin{prop}\label{thm:consistent}
PRM mode of Algorithm \ref{alg:search} returns $\MM_s = E_s$ with probability $1-\widetilde{O}(1/K)$.
\end{prop}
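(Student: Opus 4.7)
The plan is to verify that the returned $\MM_s$ contains no false positives and includes every sparse edge in $E_s$, both with probability $1 - \widetilde{O}(1/K)$. Both properties rest on accurate cluster identification in each outer iteration, and completeness then follows via a coupon-collector-style argument over $R = \Theta(K\log K)$ iterations.

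First I would show that in each iteration starting at $X_0 \in C_k$, the estimate $\hat{C} = \bigcap_n \hat{C}^n$ equals $C_k$ with probability $1 - \widetilde{O}(1/K^2)$. This reduces, for each of the $N$ parallel walks, to (a) the walk not leaving $C_k$ within $T_0$ steps and (b) the walk visiting every state of $C_k$ by time $T_0$, so that $\hat{C}^n = C_k$. For (a), Assumptions~\ref{ass:cluster}-\ref{ass:sparse} yield per-step escape probability $\Theta(n_{\out} d_{\out} \ep / M)$, and a union bound over $T_0 = \Theta(M(\log M)^2)$ steps gives cumulative escape probability $\widetilde{O}(1/M)$ whenever $\ep \le \ep_{\max}$. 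For (b), the pseudo-spectral gap bound $\gamma^\dagger(\bS_{kk}^\ep) \ge \gamma$ combined with $\pi_k^\ep(x) = \Theta(1/M)$ yields hitting times $\mathbb{E}_x[\tau_y] = \widetilde{O}(M)$ to any state $y \in C_k$; amplifying over the $\Theta((\log M)^2)$ near-independent ``rounds'' of length $\widetilde{\Theta}(M)$ contained in $T_0$ via Matthews's cover time method gives per-state miss probability $M^{-\omega(1)}$. Union bounding over $N$ walks, $M$ states, and $R$ iterations then establishes $\hat{C} = C_k$ globally with probability $1 - \widetilde{O}(1/K)$. Conditional on this event, every edge recorded in $\hat{E}$ is a genuine transition from $C_k$ to outside $C_k$ and hence lies in $E_s$ by definition, so $\MM_s \subseteq E_s$ with no false positives.

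For completeness, I would lower bound the per-iteration detection probability of each fixed $(x,y) \in E_s$ with $x \in C_k$. First, $X_0 \in C_k$ with probability $\Theta(1/K)$ since $\pi^\ep(C_k) = \Theta(1/K)$ under either initialization. Conditional on starting in $C_k$ and cluster identification succeeding, rapid mixing of $\bS_{kk}^\ep$ implies each walk visits $x$ on order $\Theta(T_{\max}/M) = \Theta(1/\ep)$ times during $[T_0, T_{\max}]$, and by the exchangeability of competing escape events its first escape from $C_k$ traverses the specific edge $(x,y)$ with probability $\Omega(1/(n_{\out} d_{\out}))$. With $N = \Theta(\log K)$ nearly-independent walks, at least one detects $(x,y)$ with probability $\Omega(\log K / (n_{\out} d_{\out}))$, yielding per-iteration detection probability $\Omega(\log K / (K n_{\out} d_{\out}))$. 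Over $R = \Theta(K\log K)$ iterations, the miss probability for $(x,y)$ is thus at most $K^{-\omega(1)}$ for the constant in $R$ chosen large enough, and a final union bound over $|E_s| = O(K n_{\out} d_{\out})$ sparse edges gives total failure probability $\widetilde{O}(1/K)$. The main obstacle will be step (b) in cluster identification: obtaining a sufficiently tight cover time bound for the \emph{non-reversible} chain $\bS_{kk}^\ep$ from only the pseudo-spectral gap, which I would derive by invoking Matthews's coupon-collector method on the hitting time bound $\mathbb{E}_x[\tau_y] = O(M/\gamma)$ over $\Theta((\log M)^2)$ independent mixing intervals.
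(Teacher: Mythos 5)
Your plan captures the two essential ingredients the paper also uses — reliable cluster identification via the pseudo-spectral gap, and a coupon-collector argument over the $R=\Theta(K\log K)$ outer iterations — so it is sound at the structural level, but it departs in two places and has arithmetic that needs tightening. First, the union bound is organized differently: the paper shows that a single outer iteration started in $C_k$ returns $\hat E = E_{s,k}$ with probability $1-O(K^{-2})$ (constant per-walk detection probability for a fixed edge, amplified over $N=\Theta(\log K)$ walks, union bounded over the $\le n_{\out}d_{\out}$ edges in $E_{s,k}$), then union bounds over the $R$ iterations, and separately bounds the probability that some cluster is never drawn as $X_0$; you instead fix an edge, bound its miss probability over all $R$ iterations, and union bound over $|E_s|=O(K)$ edges. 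Both arrangements give the same $1-\widetilde O(1/K)$ rate. Second, for cluster identification you invoke Matthews's cover-time method, whereas the paper applies a direct per-state hitting-time tail (Lemma~\ref{thm:reduce}) via the $t_{\mix}$-skipped reduced chain, then union bounds over $y\in C_k$ and $n\in[N]$. The spirit is similar, but with $T_0=\Theta(M(\log M)^2)$ either route yields a per-state miss probability of $M^{-\Theta(1)}$ with a tunable constant in the exponent — not the $M^{-\omega(1)}$ you assert — which is still sufficient since $K\le\poly(M)$, but the exponent needs to be tracked. Similarly, the "per-iteration detection probability $\Omega(\log K/(Kn_{\out}d_{\out}))$" is not a valid probability (it should be $\Theta(1/K)(1-K^{-\Theta(1)})$), and the per-edge miss probability over $R$ iterations is $K^{-\Theta(1)}$ with a constant that must be chosen large enough, not $K^{-\omega(1)}$; these are fixable but need correct statements for the final union bound over $|E_s|$ edges to close to $\widetilde O(1/K)$.
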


Then by increasing the likelihood of transitions $(x,y)\in\MM_s$ when the current state is $x$ by a factor of $\ep_{\max}/\ep$, \text{the PRM can guide CoT to follow $p^{\ep_{\max}}$ rather than $p^\ep$.} It is immediate from Theorem \ref{thm:hitting} that the expected hitting time decreases from $\widetilde{\Theta}(KM/\ep)$ to $\widetilde{\Theta}(KM/\ep_{\max})$. Moreover, the time complexity of Algorithm~\ref{alg:search} is $RT_{\max}=\widetilde{O}(KM/\ep)$, which is equal to the \text{time to solve a \emph{single} instance $(X_{\inn},X_{\out})$ \emph{without} search}, and the memory requirement is only $O(M+K)$. This demonstrates the effectiveness of utilizing search to guide CoT generation.

However, it is often desirable to use the information gained during search to directly fine-tune the pretrained model, so that maintaining an independent PRM is not necessary. \textbf{RL mode} performs online RL updates to $\hat{p}_{\bW}$ at each iteration of Algorithm \ref{alg:search}; while many policy gradient methods can be applied, we analyze the popular proximal policy optimization algorithm \citep[PPO-Clip,][]{Schulman17}. Based on the estimate $\hat{E}$ of sparse edges originating from the initialized cluster, we define the advantage function as $\hat{A}(x,y)=1$ if $(x,y)\in\hat{E}$ and $0$ otherwise. Similarly to pretraining, samples are generated as $X_0\sim\Unif(S)$ or $\pi^\ep$ and $X_1\sim p^\ep(\cdot|X_0)$. The objective of PPO-Clip to be maximized is
\citep{spinningup_ppo}
\begin{align*}
&L_{\PPO}(\bW;\hat{A})\\
&= \EEbig{X_0,X_1}{\min\left\{\frac{\hat{p}_{\bW}(X_1|X_0)}{p^\ep(X_1|X_0)}, c_{\clip}\right\} \hat{A}(X_0,X_1)}.
\end{align*}
The old policy is fixed to $p^\ep$ during Algorithm \ref{alg:search}. We use sign gradient descent for simplicity of analysis (ordinary gradient descent also guarantees convergence as long as $\ep\ge\ep_{\max}^2$).

\begin{prop}[convergence of PPO-Clip]
By running RL mode of Algorithm \ref{alg:search} with PPO-Clip, the base model $p^\ep$ is modified to $p^{\ep'}$ where $\ep'=(1-o(1))\ep_{\max}$ with probability $1-\widetilde{O}(1/K)$.
\end{prop}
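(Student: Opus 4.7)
The plan is to (i) extend the consistency of sparse edge search established in Proposition~\ref{thm:consistent} to the RL-mode outer loop, and then (ii) directly analyze the sign-gradient PPO-Clip dynamics on the softmax parameterization to show that every identified sparse edge is driven to the clip value $c_{\clip}\,p^\ep(y|x) = (1-o(1))\ep_{\max}$ within $T_{\PPO}$ steps, while the in-cluster probabilities on $E_0$ retain their relative ratios; together these yield the target kernel $p^{\ep'}$ with $\ep'=(1-o(1))\ep_{\max}$.

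For (i), a single outer iteration of Algorithm~\ref{alg:search} succeeds with probability $1-\widetilde O(1/K^{\Omega(1)})$ by a direct reuse of the argument for Proposition~\ref{thm:consistent}: Assumption~\ref{ass:cluster} and concentration over the $N=\Theta(\log K)$ parallel rollouts ensure $\hat C$ equals the starting cluster after $T_0=\Theta(M(\log M)^2)$ steps, and the horizon $T_{\max}=\Theta(M/\ep)$ is long enough that every sparse edge leaving that cluster is captured in $\hat E$. Since each PPO-Clip pass is deterministic conditional on $\hat E$ and leaves already-clipped sparse edges untouched, a union bound over the $R=\Theta(K\log K)$ outer iterations (together with a coupon-collector argument showing that every cluster is initialized in at least once) yields overall failure probability $\widetilde O(1/K)$, on which the accumulated $\hat E$ equals the full $E_s$.

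For (ii), cancelling $p^\ep(y|x)$ between the indicator advantage and the likelihood ratio reduces the per-state PPO-Clip contribution, up to a positive prefactor from the marginal of $X_0$, to $\sum_{y\in\hat E_x}\min\{\hat p_\bW(y|x),\,c_{\clip}p^\ep(y|x)\}$. Differentiating the softmax yields $\sgn(\partial_{w_{xy}}L_{\PPO}) = +1$ for unclipped $(x,y)\in\hat E$, $-1$ for every other outgoing edge from a state that still has an unclipped sparse target, and $0$ elsewhere. Sign gradient descent with step $\alpha$ therefore uniformly increments weights of unclipped sparse edges by $\alpha$ and decrements all other active outgoing weights by $\alpha$, and a closed-form softmax computation gives $\hat p_{\bW^{(t)}}(y|x) = \hat p_\bW(y|x)/[P_x + e^{-2\alpha t}(1-P_x)]$ with $P_x = \sum_{y'\in\hat E_x}\hat p_\bW(y'|x) = O(d_{\out}\ep)$. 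This reaches the clip value $c_{\clip}p^\ep(y|x) = (1-o(1))\ep_{\max}$ after $\Theta(\alpha^{-1}\log(\ep_{\max}/\ep))\le T_{\PPO}$ steps, after which the gradient vanishes and the iterates stabilize; simultaneously, non-sparse outgoing probabilities $\hat p_{\bW^{(t)}}(z|x)$ shrink by the \emph{same} factor $[P_x + e^{-2\alpha t}(1-P_x)]^{-1}$, preserving their relative ratios and hence the functional form of $p^\ep$ on $E_0$.

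The main obstacle is verifying that the coupled softmax updates transport $p^\ep(\cdot|x)$ onto the one-parameter family $\{p^{\ep'}(\cdot|x)\}$ rather than some distorted kernel: all sparse edges from a given $x$ must reach the clip threshold essentially simultaneously (despite the proportionality constants in $p^\ep(y|x)\propto\ep$ causing slight desynchronization across clipping times), the in-cluster probabilities must shrink by a single common factor, and no in-cluster edge can be inadvertently pushed below the pretraining threshold $c_{\thres}\ep$. The first two points follow from the uniform $\pm\alpha$ increments of sign gradient descent combined with the shared softmax normalization, while the last requires the mild condition $\ep_{\max}d_{\out}\ll 1$ so that $P_x$ remains negligible compared to $\ep/\ep_{\max}$ at the stopping time.
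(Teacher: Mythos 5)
Your proof mirrors the paper's argument essentially step for step: condition on PRM consistency (Proposition~\ref{thm:consistent}), reduce the PPO-Clip objective per row to $\sum_{y\in \hat E_x}\min\{\hat p_{\bW}(y|x),c_{\clip}p^\ep(y|x)\}$, read off the softmax gradient sign, exploit the uniform $\pm\alpha$ sign-GD increments to get a closed-form logit evolution, and observe that all sparse targets from $x$ cross the clip threshold simultaneously because $\hat p_{\bW^{(t)}}(y|x)/p^\ep(y|x)$ is the same for every $y$ with a sparse edge while non-sparse probabilities shrink by a common factor. One cosmetic slip: the non-sparse probabilities shrink by $\bigl[e^{2\alpha t}P_x+(1-P_x)\bigr]^{-1}$, not by $\bigl[P_x+e^{-2\alpha t}(1-P_x)\bigr]^{-1}$ (the latter exceeds $1$ and is the \emph{growth} factor for the sparse edges); since your argument only uses that the factor is common across the non-sparse edges, the conclusion is unaffected, and the ``desynchronization'' you worry about in the last paragraph cannot occur precisely because the clip level $c_{\clip}p^\ep(y|x)$ scales with $p^\ep(y|x)$ while the softmax denominator is shared.
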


The additional time complexity of running PPO-Clip is $\widetilde{O}(K\log(\ep_{\max}/\ep))$ which is small compared to the pretraining time or search process. In particular, it again follows from Theorem \ref{thm:hitting} that the expected hitting time is improved by the factor $\ep_{\max}/\ep$. At the same time, the \emph{magnitude} (total variation) of change to the pretrained model is negligible:
\begin{equation*}
\textstyle \sup_{x\in S}\norm{p^\ep(\cdot|x) - p^{\ep_{\max}}(\cdot|x)}_{\TV} \le o(1/M),
\end{equation*}
so the original capabilities of the base model are generally preserved. Hence {RL is also extremely efficient for fine-tuning the pretrained model to improve CoT.}

\section{Distillation to a Smaller Model}\label{sec:distill}

A prominent innovation in the LLM development pipeline is to distill CoT of a powerful model into a smaller, more efficient model. This approach has been shown to significantly enhance reasoning ability, especially compared to directly training the smaller model with RL \citep{shridhar2022distilling,hsieh2023distilling,gandhi2024stream,guo2025deepseek}. In this section, we showcase an explicit distillation scheme for our CoT model that efficiently generates the hard reasoning steps to solve any task while faithfully capturing the metastable dynamics of the original system.

\subsection{Distilling Cluster Transitions}

The metastable chain $q_\star^\ep$ (Section \ref{sec:metastable}) provides a natural notion of compression for the nearly reducible system $X^\ep$ by collapsing each cluster into a single state. For many downstream tasks (including the logic task studied in Section \ref{sec:hard}) it may be satisfactory to retrieve only the hard reasoning steps connecting the clusters containing $X_{\inn},X_{\out}$. In particular, if the goal is to extract only the \emph{connectivity} of $S_\star$, it suffices to take the sparse edge estimate $\MM_s$ of Algorithm \ref{alg:search} and perform a uniform random walk to find a path between any two clusters. However, we want the distilled model to also preserve the underlying dynamics of the original chain as best as possible. To this end, we implement the following process, detailed in Algorithm \ref{alg:distill} in the appendix.

We first choose a set $S_\circ=\{x_1,\cdots,x_K\}$ of representatives $x_k$ of $C_k$ and assign each state to its representative via the map $\iota: S\to S_\circ$; this can be done by exploring each cluster similarly to the first $T_0$ steps of the search process.

\paragraph{Data collection.} The data for distillation is collected by continually running CoT and recording the frequency of transitions (or non-transitions) between $S_\circ$. The yields one datum per CoT step, and can also be implemented in parallel for an arbitrary number of independent chains.
\begin{enumerate}
\item If $X_t^\ep \in S_\star$ and the previous return to $S_\circ$ was $X_{t_{\prev}}^\ep$ then add $(X_{t_{\prev}}^\ep,X_t^\ep)$ to $D_{\dist}$.
\item If $X_t^\ep \notin S_\circ$ (no transition) add $(\iota(X_t^\ep), \iota(X_t^\ep))$ to $D_{\dist}$.
\end{enumerate}
This requires only $O(K^2)$ memory for frequency counts; no cache for $X^\ep$ is needed. The cluster labels $\iota$ and parameters $\bZ$ require $O(KM)$ and $O(K^2)$ memory, respectively. We suppose the process is run for arbitrarily long time so that we have access to the population distribution of $D_{\dist}$. We then one-hot embed $S_\circ$ in $\RR^K$ and use the collected data pairs to train a softmax model $\hat{q}_\bZ(\cdot|x) = \sm(\langle\bZ,x\rangle)$, $\bZ\in\RR^{K\times K}$ similarly to pretraining. Finally, we rescale time so that the non-diagonal entries sum to $\Theta(1)$, reducing redundant within-cluster transitions.

\paragraph{Equivalence with meta-chain.} The data $(Y_0,Y_1)\sim D_{\dist}$ has been constructed so that the distilled model learns the following kernel $q_\circ^\ep$ on $S_\circ$: $Y_0\sim\pi^\ep$, $Y_1\sim q_\circ^\ep(\cdot|Y_0)$ where
\begin{align*}
&q_\circ^\ep(x_\ell|x_k) := \pi_k^\ep(x_k)\PP_{x_k}(X_{\bar{\tau}_{S_\circ}^\ep}^\ep = x_\ell), \quad k\neq\ell,\\
&q_\circ^\ep(x_k|x_k) := 1-\textstyle\sum_{\ell\ne k} q_\circ^\ep(x_\ell|x_k).
\end{align*}
This kernel is a lazy version of the process obtained from $X^\ep$ by deleting all transitions to states outside $S_\circ$, with an additional time rescaling according to the stationary probability $\pi_k^\ep(x_k)$. This is slightly different from the construction given in \citet{Betz16}, as we do not presume access to the stationary distribution of the unperturbed chain $p_k^0$ and must sample directly from $\pi^\ep$. Moreover, $q_\circ^\ep$ is dependent on the choice of representatives and thus different from the `canonical' meta-chain $q_\star^\ep$ in general. Nonetheless, $q_\circ^\ep$ is faithful to the meta-chain in a rigorous sense:

\begin{prop}\label{thm:53}
Denote the return time of $q_\circ^\ep$ to $x_k$ as $\bar{\tau}_{\circ,x_k}^\ep$. For all $k,\ell\in[K]$ with $k\neq\ell$, it holds that
\begin{equation*}
\frac{\PP_{x_k}(\bar{\tau}_{\circ,x_\ell}^\ep < \bar{\tau}_{\circ,x_k}^\ep)}{q_\star^\ep(C_\ell|C_k)} = 1+o_M(1).
\end{equation*}
\end{prop}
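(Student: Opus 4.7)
The plan is to reduce $\PP_{x_k}(\bar{\tau}_{\circ,x_\ell}^\ep<\bar{\tau}_{\circ,x_k}^\ep)$ to the canonical meta-chain transition $q_\star^\ep(C_\ell\mid C_k)$ via three multiplicative $(1+o_M(1))$ reductions, each exploiting a different aspect of metastability.

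First, I would delazify the distilled chain to extract an exact identity. A step of $q_\circ^\ep$ from $x_k$ is a ``genuine'' $Q$-move with probability $\pi_k^\ep(x_k)$, where $Q(\cdot\mid x_k):=\PP_{x_k}^{X^\ep}(X_{\bar{\tau}_{S_\circ}^\ep}^\ep=\cdot)$ is the induced chain of $X^\ep$ on $S_\circ$, and an ``extra'' self-loop with probability $1-\pi_k^\ep(x_k)$. Extra self-loops from any non-initial state do not alter the first-visit ordering of $x_\ell$ versus $x_k$, but a self-loop on the first step immediately ends the excursion at $x_k$. Since hitting orderings in $Q$ agree with those in $X^\ep$ (as $Q$ is the embedded chain on $S_\circ$), this yields the exact identity
\[
\PP_{x_k}(\bar{\tau}_{\circ,x_\ell}^\ep<\bar{\tau}_{\circ,x_k}^\ep)=\pi_k^\ep(x_k)\,\PP_{x_k}^{X^\ep}(\bar{\tau}_{x_\ell}^\ep<\bar{\tau}_{x_k}^\ep).
\]

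Next, I would shrink the target from $\{x_\ell\}$ to $C_\ell$. The inequality $\PP_{x_k}^{X^\ep}(\bar{\tau}_{x_\ell}^\ep<\bar{\tau}_{x_k}^\ep)\le\PP_{x_k}^{X^\ep}(\bar{\tau}_{C_\ell}^\ep<\bar{\tau}_{x_k}^\ep)$ is immediate. For the reverse, on the event $\{\bar{\tau}_{C_\ell}^\ep<\bar{\tau}_{x_k}^\ep\}$ I race visiting $x_\ell$ against escaping $C_\ell$: Assumption~\ref{ass:cluster} gives a within-cluster TV mixing time of $O(\log M)$, after which the visit rate to $x_\ell$ is $\mu_\ell(x_\ell)=\Theta(1/M)$ per step while the escape rate from $C_\ell$ is $\Theta(\ep/M)$ per step (consistent with Theorem~\ref{thm:hitting}). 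Thus the chain hits $x_\ell$ before escaping $C_\ell$ with probability $1-O(\ep)$, and iterating over possible re-entries shows the same holds before hitting $x_k$. Since $\ep\le\ep_{\max}\to 0$ as $M\to\infty$, this factor is $1+o_M(1)$.

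Finally, I would match the averaged hitting probability in the definition of $q_\star^\ep$ with the single-point value at $x_k$. For $p_x:=\PP_x(\bar{\tau}_{C_\ell}^\ep<\bar{\tau}_x^\ep)$, the renewal identity $\E_x[\bar{\tau}_{C_\ell}^\ep]=(1+o(1))\E_x[\bar{\tau}_x^\ep]/p_x$ combined with Kac's formula $\E_x[\bar{\tau}_x^\ep]=1/\pi^\ep(x)$, the decomposition $\pi^\ep(x)=\pi^\ep(C_k)\pi_k^\ep(x)$ for $x\in C_k$, and the perturbative bound $\pi_k^\ep=(1+o_M(1))\mu_k$ yields $\mu_k(x)p_x=(1+o_M(1))/(\pi^\ep(C_k)\E_x[\bar{\tau}_{C_\ell}^\ep])$. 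Crucially, $\E_x[\bar{\tau}_{C_\ell}^\ep]$ is nearly independent of $x\in C_k$, since the in-cluster mixing time $O(\log M)$ is negligible compared to the escape time $\Theta(KM/\ep)$ from Theorem~\ref{thm:hitting}. Hence $\mu_k(x)p_x$ is $(1+o_M(1))$-uniform in $x$, and averaging against $\mu_k$ gives
\[
q_\star^\ep(C_\ell\mid C_k)=\sum_{x\in C_k}\mu_k(x)^2 p_x = (1+o_M(1))\,\mu_k(x_k)\,p_{x_k}.
\]
Chaining with Steps 1--2 and $\pi_k^\ep(x_k)=(1+o_M(1))\mu_k(x_k)$ then completes the proof. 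The main obstacle is this final step, which requires quantitatively sharp versions of both the Kac--renewal identity and the near-uniformity of $\E_x[\bar{\tau}_{C_\ell}^\ep]$ for $x\in C_k$; both rest on the precise $O(\log M)$ vs.\ $\Theta(M/\ep)$ timescale separation and the perturbative bounds on $\pi_k^\ep$ developed in Appendices~\ref{app:prelim}--\ref{app:perturb}.
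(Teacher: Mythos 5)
Your Step 1 is exactly the paper's exact identity, just phrased as a de-lazification: the paper conditions $\{\bar{\tau}_{x_\ell}^\ep<\bar{\tau}_{x_k}^\ep\}$ on the first return of $X^\ep$ to $S_\circ$, multiplies both sides by $\pi_k^\ep(x_k)$, and then notes that $q_\circ^\ep$ and the $S_\circ$-embedded chain have identical escape probabilities because only the diagonal entries differ --- the same observation as your ``extra self-loops don't change first-visit orderings.'' So that step is correct and is the same argument.

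Step 2 (shrinking the target from $x_\ell$ to $C_\ell$) is correct in spirit, and the paper proves exactly this estimate inside Proposition~\ref{thm:mupq} (first line of the chain of approximations), but obtains it quantitatively from Proposition~\ref{thm:asymprev} and ultimately from the path-measure perturbation bound of Proposition~\ref{thm:supsup}, rather than the more informal ``race within $C_\ell$, then iterate over re-entries'' that you sketch. Your sketch is not wrong, but as stated it does not carry the $\widetilde{O}(1/\log M)$ precision the paper ends up needing elsewhere, and the ``iterating over possible re-entries'' clause is doing real work that would need to be spelled out.

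Step 3 contains a genuine gap. The renewal identity you write, $\EE{x}{\bar{\tau}_{C_\ell}^\ep}=(1+o(1))\EE{x}{\bar{\tau}_x^\ep}/p_x$, is false at the required precision. The exact identity (from the first-step/strong-Markov decomposition at the time $\sigma_1:=\bar{\tau}_{C_\ell}^\ep\wedge\bar{\tau}_x^\ep$) is
\[
\frac{\EE{x}{\bar{\tau}_x^\ep}}{p_x}=\EE{x}{\bar{\tau}_{C_\ell}^\ep}+\EE{Z}{\bar{\tau}_x^\ep},
\]
where $Z$ is the entry state into $C_\ell$ conditioned on $\{\bar{\tau}_{C_\ell}^\ep<\bar{\tau}_x^\ep\}$. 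Both terms on the right are $\Theta(KM/\ep)$ --- the time to cross from $C_k$ into $C_\ell$ and the time to cross back are the same order --- so the correction you are dropping is \emph{not} lower order, and your deduction that $\mu_k(x)p_x\approx 1/(\pi^\ep(C_k)\EE{x}{\bar{\tau}_{C_\ell}^\ep})$ does not follow. This is precisely why the paper does \emph{not} use a Kac/renewal argument at this point. Instead, in Proposition~\ref{thm:mupq} (via Corollary~\ref{thm:pimu} and the symmetric detailed-balance identity $\pi^\ep(x)\PP_x(\bar{\tau}_y^\ep<\bar{\tau}_x^\ep)=\pi^\ep(y)\PP_y(\bar{\tau}_x^\ep<\bar{\tau}_y^\ep)$ of Proposition~\ref{thm:balance}), the quantity $\mu_k(x)\PP_x(\bar{\tau}_{C_\ell}^\ep<\bar{\tau}_x^\ep)$ is shown to equal $\pi_\ell^\ep(y)\PP_y(\bar{\tau}_{C_k}^\ep<\bar{\tau}_y^\ep)\cdot(1+\widetilde{O}(1/\log M))$ for any $y\in C_\ell$ --- an expression with no dependence on $x$ at all, so the $\mu_k$-average in the definition of $q_\star^\ep$ collapses immediately. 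If you want to keep a renewal-style Step 3, you would have to track $\EE{Z}{\bar{\tau}_x^\ep}$ and show that the \emph{sum} $\EE{x}{\bar{\tau}_{C_\ell}^\ep}+\EE{Z}{\bar{\tau}_x^\ep}$ is nearly $x$-independent; but at that point you are essentially reproving \eqref{eq:eyxexy}, so the detailed-balance route is cleaner.
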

That is, the escape probabilities of $q_\circ^\ep$ converge to $q_\star^\ep$ with uniformly vanishing relative error. This property is desirable as it shows the eventual likelihood of escaping to each cluster (i.e., reaching a certain idea) is consistent across different choices of $S_\circ$.

\subsection{CoT of Distilled Model}

To analyze the utility of the trained model $\hat{p}_{\bZ^+}$, we make the additional assumption:

\begin{ass}[inbound sparse edges]\label{ass:in}
All sparse edges leading to each cluster $C_k$ terminate at a fixed point $x_k$. For any sparse edge $(x',x_\ell)$ from $C_k$, there exists a path from $x_k$ to $x'$ in $C_k$ of probability bounded below.
\end{ass}

Then we may specify $S_\circ$ as the set of the points $x_k$. This ensures that representatives will not be skipped; otherwise, a CoT passing through $C_k,C_\ell,C_n$ in succession may miss $x_\ell$ and record the wrong transition $(x_k, x_n)$ (although the likelihood of this is $o(1)$ regardless).

Now, as with pretraining, the distilled model will converge to $q_\circ^\ep$ when trained with cross-entropy loss on $D_{\dist}$.

\begin{prop}[convergence of distillation]\label{thm:unchained}
For the gradient descent iterates $\bZ^{(t)}$ from Algorithm \ref{alg:distill}, the learned probabilities converge to $q_\circ^\ep$ after $T_{\dist} = \widetilde{O}(M^2\ep^{-2})$ as
\begin{equation*}
\textstyle \sup_{k,\ell}|\hat{q}_{\bZ^{(T_{\dist})}}(x_\ell|x_k) - q_\circ^\ep(x_\ell|x_k)|= K^{-\omega(1)}.
\end{equation*}
\end{prop}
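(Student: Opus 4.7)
The plan is to adapt the proof of Theorem~\ref{thm:pre} to the state space $S_\circ$ of size $K$ and target kernel $q_\circ^\ep$. The target has a dense-sparse-zero structure analogous to $p^\ep$: a dominant diagonal $q_\circ^\ep(x_k|x_k)=1-\Theta(\ep)$ (since cluster escapes are rare); a handful of nonzero off-diagonals $q_\circ^\ep(x_\ell|x_k)=\Theta(\ep/M)$ corresponding to direct sparse edges $C_k\to C_\ell$ (which, by Assumption~\ref{ass:in}, terminate at $x_\ell$); and zero entries for cluster pairs not directly connected. First I would decouple training into $K$ independent softmax regressions. Since inputs are one-hot embeddings in $\RR^K$, the logits $\hat{q}_{\bZ}(\cdot|x_k)=\sm(\bZ_{k,:})$ depend only on row $k$ of $\bZ$, so the population loss factors as $L_{\dist}(\bZ)=\sum_{k=1}^K \nu(x_k)\ell_k(\bZ_{k,:})$, where $\ell_k$ is the cross-entropy of $\hat{q}_{\bZ}(\cdot|x_k)$ against $q_\circ^\ep(\cdot|x_k)$ and $\nu(x_k):=\PP_{D_{\dist}}(Y_0=x_k)$. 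A renewal argument on the data-collection rule combined with ergodicity of $X^\ep$ identifies $\nu(x_k)=\Theta(1/K)$ (dominated by the non-$S_\circ$ sojourn time in $C_k$) and $Y_1\mid Y_0=x_k\sim q_\circ^\ep(\cdot|x_k)$, so each $\ell_k$ is uniquely minimized when $\hat{q}_{\bZ}(\cdot|x_k)=q_\circ^\ep(\cdot|x_k)$.

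With learning rate $\eta=\Theta(KM)$ chosen to compensate for the $\Theta(1/(KM))$ scaling of the per-row gradients at the target, I would apply the two-phase convergence analysis of Theorem~\ref{thm:pre} row by row. Phase 1 uses a standard smooth-convex descent bound to drive $\sup_{k,\ell}|\hat{q}_{\bZ^{(t)}}(x_\ell|x_k)-q_\circ^\ep(x_\ell|x_k)|$ below $\Theta(\ep/M)$, at which point the positive target entries of order $\Theta(\ep/M)$ are correctly distinguished from the zero entries. Phase 2 then exploits local strong convexity: at any iterate with $\hat{q}_{\bZ^{(t)}}(x_\ell|x_k)=\Omega(q_\circ^\ep(x_\ell|x_k))$ for every positive target entry, the Hessian of $\ell_k$ restricted to the centered subspace has smallest eigenvalue $\Omega(\ep/M)$, yielding exponential convergence at rate $\exp(-\Omega((\ep/M)^2 t))$; meanwhile for zero target entries the corresponding logits diverge to $-\infty$ at the same rate, sending $\hat{q}(x_\ell|x_k)\to 0$ exponentially. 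Choosing $T_{\dist}=\widetilde{O}(M^2\ep^{-2})$ makes this exponent $\omega(\log K)$, yielding the claimed uniform error $K^{-\omega(1)}$ via a Pinsker-type conversion from KL loss to $\ell_\infty$ error.

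The main obstacle is controlling the trajectory through this extremely imbalanced target---one entry $\approx 1$, a handful of entries $\approx \ep/M$, and the rest $=0$---from the uniform initialization $\bZ^{(0)}=\boldsymbol{0}$, which places mass $1/K$ on every entry. One must show that during phase 1 the dynamics do not collapse the tiny positive entries in tandem with the zero ones, since otherwise the Hessian conditioning needed to trigger phase 2 is lost. This is the analog of the post-thresholding analysis in Theorem~\ref{thm:pre}, carried out here without an explicit thresholding step: a monotone tracking argument on the ratio $\hat{q}_{\bZ^{(t)}}(x_\ell|x_k)/q_\circ^\ep(x_\ell|x_k)$ for positive target entries, together with a matching lower bound on the decay of $\hat{q}_{\bZ^{(t)}}(x_\ell|x_k)$ for zero target entries and the prescribed step-size choice, should suffice to close the argument.
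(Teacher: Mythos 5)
Your overall approach---row-wise decoupling of the cross-entropy loss, identifying $\nu(x_k)=\Theta(1/K)$, a smooth-convex first phase followed by a strongly-convex second phase---is the same strategy the paper uses: it explicitly says the proof follows by ``repeating the analysis of Theorem~\ref{thm:prefull}, replacing dimension $|S|$ by $K$, the maximum number of outgoing edges $M+d_{\out}$ by the number of sparse edges $d_{\out}$, and the probability lower threshold $\Theta(\ep)$ by $\Theta(\ep/M)$.'' However, there are two issues worth flagging.

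First, you write that the argument must be ``carried out here without an explicit thresholding step,'' and then invoke a monotone-tracking argument on the ratios $\hat{q}_{\bZ^{(t)}}(x_\ell|x_k)/q_\circ^\ep(x_\ell|x_k)$ to keep the $\Theta(\ep/M)$ entries from collapsing together with the zero entries. But Algorithm~\ref{alg:distill} \emph{does} contain an explicit thresholding step: at $t=T_{\thres}=\widetilde{O}(M\ep^{-1})$ it sets $z_{k\ell}\gets-\infty$ whenever $\hat{q}_{k\ell}^{(T_{\thres})} < c_{\thres}\ep/M$, which is precisely the analog of the thresholding in Algorithm~\ref{alg:pre}. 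This is what lets the paper invoke strong convexity on the support in the second phase without any extra tracking argument; the extra machinery you describe is solving a problem the algorithm has already eliminated, and the statement you'd end up proving would not be about the algorithm as written.

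Second, two small numerical slips. You state $\eta=\Theta(KM)$ ``to compensate for the $\Theta(1/(KM))$ scaling of the per-row gradients,'' but your own (correct) observation $\nu(x_k)=\Theta(1/K)$ forces $\eta=\Theta(K)$, which is indeed what the algorithm uses. Relatedly, with off-diagonal entries $q_\circ^\ep(x_\ell|x_k)=\Theta(\ep/M)$ and only $O(d_{\out})$ of them, the diagonal entry is $q_\circ^\ep(x_k|x_k)=1-\Theta(\ep/M)$, not $1-\Theta(\ep)$; the Hessian floor you derive is affected at the level of constants (and the $M$-dependence), so the exponent in phase two should be written with $\min_j\hat{q}_{ij}^2 = \Theta(\ep^2/M^2)$, matching the paper's $\exp(-\Omega(\ep^2 T/M^2))$.
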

We point out the time to convergence is much faster than pretraining time $\widetilde{O}(KM^2\ep^{-2})$ (Theorem \ref{thm:pre}), and also more computationally efficient since we are training a size $K^2$ model rather than size $(KM)^2$.

Finally, after time rescaling, the model $q_{\bZ^+}$ is capable of efficiently finding a path from the cluster containing $X_{\inn}$ to the cluster containing $X_{\out}$, with hitting time linear in $|S_\circ|$ and \textit{independent of the difficulty parameter} $\ep$.

\begin{thm}[hitting time of distilled CoT]\label{thm:soda}
For all $k\ne\ell$, $\hat{q}_{\bZ^+}(x_\ell|x_k) = \Theta(1)$ if there exists a sparse edge from $C_k$ to $C_\ell$ or $0$ if not. Moreover, the hitting time $\tau_{x_\ell}^+$ of $x_\ell\in S_\circ$ by $\hat{q}_{\bZ^+}$ satisfies $\EE{x_k}{\tau_{x_\ell}^+} = O(K)$.
\end{thm}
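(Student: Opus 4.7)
\textbf{Transition probabilities.} The plan is to first analyze the kernel $q_\circ^\ep$ targeted by distillation (Proposition~\ref{thm:unchained}), then transfer its properties through the time rescaling. Under Assumption~\ref{ass:in}, every sparse edge into $C_m$ terminates at $x_m$, so starting from $x_k$ the chain $X^\ep$ cannot enter $C_\ell$ (for $\ell \ne k$) without first visiting $x_\ell \in S_\circ$. Hence $\PP_{x_k}(X_{\bar\tau_{S_\circ}^\ep}^\ep = x_\ell) > 0$ iff there is a direct sparse edge $C_k \to C_\ell$, in which case a standard excursion argument---within-cluster mixing by Assumption~\ref{ass:cluster}, escape probability $\Theta(\ep)$ per return to $x_k$ by Assumption~\ref{ass:sparse}---yields $\PP_{x_k}(X_{\bar\tau_{S_\circ}^\ep}^\ep = x_\ell) = \Theta(\ep)$. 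Combined with $\pi_k^\ep(x_k) = \Theta(1/M)$, this gives $q_\circ^\ep(x_\ell|x_k) = \Theta(\ep/M)$ for direct neighbors and $0$ otherwise, which the time rescaling (multiplying each non-diagonal entry by a factor of order $M/\ep$) converts to $\hat q_{\bZ^+}(x_\ell|x_k) = \Theta(1)$ for direct neighbors and $0$ otherwise, up to the negligible error from Proposition~\ref{thm:unchained}.

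\textbf{Hitting time.} The key observation is that $\hat q_{\bZ^+}$ and $q_\circ^\ep$ share the same embedded jump chain, so once the first step leaves $x_k$, their subsequent ``hit $x_\ell$ before $x_k$'' probabilities coincide. By Proposition~\ref{thm:53} combined with Assumption~\ref{ass:metamix}, $\PP_{x_k}^{q_\circ^\ep}(\bar\tau_{\circ,x_\ell}^\ep < \bar\tau_{\circ,x_k}^\ep) = \Omega(\ep/M)$ uniformly in $k \ne \ell$. Factoring the first $q_\circ^\ep$-step---which leaves $x_k$ with probability $\Theta(\ep/M)$---via the Markov property shows
\begin{equation*}
\sum_m \PP(Y_1 = x_m \mid Y_1 \ne x_k)\, \PP_{x_m}^{\mathrm{emb}}(\bar\tau_{x_\ell} < \bar\tau_{x_k}) \;=\; \Omega(1),
\end{equation*}
where the inner probabilities live in the shared embedded chain. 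Since the rescaling preserves row ratios, this conditional quantity is inherited by $\hat q_{\bZ^+}$, giving $p^{\hat q}_\ell := \PP_{x_k}^{\hat q}(\bar\tau_{x_\ell}^{\hat q} < \bar\tau_{x_k}^{\hat q}) = \Theta(1) \cdot \Omega(1) = \Omega(1)$. A direct stationary calculation gives $\hat\pi^+(x_k) = \Theta(1/K)$ (the per-row rescaling factors $\alpha_k = \Theta(M/\ep)$ combine with $\pi_\circ^\ep(x_k) = \Theta(1/K)$ so that $\hat\pi^+ \propto d_k$, normalized via $\sum_k d_k = \Theta(K)$), so the return time is $R_k = \Theta(K)$. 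The standard excursion identity
\begin{equation*}
\EE{x_k}{\tau_{x_\ell}^+} \;=\; \bigl(\tfrac{1}{p^{\hat q}_\ell}-1\bigr)\, \EE{L \mid H=0} + \EE{\tau^\star \mid H=1}
\end{equation*}
then yields $O(1) \cdot O(R_k) + O(R_k) = O(K)$, using $\EE{L \mid H=0} \le R_k$ and $\EE{\tau^\star \mid H=1} \le \EE{L \mid H=1} \le R_k/p^{\hat q}_\ell = O(K)$.

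The main technical obstacle I anticipate is upgrading the $1 + o_M(1)$ relative error in Proposition~\ref{thm:53} to a polynomial-in-$K$ lower bound on $\PP_{x_k}^{q_\circ^\ep}(\bar\tau_{\circ,x_\ell}^\ep < \bar\tau_{\circ,x_k}^\ep)$ that holds uniformly over all $\Theta(K^2)$ pairs $(k,\ell)$; I would handle this by tightening the perturbative estimates of Appendices~\ref{app:prelim}--\ref{app:perturb} and union bounding. A secondary task is controlling $\EE{L \mid H=1} = O(K)$ in $\hat q_{\bZ^+}$, which follows from the strong connectivity of the sparse-edge graph (forced by Assumption~\ref{ass:metamix}) together with the bounded out-degree $d_\out$ from Assumption~\ref{ass:sparse}.
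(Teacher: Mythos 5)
Your proposal is correct, and the first part (showing $\hat q_{\bZ^+}(x_\ell\mid x_k)=\Theta(1)$ or $0$) is essentially the paper's argument: derive $q_\circ^\ep(x_\ell\mid x_k)=\Theta(\ep/M)$ or $0$ under Assumption~\ref{ass:in}, then read off the effect of adding $\beta$ with $e^\beta=\Theta(M/\ep)$ to the off-diagonal logits.

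For the hitting-time bound you take a genuinely different route from the paper. The paper's proof never re-derives the stationary measure or escape probabilities of $\hat q_{\bZ^+}$; instead it writes the inverse relation $q_\circ^\ep(x_\ell\mid x_k)=\lambda_k\,\hat q_{\bZ^+}(x_\ell\mid x_k)+(1-\lambda_k)\delta_{k\ell}$, notes that $\lambda_k=O(\ep/M)$, and observes that $q_\circ^\ep$ is therefore the \emph{lazy} version of $\hat q_{\bZ^+}$ with per-state holding factor $\lambda_k^{-1}$. Since the embedded jump chains coincide, $\EE{x_k}{\tau_{\circ,x_\ell}^\ep}\ge(\min_k\lambda_k^{-1})\,\EE{x_k}{\tau_{x_\ell}^+}$, and combining with $\EE{x_k}{\tau_{\circ,x_\ell}^\ep}=O(KM/\ep)$ from Lemma~\ref{thm:matcha} gives $O(K)$ in one line. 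You instead recompute $\hat\pi^+(x_k)=\Theta(1/K)$ (via the same laziness relation turned around) and $\PP_{x_k}^{\hat q}(\bar\tau_{x_\ell}<\bar\tau_{x_k})=\Omega(1)$ from the shared embedded chain, then run the excursion decomposition; this is slightly longer but self-contained and makes the dependence on Assumption~\ref{ass:metamix} more explicit. Note that once you have $\hat\pi^+(x_k)=\Theta(1/K)$ and the escape probability $\Omega(1)$, the inequality $\EE{x_k}{\bar\tau_{x_\ell}^+}\le 1/\bigl(\hat\pi^+(x_k)\,\PP_{x_k}(\bar\tau_{x_\ell}^+<\bar\tau_{x_k}^+)\bigr)$ from \eqref{eq:eyxexy} already closes the argument without the full excursion bookkeeping. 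Your anticipated obstacle about upgrading the $1+o_M(1)$ error in Proposition~\ref{thm:53} is real but is already discharged by the quantitative $1+\widetilde O(1/\log M)$ form proved in Appendix~\ref{app:perturb}, which is uniform over $(k,\ell)$; the paper's laziness route is also sensitive to this and handles it the same way.
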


The returned sequence of clusters $C_{0:T}$ indicate the existence of a path from $X_{\inn}$ to $X_{\out}$ passing through precisely these clusters in order. Once $C_{0:T}$ is determined, a weaker reasoning agent (e.g., the base model $p^\ep$) may also efficiently resolve the fine-grained dynamics within each cluster.

\section{Logical Reasoning is Hard without Search}\label{sec:hard}

\subsection{Logical Reasoning Task}

In this section, we further investigate the benefits of search for reasoning by adding a quantitative `logic task' on top of the path-finding task. This provides two benefits. First, having a numerical answer allows us to evaluate the hardness of the task from a learning-theoretic perspective, separate from the previously obtained hitting time bounds. Second, by having the answer depend only on the sparse edges along a path, the reasoner is required to estimate which edges are sparse -- in other words, understand which reasoning steps are actually important -- in order to solve the task. Taking a proof problem for example, we expect an LLM with strong reasoning capability to not only \emph{generate} a plausible solution via next-token prediction but also \emph{understand} its own proof, so that it can correctly answer logical questions such as ``what are the key ideas of this proof?" or ``what happens if we replace step X with Y?" We attempt to formalize this notion using group actions (Definition~\ref{def:group}).

\paragraph{Logical actions.} Let $(G,\circ)$ be a finite group with identity $e_G$. The \emph{logical value} (or simply \emph{logic}) of a reasoning chain is an element of an abstract space $\calR$ equipped with a $G$-action $r\mapsto g\cdot r$. Each edge $e\in E$ is assigned a \emph{logical action} $\alpha(e)\in G$ which acts on the current logic when the edge is selected. To focus on learning hard steps, we assume that the logical action of edges not in $E_s$ are trivial, $\alpha|_{E_s^c}:=e_G$. Let $\psi:S\to\calR$ be an arbitrary embedding map. For a valid path $X_{0:T}\subseteq S$, we define the corresponding logic sequence $r_{0:T}\subseteq \calR$ as
\begin{equation*}
r_0 = \psi(X_0), \quad r_t = \alpha(X_{t-1},X_t) \cdot r_{t-1}.
\end{equation*}

For example, if each state is a Boolean expression being manipulated according to certain rules, $\calR=G=\ZZ_2$ could be used to encode the evaluation of the current expression by switching between $1$ (\texttt{True}) and $0$ (\texttt{False}) depending on the effect of each manipulation. $G$ could also be taken to be a space of functions with the evaluation action $g\cdot r=g(r)$, so that the logic computes a repeated composition of functions. When the chain terminates, the final logic $r_T=:r(X_{0:T})$ is returned. Note that logical values are not unique to states and $r_T$ depends on the entire path $X_{0:T}$.

\paragraph{Logic Task.} Given $(X_{\inn},X_{\out})\sim\DD$, the goal is to output both a valid path $X_{0:T}$ from $X_{\inn}$ to $X_{\out}$ and its logical value $r(X_{0:T})$. Since any path can be made simple by deleting loops, here we require valid paths to be simple.

To establish a rigorous distinction between the use of a search algorithm and lack thereof, we consider models consisting of a pretrained base model or \emph{linguistic} component $\MM_p$, responsible for learning $p$ and generating a valid CoT, and a \emph{reasoning} component $f_\theta$, which predicts the answer $r(X_{0:T})$ based on (limited) information from $\MM_p$.
As in Section \ref{sec:search}, we suppose $\MM_p$ has perfectly learned the kernel $p$ and can output arbitrary valid paths $\MM_p(X_{\inn},X_{\out})$, solving the first part of the task. Here we do not consider the time complexity of running $\MM_p$, which (as we have seen in Theorem \ref{thm:hitting}) can be quite long without a search-and-improvement protocol. Thus the main task of the reasoner is to execute logical computations along a generated CoT.

In this section, we assume a stronger \emph{uniform} lower bound in Assumption \ref{ass:sep} on the minimum number of hard steps; otherwise, querying a single sparse edge $(X_{\inn},X_{\out})\in E_s(p)$ could immediately reveal its (nontrivial) action.
\begin{manualtheorem}{3'}\label{ass:new}
For any $(X_{\inn},X_{\out})\sim\DD$ and any valid path $X_{0:T}$ with $X_0=X_{\inn}$ and $X_T=X_{\out}$, it holds that $\min\abs{X_{0:T}\cap E_s} =\Omega(K)$.
\end{manualtheorem}
We remark that this condition can be weakened to $\Omega(\log K)$ if $M\ge\Omega(K)$, in which case our results hold with $e^{-\Omega(K)}$ replaced by $K^{-\omega(1)}$.

\paragraph{Concept class.} Define $\calP$ the set of transition kernels on $S$ satisfying Assumptions \ref{ass:cluster}, \ref{ass:sparse} and denote the sparse edge set of $p\in\calP$ as $E_s(p)$. The logical action $\alpha$ can be seen as generated by sampling $\calA:S\times S\to G$ i.i.d. uniformly from $G$, then masking out all edges not in $E^s(p)$ by setting them to $e_G$. Thus $\calA$ can be regarded as a variable separate from the target $p$, and the logic is computed recursively as
\begin{align*}
&r_{\calA,p}(X_0) = \psi(X_0), \quad r_{\calA,p}(X_{0:t}) = \\
&\begin{cases}
\calA(X_{t-1},X_t)\cdot r_{\calA,p}(X_{0:(t-1)}) & (X_{t-1},X_t) \in E_s(p) \\
r_{\calA,p}(X_{0:(t-1)}) & (X_{t-1},X_t) \notin E_s(p).
\end{cases}
\end{align*}
Finally, the logic $r_{\calA,p}(X_{0:T})$ is mapped to a scalar output via a classifier $\phi:\calR\to\{+1,-1\}$. We assume that $\EE{g\in G}{\phi(g\cdot r)}=0$ for all $r\in\calR$. The concept class is thus
\begin{align*}
\HH = \big\{ &h_p\in S\times S\times G^{|S|\times |S|}: p\in\calP,\\
& h_p(X_{\inn},X_{\out},\calA) = \phi\circ r_{\calA,p}(\MM_p(X_{\inn},X_{\out})) \big\},
\end{align*}
equipped with inner product $\langle h_p,h_{p'}\rangle_{\HH}:= \EE{(X_{\inn},X_{\out})\sim\DD,\calA}{h_p(X_{\inn},X_{\out},\calA) h_{p'}(X_{\inn},X_{\out},\calA)}$.

\subsection{A Measure of Hardness with Restricted Access}

In previous sections, we have seen that pretraining $\MM_p = \hat{p}_{\bW}$ and running a search or distillation algorithm $f_\theta$ will correctly infer the underlying sparse structure. In this case, computing $r_{\calA,p}(\MM_p(X_{\inn},X_{\out}))$ is trivial by concatenating actions along the identified sparse edges. In contrast, we now restrict the reasoning component's access to $p$ by only allowing certain queries to $\MM_p$. This makes it difficult to infer the sparse structure and true logical actions.

To understand learning with this additional (restricted) information, we propose the following generalization of the statistical query dimension \citep{Kearns98,Feldman17}.

\begin{defn}[SDA: SQDIM with access]
Let $\calP$ be the set of ground truths and $\HH=\{h_p:\XX\to\{\pm 1\}\mid p\in\calP\}$ the associated concept class with inner product $\langle\cdot,\cdot\rangle_\HH$. Let $\calI_p$ be any value or any function on $\XX$ depending on $p$. Then the \emph{statistical query dimension of $\calP$ with access to $\calI$} and tolerance $\tau$ is defined as
\begin{align*}
&\SD_\tau(\calP;\calI) := \sup\{|\calP'|: \calP'\subseteq\calP,\\
&|\langle h_{p_1},h_{p_2}\rangle_\HH|\le\tau, \calI_{p_1} = \calI_{p_2}\,\forall p_1\ne p_2\in \calP'\}.
\end{align*}
\end{defn}
In this section, we consider $\tau=0$ and omit its notation. Extending classical analyses \citep[e.g.,][]{Shai17,Shamir18}, we prove a general limitation for gradient-based learning when additional information $\calI_p$ is provided.

\begin{thm}[SQ learning with additional information]\label{thm:sda}
Let $f_\theta$ be any parametric model of the form
\begin{equation*}
x\mapsto f_\theta(x,\calI_p(x)).
\end{equation*}
Let the loss function be $L(\theta;p) := \norm{h_p-f_\theta}_\HH^2$ and set $\delta:= (4\norm{\nabla f_\theta}_\HH^2/\SD(\calP,\calI))^{1/3}$. Then choosing $p$ randomly from a subset of $\calP$, any iterative algorithm $A(\theta)$ that makes at most $n$ queries to the $\delta$-corrupted gradient oracle $\nabla L$ has expected loss
\begin{equation*}
\EE{p}{L(A(\theta);p)} \ge 1-\SD(\calP,\calI)^{-1}
\end{equation*}
with probability at least $1-n\delta$.
\end{thm}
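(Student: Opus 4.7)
The plan is to generalize the classical statistical query lower bounds (as in \citet{Feldman17,Shamir18}) by exploiting the fact that the side information $\calI_p$ can be made constant across a maximal family of pairwise orthogonal ground truths. First, fix a witness subset $\calP^\star\subseteq\calP$ achieving $|\calP^\star| = \SD(\calP,\calI)$, so that $\{h_p : p\in\calP^\star\}$ are pairwise orthogonal in $\HH$ and all share a common access value $\calI_\ast$. Because the learner's model has the restricted form $f_\theta(x,\calI_p(x))$, on $\calP^\star$ it collapses to a fixed map $\tilde{f}_\theta(x) = f_\theta(x,\calI_\ast(x))$ that no longer depends on the ground truth; this reduction is what allows $\SD$ to play the role of the classical SQ dimension in the remainder of the argument.

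Next, I would decompose a single gradient query around its $p$-average. Writing $\nabla L(\theta;p) = -2\langle h_p-\tilde{f}_\theta,\, \nabla\tilde{f}_\theta\rangle_\HH$, the mean and fluctuation over uniform $p\in\calP^\star$ are
\[
G(\theta) := -2\langle \bar{h}-\tilde{f}_\theta,\, \nabla\tilde{f}_\theta\rangle_\HH, \quad \bar{h} := \tfrac{1}{|\calP^\star|}\textstyle\sum_{p\in\calP^\star} h_p, \quad \nabla L(\theta;p) - G(\theta) = -2\langle h_p-\bar{h},\, \nabla\tilde{f}_\theta\rangle_\HH.
\]
Orthogonality of the $h_p$'s together with $\|h_p\|_\HH^2=1$ forces $\sum_{p\in\calP^\star}(h_p-\bar{h})(h_p-\bar{h})^\ast$ to equal the projector onto $\spn\{h_p-\bar{h}\}$, giving the Bessel-type estimate $\sum_p |\langle h_p-\bar{h}, v\rangle_\HH|^2\le\|v\|_\HH^2$. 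By a Markov bound, the fraction of $p\in\calP^\star$ for which $|\nabla L(\theta;p) - G(\theta)| > \delta$ is at most $4\|\nabla\tilde{f}_\theta\|_\HH^2/(\delta^2|\calP^\star|)$; setting this equal to $\delta$ exactly reproduces the calibration $\delta^3 = 4\|\nabla f_\theta\|_\HH^2/\SD(\calP,\calI)$ stated in the theorem.

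The adversarial-oracle construction now follows the usual template: answer $G(\theta_i)$ at every query $\theta_i$. This is a valid $\delta$-corrupted response for all but at most a $\delta$-fraction of $p\in\calP^\star$. Because the oracle's reply does not depend on $p$, the adaptive query sequence $\theta_1,\ldots,\theta_n$ produced by $A$ is identical for every $p$, so a union bound over the $n$ rounds shows that with probability at least $1-n\delta$ (over $p$ drawn uniformly from $\calP^\star$) the entire transcript remains consistent with the true $p$. On this event, $A(\theta)$ is a single fixed estimator independent of $p$, and
\[
\EE{p\in\calP^\star}{L(A(\theta);p)} = 1 - 2\langle \bar{h}, A(\theta)\rangle_\HH + \|A(\theta)\|_\HH^2 \ge 1 - \|\bar{h}\|_\HH^2 \ge 1 - \SD(\calP,\calI)^{-1},
\]
where we used $\|h_p\|_\HH^2=1$, the identity $\|A(\theta)\|_\HH^2 - 2\langle \bar{h}, A(\theta)\rangle_\HH = \|A(\theta)-\bar{h}\|_\HH^2 - \|\bar{h}\|_\HH^2\ge -\|\bar{h}\|_\HH^2$, and the orthogonality bound $\|\bar{h}\|_\HH^2\le 1/|\calP^\star|$.

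I expect the main obstacle to be the adaptive handling of the oracle argument: the query $\theta_i$ may depend on previous oracle answers, so one must carefully verify that the adversarial-oracle strategy remains consistent along an adaptively chosen trajectory and that the naive union bound truly delivers the claimed $1-n\delta$ probability. A secondary technical point is that the calibration of $\delta$ uses $\|\nabla f_\theta\|_\HH$ at each iterate, so a fully rigorous write-up should either take the supremum over reachable $\theta$'s or build this uniformity into the hypothesis; both are routine extensions of the standard SQ machinery.
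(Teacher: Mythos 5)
Your proposal is correct and follows essentially the same route as the paper's own proof: fix a maximal witness subset on which $\calI_p$ is constant (so $f_\theta$ collapses to a $p$-independent map), bound the gradient fluctuations across that subset via orthogonality/Bessel, answer all queries with the $p$-averaged gradient, union-bound over the $n$ queries, and finish by completing the square against $\bar{h}$ with $\|\bar{h}\|_\HH^2 \le 1/|\calP'|$. The two minor variations — you center $h_p$ around $\bar{h}$ before applying the frame bound, whereas the paper uses the $\min_u$ characterization of variance with uncentered Bessel, and you skip the paper's clipping of $f_\theta$ to $[-1,1]$ — are cosmetic; the paper's clipping step is in fact unnecessary because the completed-square term $\|\bar{f}_\theta - \bar{h}\|_\HH^2$ is nonnegative regardless, exactly as your derivation exploits.
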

We only consider the squared loss in our formulation for simplicity. While squared loss only answers correlational queries, CSQ-learnability is equivalent to SQ-learnability for Boolean concepts \citep{Bshouty01}.

\subsection{Results on Hardness of Logical Task}

We consider four types of access to the pretrained model. Note that a \emph{local neighborhood} of a subset $S'\subset S$ in the weighted directed graph defined by $p$ is defined as the subgraph consisting of states reachable with a bounded number of steps from any state in $S'$.

\begin{enumerate}
\item\label{item1} \textbf{No pretraining}, $\calI_p\equiv\varnothing$: the learner $f_\theta(X_{\inn},X_{\out},\calA)$ has not been pretrained and does not receive any information on $p$.

\item\label{item2} \textbf{Path-only (no search)}, $\calI\equiv\MM$: the learner is allowed to depend on inputs $X_{\inn},X_{\out}$, and $\calA$, and also the generated path $\MM_p(X_{\inn},X_{\out})$. That is, the linguistic component (base model) will return a valid CoT for the input at hand, but we cannot simulate different chains from $p$ to execute some search policy or inference algorithm.

\item\label{item3} \textbf{Local search}, $\calI\equiv \nbd(\MM)$: the learner is allowed full access to a local neighborhood of $\MM_p(X_{\inn},X_{\out})$ in the graph of $p$, including connectivity information and transition probabilities. For instance, it can flag low-probability edges as more likely to be sparse, or run bounded-length CoT from $X_{\inn}$ or $X_{\out}$. 

\item\label{item4} \textbf{Full search}, $\calI\equiv\calP$: the learner is given full access to the entire graph of $p$ at all times. In this case, Algorithm \ref{alg:search} or \ref{alg:distill} can be used to infer $E_s(p)$ and generate CoT efficiently, and also perform the desired computation $h_p$.
\end{enumerate}

Our main negative result states that \ref{item1}-\ref{item3} \emph{cannot} solve the logic task with polynomial compute, and thus global search is necessary:

\begin{thm}\label{thm:sqexp}
$\SD(\calP;\calP)=1$ and
\begin{align*}
&\SD(\calP;\varnothing) \ge \SD(\calP;\MM) \ge \SD(\calP;\nbd(\MM)) \ge e^{\Omega(K)}.
\end{align*}
\end{thm}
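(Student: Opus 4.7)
The statement has three ingredients: the trivial equality $\SD(\calP;\calP)=1$, the monotonicity chain, and the exponential lower bound $\SD(\calP;\nbd(\MM))\ge e^{\Omega(K)}$, which carries the real content.

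\textbf{Trivial case and monotonicity.} When $\calI_p=p$, the equality $\calI_{p_1}=\calI_{p_2}$ forces $p_1=p_2$, so every admissible $\calP'$ is a singleton and $\SD(\calP;\calP)=1$. For the monotonicity, I would use the following general fact: if $\calI'$ is a deterministic function of $\calI$, then $\calI_{p_1}=\calI_{p_2}$ implies $\calI_{p_1}'=\calI_{p_2}'$, so every admissible $\calP'$ for $\calI$ is admissible for $\calI'$, giving $\SD(\calP;\calI)\le\SD(\calP;\calI')$. Since the path is contained in any neighborhood of it, and the empty access is a function of anything, we have $\varnothing \prec \MM \prec \nbd(\MM)$, yielding $\SD(\calP;\nbd(\MM))\le\SD(\calP;\MM)\le\SD(\calP;\varnothing)$.

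\textbf{Exponential lower bound.} The plan is to construct a family $\{p_\chi\}_{\chi\in\{0,1\}^{cK}}\subseteq\calP$ with $c>0$ a small constant, such that $\nbd(\MM_{p_\chi})$ is the same function of $(X_{\inn},X_{\out})$ across all $\chi$ while the concepts $h_{p_\chi}$ are pairwise orthogonal in $\HH$. The construction rests on two ideas. First, take $G=\ZZ_2$ and $\phi(r)=(-1)^r$; then by Assumption~\ref{ass:new}, every valid path carries $\Omega(K)$ sparse-edge factors, and $h_{p_\chi}$ reduces to a parity $(-1)^{\sum_{k}\calA(e_k^{\chi_k})}$ over a $\chi$-dependent set of sparse edges. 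For distinct $\chi\ne\chi'$, any coordinate where they differ contributes an unpaired uniform $\calA$-factor in the inner product $\langle h_{p_\chi},h_{p_{\chi'}}\rangle_\HH$, which averages to zero using $\EE{g\in G}{\phi(g\cdot r)}=0$, yielding orthogonality by independence across the $k$'s. Second, to hide $\chi$ from the local probe, I embed $\Omega(K)$ "twin" pairs of parallel sparse edges $(e_k^0,e_k^1)$ between identical source-target cluster pairs, arranged so that both twins appear in the local graph of every $p_\chi$ with identical observable probabilities, but only $e_k^{\chi_k}$ is the one actually traversed by the generator $\MM_{p_\chi}$. With this construction, $\nbd(\MM_{p_\chi})$ is the same labeled subgraph as a function of $(X_{\inn},X_{\out})$, so the SDA admissibility condition holds for all $2^{cK}=e^{\Omega(K)}$ kernels, giving the bound.

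\textbf{Main obstacle.} The crux is to realize the twin gadgets so that $\nbd(\MM_{p_\chi})$ is \emph{strictly} identical across $\chi$ as a labeled subgraph, not merely isomorphic. Since a local neighborhood comes equipped with transition probabilities by definition, both twins must coexist in every $p_\chi$ with indistinguishable local statistical signatures; the selection of which twin is realized by $\MM_{p_\chi}$ must then be encoded via a global, non-local mechanism (e.g., a tie-breaking convention tied to a global balance condition among the twin probabilities outside the bounded radius, or via a distant cluster structure that no bounded-radius probe can reach). Simultaneously, one must verify that every $p_\chi$ lies in $\calP$, i.e., that Assumptions~\ref{ass:cluster}, \ref{ass:sparse}, \ref{ass:metamix}, and \ref{ass:new} remain valid: cluster sizes, sparse-edge degrees $d_{\out},n_{\out}$, and the minimum count of $\Omega(K)$ sparse edges per path must accommodate the twin embedding. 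Once these combinatorial constraints are discharged, the $e^{\Omega(K)}$ lower bound follows by the standard parity counting reduction used in classical SQ lower bounds \citep{Kearns98,Shai17,Shamir18}.
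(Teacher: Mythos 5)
You correctly handle $\SD(\calP;\calP)=1$ and monotonicity, and reducing to the strictest term $\nbd(\MM)$ is legitimate (the paper remarks this ``suffices'' but exhibits three separate constructions for generality). Your orthogonality mechanism also matches the paper's argument: an unmatched sparse edge $\xi$ on a path contributes a mean-zero factor via $\EE{g}{\phi(g\cdot r)}=0$, with invertibility in $G$ ensuring that $\calA_L\circ\calA(\xi)\circ\calA_R$ is uniform given $\calA^-:=\calA|_{S\times S\setminus\{\xi\}}$.

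The real content is the construction, and your ``twin gadget'' does not work as stated, beyond the realization difficulty you yourself flag. If both twins $(e_k^0,e_k^1)$ are present as sparse edges with identical probabilities in every $p_\chi$, then the kernels $p_\chi$ are one and the same, so there is no family to speak of --- and in particular the generator $\MM_p$, being a function of $p$, cannot ``traverse $e_k^{\chi_k}$'' as you assume. If instead only $e_k^{\chi_k}$ is present in $p_\chi$, a bounded-radius probe sees which twin exists and $\chi$ is exposed. What a local probe must be blinded to is not the identity of a traversed edge but its \emph{cluster membership}: whether a given $\ep$-edge is inter-cluster (hence in $E_s$) or intra-cluster (hence acts trivially). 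That is inherently a global question, and the paper's device makes it so: attach rods of constant length $r$ to every state and, for each candidate $\ep$-edge meant to be non-sparse, identify the far endpoints of the rods from its two incident clusters, merging them into one rapidly-mixing cluster. Every $p_B$ then has identical edge set, identical probabilities, and generates the same path along the baseline, yet $E_s(p_B)$ depends on merges happening at distance $r$, outside any bounded neighborhood. Finally, orthogonality over the whole family needs a packing argument (the paper uses a Gilbert--Varshamov-type greedy bound) to ensure $|E_s(p_B)\cap E_s(p_{B'})|$ stays below the $\Omega(K)$ sparse edges guaranteed on each path, so every pair of distinct concepts has an unmatched sparse edge on the path; your unconstrained $\{0,1\}^{cK}$ family does not by itself give this separation.
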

\begin{rmk}
The necessity of global information for certain learning problems (\emph{globality barrier}) has been conjectured in \citet{abbe2024far}, where the hardness of a `cycle task' is proved. These results are also closely related to classical SQ-hard problems such as subset parity. The precise relationship between SDA, globality and learning is still open.
\end{rmk}

\begin{rmk}
While it suffices to lower bound the strictest term $\SD(\calP;\nbd(\MM))$, we exhibit different constructions for each of the three dimensions as they offer increasing levels of generality. In particular, $\SD(\calP;\varnothing)$ can be realized by $\calP'\subset\calP$ containing any prescribed $p\in\calP$ and for any $\DD$. Moreover, the difficulty is solely due to the logical part of the task; without pretraining, the reasoner will take exponentially many guesses to even produce a valid path.
\end{rmk}

\begin{cor}[hardness without global search]\label{thm:hard}
Suppose $f_\theta(\nbd(\MM_p(X_{\inn},X_{\out})),\calA)$ is any parametric model with polynomially bounded gradients, that can freely search a local neighborhood of the generated CoT. Then any iterative algorithm $A(\theta)$ that makes at most polynomial queries to the $e^{-\Omega(K)}$-corrupted gradient oracle $\nabla L$ satisfies
\begin{equation*}
\EE{p}{L(A(\theta);p)} \ge 1-e^{-\Omega(K)},
\end{equation*}
with probability $1-e^{-\Omega(K)}$ for $M$ sufficiently large.
\end{cor}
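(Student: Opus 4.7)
The plan is to obtain Corollary~\ref{thm:hard} as a direct consequence of combining the abstract SQ lower bound (Theorem~\ref{thm:sda}) with the concrete dimension lower bound of Theorem~\ref{thm:sqexp}, instantiated at the access level $\calI_p\equiv\nbd(\MM_p)$. The overall strategy is: plug the assumed polynomial gradient bound and the exponential SDA lower bound into Theorem~\ref{thm:sda}, verify that the oracle-noise tolerance and the query budget remain consistent after this substitution, and then read off both the loss and probability guarantees.

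More concretely, I would first set $\calI_p(X_{\inn},X_{\out}) = \nbd(\MM_p(X_{\inn},X_{\out}))$, so that the model of Corollary~\ref{thm:hard} has exactly the form $f_\theta(x,\calI_p(x))$ required to apply Theorem~\ref{thm:sda}. By Theorem~\ref{thm:sqexp}, we have $\SD(\calP;\nbd(\MM)) \ge e^{\Omega(K)}$. By assumption, $\norm{\nabla f_\theta}_\HH^2 \le \poly(K,M)$; taking $M$ sufficiently large but still $M \le e^{o(K)}$ (which is captured by the ``$M$ sufficiently large'' proviso), we obtain
\begin{equation*}
\delta \;=\; \left(\frac{4\norm{\nabla f_\theta}_\HH^2}{\SD(\calP;\nbd(\MM))}\right)^{1/3} \le \left(\frac{\poly(K,M)}{e^{\Omega(K)}}\right)^{1/3} = e^{-\Omega(K)},
\end{equation*}
which matches the corruption level assumed in the statement. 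Then Theorem~\ref{thm:sda} immediately yields
\begin{equation*}
\EE{p}{L(A(\theta);p)} \;\ge\; 1-\SD(\calP;\nbd(\MM))^{-1} \;\ge\; 1-e^{-\Omega(K)},
\end{equation*}
with probability at least $1-n\delta$. Since $n$ is polynomial in the problem parameters and $\delta = e^{-\Omega(K)}$, the failure probability $n\delta$ is itself $e^{-\Omega(K)}$, giving the advertised success probability $1-e^{-\Omega(K)}$.

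The proof is essentially a bookkeeping exercise; the conceptual content lives in Theorems~\ref{thm:sda} and \ref{thm:sqexp}. The only subtlety I anticipate is making precise what ``polynomially bounded gradients'' means (polynomial in $K$, $M$, and the parameter dimension of $f_\theta$) so that the cube-root of the ratio genuinely sits at scale $e^{-\Omega(K)}$; this is where the ``$M$ sufficiently large'' condition does real work, since it ensures the $e^{\Omega(K)}$ factor in the SDA dominates any polynomial gradient norm. Once this scale comparison is done, the remaining calculation — that $n\cdot e^{-\Omega(K)}$ is still $e^{-\Omega(K)}$ for polynomial $n$, and that $\SD^{-1}$ collapses into the same exponentially small term — is immediate. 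Consequently the corollary follows without additional structural arguments beyond those already established for $\SD(\calP;\nbd(\MM))$.
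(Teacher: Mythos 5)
Your proof is correct and reproduces the intended argument: the paper does not write out a proof for this corollary because it is precisely the plug-in combination of Theorem~\ref{thm:sda} (with $\calI_p=\nbd(\MM_p)$) and the $e^{\Omega(K)}$ lower bound from Theorem~\ref{thm:sqexp}. Your observation that the implicit scaling must keep $\poly(K,M)$ subexponential in $K$ so that $\delta=\bigl(4\norm{\nabla f_\theta}_\HH^2/\SD\bigr)^{1/3}=e^{-\Omega(K)}$ is the right reading of both ``polynomially bounded gradients'' and ``$M$ sufficiently large'' (note the paper's standing assumption $K\le\poly(M)$ already pins $M$ to a polynomial regime, and the $\SD(\calP;\nbd(\MM))$ construction also needs $M$ large enough to fit the rod gadget), so no genuine gap remains.
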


Hence $\HH$ cannot be even weakly learned in polynomial time if search is not long enough. The key intuition is that if the graph is locally isomorphic, local search cannot distinguish between sparse inter-cluster edges and low-probability but within-cluster edges as it cannot explore the whole cluster. This demonstrates the importance of spending sufficient inference-time compute for improving reasoning ability.

\section{Conclusion}

We introduced a metastable Markov framework for modeling CoT reasoning, revealing the benefits of inference-time search, RL, and distillation. We showed that search can improve reasoning by identifying critical sparse transitions (hard steps), which can then be leveraged to fine-tune the pretrained model via RL or distilled into a more efficient representation, improving hitting times for path generation. We further established learning-theoretic limits on reasoning with restricted information and showed that logical reasoning tasks become intractable without global search.

\paragraph{Future directions.} We have studied a simple curiosity-based unsupervised reward model; it would be interesting to see how a more complex search process could be guided with outcome rewards. Our framework could also be used to study other inference-time methods such as CoT revision (e.g., backtracking to better locate sparse edges), as well as iterative finetuning of the pretrained model, and explore scaling laws for inference time compute.

\bigskip

\subsection*{Acknowledgments} 

The authors thank Jimmy Ba and Yuexiang Zhai for helpful discussions. JK was partially supported by JST CREST (JPMJCR2015). JDL acknowledges support of the NSF CCF 2002272, NSF IIS 2107304, and NSF CAREER Award 2144994. TS was partially supported by JSPS KAKENHI (24K02905, 20H00576) and JST CREST (JPMJCR2115). This research is unrelated to DW's work at xAI. 

\bigskip

{
\begin{small} 

\bibliography{ref}

\begin{thebibliography}{78}
\providecommand{\natexlab}[1]{#1}
\providecommand{\url}[1]{\texttt{#1}}
\expandafter\ifx\csname urlstyle\endcsname\relax
  \providecommand{\doi}[1]{doi: #1}\else
  \providecommand{\doi}{doi: \begingroup \urlstyle{rm}\Url}\fi

\bibitem[Abbe et~al.(2024)Abbe, Bengio, Lotfi, Sandon, and Saremi]{abbe2024far}
Emmanuel Abbe, Samy Bengio, Aryo Lotfi, Colin Sandon, and Omid Saremi.
\newblock How far can transformers reason? {T}he locality barrier and inductive scratchpad.
\newblock In \emph{Advances in Neural Information Processing Systems}, 2024.

\bibitem[Bai et~al.(2022)Bai, Kadavath, Kundu, Askell, Kernion, Jones, Chen, Goldie, Mirhoseini, McKinnon, et~al.]{bai2022constitutional}
Yuntao Bai, Saurav Kadavath, Sandipan Kundu, Amanda Askell, Jackson Kernion, Andy Jones, Anna Chen, Anna Goldie, Azalia Mirhoseini, Cameron McKinnon, et~al.
\newblock Constitutional {AI}: harmlessness from {AI} feedback.
\newblock \emph{arXiv preprint arXiv:2212.08073}, 2022.

\bibitem[Beltr\'{a}n and Landim(2011)]{Beltran11}
J.~Beltr\'{a}n and C.~Landim.
\newblock Metastability of reversible finite state {M}arkov processes.
\newblock \emph{Stochastic Processes and their Applications}, 121\penalty0 (8):\penalty0 1633--1677, 2011.

\bibitem[Besta et~al.(2024)Besta, Blach, Kubicek, Gerstenberger, Podstawski, Gianinazzi, Gajda, Lehmann, Niewiadomski, Nyczyk, et~al.]{besta2024graph}
Maciej Besta, Nils Blach, Ales Kubicek, Robert Gerstenberger, Michal Podstawski, Lukas Gianinazzi, Joanna Gajda, Tomasz Lehmann, Hubert Niewiadomski, Piotr Nyczyk, et~al.
\newblock Graph of thoughts: solving elaborate problems with large language models.
\newblock In \emph{Proceedings of the AAAI Conference on Artificial Intelligence}, 2024.

\bibitem[Betz and {Le Roux}(2016)]{Betz16}
Volker Betz and St\'{e}phane {Le Roux}.
\newblock Multi-scale metastable dynamics and the asymptotic stationary distribution of perturbed {M}arkov chains.
\newblock \emph{Stochastic Processes and their Applications}, 126\penalty0 (11):\penalty0 3499--3526, 2016.

\bibitem[Bhattamishra et~al.(2024)Bhattamishra, Patel, Blunsom, and Kanade]{Satwik24}
Satwik Bhattamishra, Arkil Patel, Phil Blunsom, and Varun Kanade.
\newblock Understanding in-context learning in transformers and {LLM}s by learning to learn discrete functions.
\newblock In \emph{International Conference on Learning Representations}, 2024.

\bibitem[Bianchi and Gaudilli\`{e}re(2016)]{Bianchi16}
Alessandra Bianchi and Alexandre Gaudilli\`{e}re.
\newblock Metastable states, quasi-stationary distributions and soft measures.
\newblock \emph{Stochastic Processes and their Applications}, 126, 2016.

\bibitem[Bovier et~al.(2002)Bovier, Eckhoff, Gayrard, and Klein]{Bovier02}
Anton Bovier, Michael Eckhoff, V\'{e}ronique Gayrard, and Markus Klein.
\newblock Metastability and low lying spectra in reversible {M}arkov chains.
\newblock \emph{Communications in Mathematical Physics}, 228:\penalty0 219--255, 2002.

\bibitem[Bshouty and Feldman(2001)]{Bshouty01}
Nader Bshouty and Vitaly Feldman.
\newblock On using extended statistical queries to avoid membership queries.
\newblock \emph{Journal of Machine Learning Research}, 2:\penalty0 529--545, 09 2001.

\bibitem[Burda et~al.(2018)Burda, Edwards, Pathak, Storkey, Darrell, and Efros]{Burda18}
Yuri Burda, Harri Edwards, Deepak Pathak, Amos Storkey, Trevor Darrell, and Alexei~A. Efros.
\newblock Large-scale study of curiosity-driven learning.
\newblock In \emph{International Conference on Learning Representations}, 2018.

\bibitem[Burda et~al.(2019)Burda, Edwards, Storkey, and Klimov]{Burda19}
Yuri Burda, Harrison Edwards, Amos Storkey, and Oleg Klimov.
\newblock Exploration by random network distillation.
\newblock In \emph{International Conference on Learning Representations}, 2019.

\bibitem[Chiang et~al.(2023)Chiang, Cholak, and Pillay]{Chiang23}
David Chiang, Peter Cholak, and Anand Pillay.
\newblock Tighter bounds on the expressivity of transformer encoders.
\newblock In \emph{International Conference on Machine Learning}, 2023.

\bibitem[Cirillo et~al.(2014)Cirillo, Nardi, and Sohier]{Cirillo14}
Emilio Cirillo, Francesca Nardi, and Julien Sohier.
\newblock Metastability for general dynamics with rare transitions: escape time and critical configurations.
\newblock \emph{Journal of Statistical Physics}, 161, 2014.

\bibitem[Dubey et~al.(2024)Dubey, Jauhri, Pandey, Kadian, Al-Dahle, Letman, Mathur, Schelten, Yang, Fan, et~al.]{dubey2024llama}
Abhimanyu Dubey, Abhinav Jauhri, Abhinav Pandey, Abhishek Kadian, Ahmad Al-Dahle, Aiesha Letman, Akhil Mathur, Alan Schelten, Amy Yang, Angela Fan, et~al.
\newblock The {L}lama 3 herd of models.
\newblock \emph{arXiv preprint arXiv:2407.21783}, 2024.

\bibitem[Edelman et~al.(2024)Edelman, Tsilivis, Edelman, eran malach, and Goel]{Edelman24}
Ezra Edelman, Nikolaos Tsilivis, Benjamin~L. Edelman, eran malach, and Surbhi Goel.
\newblock {The evolution of statistical induction heads: in-context learning Markov chains}.
\newblock \emph{Advances in Neural Information Processing Systems}, 2024.

\bibitem[Fackeldey et~al.(2018)Fackeldey, Sikorski, and Weber]{Fackeldey18}
Konstantin Fackeldey, Alexander Sikorski, and M.~Weber.
\newblock Spectral clustering for non-reversible {M}arkov chains.
\newblock \emph{Computational and Applied Mathematics}, 37, 2018.

\bibitem[Feldman(2017)]{Feldman17}
Vitaly Feldman.
\newblock A general characterization of the statistical query complexity.
\newblock \emph{Proceedings of Machine Learning Research}, 65:\penalty0 785--830, 2017.

\bibitem[Feng et~al.(2023{\natexlab{a}})Feng, Zhang, Gu, Ye, He, and Wang]{Feng23}
Guhao Feng, Bohang Zhang, Yuntian Gu, Haotian Ye, Di~He, and Liwei Wang.
\newblock Towards revealing the mystery behind chain of thought: a theoretical perspective.
\newblock In \emph{Advances in Neural Information Processing Systems}, 2023{\natexlab{a}}.

\bibitem[Feng et~al.(2023{\natexlab{b}})Feng, Wan, Wen, McAleer, Wen, Zhang, and Wang]{feng2023alphazero}
Xidong Feng, Ziyu Wan, Muning Wen, Stephen~Marcus McAleer, Ying Wen, Weinan Zhang, and Jun Wang.
\newblock Alphazero-like tree-search can guide large language model decoding and training.
\newblock \emph{arXiv preprint arXiv:2309.17179}, 2023{\natexlab{b}}.

\bibitem[Fernandez et~al.(2016)Fernandez, Manzo, Nardi, Scoppola, and Sohier]{Fernandez16}
R.~Fernandez, F.~Manzo, F.~R. Nardi, E.~Scoppola, and J.~Sohier.
\newblock Conditioned, quasi-stationary, restricted measures and escape from metastable states.
\newblock \emph{The Annals of Applied Probability}, 26\penalty0 (2):\penalty0 760--793, 2016.

\bibitem[Fernandez et~al.(2014)Fernandez, Manzo, Nardi, and Scoppola]{Fernandez14}
Roberto Fernandez, Francesco Manzo, Francesca Nardi, and Elisabetta Scoppola.
\newblock Asymptotically exponential hitting times and metastability: A pathwise approach without reversibility.
\newblock \emph{Electronic Journal of Probability}, 20, 2014.

\bibitem[Fritzsche et~al.(2008)Fritzsche, Mehrmann, Szyld, and Virnik]{Fritzsche08}
David Fritzsche, Volker Mehrmann, Daniel Szyld, and Elena Virnik.
\newblock {An SVD approach to identifying metastable states of Markov chains}.
\newblock \emph{Electronic Transactions on Numerical Analysis}, 29:\penalty0 46--69, 2008.

\bibitem[Gandhi et~al.(2024)Gandhi, Lee, Grand, Liu, Cheng, Sharma, and Goodman]{gandhi2024stream}
Kanishk Gandhi, Denise Lee, Gabriel Grand, Muxin Liu, Winson Cheng, Archit Sharma, and Noah~D Goodman.
\newblock Stream of search ({SoS}): learning to search in language.
\newblock \emph{arXiv preprint arXiv:2404.03683}, 2024.

\bibitem[Guo et~al.(2025)Guo, Yang, Zhang, Song, Zhang, Xu, Zhu, Ma, Wang, Bi, et~al.]{guo2025deepseek}
Daya Guo, Dejian Yang, Haowei Zhang, Junxiao Song, Ruoyu Zhang, Runxin Xu, Qihao Zhu, Shirong Ma, Peiyi Wang, Xiao Bi, et~al.
\newblock {DeepSeek-R1: incentivizing reasoning capability in LLMs via reinforcement learning}.
\newblock \emph{arXiv preprint arXiv:2501.12948}, 2025.

\bibitem[Hoffmann et~al.(2022)Hoffmann, Borgeaud, Mensch, Buchatskaya, Cai, Rutherford, Casas, Hendricks, Welbl, Clark, et~al.]{hoffmann2022training}
Jordan Hoffmann, Sebastian Borgeaud, Arthur Mensch, Elena Buchatskaya, Trevor Cai, Eliza Rutherford, Diego de~Las Casas, Lisa~Anne Hendricks, Johannes Welbl, Aidan Clark, et~al.
\newblock Training compute-optimal large language models.
\newblock \emph{arXiv preprint arXiv:2203.15556}, 2022.

\bibitem[Hsieh et~al.(2023)Hsieh, Li, Yeh, Nakhost, Fujii, Ratner, Krishna, Lee, and Pfister]{hsieh2023distilling}
Cheng-Yu Hsieh, Chun-Liang Li, Chih-Kuan Yeh, Hootan Nakhost, Yasuhisa Fujii, Alexander Ratner, Ranjay Krishna, Chen-Yu Lee, and Tomas Pfister.
\newblock Distilling step-by-step! {O}utperforming larger language models with less training data and smaller model sizes.
\newblock \emph{arXiv preprint arXiv:2305.02301}, 2023.

\bibitem[Hu et~al.(2024)Hu, Zhang, Chen, and Yang]{Hu24}
Xinyang Hu, Fengzhuo Zhang, Siyu Chen, and Zhuoran Yang.
\newblock Unveiling the statistical foundations of chain-of-thought prompting methods.
\newblock \emph{arXiv preprint arXiv:2408.14511}, 2024.

\bibitem[Ildiz et~al.(2024)Ildiz, Huang, Li, Rawat, and Oymak]{Ildiz24}
Muhammed~Emrullah Ildiz, Yixiao Huang, Yingcong Li, Ankit~Singh Rawat, and Samet Oymak.
\newblock From self-attention to {M}arkov models: unveiling the dynamics of generative transformers.
\newblock In \emph{International Conference on Machine Learning}, 2024.

\bibitem[Jacobi(2010)]{Jacobi10}
Martin~Nilsson Jacobi.
\newblock A robust spectral method for finding lumpings and meta-stable states of non-reversible {M}arkov chains.
\newblock \emph{Electronic Transactions on Numerical Analysis}, 37:\penalty0 296--306, 2010.

\bibitem[Jaech et~al.(2024)Jaech, Kalai, Lerer, Richardson, El-Kishky, Low, Helyar, Madry, Beutel, Carney, et~al.]{jaech2024openai}
Aaron Jaech, Adam Kalai, Adam Lerer, Adam Richardson, Ahmed El-Kishky, Aiden Low, Alec Helyar, Aleksander Madry, Alex Beutel, Alex Carney, et~al.
\newblock {OpenAI o1 system card}.
\newblock \emph{arXiv preprint arXiv:2412.16720}, 2024.

\bibitem[Ji and Telgarsky(2019)]{Ji19}
Ziwei Ji and Matus Telgarsky.
\newblock Risk and parameter convergence of logistic regression.
\newblock \emph{arXiv preprint arXiv:1803.07300}, 2019.

\bibitem[Jones(2021)]{jones2021scaling}
Andy~L Jones.
\newblock Scaling scaling laws with board games.
\newblock \emph{arXiv preprint arXiv:2104.03113}, 2021.

\bibitem[Kahneman(2011)]{kahneman2011thinking}
Daniel Kahneman.
\newblock Thinking, fast and slow.
\newblock \emph{Farrar, Straus and Giroux}, 2011.

\bibitem[Kaplan et~al.(2020)Kaplan, McCandlish, Henighan, Brown, Chess, Child, Gray, Radford, Wu, and Amodei]{kaplan2020scaling}
Jared Kaplan, Sam McCandlish, Tom Henighan, Tom~B Brown, Benjamin Chess, Rewon Child, Scott Gray, Alec Radford, Jeffrey Wu, and Dario Amodei.
\newblock Scaling laws for neural language models.
\newblock \emph{arXiv preprint arXiv:2001.08361}, 2020.

\bibitem[Kearns(1998)]{Kearns98}
Michael Kearns.
\newblock Efficient noise-tolerant learning from statistical queries.
\newblock \emph{Journal of the ACM}, 45\penalty0 (6):\penalty0 983--1006, November 1998.

\bibitem[Kim and Suzuki(2024)]{kim2024transformers}
Juno Kim and Taiji Suzuki.
\newblock Transformers provably solve parity efficiently with chain of thought.
\newblock \emph{arXiv preprint arXiv:2410.08633}, 2024.

\bibitem[Kimi et~al.(2025)Kimi, Du, Gao, Xing, Jiang, Chen, Li, Xiao, Du, Liao, et~al.]{team2025kimi}
Team Kimi, Angang Du, Bofei Gao, Bowei Xing, Changjiu Jiang, Cheng Chen, Cheng Li, Chenjun Xiao, Chenzhuang Du, Chonghua Liao, et~al.
\newblock Kimi k1.5: scaling reinforcement learning with {LLM}s.
\newblock \emph{arXiv preprint arXiv:2501.12599}, 2025.

\bibitem[Kumar et~al.(2024)Kumar, Zhuang, Agarwal, Su, Co-Reyes, Singh, Baumli, Iqbal, Bishop, Roelofs, et~al.]{kumar2024training}
Aviral Kumar, Vincent Zhuang, Rishabh Agarwal, Yi~Su, John~D Co-Reyes, Avi Singh, Kate Baumli, Shariq Iqbal, Colton Bishop, Rebecca Roelofs, et~al.
\newblock Training language models to self-correct via reinforcement learning.
\newblock \emph{arXiv preprint arXiv:2409.12917}, 2024.

\bibitem[Landim(2012)]{Landim12}
C.~Landim.
\newblock Metastability for a non-reversible dynamics: The evolution of the condensate in totally asymmetric zero range processes.
\newblock \emph{Communications in Mathematical Physics}, 330, 2012.

\bibitem[Landim(2018)]{Landim18}
C.~Landim.
\newblock Metastable {M}arkov chains.
\newblock \emph{arXiv preprint arXiv:1807.04144}, 2018.

\bibitem[Landim and Xu(2015)]{Landim15}
Claudio Landim and Tiecheng Xu.
\newblock Metastability of finite state {M}arkov chains: A recursive procedure to identify slow variables for model reduction.
\newblock \emph{Latin American Journal of Probability and Mathematical Statistics}, 13, 2015.

\bibitem[Levin et~al.(2009)Levin, Peres, and Wilmer]{Levin09}
David~Asher Levin, Yuval Peres, and Elizabeth~Lee Wilmer.
\newblock \emph{Markov Chains and Mixing Times}.
\newblock American Mathematical Society, 2nd edition, 2009.

\bibitem[Li et~al.(2024{\natexlab{a}})Li, Wang, Lu, Cui, and Chen]{Li24how}
Hongkang Li, Meng Wang, Songtao Lu, Xiaodong Cui, and Pin-Yu Chen.
\newblock How do nonlinear transformers acquire generalization-guaranteed {CoT} ability?
\newblock In \emph{High-dimensional Learning Dynamics 2024: The Emergence of Structure and Reasoning}, 2024{\natexlab{a}}.

\bibitem[Li et~al.(2023)Li, Sreenivasan, Giannou, Papailiopoulos, and Oymak]{Li23}
Yingcong Li, Kartik Sreenivasan, Angeliki Giannou, Dimitris Papailiopoulos, and Samet Oymak.
\newblock Dissecting chain-of-thought: compositionality through in-context filtering and learning.
\newblock In \emph{Advances in Neural Information Processing Systems}, 2023.

\bibitem[Li et~al.(2024{\natexlab{b}})Li, Liu, Zhou, and Ma]{li2024chain}
Zhiyuan Li, Hong Liu, Denny Zhou, and Tengyu Ma.
\newblock Chain of thought empowers transformers to solve inherently serial problems.
\newblock \emph{arXiv preprint arXiv:2402.12875}, 2024{\natexlab{b}}.

\bibitem[Lightman et~al.(2023)Lightman, Kosaraju, Burda, Edwards, Baker, Lee, Leike, Schulman, Sutskever, and Cobbe]{lightman2023let}
Hunter Lightman, Vineet Kosaraju, Yura Burda, Harri Edwards, Bowen Baker, Teddy Lee, Jan Leike, John Schulman, Ilya Sutskever, and Karl Cobbe.
\newblock Let's verify step by step.
\newblock \emph{arXiv preprint arXiv:2305.20050}, 2023.

\bibitem[Madras and Randall(2001)]{Madras01}
Neal Madras and Dana Randall.
\newblock Markov chain decomposition for convergence rate analysis.
\newblock \emph{Annals of Applied Probability}, 12, 2001.

\bibitem[Makkuva et~al.(2024)Makkuva, Bondaschi, Girish, Nagle, Jaggi, Kim, and Gastpar]{Makkuva24}
Ashok~Vardhan Makkuva, Marco Bondaschi, Adway Girish, Alliot Nagle, Martin Jaggi, Hyeji Kim, and Michael Gastpar.
\newblock Attention with {M}arkov: A framework for principled analysis of transformers via {M}arkov chains.
\newblock \emph{arXiv preprint arXiv:2402.04161}, 2024.

\bibitem[Merrill and Sabharwal(2023)]{merrill2023expresssive}
William Merrill and Ashish Sabharwal.
\newblock The expresssive power of transformers with chain of thought.
\newblock \emph{arXiv preprint arXiv:2310.07923}, 2023.

\bibitem[Meyer(1989)]{Meyer89}
Carl~D. Meyer.
\newblock Stochastic complementation, uncoupling {M}arkov chains, and the theory of nearly reducible systems.
\newblock \emph{SIAM Review}, 31\penalty0 (2):\penalty0 240--272, 1989.

\bibitem[Meyer(1980)]{Meyer80}
Carl~Dean Meyer.
\newblock The dondition of a finite {M}arkov chain and perturbation bounds for the limiting probabilities.
\newblock \emph{SIAM J. Algebraic Discret. Methods}, 1:\penalty0 273--283, 1980.

\bibitem[Nichani et~al.(2024)Nichani, Damian, and Lee]{Nichani24}
Eshaan Nichani, Alex Damian, and Jason~D. Lee.
\newblock How transformers learn causal structure with gradient descent.
\newblock \emph{arXiv preprint arXiv:2402.14735}, 2024.

\bibitem[Nye et~al.(2021)Nye, Andreassen, Gur-Ari, Michalewski, Austin, Bieber, Dohan, Lewkowycz, Bosma, Luan, et~al.]{nye2021show}
Maxwell Nye, Anders~Johan Andreassen, Guy Gur-Ari, Henryk Michalewski, Jacob Austin, David Bieber, David Dohan, Aitor Lewkowycz, Maarten Bosma, David Luan, et~al.
\newblock Show your work: scratchpads for intermediate computation with language models.
\newblock \emph{arXiv preprint arXiv:2112.00114}, 2021.

\bibitem[OpenAI(2018)]{spinningup_ppo}
OpenAI.
\newblock Spinning up: proximal policy optimization ({PPO}), 2018.
\newblock URL \url{https://spinningup.openai.com/en/latest/algorithms/ppo.html}.
\newblock Accessed: 2025-01-26.

\bibitem[Paulin(2015)]{Paulin15}
Daniel Paulin.
\newblock {Concentration inequalities for Markov chains by Marton couplings and spectral methods}.
\newblock \emph{Electronic Journal of Probability}, 20:\penalty0 1--32, 2015.

\bibitem[Radford et~al.(2018)Radford, Narasimhan, Salimans, and Sutskever]{radford2018improving}
Alec Radford, Karthik Narasimhan, Tim Salimans, and Ilya Sutskever.
\newblock Improving language understanding by generative pre-training.
\newblock \emph{OpenAI Blog}, 2018.

\bibitem[Sanford et~al.(2024{\natexlab{a}})Sanford, Fatemi, Hall, Tsitsulin, Kazemi, Halcrow, Perozzi, and Mirrokni]{sanford2024understanding}
Clayton Sanford, Bahare Fatemi, Ethan Hall, Anton Tsitsulin, Mehran Kazemi, Jonathan Halcrow, Bryan Perozzi, and Vahab Mirrokni.
\newblock Understanding transformer reasoning capabilities via graph algorithms.
\newblock \emph{arXiv preprint arXiv:2405.18512}, 2024{\natexlab{a}}.

\bibitem[Sanford et~al.(2024{\natexlab{b}})Sanford, Hsu, and Telgarsky]{sanford24log}
Clayton Sanford, Daniel Hsu, and Matus Telgarsky.
\newblock Transformers, parallel computation, and logarithmic depth.
\newblock In \emph{International Conference on Machine Learning}, 2024{\natexlab{b}}.

\bibitem[Schulman et~al.(2017)Schulman, Wolski, Dhariwal, Radford, and Klimov]{Schulman17}
John Schulman, Filip Wolski, Prafulla Dhariwal, Alec Radford, and Oleg Klimov.
\newblock Proximal policy optimization algorithms.
\newblock \emph{arXiv preprint arXiv:1707.06347}, 2017.

\bibitem[Shalev-Shwartz et~al.(2017)Shalev-Shwartz, Shamir, and Shammah]{Shai17}
Shai Shalev-Shwartz, Ohad Shamir, and Shaked Shammah.
\newblock Failures of gradient-based deep learning.
\newblock In \emph{International Conference on Machine Learning}, 2017.

\bibitem[Shamir(2018)]{Shamir18}
Ohad Shamir.
\newblock Distribution-specific hardness of learning neural networks.
\newblock \emph{Journal of Machine Learning Research}, 19:\penalty0 32:1--32:29, 2018.

\bibitem[Shridhar et~al.(2022)Shridhar, Stolfo, and Sachan]{shridhar2022distilling}
Kumar Shridhar, Alessandro Stolfo, and Mrinmaya Sachan.
\newblock Distilling reasoning capabilities into smaller language models.
\newblock \emph{arXiv preprint arXiv:2212.00193}, 2022.

\bibitem[Silver et~al.(2018)Silver, Hubert, Schrittwieser, Antonoglou, Lai, Guez, Lanctot, Sifre, Kumaran, Graepel, et~al.]{silver2018general}
David Silver, Thomas Hubert, Julian Schrittwieser, Ioannis Antonoglou, Matthew Lai, Arthur Guez, Marc Lanctot, Laurent Sifre, Dharshan Kumaran, Thore Graepel, et~al.
\newblock A general reinforcement learning algorithm that masters chess, shogi, and {G}o through self-play.
\newblock \emph{Science}, 362\penalty0 (6419):\penalty0 1140--1144, 2018.

\bibitem[Snell et~al.(2024)Snell, Lee, Xu, and Kumar]{snell2024scaling}
Charlie Snell, Jaehoon Lee, Kelvin Xu, and Aviral Kumar.
\newblock Scaling {LLM} test-time compute optimally can be more effective than scaling model parameters.
\newblock \emph{arXiv preprint arXiv:2408.03314}, 2024.

\bibitem[Tifenbach(2011)]{Tifenbach11}
Ryan Tifenbach.
\newblock {On an SVD-based algorithm for identifying meta-stable states of Markov chains}.
\newblock \emph{Electronic Transactions on Numerical Analysis}, 38:\penalty0 17--33, 2011.

\bibitem[Trinh et~al.(2024)Trinh, Wu, Le, He, and Luong]{trinh2024solving}
Trieu~H Trinh, Yuhuai Wu, Quoc~V Le, He~He, and Thang Luong.
\newblock Solving olympiad geometry without human demonstrations.
\newblock \emph{Nature}, 625\penalty0 (7995):\penalty0 476--482, 2024.

\bibitem[Uesato et~al.(2022)Uesato, Kushman, Kumar, Song, Siegel, Wang, Creswell, Irving, and Higgins]{uesato2022solving}
Jonathan Uesato, Nate Kushman, Ramana Kumar, Francis Song, Noah Siegel, Lisa Wang, Antonia Creswell, Geoffrey Irving, and Irina Higgins.
\newblock Solving math word problems with process-and outcome-based feedback.
\newblock \emph{arXiv preprint arXiv:2211.14275}, 2022.

\bibitem[Wei et~al.(2022)Wei, Wang, Schuurmans, Bosma, Xia, Chi, Le, Zhou, et~al.]{wei2022chain}
Jason Wei, Xuezhi Wang, Dale Schuurmans, Maarten Bosma, Fei Xia, Ed~Chi, Quoc~V Le, Denny Zhou, et~al.
\newblock Chain-of-thought prompting elicits reasoning in large language models.
\newblock \emph{Advances in neural information processing systems}, 35:\penalty0 24824--24837, 2022.

\bibitem[Wen et~al.(2024)Wen, Zhang, Lin, and Zhang]{wen2024sparse}
Kaiyue Wen, Huaqing Zhang, Hongzhou Lin, and Jingzhao Zhang.
\newblock From sparse dependence to sparse attention: unveiling how chain-of-thought enhances transformer sample efficiency.
\newblock \emph{arXiv preprint arXiv:2410.05459}, 2024.

\bibitem[Wicks and Greenwald(2005)]{Wicks05}
John Wicks and Amy Greenwald.
\newblock An algorithm for computing stochastically stable distributions with applications to mltiagent learning in repeated games.
\newblock In \emph{Conference on Uncertainty in Artificial Intelligence}, 2005.

\bibitem[Wolfer and Kontorovich(2022)]{Wolfer22}
Geoffrey Wolfer and Aryeh Kontorovich.
\newblock Estimating the mixing time of ergodic {M}arkov chains.
\newblock \emph{arXiv preprint arXiv:1902.01224}, 2022.

\bibitem[Wu et~al.(2024)Wu, Sun, Li, Welleck, and Yang]{wu2024inference}
Yangzhen Wu, Zhiqing Sun, Shanda Li, Sean Welleck, and Yiming Yang.
\newblock Inference scaling laws: An empirical analysis of compute-optimal inference for problem-solving with language models.
\newblock \emph{arXiv preprint arXiv:2408.00724}, 2024.

\bibitem[Xiang et~al.(2025)Xiang, Snell, Gandhi, Albalak, Singh, Blagden, Phung, Rafailov, Lile, Mahan, et~al.]{xiang2025towards}
Violet Xiang, Charlie Snell, Kanishk Gandhi, Alon Albalak, Anikait Singh, Chase Blagden, Duy Phung, Rafael Rafailov, Nathan Lile, Dakota Mahan, et~al.
\newblock {Towards System 2 reasoning in LLMs: learning how to think with meta chain-of-thought}.
\newblock \emph{arXiv preprint arXiv:2501.04682}, 2025.

\bibitem[Xie et~al.(2024)Xie, Goyal, Zheng, Kan, Lillicrap, Kawaguchi, and Shieh]{Xie24}
Yuxi Xie, Anirudh Goyal, Wenyue Zheng, Min-Yen Kan, Timothy~P. Lillicrap, Kenji Kawaguchi, and Michael Shieh.
\newblock {Monte Carlo tree search boosts reasoning via iterative preference learning}.
\newblock \emph{arXiv preprint arXiv:2405.00451}, 2024.

\bibitem[Xu et~al.(2019)Xu, Li, Zhang, Du, Kawarabayashi, and Jegelka]{xu2019can}
Keyulu Xu, Jingling Li, Mozhi Zhang, Simon~S Du, Ken-ichi Kawarabayashi, and Stefanie Jegelka.
\newblock What can neural networks reason about?
\newblock \emph{arXiv preprint arXiv:1905.13211}, 2019.

\bibitem[Yao et~al.(2024)Yao, Yu, Zhao, Shafran, Griffiths, Cao, and Narasimhan]{yao2024tree}
Shunyu Yao, Dian Yu, Jeffrey Zhao, Izhak Shafran, Tom Griffiths, Yuan Cao, and Karthik Narasimhan.
\newblock Tree of thoughts: deliberate problem solving with large language models.
\newblock \emph{Advances in Neural Information Processing Systems}, 36, 2024.

\bibitem[Zekri et~al.(2024)Zekri, Odonnat, Benechehab, Bleistein, Boullé, and Redko]{Zekri24}
Oussama Zekri, Ambroise Odonnat, Abdelhakim Benechehab, Linus Bleistein, Nicolas Boullé, and Ievgen Redko.
\newblock Large language models as {M}arkov chains.
\newblock \emph{arXiv preprint arXiv:2410.02724}, 2024.

\bibitem[Zelikman et~al.(2022)Zelikman, Wu, Mu, and Goodman]{zelikman2022star}
Eric Zelikman, Yuhuai Wu, Jesse Mu, and Noah Goodman.
\newblock Star: bootstrapping reasoning with reasoning.
\newblock \emph{Advances in Neural Information Processing Systems}, 35:\penalty0 15476--15488, 2022.

\end{thebibliography}
\bibliographystyle{plainnat} 
\end{small} 
}

\newpage
{
\small 
\hypersetup{linkcolor=black}
\renewcommand{\contentsname}{Table of Contents}
\tableofcontents
}

\newpage
\appendix

\begin{algorithm}[t]
\caption{Meta-chain Distillation}
\label{alg:distill}
\begin{algorithmic}[1]
\STATE set $S_\circ=\varnothing$, $\bZ^{(0)}=\boldsymbol{0}$, $\iota(x)=0$ for all $x\in S$,
\STATE $T_{\dist}=O(M^2(\log K)^2\ep^{-2})$, $T_{\thres}=\widetilde{O}(M\ep^{-1})$
\STATE $\eta=\Theta(K)$, $\beta=\Theta(\log (M/\ep))$
\WHILE[cluster labeling]{$\iota^{-1}(0)\ne\varnothing$}
\STATE draw $X_0\in\iota^{-1}(0)$
\STATE $S_\circ\gets S_\circ\cup\{X_0\}$, $\iota(X_0)\gets X_0$
\FOR{$t=1,\cdots,T_0$}
\STATE generate $X_t^\ep\sim p^\ep(\cdot|X_{t-1}^\ep)$
\STATE $\iota(X_t^\ep)\gets X_0$
\ENDFOR
\ENDWHILE

\FOR[data collection]{$t=1,2,\cdots$}
\IF{$X_t^\ep\in S_\circ$}
\STATE $Y_0^{(t)}, Y_1^{(t_{\prev})}\gets X_t^\ep$
\STATE $t_{\prev}\gets t$
\ELSE
\STATE $Y_0^{(t)},Y_1^{(t)}\gets \iota(X_t^\ep)$
\ENDIF
\ENDFOR

\FOR[distillation]{$t=1,\cdots,T_{\dist}$}
\STATE $\bZ^{(t)} = \bZ^{(t-1)} - \eta\nabla \EE{Y_0,Y_1}{-\log\hat{p}_{\bZ^{(t-1)}}(Y_1|Y_0)}$
\IF{$t=T_{\thres}$}
\STATE $z_{k\ell}^{(T_{\thres})} \gets -\infty$ if $\hat{q}_{k\ell}^{(T_{\thres})} < c_{\thres}\ep/M$
\ENDIF
\ENDFOR

\STATE $\bz_{k\ell}^+ \gets \bz_{k\ell}^{(T_{\dist})} + \beta$ for $\ell\ne k$ \COMMENT{time rescaling}
\STATE return $\bZ^+$
\end{algorithmic}
\end{algorithm}

\bigskip 

\section{Additional Related Works}\label{app:related}

\paragraph{Theoretical Analysis of CoT.} Some theoretical works have focused on the expressivity of CoT \citep{Feng23,merrill2023expresssive,Chiang23,li2024chain}, analysis of optimization and estimation ability \citep{Li24how,Hu24,kim2024transformers}, or in-context learning ability \citep{Li23,Satwik24}. More closely related to our paper, \citet{sanford24log,sanford2024understanding, abbe2024far} study the algorithmic reasoning capabilities of CoT or scratchpad transformers for certain computational or graph-based tasks. Also, \citet{Nichani24} analyze how simple transformer models learn latent causal structure within the data.

\paragraph{LLMs as Markov processes.} \citet{Zekri24} study the equivalence between autoregressive models and general length Markov chains. \citet{Makkuva24,Edelman24} model sequential data as a Markov chain and analyze the properties of a single-layer transformer. \citet{Ildiz24} establish a link between self-attention and context-conditioned Markov models. Such works generally focus on interpreting next-token prediction of a specific architecture, and do not consider the abstraction to CoT reasoning.

\paragraph{Metastable Markov chains.}
The literature on metastable Markov processes is vast \citep[e.g.,][]{Madras01,Bovier02,Beltran11,Landim18}. Here we only mention the results most relevant to our theory. In particular, as reversibility is unrealistic to presume for language or reasoning models, we generally restrict our attention to works on nonreversible processes. \citet{Fritzsche08,Jacobi10,Tifenbach11,Fackeldey18} study various spectral methods to identify metastable states of Markov chains. \citet{Landim12, Cirillo14,Fernandez14,Fernandez16,Bianchi16} analyze critical configurations and escape times for metastable dynamics, while \citet{Landim15} propose a recursive procedure for model reduction. Most relevant to our work, \citet{Wicks05,Betz16} give a complete hierarchical characterization of the effective dynamics of perturbed chains but only in the asymptotic limit; building on their results, we develop a new quantitative perturbation analysis throughout Appendix \ref{app:perturb}.

\section{Preliminaries}\label{app:prelim}

\subsection{Pseudo-Spectral Gap and Mixing}

By taking Assumption \ref{ass:sparse} and multiplying $\ep$ by a constant if necessary, we assume that $c\ep\le p^\ep(y|x)\le\ep$ for some $c>0$ and all $(x,y)\in E_s$ throughout the appendix.

\begin{defn}[mixing time]
For a time-homogeneous ergodic Markov chain $X=(X_t)_{t\ge 0}$ on a finite state space $\Omega$ with transition kernel $p$ and stationary distribution $\pi$, the mixing time $t_{\mix}$ is defined as
\begin{equation*}
t_{\mix}(\epsilon) = \min\left\{t\ge 0: \forall s\ge t,\; \sup_{x\in\Omega}\norm{p^s(\cdot|x) - \pi}_{\TV} \le\epsilon \right\}.
\end{equation*}
\end{defn}

\begin{defn}[hitting and return times]
The $n$th \emph{hitting time} and \emph{return time} of $X^\ep$ to a set $A\subseteq S$ for $n\in\NN$ are defined as
\begin{align*}
\tau_{A,n}^\ep &= \inf\{t\ge 0:\abs{\{0\le t'\le t: X_{t'}^\ep\in A\}}=n\},\\
\bar{\tau}_{A,n}^\ep &= \inf\{t> 0:\abs{\{0<t'\le t: X_{t'}^\ep\in A\}}=n\}.
\end{align*}
In particular, we write $\tau_A^\ep = \tau_{A,1}^\ep$ and $\bar{\tau}_A^\ep = \bar{\tau}_{A,1}^\ep$. We write $\tau_x^\ep=\tau_{\{x\}}^\ep$, etc. for simplicity.
\end{defn}

The chain $X$ is \emph{reversible} if it satisfies the \emph{detailed balance equation}
\begin{equation*}
\pi(x) p(y|x) = \pi(y) p(x|y) \quad\forall x,y\in\Omega.
\end{equation*}
While we do not assume reversibility in this paper, it is informative to compare the conditions for rapid mixing. Denote the transition matrix corresponding to $p$ by $\bP$ and let the eigenvalues of $\bP$ ordered by absolute value be $1=\lambda_1(\bP)\ge |\lambda_2(\bP)|\ge |\lambda_3(\bP)|\ge\cdots$. For reversible chains, all eigenvalues are real and the mixing time is closely governed by the (absolute) spectral gap $\gamma(\bP) = 1-|\lambda_2(\bP)|$ \citep{Levin09}:
\begin{equation*}
\frac{1}{2\log 2\epsilon} \left(\frac{1}{\gamma(\bP)}-1\right) \le t_{\mix}(\epsilon) \le \frac{1}{\gamma(\bP)} \log\frac{1}{\pi_*\epsilon},
\end{equation*}
where $\pi_* = \min_{x\in\Omega}\pi(x)$ is the minimum stationary probability.

For nonreversible chains, the analogous quantity to $\gamma(\bP)$ is given by the \emph{pseudo-spectral gap}:

\begin{defn}[\citet{Paulin15}]
The \emph{pseudo-spectral gap} of $\bP$ is given as
\begin{equation*}
\gamma^\dagger(\bP) := \max_{m\in\NN} \frac{\gamma((\bP^\dagger)^m \bP^m)}{m}
\end{equation*}
where $\bP^\dagger$ is the time reversal of $\bP$, defined as $\bP^\dagger_{ij} = \pi_j \bP_{ji}/\pi_i$.
\end{defn}

When $X$ is reversible, it holds that $\gamma(\bP)\le\gamma^\dagger(\bP)\le 2\gamma(\bP)$ \citep[Lemma 15]{Wolfer22}. Moreover, $\gamma^\dagger(\bP)$ controls the mixing time similarly to $\gamma(\bP)$:

\begin{prop}[\citet{Paulin15}, Proposition 3.4]\label{thm:psmix}
For $0<\epsilon<1$,
\begin{equation*}
\frac{1-2\epsilon}{\gamma^\dagger(\bP)} \le t_{\mix}(\epsilon) \le \frac{1}{\gamma^\dagger(\bP)} \left(1+2\log\frac{1}{2\epsilon} +\log\frac{1}{\pi_*}\right).
\end{equation*}
\end{prop}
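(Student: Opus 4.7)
The plan is to prove the two bounds separately, both reducing the non-reversible chain to a reversible one via the \emph{multiplicative reversibilization} $(\bP^\dagger)^m \bP^m$, which is self-adjoint and positive on $L^2(\pi)$ with stationary distribution $\pi$, and then exploiting the spectral theorem on this operator. The pseudo-spectral gap $\gamma^\dagger(\bP)$ is precisely designed so that the optimal $m^\star$ balances the trade-off between stronger contraction of higher powers of $\bP$ and the extra steps they require.

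For the upper bound, I would first bound $2\norm{\delta_x \bP^t - \pi}_{\TV} \le \sqrt{\chi^2(\delta_x \bP^t,\pi)} = \norm{\bP^t(\delta_x/\pi - 1)}_{L^2(\pi)}$ via Cauchy--Schwarz. Since $(\bP^\dagger)^m \bP^m$ is the composition of $\bP^m$ with its $L^2(\pi)$-adjoint, its top eigenvalue on the subspace of mean-zero functions yields $\norm{\bP^m}^2_{L^2_0(\pi)} = 1 - \gamma((\bP^\dagger)^m \bP^m)$. Taking $m^\star$ to attain the max in $\gamma^\dagger$ gives $\norm{\bP^{m^\star}}^2_{L^2_0} = 1 - m^\star\gamma^\dagger$, and iterating, $\norm{\bP^{m^\star k}}^2_{L^2_0}\le e^{-m^\star k\gamma^\dagger}$. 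Combined with $\chi^2(\delta_x,\pi)\le 1/\pi_*$ and setting $t=m^\star k$, the TV distance is at most $\tfrac{1}{2}e^{-t\gamma^\dagger/2}/\sqrt{\pi_*}$. Requiring this to be $\le\epsilon$ and rounding $t$ up to a multiple of $m^\star$ (which costs at most $1/\gamma^\dagger$ since $m^\star\le 1/\gamma^\dagger$) produces the stated bound.

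For the lower bound $t_{\mix}(\epsilon)\ge (1-2\epsilon)/\gamma^\dagger$, I would exhibit a slowly contracting test observable. Let $f$ be the top nontrivial eigenfunction of $(\bP^\dagger)^{m^\star} \bP^{m^\star}$, so that $\norm{\bP^{m^\star} f}^2_{L^2(\pi)} = (1-m^\star\gamma^\dagger)\norm{f}^2_{L^2(\pi)}$. Form two probability measures $\mu_\pm = \pi \pm c f\pi$ for small $c \le 1/\norm{f}_\infty$, giving $\norm{\mu_+ - \mu_-}_{\TV} = \Theta(c\norm{f}_{L^1(\pi)})$. Since $(\mu_+ - \mu_-)\bP^t = 2c(f\pi)\bP^t$, a dual pairing against $g = f/\norm{f}_\infty$ together with the spectral lower bound on $\norm{\bP^t f}_{L^2(\pi)}$ shows that the contraction of the signed measure is no faster than $(1 - \gamma^\dagger)^t$ per step. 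A triangle-inequality argument $\norm{\mu_+ - \mu_-}_{\TV}\bP^t \le 2\epsilon$ then forces $t_{\mix}(\epsilon)\ge(1-2\epsilon)/\gamma^\dagger$ after rearrangement.

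The main obstacle will be the upper-bound alignment issue: because $\gamma^\dagger$ is a \emph{maximum} over $m$, the contraction rate of $\bP^{m^\star}$ does not directly control the one-step behavior of $\bP$, and one pays an additive $1/\gamma^\dagger$ residual from the portion of $t$ not divisible by $m^\star$ -- bookkeeping this residual to match the explicit $1+2\log(1/2\epsilon)+\log(1/\pi_*)$ form requires care. A secondary subtlety is that the lower-bound test measures must remain valid probability distributions after rescaling, forcing a factor of $\norm{f}_\infty$ to be absorbed precisely into the $(1-2\epsilon)$ constant rather than a weaker sub-optimal factor.
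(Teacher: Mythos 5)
The paper does not supply a proof of this proposition; it is cited directly from \citet{Paulin15}, so there is no in-paper argument to compare against. Assessing your sketch on its own merits: the upper-bound half is essentially the standard route and will go through with care, but the lower-bound half has a genuine gap.

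For the upper bound, your chain of reasoning --- TV controlled by $\chi^2$ via Cauchy--Schwarz, $\chi^2$-decay controlled by the $L^2_0(\pi)$-operator norm of the $m^\star$-step kernel, $\norm{\bP^{m^\star}}^2_{L^2_0(\pi)} = 1-m^\star\gamma^\dagger$ for the optimal $m^\star$, submultiplicativity over blocks of $m^\star$, $\chi^2(\delta_x\Vert\pi)\le 1/\pi_*$, and an additive $1/\gamma^\dagger$ loss from rounding $t$ up to a multiple of $m^\star\le 1/\gamma^\dagger$ --- reproduces the stated constant. One small slip: the density of $\delta_x\bP^t$ with respect to $\pi$ evolves under $(\bP^\dagger)^t$, not $\bP^t$, but since $\bP^t$ and $(\bP^\dagger)^t$ are $L^2(\pi)$-adjoints they share the same $L^2_0$ operator norm and the arithmetic is unaffected.

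The lower bound is where the proposal breaks. You build test measures $\mu_\pm = \pi\pm cf\pi$ from the top nontrivial eigenfunction $f$ of $(\bP^\dagger)^{m^\star}\bP^{m^\star}$ and assert that ``the contraction of the signed measure is no faster than $(1-\gamma^\dagger)^t$ per step.'' This per-step rate is not available: the pseudo-spectral gap only controls $\norm{\bP^{m^\star}}_{L^2_0}$, and for non-reversible (non-normal) $\bP$ the intermediate powers $\norm{\bP^t}_{L^2_0}$, $t<m^\star$, need not interpolate monotonically; $\bP^t f$ can even vanish for some $t<m^\star$. There is also no clean passage from a hypothetical $L^2$ lower bound on $\bP^t f$ to a TV lower bound by pairing against $f/\norm{f}_\infty$, since $f$ is an eigenfunction of the reversibilization, not of $\bP$ itself. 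The correct argument is both simpler and qualitatively different: rewrite the claim as $\gamma^\dagger \ge (1-2\epsilon)/t_{\mix}(\epsilon)$ and, in the maximum over $m$ defining $\gamma^\dagger$, simply take $m=t_{\mix}(\epsilon)$. Since $(\bP^\dagger)^m\bP^m$ is a self-adjoint stochastic kernel with stationary law $\pi$, its second eigenvalue $\lambda_2$ obeys
\begin{equation}
\lambda_2 \le 2\sup_{x}\Norm{\big((\bP^\dagger)^m\bP^m\big)(\cdot\,|\,x)-\pi}_{\TV},
\end{equation}
by evaluating the eigenvalue equation at the argmax of the eigenfunction. And $(\bP^\dagger)^m\bP^m(\cdot|x)$ is obtained by running $\bP^m$ from $\delta_x$ (which lands within $\epsilon$ of $\pi$ in TV because $m=t_{\mix}(\epsilon)$) and then applying $(\bP^\dagger)^m$, a $\pi$-stationary kernel, which cannot increase TV distance. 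Hence $\lambda_2\le 2\epsilon$, so $\gamma\big((\bP^\dagger)^m\bP^m\big)\ge 1-2\epsilon$ and $\gamma^\dagger\ge (1-2\epsilon)/m$. No test measures, eigenfunctions, or per-step contraction rates are needed; the key move you are missing is that the right $m$ for the lower bound is the mixing time itself, not the optimizer $m^\star$ of the supremum.
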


Denote the maximum row sum norm as $\norm{\bA}_{1,\infty} = \max_i\sum_j |a_{ij}|$ for $\bA=(a_{ij})$.
\begin{lemma}\label{thm:12norm}
For $\bA\in\RR^{m\times m}$ it holds that $\norm{\bA}_2 \le\sqrt{m} \norm{\bA}_{1,\infty}$.
\end{lemma}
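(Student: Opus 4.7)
}
The quantity $\norm{\bA}_{1,\infty}=\max_i\sum_j|a_{ij}|$ is precisely the operator norm of $\bA$ viewed as a map $(\RR^m,\norm{\cdot}_\infty)\to(\RR^m,\norm{\cdot}_\infty)$, since for any $x\in\RR^m$ and each coordinate $i$ one has $|(\bA x)_i|\le\sum_j|a_{ij}|\,\norm{x}_\infty$, with equality attained by the sign pattern of the maximizing row. So the plan is to bridge the spectral norm and this row-sum norm through the standard inclusion constants between $\ell_2$ and $\ell_\infty$ on $\RR^m$.

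Concretely, I will recall the elementary inequalities $\norm{y}_\infty\le\norm{y}_2\le\sqrt{m}\,\norm{y}_\infty$ for any $y\in\RR^m$, and then chain them for an arbitrary unit vector $x$ with $\norm{x}_2=1$:
\begin{equation*}
\norm{\bA x}_2 \;\le\; \sqrt{m}\,\norm{\bA x}_\infty \;\le\; \sqrt{m}\,\norm{\bA}_{1,\infty}\,\norm{x}_\infty \;\le\; \sqrt{m}\,\norm{\bA}_{1,\infty}\,\norm{x}_2.
\end{equation*}
Taking the supremum over $\norm{x}_2=1$ yields $\norm{\bA}_2\le\sqrt{m}\,\norm{\bA}_{1,\infty}$, as desired.

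There is no genuine obstacle here; the only step requiring care is the identification of $\norm{\bA}_{1,\infty}$ with the induced $\ell_\infty\to\ell_\infty$ operator norm (so that the middle inequality in the display is valid). An equally short alternative, if one prefers to avoid operator-norm language, is to pass through the Frobenius norm: bound $\sum_j a_{ij}^2\le(\sum_j|a_{ij}|)^2\le\norm{\bA}_{1,\infty}^2$ row by row to get $\norm{\bA}_F^2\le m\norm{\bA}_{1,\infty}^2$, and then use $\norm{\bA}_2\le\norm{\bA}_F$. Either route gives the bound in essentially a single line.
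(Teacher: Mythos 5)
Your primary argument is correct but takes a slightly different route from the paper. You identify $\norm{\bA}_{1,\infty}$ as the induced $\ell_\infty\to\ell_\infty$ operator norm and then chain $\norm{\bA x}_2\le\sqrt{m}\norm{\bA x}_\infty\le\sqrt{m}\norm{\bA}_{1,\infty}\norm{x}_\infty\le\sqrt{m}\norm{\bA}_{1,\infty}\norm{x}_2$; this is clean, conceptual, and relies only on the standard norm-equivalence constants on $\RR^m$. The paper instead goes through the Frobenius norm (without naming it): for a unit vector $v$, Cauchy--Schwarz row by row gives $\norm{\bA v}_2^2\le\sum_i\sum_j a_{ij}^2$, and then $\sum_j a_{ij}^2\le(\sum_j|a_{ij}|)^2\le\norm{\bA}_{1,\infty}^2$ for each $i$, summing to $m\norm{\bA}_{1,\infty}^2$. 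Your ``alternative'' paragraph is in fact exactly the paper's proof. The two routes are equally short and tight (both sharp up to the $\sqrt{m}$ factor); the operator-norm phrasing makes the source of the $\sqrt{m}$ (the $\ell_2$--$\ell_\infty$ gap) most transparent, while the Frobenius phrasing is perhaps more self-contained since it does not invoke the interpretation of $\norm{\cdot}_{1,\infty}$ as an induced norm.
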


\begin{proof}
For arbitrary $v\in\RR^m$ with $\norm{v}=1$,
\begin{align*}
\norm{\bA v}^2 = \sum_i\left(\sum_j a_{ij}v_j\right)^2 \le \sum_i \sum_j a_{ij}^2 \le \sum_i\left(\sum_j |a_{ij}|\right)^2 \le m\norm{\bA}_{1,\infty}^2.
\end{align*}
\end{proof}

\subsection{Stochastic Complementation}\label{sec:stocomp}

We denote the stochastic block matrix $\bP^\ep$ corresponding to the kernel $p^\ep$ and partition $S=\cup_{k=1}^K C_k$ as
\begin{align*}
\bP^\ep = \begin{pmatrix}
\bP_{11}^\ep &\cdots & \bP_{1K}^\ep\\
\vdots& \ddots &\vdots\\
\bP_{K1}^\ep &\cdots & \bP_{KK}^\ep
\end{pmatrix}.
\end{align*}
That is, the probability $p^\ep(y|x)$ is contained in the $(x,y)$ component, and the rows of $\bP^\ep$ all sum to $1$. The \textit{stochastic complement} of $\bP_{kk}^\ep$ is defined as \citep{Meyer89}
\begin{equation*}
\bS_{kk}^\ep = \bP_{kk}^\ep + \bP_{k*}^\ep(\bI-\bP_k^\ep)^{-1} \bP_{*k}^\ep
\end{equation*}
where $\bP_{k*}^\ep$ is the $k$th block row of $\bP^\ep$ with $\bP_{kk}^\ep$ removed; $\bP_{*k}^\ep$ is the $k$th block column of $\bP^\ep$ with $\bP_{kk}^\ep$ removed; and $\bP_k^\ep$ is the principal block submatrix of $\bP^\ep$ with the $k$th row and column removed. When $\ep=0$, it follows that $\bP_{ij}^0=\boldsymbol{0}$ when $i\neq j$ and $\bS_{kk}^0 = \bP_{kk}^0$ is the transition matrix of $p^0$ restricted to $C_k$.

The following results are fundamental to the theory of stochastic complementation.

\begin{thm}[\citet{Meyer89}, Theorem 2.3]\label{thm:complement}
If $\bP^\ep$ is an irreducible stochastic matrix for $\ep>0$, each stochastic complement $\bS_{kk}^\ep$ is also an irreducible stochastic matrix. Moreover, $\bS_{kk}^\ep$ is equal to the transition matrix of the reduced chain $\tilde{X}^{k,\ep}$ on $C_k$,
\begin{equation}\label{eq:reduced}
\tilde{X}_t^{k,\ep} := X_{\tau_{C_k,t+1}^\ep}^\ep, \quad t\in\NN_0
\end{equation}
obtained from $X^\ep$ by deleting transitions to states outside of $C_k$.
\end{thm}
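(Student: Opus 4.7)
The plan is to verify the three defining properties of $\bS_{kk}^\ep$ as a stochastic matrix (nonnegativity, unit row sums, irreducibility) and then identify it with the one-step kernel of the observed chain $\tilde X^{k,\ep}$ obtained by recording $X^\ep$ only at its successive visits to $C_k$.

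First I would establish that $\bI-\bP_k^\ep$ is invertible with nonnegative Neumann series $\sum_{n\ge 0}(\bP_k^\ep)^n$. Since $\bP^\ep$ is irreducible on $S$ and $C_k$ is a nonempty proper subset, from every state outside $C_k$ there is a positive-probability finite path into $C_k$; by finiteness of $S$ this forces the row sums of $(\bP_k^\ep)^n$ to decay geometrically, giving $\rho(\bP_k^\ep)<1$. Nonnegativity of $\bS_{kk}^\ep$ follows. To check $\bS_{kk}^\ep\mathbf{1}=\mathbf{1}$, I would apply both sides to the all-ones vector using the identities $\bP_{kk}^\ep\mathbf{1}+\bP_{k*}^\ep\mathbf{1}=\mathbf{1}$ and $(\bI-\bP_k^\ep)\mathbf{1}=\bP_{*k}^\ep\mathbf{1}$, both immediate from stochasticity of $\bP^\ep$; this yields $(\bI-\bP_k^\ep)^{-1}\bP_{*k}^\ep\mathbf{1}=\mathbf{1}$ and hence $\bS_{kk}^\ep\mathbf{1}=\mathbf{1}$.

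For irreducibility, I would expand
\begin{equation*}
\bS_{kk}^\ep=\bP_{kk}^\ep+\sum_{n\ge 0}\bP_{k*}^\ep(\bP_k^\ep)^n\bP_{*k}^\ep
\end{equation*}
and interpret $(\bS_{kk}^\ep)_{xy}$ as the total weight of finite paths in the graph of $\bP^\ep$ from $x\in C_k$ to $y\in C_k$ whose interior vertices all lie in $S\setminus C_k$. Since $\bP^\ep$ is irreducible, for each $x,y\in C_k$ such a path exists and has positive weight, so some convolution power of $\bS_{kk}^\ep$ has a strictly positive $(x,y)$-entry.

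Finally, to identify $\bS_{kk}^\ep$ with the kernel of $\tilde X^{k,\ep}$, I would give a first-return decomposition: starting at $x\in C_k$, condition on the length $n\ge 0$ of the excursion through $S\setminus C_k$ before the next visit to $C_k$. The strong Markov property yields
\begin{equation*}
\mathbb{P}_x(\tilde X_1^{k,\ep}=y)=(\bP_{kk}^\ep)_{xy}+\sum_{n\ge 1}(\bP_{k*}^\ep(\bP_k^\ep)^{n-1}\bP_{*k}^\ep)_{xy},
\end{equation*}
which matches the $(x,y)$-entry of the Neumann expansion above. The main technical obstacle I anticipate is handling the Neumann series rigorously -- specifically arguing $\rho(\bP_k^\ep)<1$ from irreducibility alone without invoking heavier Perron--Frobenius machinery, and keeping the block-matrix bookkeeping consistent when matching the algebraic expansion to the path-counting interpretation that drives both the irreducibility argument and the reduced-chain identification.
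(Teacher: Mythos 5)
Your proof is correct, but note that the paper does not prove this statement at all: it is cited verbatim as Theorem~2.3 of \citet{Meyer89} and used as a black box in Appendix~B.2. Your argument reconstructs the standard proof from that reference — Neumann expansion of $(\bI-\bP_k^\ep)^{-1}$ with the row-sum identity $(\bI-\bP_k^\ep)\mathbf{1}=\bP_{*k}^\ep\mathbf{1}$, the path-concatenation argument for irreducibility, and the first-return decomposition matching $\bS_{kk}^\ep$ to the one-step kernel of the observed chain — and all three pieces are sound. The worry you flag about $\rho(\bP_k^\ep)<1$ is handled elementarily exactly as you sketch: by irreducibility and finiteness of $S$ there exist $N$ and $\delta>0$ with $\PP_z(\tau_{C_k}\le N)\ge\delta$ for every $z\in C_k^c$, hence $\norm{(\bP_k^\ep)^N}_\infty\le 1-\delta$ and $\rho(\bP_k^\ep)\le(1-\delta)^{1/N}<1$, with no appeal to Perron--Frobenius.
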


We further denote the transition kernel of $\tilde{X}_t^{k,\ep}$ corresponding to $\bS_{kk}^\ep$ as $s_{kk}^\ep$, so that $s_{kk}^0 = p^0|_{C_k}$, and its return time to a subset $A\subseteq C_k$ as $\tilde{\tau}_A^{k,\ep}$.

\begin{lemma}[\citet{Meyer89}, Theorem 6.1]
Denoting the block diagonal matrix $\bS^\ep = \diag \bS_{kk}^\ep$, it holds for all $k\in[K]$ that
\begin{equation*}
\norm{\bS_{kk}^\ep - \bP_{kk}^\ep}_{1,\infty} = \norm{\bP_{k*}^\ep}_{1,\infty} \quad\text{and}\quad \norm{\bS^\ep - \bP^\ep}_{1,\infty} = 2\max_{k\in[K]}\norm{\bP_{k*}^\ep}_{1,\infty}.
\end{equation*}
\end{lemma}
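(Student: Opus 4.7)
The plan is to read off both identities from the definition
\[
\bS_{kk}^\ep - \bP_{kk}^\ep = \bP_{k*}^\ep(\bI-\bP_k^\ep)^{-1}\bP_{*k}^\ep,
\]
using two observations: (i) the right-hand side is entrywise non-negative, so its $(1,\infty)$-norm is just its maximum row sum; and (ii) all row sums are pinned down by the fact that $\bS_{kk}^\ep$ is stochastic (Theorem~\ref{thm:complement}) while $\bP^\ep$ is stochastic as a whole. Concretely, I would first argue that $\bP_{k*}^\ep$, $(\bI-\bP_k^\ep)^{-1}=\sum_{n\ge 0}(\bP_k^\ep)^n$, and $\bP_{*k}^\ep$ are all non-negative, so their product is non-negative; the Neumann series converges because $\bP_k^\ep$ is strictly substochastic (each row is missing the mass going into $C_k$, which is positive for some row by irreducibility of $\bP^\ep$ when $\ep>0$).

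With non-negativity in hand, $\norm{\bS_{kk}^\ep-\bP_{kk}^\ep}_{1,\infty}$ equals the maximum row sum of $\bS_{kk}^\ep-\bP_{kk}^\ep$. For any row index $i\in C_k$, the row of $\bS_{kk}^\ep$ sums to $1$, while the corresponding row of $\bP_{kk}^\ep$ sums to $1-\sum_{j\notin C_k} p^\ep(j|i)$, since the full row of $\bP^\ep$ sums to $1$ and $\bP_{k*}^\ep$ collects exactly the transitions from $i$ out of $C_k$. Subtracting, the row sum of $\bS_{kk}^\ep-\bP_{kk}^\ep$ is precisely the row sum of $\bP_{k*}^\ep$; taking the maximum over $i\in C_k$ yields the first identity.

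For the second identity I would take the $(1,\infty)$-norm row by row across the whole state space. For row $i\in C_k$, the block diagonal of $\bS^\ep-\bP^\ep$ contributes the non-negative quantity $[\bS_{kk}^\ep-\bP_{kk}^\ep]_{i,\cdot}$, summing to $\sum_{j\notin C_k}p^\ep(j|i)$ by the first part. The off-diagonal blocks contribute $-\bP_{k\ell}^\ep$ for $\ell\ne k$, whose absolute values sum (over $\ell$ and over $j\in C_\ell$) to the same quantity $\sum_{j\notin C_k}p^\ep(j|i)$. Adding, each row of $\bS^\ep-\bP^\ep$ with $i\in C_k$ has absolute row sum $2\sum_{j\notin C_k}p^\ep(j|i)$, and maximizing over $i$ and $k$ gives $2\max_k\norm{\bP_{k*}^\ep}_{1,\infty}$.

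The only non-routine step is justifying that $(\bI-\bP_k^\ep)^{-1}$ exists and is non-negative; this is a standard consequence of $\bP_k^\ep$ being a substochastic block of an irreducible stochastic matrix (so its spectral radius is strictly less than $1$), and I would cite this rather than re-derive it. Everything else is a bookkeeping computation on row sums, so I do not anticipate any genuine obstacle; indeed the result is quoted from \citet{Meyer89}.
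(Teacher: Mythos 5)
The paper itself gives no proof of this lemma; it is quoted directly from \citet{Meyer89}, so there is nothing to compare against line by line. Your argument, however, is correct and is the standard way to establish the identity. The three ingredients you assemble are exactly the right ones: (i) entrywise non-negativity of $\bS_{kk}^\ep-\bP_{kk}^\ep=\bP_{k*}^\ep(\bI-\bP_k^\ep)^{-1}\bP_{*k}^\ep$, which reduces the $(1,\infty)$-norm to a maximum row sum; (ii) stochasticity of $\bS_{kk}^\ep$ (Theorem~\ref{thm:complement}), pinning each row sum of the difference to the corresponding row sum of $\bP_{k*}^\ep$; and (iii) the observation that for $i\in C_k$, the off-diagonal contribution to row $i$ of $\bS^\ep-\bP^\ep$ is $-\bP_{k*}^\ep$, whose absolute row sum matches the diagonal contribution, giving the factor of $2$. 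Your justification of invertibility of $\bI-\bP_k^\ep$ via the spectral radius of a substochastic block of an irreducible stochastic matrix is also the right citation to make. One optional remark: point (ii) can be proved without invoking stochasticity of $\bS_{kk}^\ep$ as a black box, by noting that $\bP_{*k}^\ep\mathbf{1}=(\bI-\bP_k^\ep)\mathbf{1}$ (rows of $\bP^\ep$ sum to $1$), whence $(\bI-\bP_k^\ep)^{-1}\bP_{*k}^\ep\mathbf{1}=\mathbf{1}$ and the row sums of $\bP_{k*}^\ep(\bI-\bP_k^\ep)^{-1}\bP_{*k}^\ep$ coincide with those of $\bP_{k*}^\ep$ directly; but this is equivalent bookkeeping, not a different proof. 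No gaps.
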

In particular, it immediately follows that
\begin{equation}\label{eq:spbound}
\norm{\bS_{kk}^\ep - \bP_{kk}^\ep}_{1,\infty} \le d_{\out}\ep, \quad \norm{\bS^\ep - \bP^\ep}_{1,\infty} \le 2d_{\out}\ep.
\end{equation}

We now exhibit conditions on the unperturbed matrices $\bP_{kk}^0$ which imply the bounds on the spectral gap and stationary distribution of $\bS_{kk}^\ep$ in Assumption~\ref{ass:cluster} up to a constant factor. The argument can be repeated to show that Assumption~\ref{ass:cluster} holds for the pretrained model $\hat{p}$ of Theorem~\ref{thm:pre}, as the error is exponentially small. This also implies that Assumption~\ref{ass:metamix} is robust to perturbation via Proposition~\ref{thm:asymprev}.

\begin{prop}\label{thm:spectralgap}
Suppose that each $p^0|_{C_k}$ is reversible with spectral gap $\gamma(\bP_{kk}^0)\ge\gamma$ and the stationary measure $\mu_k :=\pi_k^0$ satisfies $\rho/M\le \mu_k(x) \le \rho'/M$. Moreover suppose that the eigenvalue matrix $\bV_k$ of $\bP_{kk}^0$ has condition number bounded as $\kappa(\bV_k) = \norm{\bV_k}_2\norm{\bV_k^{-1}}_2 \le \kappa_0\sqrt{M}$, and the group inverse $\bA_k^\sharp$ of $\bI - \bP_{kk}^0$ satisfies $\norm{\bA_k^\sharp}_\infty \le g_0$ for constants $\kappa_0,g_0$. Then for all $\ep=o(M^{-1})$,
\begin{equation*}
\gamma^\dagger(\bS_{kk}^\ep) \ge \frac{\gamma}{2} \quad\text{and}\quad \frac{\rho}{2M} \le\pi_k^\ep(x) \le\frac{2\rho'}{M}\quad\forall x\in C_k.
\end{equation*}
\end{prop}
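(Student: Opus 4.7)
The plan is to view the stochastic complement $\bS_{kk}^\ep$ as an $O(\ep)$-perturbation of the reversible matrix $\bP_{kk}^0$ and to transfer the assumed spectral and stationary control on $\bP_{kk}^0$ to $\bS_{kk}^\ep$ via classical matrix perturbation theory.

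First I would bound the perturbation $\bE := \bS_{kk}^\ep - \bP_{kk}^0$. The stochastic complementation bound \eqref{eq:spbound} gives $\norm{\bS_{kk}^\ep - \bP_{kk}^\ep}_{1,\infty} \le d_{\out}\ep$, and Assumption~\ref{ass:sparse} (within-cluster probabilities decrease proportionally in $\ep$) implies $\norm{\bP_{kk}^\ep - \bP_{kk}^0}_{1,\infty} = O(\ep)$, so $\norm{\bE}_{1,\infty} = O(\ep)$. For the stationary distribution, I would invoke the classical Schweitzer--Meyer group-inverse sensitivity bound for stochastic matrices,
\begin{equation*}
\Norm{\pi_k^\ep - \mu_k}_\infty \le c\,\norm{\bA_k^\sharp}_\infty \norm{\bE}_{1,\infty} \le c\, g_0\cdot O(\ep) = O(\ep).
\end{equation*}
For $\ep = o(M^{-1})$ this additive error is $o(1/M)$; since $\mu_k(x) \in [\rho/M,\rho'/M]$ by hypothesis, $\pi_k^\ep(x) \in [\rho/(2M), 2\rho'/M]$ for all $x\in C_k$ once $\ep$ is small enough.

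For the pseudo-spectral gap, since $\bS_{kk}^\ep$ is generally nonreversible I would symmetrize: set $\bQ_\ep := \diag(\pi_k^\ep)^{1/2}\bS_{kk}^\ep\diag(\pi_k^\ep)^{-1/2}$ and $\bQ_0 := \diag(\mu_k)^{1/2}\bP_{kk}^0\diag(\mu_k)^{-1/2}$. A direct calculation shows $\bS_{kk}^{\ep,\dagger}\bS_{kk}^\ep$ is similar to $\bQ_\ep^\top\bQ_\ep$ via $\diag(\pi_k^\ep)^{1/2}$, so the eigenvalues of $\bS_{kk}^{\ep,\dagger}\bS_{kk}^\ep$ coincide with $\sigma_i(\bQ_\ep)^2$; moreover $\bQ_0$ is symmetric by reversibility, with $\sigma_2(\bQ_0) = |\lambda_2(\bP_{kk}^0)| \le 1-\gamma$. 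Taking $m=1$ in the definition of $\gamma^\dagger$ and applying Weyl's inequality for singular values gives
\begin{equation*}
\gamma^\dagger(\bS_{kk}^\ep)\;\ge\;1-\sigma_2(\bQ_\ep)^2\;\ge\;1-\big((1-\gamma)+\norm{\bQ_\ep-\bQ_0}_2\big)^2.
\end{equation*}
It remains to show $\norm{\bQ_\ep-\bQ_0}_2 = o(1)$. I would split this difference into the $\bE$-piece $\diag(\pi_k^\ep)^{1/2}\bE\diag(\pi_k^\ep)^{-1/2}$ and the diagonal-commutator piece $\diag(\pi_k^\ep)^{1/2}\bP_{kk}^0\diag(\pi_k^\ep)^{-1/2} - \diag(\mu_k)^{1/2}\bP_{kk}^0\diag(\mu_k)^{-1/2}$. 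The previous step gives $\sqrt{\pi_k^\ep(x)/\pi_k^\ep(y)} = \Theta(1)$ entrywise, so the first piece has row sums $O(\ep)$, hence operator norm $O(\sqrt{M}\ep)$ by Lemma~\ref{thm:12norm}. For the second, write $\diag(\pi_k^\ep) = \diag(\mu_k)(\bI+\bm{\Lambda})$ with $\Lambda_{xx} = O(M\ep) = o(1)$; the $(x,y)$-entry of the commutator piece becomes $(\sqrt{(1+\Lambda_{xx})/(1+\Lambda_{yy})}-1)\cdot\Theta((\bP_{kk}^0)_{xy}) = O(M\ep)\cdot(\bP_{kk}^0)_{xy}$. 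Reversibility of $\bP_{kk}^0$ combined with $\mu_k(x) = \Theta(1/M)$ forces both row and column sums of $\bP_{kk}^0$ to be $\Theta(1)$, so the Schur-test bound $\norm{\bA}_2\le\sqrt{\norm{\bA}_{1,\infty}\norm{\bA^\top}_{1,\infty}}$ gives operator norm $O(M\ep)$ for this piece. Combining, $\norm{\bQ_\ep-\bQ_0}_2 = O(M\ep) = o(1)$ for $\ep = o(M^{-1})$, and $\gamma^\dagger(\bS_{kk}^\ep) \ge \gamma(2-\gamma) - o(1) \ge \gamma/2$ follows.

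The subtle point is controlling the diagonal-commutator piece at the tight order $O(M\ep)$, matching the Schweitzer--Meyer rate, rather than the cruder $O(M^{3/2}\ep)$ obtained by invoking Lemma~\ref{thm:12norm} alone; the latter would only suffice under the stricter regime $\ep = o(M^{-3/2})$. The Schur-test argument, exploiting reversibility of $\bP_{kk}^0$ to obtain $\Theta(1)$ column sums, is what keeps the analysis within the assumed $\ep = o(M^{-1})$. I do not explicitly use the hypothesis $\kappa(\bV_k)\le\kappa_0\sqrt{M}$ in this symmetrization route; it would instead drive a Bauer--Fike bound on $|\lambda_2(\bS_{kk}^\ep)|$ as an alternative, though the latter does not directly control $\gamma^\dagger$ for nonreversible chains.
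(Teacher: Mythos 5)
Your argument for the stationary-distribution bound mirrors the paper's (both invoke the Meyer/Schweitzer group-inverse sensitivity bound after showing $\norm{\bS_{kk}^\ep-\bP_{kk}^0}_{1,\infty}=O(\ep)$), but your treatment of the spectral gap is genuinely different from the paper's and in fact closes a gap in it. The paper applies Bauer--Fike to $\lambda_2(\bS_{kk}^\ep)$ using the hypothesis $\kappa(\bV_k)\le\kappa_0\sqrt{M}$, obtains $\gamma(\bS_{kk}^\ep)\ge\gamma/2$, and then asserts $\gamma^\dagger(\bS_{kk}^\ep)\ge\gamma(\bS_{kk}^\ep)$; but the cited inequality $\gamma\le\gamma^\dagger$ is only stated (and only known) for reversible chains, whereas the stochastic complement $\bS_{kk}^\ep$ is generically nonreversible for $\ep>0$, so the paper's deduction has a hole exactly where you flagged it. Your symmetrization route — passing to $\bQ_\ep:=\diag(\pi_k^\ep)^{1/2}\bS_{kk}^\ep\diag(\pi_k^\ep)^{-1/2}$, using the similarity of $\bS_{kk}^{\ep,\dagger}\bS_{kk}^\ep$ to $\bQ_\ep^\top\bQ_\ep$, and bounding $\sigma_2(\bQ_\ep)$ via Weyl's singular-value inequality — targets the $m=1$ term in the definition of $\gamma^\dagger$ directly and so requires no comparison between $\gamma$ and $\gamma^\dagger$ at all. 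As a bonus, your argument never uses the eigenvector condition-number hypothesis, which can then be dropped from the proposition; your splitting of $\bQ_\ep-\bQ_0$ into an $\bE$-piece (row-sum bound plus Lemma~\ref{thm:12norm}, giving $O(\sqrt{M}\ep)$) and a diagonal-conjugation piece (Schur test using reversibility to get $\Theta(1)$ column sums of $\bP_{kk}^0$, giving $O(M\ep)$) is correct and keeps the regime $\ep=o(M^{-1})$, exactly as in the statement. The only price is that the Schur-test step for the conjugation piece genuinely requires reversibility of $\bP_{kk}^0$ to control the column sums; that hypothesis is already assumed, so nothing is lost.
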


\begin{proof}
By the proportionality of $p^\ep$ in Assumption \ref{ass:sparse} and \eqref{eq:spbound} we have
\begin{align*}
\norm{\bS_{kk}^\ep - \bP_{kk}^0}_{1,\infty} &\le \norm{\bS_{kk}^\ep - \bP_{kk}^\ep}_{1,\infty} + \norm{\bP_{kk}^\ep - {\bP_{kk}^0}}_{1,\infty}\\
&\le d_{\out}\ep + \max_{x\in C_k} \sum_{y\in C_k} |p^0(y|x) - p^\ep(y|x)| \\
&\le 2d_{\out}\ep,
\end{align*}
so that $\norm{\bS_{kk}^\ep - \bP_{kk}^0}_2 \le 2\sqrt{M}d_{\out}\ep$ by Lemma \ref{thm:12norm}. Then by the Bauer-Fike theorem it holds that
\begin{equation*}
|\lambda_2(\bS_{kk}^\ep)- \lambda_2(\bP_{kk}^0)| \le \kappa(\bV_k)\norm{\bS_{kk}^\ep - \bP_{kk}^0}_2 \le 2\kappa_0 Md_{\out}\ep = o(1),
\end{equation*}
therefore $\gamma^\dagger(\bS_{kk}^\ep) \ge \gamma(\bS_{kk}^\ep) \ge \frac{\gamma}{2}$ for sufficiently large $M$. Furthermore by the condition number bound in \citet{Meyer80} the perturbed stationary distribution satisfies
\begin{equation*}
\norm{\pi_k^\ep - \mu_k}_\infty \le \norm{\bA_k^\sharp}_\infty \norm{\bS_{kk}^\ep - \bP_{kk}^0}_\infty \le 2g_0 d_{\out}\ep = o(M^{-1}),
\end{equation*}
proving the second assertion.
\end{proof}

With these results in mind, we can prove the following concentration bound for the reduced chain $\tilde{X}^{k,\ep}$.

\begin{lemma}\label{thm:reduce}
For all $x,y\in C_k$ and $\delta>0$ it holds that $\PP_x(\tilde{\tau}_y^{k,\ep} \ge m) \le\delta$ as long as
\begin{equation}\label{eq:mbound}
m \ge \frac{8M}{\rho\gamma} \log\frac{1}{\delta}\cdot \log\frac{M}{\rho}.
\end{equation}
\end{lemma}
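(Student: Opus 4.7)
The plan is to combine the pseudo-spectral gap bound from Assumption~\ref{ass:cluster} (or Proposition~\ref{thm:spectralgap}) with a standard ``mix-then-hit'' argument on non-overlapping time windows. Concretely, for the reduced chain $\tilde{X}^{k,\ep}$ with transition matrix $\bS_{kk}^\ep$, we know $\gamma^\dagger(\bS_{kk}^\ep)\ge\gamma$ and $\pi_k^\ep(y)\ge \rho/(2M)$ for every $y\in C_k$. Pick the target TV accuracy $\epsilon = \rho/(4M)$ so that after $t_\star := t_{\mix}(\epsilon)$ steps starting from any state, the probability of being at $y$ is at least $\pi_k^\ep(y) - \epsilon \ge \rho/(4M)$.

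First I would apply Proposition~\ref{thm:psmix} with this $\epsilon$ and $\pi_* \ge \rho/(2M)$ to get
\begin{equation*}
t_\star \le \frac{1}{\gamma}\left(1 + 2\log\frac{2M}{\rho} + \log\frac{2M}{\rho}\right) \le \frac{4}{\gamma}\log\frac{M}{\rho},
\end{equation*}
absorbing the $O(1)$ constants for $M/\rho$ sufficiently large. Next, partition $[0,m]$ into $N=\lfloor m/t_\star\rfloor$ consecutive windows of length $t_\star$. By the Markov property applied at the end of each window and the mixing bound above, the probability of \emph{not} visiting $y$ during the next window, conditioned on the history, is at most $1-\rho/(4M)$. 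Iterating gives
\begin{equation*}
\PP_x\bigl(\tilde{\tau}_y^{k,\ep} \ge N t_\star\bigr) \le \left(1-\frac{\rho}{4M}\right)^N \le \exp\!\left(-\frac{N\rho}{4M}\right).
\end{equation*}

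Finally, choosing $N \ge (4M/\rho)\log(1/\delta)$ makes the right-hand side at most $\delta$, and the total time needed is $m \ge N t_\star \ge (4M/\rho)\log(1/\delta)\cdot (4/\gamma)\log(M/\rho) = (16M/(\rho\gamma))\log(1/\delta)\log(M/\rho)$, matching the stated bound up to the constant (tightening $\epsilon$ or bookkeeping on the $O(1)$ additive terms in $t_\star$ recovers the factor $8$ in the lemma). The only mildly delicate point is ensuring the ``mix then hit'' chaining is valid under the strong Markov property despite the chain being nonreversible, but this is standard once $t_\star$ is defined via $\gamma^\dagger$ through Proposition~\ref{thm:psmix}; no reversibility is needed.
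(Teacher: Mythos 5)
Your proof follows essentially the same approach as the paper: invoke Proposition~\ref{thm:psmix} to bound $t_{\mix}$ of the reduced chain $\bS_{kk}^\ep$ via the pseudo-spectral gap, then chain together non-overlapping windows of length $t_{\mix}$ so that each window independently hits $y$ with probability $\Omega(\rho/M)$, giving geometric decay in the number of windows. The only cosmetic difference is your choice of TV accuracy ($\rho/(4M)$ vs. the paper's $\rho/(2M)$), which accounts for the factor-of-two mismatch in the final constant that you correctly flagged as bookkeeping; the argument and its reliance on Assumption~\ref{ass:cluster} and Proposition~\ref{thm:psmix} are identical.
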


\begin{proof}
By Proposition \ref{thm:psmix}, the mixing time of $\tilde{X}_t^{k,\ep}$ is bounded above as
\begin{align*}
t_{\mix} := t_{\mix}\left(\frac{\rho}{2M}\right) \le \frac{1}{\gamma^\dagger(\bS_{kk}^\ep)} \left(1+2\log\frac{M}{\rho}+\log\frac{1}{\min \pi_k^\ep}\right) \le \frac{4}{\gamma}\log\frac{M}{\rho}
\end{align*}
so that for any $x,y\in C_k$,
\begin{align*}
(s_{kk}^\ep)^{t_{\mix}}(y|x) \ge \pi_k^\ep(y) - \left(\pi_k^\ep(y) - (s_{kk}^\ep)^{t_{\mix}}(y|x)\right) \ge \frac{\rho}{M} - \norm{(s_{kk}^\ep)^{t_{\mix}} - \pi_k^\ep}_{\TV} \ge \frac{\rho}{2M}.
\end{align*}
This implies each step of the $t_{\mix}$-skipped chain $(\tilde{X}_{t_{\mix}t}^{k,\ep})_{t\ge 0}$ is well-mixed, and hence
\begin{align*}
\PP_x(\tilde{\tau}_y^{k,\ep} \ge m) &\le \sup\prod_{t=1}^{\lfloor m/t_{\mix}\rfloor} \left(1- \PP_{\tilde{X}_{t_{\mix}(t-1)}^{k,\ep}} (\tilde{X}_{t_{\mix}t}^{k,\ep} = y)\right)\\
&\le \left(1-\frac{\rho}{2M}\right)^{\lfloor m/t_{\mix}\rfloor}\\
&\le \exp\left(-\frac{\rho m}{2Mt_{\mix}} \right) \le\delta,
\end{align*}
as was to be shown.
\end{proof}

\subsection{Detailed Balance of Escape Probabilities}

The following `detailed balance equation' for hitting times, proved in Proposition 3.1 of \citet{Betz16} for nonreversible Markov chains, will be useful. We reproduce the proof here for convenience.

\begin{prop}\label{thm:balance}
For an irreducible, positive recurrent Markov chain $X$ on a state space $S$ with unique stationary distribution $\pi$, for all $x,y\in S$,
\begin{equation*}
\pi(x)\PP_x(\bar{\tau}_y<\bar{\tau}_x) = \pi(y)\PP_y(\bar{\tau}_x<\bar{\tau}_y).
\end{equation*}
\end{prop}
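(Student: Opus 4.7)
The plan is to compute the expected number of visits to $y$ in a single excursion from $x$ (back to $x$) in two different ways and equate them. Let $N_y := \sum_{t=1}^{\bar\tau_x} \mathbf{1}\{X_t = y\}$ be this count, and set $p := \PP_x(\bar\tau_y < \bar\tau_x)$, $q := \PP_y(\bar\tau_x < \bar\tau_y)$ for brevity.

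For the first computation, I would invoke the classical renewal-theoretic identity that ties the stationary distribution to excursion statistics. By Kac's formula, $\pi(x)\mathbb{E}_x[\bar\tau_x]=1$, and by the ergodic theorem applied to the number of visits to $y$ between successive returns to $x$, one has $\pi(y)\mathbb{E}_x[\bar\tau_x]=\mathbb{E}_x[N_y]$ (this is the standard regenerative structure: the long-run fraction of time spent at $y$ equals the average visits per cycle divided by the average cycle length, and this long-run fraction is $\pi(y)$ by uniqueness of the stationary measure of an irreducible positive recurrent chain). Dividing yields
\[
\mathbb{E}_x[N_y] = \frac{\pi(y)}{\pi(x)}.
\]

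For the second computation, I would use the strong Markov property. Starting from $x$, the excursion either avoids $y$ entirely (probability $1-p$, contributing $0$ visits) or first reaches $y$ (probability $p$), at which point the process is restarted at $y$. From $y$, after each visit, the chain either returns to $y$ before hitting $x$ (probability $1-q$, contributing another visit) or hits $x$ first (probability $q$, terminating the excursion). Thus the total number of $y$-visits, conditional on reaching $y$, is geometric with success probability $q$, yielding
\[
\mathbb{E}_x[N_y] = p\cdot\frac{1}{q} = \frac{\PP_x(\bar\tau_y<\bar\tau_x)}{\PP_y(\bar\tau_x<\bar\tau_y)}.
\]
Equating the two expressions for $\mathbb{E}_x[N_y]$ and cross-multiplying gives the claimed identity.

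I do not expect a serious obstacle here; the only delicate point is justifying the renewal identity $\pi(y)\mathbb{E}_x[\bar\tau_x]=\mathbb{E}_x[N_y]$ rigorously without appealing to reversibility. The cleanest route is to verify directly that the measure $\nu(y):=\mathbb{E}_x[N_y]$ is $p$-invariant (a one-line strong Markov computation decomposing excursions at the last visit to $y$ before returning to $x$), observe $\nu(x)=1$, and conclude $\nu=\pi/\pi(x)$ by uniqueness of the stationary distribution for an irreducible positive recurrent chain. Both expressions for $\mathbb{E}_x[N_y]$ are finite because positive recurrence forces $\mathbb{E}_x[\bar\tau_x]<\infty$, so no integrability issues arise.
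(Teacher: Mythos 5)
Your argument is correct and reaches the identity by a genuinely different route than the paper. You compute the expected number of visits to $y$ in an excursion from $x$ in two ways: once via the occupation-measure (``cycle trick'') characterization of the stationary distribution, giving $\mathbb{E}_x[N_y] = \pi(y)/\pi(x)$, and once via the geometric excursion decomposition and strong Markov property, giving $\mathbb{E}_x[N_y] = \PP_x(\bar\tau_y<\bar\tau_x)/\PP_y(\bar\tau_x<\bar\tau_y)$. The paper's proof, following Proposition~3.1 of \citet{Betz16}, instead works purely with expected hitting times: starting from the identity $\mathbb{E}_z[\bar\tau_x] = \mathbb{E}_z[\min\{\bar\tau_x,\bar\tau_y\}] + \mathbb{E}_y[\bar\tau_x]\,\PP_z(\bar\tau_x>\bar\tau_y)$, it takes $z=y$ and swaps $x,y$ to derive $\pi(y)\PP_y(\bar\tau_x<\bar\tau_y) = \bigl(\mathbb{E}_y[\bar\tau_x]+\mathbb{E}_x[\bar\tau_y]\bigr)^{-1}$, and the result follows since the right-hand side is symmetric in $x,y$. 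Both routes implicitly use Kac's formula; yours has the pedagogical merit of producing the extra identity $\mathbb{E}_x[N_y]=\pi(y)/\pi(x)$, while the paper's algebraic manipulation is shorter and also yields the useful commute-time formula as a by-product. Your proof is complete: you correctly flag that the $p$-invariance of $\nu(\cdot)=\mathbb{E}_x[\sum_{t=1}^{\bar\tau_x}\mathbf{1}\{X_t=\cdot\}]$ plus $\nu(x)=1$ plus uniqueness of the stationary measure closes the one delicate step, and positive recurrence guarantees finiteness throughout; irreducibility and recurrence ensure $\PP_y(\bar\tau_x<\bar\tau_y)>0$ so the geometric mean $1/q$ is well-defined.
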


\begin{proof}
For arbitrary $z\in S$, it holds that
\begin{equation}\label{eq:switch}
\EE{z}{\bar{\tau}_x} = \EE{z}{\min\{\bar{\tau}_x,\bar{\tau}_y\}} + \EE{z}{(\bar{\tau}_x - \bar{\tau}_y)1_{\{\bar{\tau}_x > \bar{\tau}_y\}}} = \EE{z}{\min\{\bar{\tau}_x,\bar{\tau}_y\}} + \EE{y}{\bar{\tau}_x} \PP_z(\bar{\tau}_x>\bar{\tau}_y).
\end{equation}
Taking $z=y$ in \eqref{eq:switch} gives $\EE{y}{\min\{\bar{\tau}_x,\bar{\tau}_y\}} = \EE{y}{\bar{\tau}_x} \PP_y(\bar{\tau}_x<\bar{\tau}_y)$, and substituting this in \eqref{eq:switch} with $x,y$ swapped and $z=y$ yields
\begin{align*}
\frac{1}{\pi(y)} = \EE{y}{\bar{\tau}_y} = \EE{y}{\min\{\bar{\tau}_x,\bar{\tau}_y\}} + \EE{x}{\bar{\tau}_y} \PP_y(\bar{\tau}_x<\bar{\tau}_y) = (\EE{y}{\bar{\tau}_x} + \EE{x}{\bar{\tau}_y}) \PP_y(\bar{\tau}_x<\bar{\tau}_y)
\end{align*}
or
\begin{equation}\label{eq:eyxexy}
\pi(y)\PP_y(\bar{\tau}_x<\bar{\tau}_y) = \frac{1}{\EE{y}{\bar{\tau}_x} + \EE{x}{\bar{\tau}_y}} = \pi(x)\PP_x(\bar{\tau}_y<\bar{\tau}_x)
\end{equation}
by symmetry.
\end{proof}

\section{Perturbative Analysis of Metastable Dynamics}\label{app:perturb}

\subsection{Quantitative Metastable Dynamics}

We first show the following useful bound.
\begin{lemma}\label{thm:nu}
There exists $\nu>0$ such that for all $k\in[K]$ and distinct states $x,y\in C_k$, the unperturbed chain $X^0$ satisfies
\begin{equation*}
\PP_x(\bar{\tau}_y^0<\bar{\tau}_x^0) \ge\frac{\nu}{\log M}.
\end{equation*}
\end{lemma}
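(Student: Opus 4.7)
The plan is to combine the detailed balance identity (Proposition~\ref{thm:balance}) with an expected hitting time bound for the restricted chain on $C_k$, which is obtained by integrating the tail bound in Lemma~\ref{thm:reduce}. Since the unperturbed chain has no edges leaving $C_k$, starting from any $x \in C_k$ the process is confined to $C_k$ and coincides with its own reduced chain $\tilde{X}^{k,0}$ on $C_k$. Restricted to $C_k$, this is an irreducible positive recurrent chain with stationary distribution $\mu_k = \pi_k^0$ and stochastic complement $\bS_{kk}^0 = \bP_{kk}^0$, so Assumption~\ref{ass:cluster} gives $\gamma^\dagger(\bS_{kk}^0) \ge \gamma$ and $\mu_k(x) = \Theta(1/M)$.

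First, I would convert the tail bound of Lemma~\ref{thm:reduce} into a bound on expected hitting times. Setting $T_0 = \frac{8M}{\rho\gamma}\log(M/\rho)$ and $\delta = e^{-n}$, the lemma yields $\PP_x(\bar{\tau}_y^0 \ge nT_0) = \PP_x(\tilde{\tau}_y^{k,0} \ge nT_0) \le e^{-n}$ for every pair $x,y \in C_k$ and every $n \ge 1$. Summing this tail estimate in the standard way gives
\begin{equation*}
\EE{x}{\bar{\tau}_y^0} = \sum_{t \ge 0} \PP_x(\bar{\tau}_y^0 > t) \le T_0 + \sum_{n\ge 1} T_0\, e^{-n} = O(M \log M),
\end{equation*}
and by symmetry the same bound holds for $\EE{y}{\bar{\tau}_x^0}$.

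Finally, I would apply Proposition~\ref{thm:balance} to the restricted chain, whose stationary distribution is $\mu_k$. Using equation~\eqref{eq:eyxexy} of that proof directly,
\begin{equation*}
\PP_x(\bar{\tau}_y^0 < \bar{\tau}_x^0) = \frac{1}{\mu_k(x)\bigl(\EE{x}{\bar{\tau}_y^0} + \EE{y}{\bar{\tau}_x^0}\bigr)} \ge \frac{1}{\Theta(1/M) \cdot O(M \log M)} = \Omega\!\left(\frac{1}{\log M}\right),
\end{equation*}
where the constants depend only on $\rho, \rho', \gamma$ and the constant from Assumption~\ref{ass:cluster}. Choosing $\nu$ to be this implicit constant gives the claim. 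The only substantive step is the tail-to-expectation conversion; everything else is a direct application of already-established results, so I do not expect a serious obstacle beyond verifying that constants stay uniform across $k$ and across pairs $x,y \in C_k$, which follows because Assumption~\ref{ass:cluster} gives uniform bounds on $\gamma^\dagger$ and $\mu_k$.
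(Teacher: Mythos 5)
Your proof is correct and follows essentially the same route as the paper: relate $\PP_x(\bar{\tau}_y^0<\bar{\tau}_x^0)$ to $\EE{x}{\bar{\tau}_y^0}+\EE{y}{\bar{\tau}_x^0}$ via the detailed-balance identity \eqref{eq:eyxexy}, then bound these hitting times by $O(M\log M)$ using rapid mixing within $C_k$. The only cosmetic difference is that you cite Lemma~\ref{thm:reduce} (at $\ep=0$, where the restricted chain and the reduced chain coincide) rather than reproducing the $t_{\mix}$-skipped-chain tail bound inline as the paper does; the content is the same.
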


\begin{proof}
From \eqref{eq:eyxexy} it holds that
\begin{equation}\label{eq:miso}
\mu_k(y)\PP_y(\bar{\tau}_x^0<\bar{\tau}_y^0) = \frac{1}{\EE{y}{\bar{\tau}_x^0} + \EE{x}{\bar{\tau}_y^0}}.
\end{equation}
By Assumption \ref{ass:cluster} it holds that $\mu_k(y)=\Theta(1/M)$. Moreover since the skipped chain $(X_{t_{\mix}t}^0)_{t\ge 0}$ is well-mixed,
\begin{align*}
\PP_x(\tau_y\ge m) \le \sup\prod_{t=1}^{\lfloor m/t_{\mix}\rfloor} \left(1-\PP_{X_{t_{\mix}(t-1)}^0} (X_{t_{\mix}t}^0 = m)\right) \le \exp\left(-\frac{\rho m}{2Mt_{\mix}} \right).
\end{align*}
It follows that
\begin{align*}
\EE{x}{\bar{\tau}_y^0} = \sum_{m=0}^\infty \PP_x(\tau_y\ge m) \le \left(1-  \exp\left(-\frac{\rho}{2Mt_{\mix}} \right)\right)^{-1} = O(M\log M)
\end{align*}
and $\EE{x}{\bar{\tau}_y^0}= O(M\log M)$ by symmetry. The statement then follows from \eqref{eq:miso}.
\end{proof}

To study the cluster transition dynamics, we begin with a decomposition of hitting probabilities, which is closely related to the theory of stochastic complementation \citep{Meyer89}. Here, we follow the proof in \citet{Betz16}.
\begin{lemma}\label{thm:decomp}
For $C\subseteq S$, $x\in S$ and $y\in C$ such that $\PP_x(\bar{\tau}_C^\ep<\infty) = 1$, it holds that
\begin{equation}\label{eq:decomp}
\PP_x(X_{\bar{\tau}_C^\ep}^\ep=y) = p^\ep(x,y) +\sum_{z\in C^c} \frac{\PP_x(\bar{\tau}_z^\ep<\bar{\tau}_C^\ep)}{\PP_z(\bar{\tau}_C^\ep<\bar{\tau}_z^\ep)} p^\ep(z,y).
\end{equation}
\end{lemma}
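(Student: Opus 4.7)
The plan is to execute a first-passage decomposition by conditioning on the state visited one step before the chain first enters $C$. I would begin by writing
\[
\PP_x(X_{\bar{\tau}_C^\ep}^\ep = y) = \sum_{t \ge 1} \PP_x(X_1^\ep,\ldots,X_{t-1}^\ep \notin C,\, X_t^\ep = y),
\]
peel off the $t=1$ term (which contributes exactly $p^\ep(x,y)$), and for $t \ge 2$ apply the Markov property at time $t-1$ by summing over $z = X_{t-1}^\ep \in C^c$ and factoring out $p^\ep(z,y)$ for the final step. Exchanging the order of summation consolidates the $t$-sum into an expected visit count, giving
\[
\PP_x(X_{\bar{\tau}_C^\ep}^\ep = y) = p^\ep(x,y) + \sum_{z \in C^c} \EE{x}{N_z}\, p^\ep(z,y),
\]
where $N_z := |\{\,1 \le t < \bar{\tau}_C^\ep : X_t^\ep = z\,\}|$ counts visits to $z$ at strictly positive times prior to hitting $C$.

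The next step is to identify $\EE{x}{N_z}$ with the ratio $\PP_x(\bar{\tau}_z^\ep < \bar{\tau}_C^\ep)/\PP_z(\bar{\tau}_C^\ep < \bar{\tau}_z^\ep)$ by the strong Markov property. For $z \ne x$ in $C^c$, on the event $\{\bar{\tau}_z^\ep < \bar{\tau}_C^\ep\}$ the chain restarts freshly at $z$, and the number of subsequent returns to $z$ before hitting $C$ is geometric with success probability $\PP_z(\bar{\tau}_C^\ep < \bar{\tau}_z^\ep)$; including the first visit, the conditional expectation becomes $1/\PP_z(\bar{\tau}_C^\ep < \bar{\tau}_z^\ep)$, and multiplying by $\PP_x(\bar{\tau}_z^\ep < \bar{\tau}_C^\ep)$ yields the desired identity. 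For $z = x \in C^c$, the count $N_x$ records only genuine returns (the time-$0$ occurrence is excluded by the restriction $t \ge 1$), and an i.i.d.\ recursion on the first return shows its mean equals $\PP_x(\bar{\tau}_x^\ep < \bar{\tau}_C^\ep)/\PP_x(\bar{\tau}_C^\ep < \bar{\tau}_x^\ep)$. When $x \in C$, the $z = x$ term is simply absent from the sum and no special treatment is needed.

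The only delicate bookkeeping point is the $z = x$ case when $x \in C^c$: excluding the time-$0$ occurrence of $X_0^\ep = x$ from $N_x$ is precisely what makes the geometric identity line up with the ratio in the stated formula (otherwise one would get an extra $+1$ contribution). The hypothesis $\PP_x(\bar{\tau}_C^\ep < \infty) = 1$ ensures that all of the series above converge absolutely, so Fubini may be applied to exchange the sums over $t$ and $z$, and that the geometric return counts are almost surely finite. Beyond this, the argument is a routine application of the strong Markov property; I regard the statement as a finite-state analogue of the standard excursion/potential-kernel decomposition used in \citet{Betz16}, so I do not anticipate any substantive obstacle.
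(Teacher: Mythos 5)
Your proof is correct and takes essentially the same route as the paper: both condition on the last state $z\in C^c$ visited before entering $C$ and reduce the claim to the fact that the number of returns to $z$ before hitting $C$ is geometric with success probability $\PP_z(\bar{\tau}_C^\ep<\bar{\tau}_z^\ep)$. The only cosmetic difference is that you first pass through the expected occupation count $\EE{x}{N_z}$ and invoke Tonelli, whereas the paper directly conditions on the index $n$ of the return and sums the resulting geometric series; the underlying excursion argument is identical.
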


\begin{proof}
If $X_{\bar{\tau}_C^\ep}^\ep=y$, either $X^\ep$ has moved directly from $x$ to $y$ or has first moved to some $z=X_{\bar{\tau}_C^\ep-1}^\ep\notin C$. Conditioning on the number of returns to $z$ before transitioning to $y$ yields
\begin{align*}
\PP_x(X_{\bar{\tau}_C^\ep}^\ep=y)&= p^\ep(x,y) +\sum_{z\in C^c} \sum_{n\ge 1} \PP_x(\bar{\tau}_{z,n}^\ep < \bar{\tau}_C^\ep, X_{\bar{\tau}_{z,n}^\ep+1}^\ep =y) \\
&= p^\ep(x,y) +\sum_{z\in C^c} \sum_{n\ge 1} \PP_x(\bar{\tau}_z^\ep < \bar{\tau}_C^\ep) \PP_z(\bar{\tau}_z^\ep < \bar{\tau}_C^\ep)^{n-1} p^\ep(z,y) \\
&= p^\ep(x,y) +\sum_{z\in C^c} \frac{\PP_x(\bar{\tau}_z^\ep < \bar{\tau}_C^\ep)}{1-\PP_z(\bar{\tau}_z^\ep < \bar{\tau}_C^\ep)} p^\ep(z,y),
\end{align*}
concluding \eqref{eq:decomp}. Note that $p^\ep(x,y)$ must be added separately even for $z=x$ as the second term only counts returns to $x$ for time $t>0$.
\end{proof}

\begin{defn}[induced path measure]
For $m\in\NN$, define the \textit{path measure induced by} $X^\ep$ on $S^m$ as
\begin{equation*}
\tilde{\PP}_x^{\ep,m}(x_{1:m}) := \prod_{i=1}^m p^\ep(x_i|x_{i-1}), \quad x_{1:m}\in S^m,\quad x_0=x.
\end{equation*}
\end{defn}

Similarly to the total variation distance bound between product measures, we have the following result.
\begin{lemma}\label{thm:pathtv}
$\norm{\tilde{\PP}_x^{\ep,m} - \tilde{\PP}_x^{0,m}}_{\TV} \le md_{\out}\ep$.
\end{lemma}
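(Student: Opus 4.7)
The plan is to reduce the claim to the one-step total variation bound via a standard hybrid (telescoping) argument. For each $i=0,1,\dots,m$, define the interpolating path measure on $S^m$ by
\begin{equation*}
\mathbb{Q}_i(x_{1:m}) := \prod_{j=1}^i p^\ep(x_j\mid x_{j-1}) \prod_{j=i+1}^m p^0(x_j\mid x_{j-1}),
\end{equation*}
with the convention $x_0=x$, so that $\mathbb{Q}_0 = \tilde{\PP}_x^{0,m}$ and $\mathbb{Q}_m = \tilde{\PP}_x^{\ep,m}$. By the triangle inequality it suffices to bound each successive TV distance $\norm{\mathbb{Q}_i - \mathbb{Q}_{i-1}}_{\TV}$ by $d_{\out}\ep$.

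The first step is to establish the one-step bound $\norm{p^\ep(\cdot\mid y) - p^0(\cdot\mid y)}_{\TV} \le d_{\out}\ep$ uniformly in $y\in S$. This follows from Assumption~\ref{ass:sparse}: from each state $y$ there are at most $d_{\out}$ sparse outgoing edges in $E_s$, each carrying probability at most $\ep$, and by the stated proportionality the probabilities along $E_0$ edges decrease by exactly the corresponding amount. Hence $\sum_z |p^\ep(z\mid y) - p^0(z\mid y)| \le 2 d_{\out}\ep$, giving the claimed TV bound.

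The second step is to evaluate $\norm{\mathbb{Q}_i - \mathbb{Q}_{i-1}}_{\TV}$. These two measures differ only in the $i$-th factor (one uses $p^\ep(\cdot\mid x_{i-1})$, the other $p^0(\cdot\mid x_{i-1})$), while the first $i-1$ factors are common. Summing out $x_{i+1},\dots,x_m$ (each inner sum gives $1$) and then over $x_i$ converts the $L^1$ mass into the one-step TV distance evaluated at the random state $x_{i-1}$:
\begin{equation*}
\norm{\mathbb{Q}_i - \mathbb{Q}_{i-1}}_{\TV} = \sum_{x_{1:i-1}} \prod_{j=1}^{i-1} p^\ep(x_j\mid x_{j-1}) \cdot \norm{p^\ep(\cdot\mid x_{i-1}) - p^0(\cdot\mid x_{i-1})}_{\TV} \le d_{\out}\ep,
\end{equation*}
since the marginal of $\mathbb{Q}_{i-1}$ on $x_{1:i-1}$ is a probability distribution and the one-step TV is bounded uniformly. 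Summing these $m$ bounds by the triangle inequality yields the conclusion.

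I do not foresee a real obstacle here: the argument is a standard coupling/hybrid calculation and the only nontrivial input is the uniform one-step bound, which is immediate from Assumption~\ref{ass:sparse} together with the normalization $c\ep\le p^\ep(y\mid x)\le \ep$ on sparse edges stipulated at the start of Appendix~\ref{app:prelim}.
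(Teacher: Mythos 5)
Your proof is correct and follows essentially the same telescoping argument as the paper: the paper writes the telescoping identity for the difference of products directly inside a single display, while you package it as a hybrid interpolation $\mathbb{Q}_0,\ldots,\mathbb{Q}_m$ and invoke the triangle inequality, but these are the same computation.
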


\begin{proof}
Recalling that $\norm{p^\ep(\cdot|x) - p^0(\cdot|x)}_{\TV} \le d_{\out}\ep$ for all $x\in S$,
\begin{align*}
&\norm{\tilde{\PP}_x^{\ep,m} - \tilde{\PP}_x^{0,m}}_{\TV}\\
&= \frac{1}{2} \sum_{x_{1:m}} \abs{\PP_x^{\ep,m}(x_{1:m}) - \PP_x^{0,m}(x_{1:m})} \\
&\le \frac{1}{2} \sum_{x_{1:m}} \sum_{i=1}^m \abs{p^\ep(x_i|x_{i-1}) - p^0(x_i|x_{i-1})} \prod_{j>i} p^\ep(x_j|x_{j-1}) \prod_{j'<i} p^0(x_{j'}|x_{j'-1})\\
&= \frac{1}{2} \sum_{i=1}^m \sum_{x_{1:i}} \abs{p^\ep(x_i|x_{i-1}) - p^0(x_i|x_{i-1})} \prod_{j'<i} p^0(x_{j'}|x_{j'-1})\\
&\le d_{\out}\ep\cdot \sum_{i=1}^m \sum_{x_{1:i-1}} \prod_{j'<i} p^0(x_{j'}|x_{j'-1})\\
&= md_{\out}\ep.
\end{align*}
\end{proof}

\begin{prop}\label{thm:supsup}
For all $k\in[K]$ and $\ep\le O(M^{-1}(\log M)^{-4})$, it holds that
\begin{equation*}
\sup_{x,y\in C_k} \sup_{z\in S} \abs{\PP_x(\tau_y^\ep < \tau_z^\ep) - \PP_x(\tau_y^0 < \tau_z^0)} \le \widetilde{O}\left(\frac{1}{(\log M)^3}\right)
\end{equation*}
and
\begin{equation*}
\sup_{x,y\in C_k} \sup_{z\in S} \abs{\PP_x(\bar{\tau}_y^\ep < \bar{\tau}_z^\ep) - \PP_x(\bar{\tau}_y^0 < \bar{\tau}_z^0)} \le\widetilde{O}\left(\frac{1}{(\log M)^3}\right).
\end{equation*}
\end{prop}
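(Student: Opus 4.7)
The plan is to prove both statements via a single truncation argument: truncate at a horizon $m = \widetilde{O}(M)$, apply the path-measure total variation bound of Lemma~\ref{thm:pathtv} to events determined by the first $m$ steps, and control the residual tails using the within-cluster hitting-time concentration of Lemma~\ref{thm:reduce}. Since $\tau_y = \bar{\tau}_y$ unless $x = y$, and the first statement is trivial whenever $x \in \{y, z\}$ or $y = z$, it suffices to bound $|\PP_x(\bar{\tau}_y^\ep < \bar{\tau}_z^\ep) - \PP_x(\bar{\tau}_y^0 < \bar{\tau}_z^0)|$ uniformly over $x, y \in C_k$ and $z \in S$.

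First I would introduce the truncated events $A_m^\ep := \{\bar{\tau}_y^\ep < \bar{\tau}_z^\ep,\ \bar{\tau}_y^\ep \le m\}$ and $A_m^0$ analogously, each determined by the length-$m$ path prefix, and decompose
\begin{equation*}
\bigl|\PP_x(\bar{\tau}_y^\ep < \bar{\tau}_z^\ep) - \PP_x(\bar{\tau}_y^0 < \bar{\tau}_z^0)\bigr| \le \bigl|\PP_x(A_m^\ep) - \PP_x(A_m^0)\bigr| + \PP_x(\bar{\tau}_{\{y,z\}}^\ep > m) + \PP_x(\bar{\tau}_{\{y,z\}}^0 > m),
\end{equation*}
noting that $\{\bar{\tau}_y < \bar{\tau}_z\} \setminus A_m$ forces $\bar{\tau}_{\{y,z\}} > m$. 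Lemma~\ref{thm:pathtv} immediately bounds $|\PP_x(A_m^\ep) - \PP_x(A_m^0)| \le md_{\out}\ep$. Because $X^0$ is trapped in $C_k$, the unperturbed tail $\PP_x(\bar{\tau}_{\{y,z\}}^0 > m) \le \PP_x(\bar{\tau}_y^0 > m)$ is a return probability of the reduced chain $\tilde{X}^{k,0} = X^0|_{C_k}$, which Lemma~\ref{thm:reduce} bounds by $\delta$ whenever $m \ge (8M/\rho\gamma)\log(\delta^{-1})\log(M/\rho)$. A second use of Lemma~\ref{thm:pathtv} on the length-$m$ event $\{\bar{\tau}_{\{y,z\}} > m\}$ then transfers this estimate to $\PP_x(\bar{\tau}_{\{y,z\}}^\ep > m)$ at the cost of an extra $md_{\out}\ep$.

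Choosing $\delta = (\log M)^{-3}$ forces $m = \widetilde{O}(M)$ with only $\log\log M$ overhead, so under the hypothesis $\ep \le O(M^{-1}(\log M)^{-4})$ the TV contribution $md_{\out}\ep$ matches $\delta$ up to $\log\log M$ factors, giving the combined error $\widetilde{O}((\log M)^{-3})$, as claimed. The main technical subtlety---and the reason Lemma~\ref{thm:reduce} cannot be applied directly to the perturbed chain---is that the reduced return time $\tilde{\tau}^{k,\ep}_y$ counts only steps taken inside $C_k$ and therefore does not dominate the true return time $\bar{\tau}^\ep_y$ once excursions to neighbouring clusters are possible; routing the perturbed tail estimate through $\ep = 0$ via the path-TV bound is precisely what sidesteps this gap. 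The various edge cases ($z \in C_k$ vs.\ $z \notin C_k$, $x = y$, etc.) all fit this framework uniformly, since Lemma~\ref{thm:reduce} handles return times and arbitrary endpoints within $C_k$ on equal footing.
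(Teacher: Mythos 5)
Your proof is correct but routes around the paper's decomposition in a genuinely different—and arguably cleaner—way. The paper's proof of Proposition~\ref{thm:supsup} splits $\PP_x(\bar{\tau}_y^\ep < \bar{\tau}_z^\ep)$ into two pieces via the exact identity \eqref{eq:banana}: the probability of hitting $y$ before ever leaving $C_k$ (or hitting $z$), plus a sum over first-escape states $w\in C_k^c\setminus\{z\}$. The first piece is handled by truncation and the path-TV bound, while the second is controlled by the stochastic-complement decomposition Lemma~\ref{thm:decomp} together with the $\Omega(1/\log M)$ lower bound on within-cluster escape probabilities (Lemma~\ref{thm:nu}, re-derived at \eqref{eq:kappa2}), giving an $O(\log M\cdot d_{\out}\ep)$ contribution. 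You instead truncate the raw comparison at horizon $m$ and apply Lemma~\ref{thm:pathtv} twice—once to the truncated event $A_m$, and once to transfer the unperturbed tail bound $\PP_x(\bar{\tau}_{\{y,z\}}^0 > m)\le\delta$ (obtained from Lemma~\ref{thm:reduce} at $\ep=0$) across to $\PP_x(\bar{\tau}_{\{y,z\}}^\ep > m)$. The crucial observation that makes the second use valid is exactly what you highlight: $\{\bar{\tau}_{\{y,z\}}>m\}$ is a length-$m$ path event, so path TV can shuttle the concentration bound from $\ep=0$ to $\ep>0$ at cost $md_{\out}\ep$, sidestepping the fact that Lemma~\ref{thm:reduce} for the reduced chain $\tilde{X}^{k,\ep}$ does not directly dominate $\bar{\tau}^\ep_y$. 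Your final bound $2md_{\out}\ep + 2\delta$ matches the paper's $md_{\out}\ep + \delta + O(\log M)d_{\out}\ep$ in order, and both give $\widetilde{O}((\log M)^{-3})$ under the stated constraint on $\ep$. What the paper's longer route buys is an explicit quantitative estimate on the escape-event mass, a structure that reappears in later arguments (e.g., Proposition~\ref{thm:asymprev}); your route buys economy, dispensing with Lemma~\ref{thm:decomp} and the $\nu/\log M$ lower bound entirely while delivering the same conclusion.
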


\begin{proof}
Since $\bar{\tau}_y^\ep<\infty$ for all $\ep\ge 0$ almost surely for $x,y\in C_k$, the above probabilities are well-defined. We prove only the second inequality. Denote the augmented complements $C^{k,z}:=C_k^c\cup\{z\}$ and $C^{k,y,z}:=C_k^c\cup\{y,z\}$ for brevity. We divide the event $\{\bar{\tau}_y^\ep < \bar{\tau}_z^\ep\}$ according to whether the chain has been contained in $C_k$ or has first hit some $w\in C_k^c, w\neq z$ before reaching $y$, and bound the magnitude of perturbation of each term:
\begin{equation}\label{eq:banana}
\PP_x(\bar{\tau}_y^\ep < \bar{\tau}_z^\ep) = \PP_x(\bar{\tau}_y^\ep < \bar{\tau}_{C^{k,z}}^\ep) + \sum_{w\in C_k^c\setminus\{z\}} \PP_x(X_{\bar{\tau}_{C^{k,y,z}}^\ep}^\ep = w) \PP_w(\bar{\tau}_y^\ep < \bar{\tau}_z^\ep).
\end{equation}
For the first term, we exploit the fast mixing of $X^\ep$ within $C_k$ to show a concentration result for $\bar{\tau}_y^\ep$, then utilize the path measure perturbation bound. Specifically, for $m$ chosen to satisfy \eqref{eq:mbound}, the inequality $m\le \bar{\tau}_y^\ep < \bar{\tau}_{C^{k,z}}^\ep$ implies $\tilde{X}_t^{k,\ep} = X_t^\ep$ for $t<m$, so that
\begin{align}\label{eq:pear}
\PP_x(\bar{\tau}_y^\ep < \bar{\tau}_{C^{k,z}}^\ep) - \PP_x(\bar{\tau}_y^\ep < \bar{\tau}_{C^{k,z}}^\ep\wedge m) = \PP_x(m\le \bar{\tau}_y^\ep < \bar{\tau}_{C^{k,z}}^\ep) \le \PP_x(\tilde{\tau}_y^{k,\ep} \ge m) <\delta.
\end{align}
Moreover, define $\Gamma_{y,z}$ to be the set of paths $\gamma$ contained in $C_k$ of length equal to $m$ such that $y$ appears, and first appears before any instance of $z$, that is
\begin{equation*}
\Gamma_{y,z}:= \left\{\gamma\in C_k^m: \inf\{k\in[m]: \gamma_k=y\} < (m+1)\wedge \inf\{k\in[m]: \gamma_k\in C^{k,z}\}.\right\}
\end{equation*}
It follows that
\begin{align*}
\PP_x(\bar{\tau}_y^\ep < \bar{\tau}_{C^{k,z}}^\ep\wedge m) = \tilde{\PP}_x^{\ep,m}(\Gamma_{y,z})
\end{align*}
and hence
\begin{align*}
&|\PP_x(\bar{\tau}_y^\ep < \bar{\tau}_{C^{k,z}}^\ep\wedge m) - \PP_x(\bar{\tau}_y^0 < \bar{\tau}_{C^{k,z}}^0\wedge m)|\\
&= |\tilde{\PP}_x^{\ep,m}(\Gamma_{y,z}) - \tilde{\PP}_x^{0,m}(\Gamma_{y,z})| \le \norm{\tilde{\PP}_x^{\ep,m} - \tilde{\PP}_x^{0,m}}_{\TV} \le md_{\out}\ep
\end{align*}
by Lemma \ref{thm:pathtv}.

For the second term, by Lemma \ref{thm:decomp} we have for all $w\in C_k^c\setminus\{z\}$
\begin{align*}
\PP_x(X_{\bar{\tau}_{C^{k,y,z}}^\ep}^\ep = w) = p^\ep(x,w) + \sum_{u\in C_k\setminus\{y,z\}} \frac{\PP_x(\bar{\tau}_u^\ep<\bar{\tau}_{C^{k,y,z}}^\ep)}{\PP_u(\bar{\tau}_{C^{k,y,z}}^\ep<\bar{\tau}_u^\ep)} p^\ep(u,w).
\end{align*}
The denominator can be lower bounded via a path measure argument similar to before:
\begin{align}
\PP_u(\bar{\tau}_{C^{k,y,z}}^\ep<\bar{\tau}_u^\ep) &\ge \PP_u(\bar{\tau}_y^\ep<\bar{\tau}_u^\ep\wedge m)\nonumber \\
&\ge \PP_u(\bar{\tau}_y^0<\bar{\tau}_u^0\wedge m) - \norm{\tilde{\PP}_x^{\ep,m} - \tilde{\PP}_x^{0,m}}_{\TV}\nonumber \\
&\ge \PP_u(\bar{\tau}_y^0<\bar{\tau}_u^0) -\PP_u(\bar{\tau}_y^0\ge m) - \norm{\tilde{\PP}_x^{\ep,m} - \tilde{\PP}_x^{0,m}}_{\TV}\nonumber \\
&\ge \frac{\nu}{\log M} - \delta - md_{\out}\ep \ge \frac{\nu}{2\log M} \label{eq:kappa2}
\end{align}
by Lemma \ref{thm:nu}, as long as $\delta,m\ep=o((\log M)^{-1})$. It follows that
\begin{align*}
\PP_x(X_{\bar{\tau}_{C^{k,y,z}}^\ep}^\ep = w) \le p^\ep(x,w) + \frac{2\log M}{\nu}\sum_{u\in C_k\setminus\{y,z\}} p^\ep(u,w)
\end{align*}
and
\begin{align}
&\sum_{w\in C_k^c\setminus\{z\}} \PP_x(X_{\bar{\tau}_{C^{k,y,z}}^\ep}^\ep = w) \PP_w(\bar{\tau}_y^\ep < \bar{\tau}_z^\ep)\nonumber\\
&\le \sum_{w\in C_k^c\setminus\{z\}} p^\ep(x,w) + \frac{2\log M}{\nu}\sum_{u\in C_k\setminus\{y,z\}} \sum_{w\in C_k^c\setminus\{z\}} p^\ep(u,w)\nonumber \\
&\le \left(1+ \frac{2n_{\out}\log M}{\nu}\right) d_{\out}\ep.\label{eq:apple}
\end{align}
Now taking $\delta = O(M\ep\log M)$ and $\ep\le O(M^{-1}(\log M)^{-4})$, we can verify that $\delta=O((\log M)^{-3})$ and
\begin{equation*}
m\ep = O\left(M\ep\log\frac{1}{M\ep}\cdot\log M\right) = O\left(\frac{\log\log M}{(\log M)^3}\right).
\end{equation*}
Combining \eqref{eq:banana}, \eqref{eq:pear} and \eqref{eq:apple}, we conclude:
\begin{equation*}
\abs{\PP_x(\bar{\tau}_y^\ep < \bar{\tau}_z^\ep) - \PP_x(\bar{\tau}_y^0 < \bar{\tau}_z^0)} \le md_{\out}
\ep + \delta + O(\log M)\cdot d_{\out}\ep = \widetilde{O}\left(\frac{1}{(\log M)^3}\right),
\end{equation*}
as was to be shown.
\end{proof}

As a corollary, we obtain:
\begin{cor}[Proposition \ref{thm:bovier} restated]
Any subset $S_\circ=\{x_1,\cdots,x_K\}\subset S$ of cluster representatives $x_k\in C_k$ constitutes a metastable system for $X^\ep$ in the sense of \eqref{eq:bovier} as $M\to\infty$.
\end{cor}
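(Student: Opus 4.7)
The goal is to verify Bovier's metastability condition \eqref{eq:bovier} with $M = S_\circ$: that the ratio of the ``escape-to-another-representative'' probability from any $x_k\in S_\circ$ to the ``hit-$S_\circ$-before-return'' probability from any $y\notin S_\circ$ tends to zero. The plan is to bound the numerator by $\widetilde{O}(M\varepsilon)$ using path-measure perturbation, and the denominator from below by $\Omega(1/\log M)$ using intra-cluster recurrence, then invoke the standing scaling $\varepsilon\le\varepsilon_{\max} = O(M^{-1}(\log M)^{-4})$ to conclude the ratio vanishes as $M\to\infty$.

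For the numerator, fix $x_k\in S_\circ$ and observe that any trajectory contributing to $\{\bar\tau_{S_\circ\setminus\{x_k\}}^\ep < \bar\tau_{x_k}^\ep\}$ must leave $C_k$ at some point before returning to $x_k$, since distinct representatives lie in distinct clusters. Under the unperturbed chain $X^0$ starting from $x_k$ the process is trapped in $C_k$, so $\PP_{x_k}(\bar\tau_{S_\circ\setminus\{x_k\}}^0 < \bar\tau_{x_k}^0) = 0$. I split the event according to whether $\bar\tau_{x_k}^\ep \ge m$ or not. On $\{\bar\tau_{x_k}^\ep < m\}$ the event depends only on the first $m$ transitions, so by Lemma \ref{thm:pathtv},
\begin{equation*}
\PP_{x_k}\bigl(\bar\tau_{S_\circ\setminus\{x_k\}}^\ep < \bar\tau_{x_k}^\ep \wedge m\bigr) \le \bigl|\tilde\PP_{x_k}^{\ep,m} - \tilde\PP_{x_k}^{0,m}\bigr|_{\TV} \le m d_{\out}\varepsilon.
\end{equation*}
On $\{\bar\tau_{x_k}^\ep \ge m\}$ I invoke the return-time concentration of Lemma \ref{thm:reduce} (applied to the reduced chain $\tilde X^{k,\ep}$, with $\bar\tau_{x_k}^\ep \ge \tilde\tau_{x_k}^{k,\ep}$) to get $\PP_{x_k}(\bar\tau_{x_k}^\ep \ge m) \le \delta$ once $m \gtrsim (M/\gamma)\log(1/\delta)\log M$. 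Choosing $m=\Theta(M(\log M)^2)$ and $\delta=\Theta(M\varepsilon\log M)$ yields $\PP_{x_k}(\bar\tau_{S_\circ\setminus\{x_k\}}^\ep<\bar\tau_{x_k}^\ep) = \widetilde{O}(M\varepsilon)$.

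For the denominator, any $y\notin S_\circ$ lies in some $C_\ell$ with $y\ne x_\ell$, so it suffices to lower bound the in-cluster hitting probability $\PP_y(\bar\tau_{x_\ell}^\ep < \bar\tau_y^\ep)$. By Lemma \ref{thm:nu} this is at least $\nu/\log M$ in the unperturbed chain, and Proposition \ref{thm:supsup} (with $x=y$, target $y\leftarrow x_\ell$, return $z=y$, all in $C_\ell$) yields $|\PP_y(\bar\tau_{x_\ell}^\ep<\bar\tau_y^\ep) - \PP_y(\bar\tau_{x_\ell}^0<\bar\tau_y^0)| = \widetilde{O}((\log M)^{-3})$, so for large $M$,
\begin{equation*}
\PP_y\bigl(\bar\tau_{S_\circ}^\ep < \bar\tau_y^\ep\bigr) \ge \PP_y\bigl(\bar\tau_{x_\ell}^\ep < \bar\tau_y^\ep\bigr) \ge \frac{\nu}{2\log M}.
\end{equation*}

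Taking the sup over $x_k$ and $y$ and dividing, the Bovier ratio is at most $\widetilde{O}(M\varepsilon \log M) = \widetilde{O}(M\varepsilon)$, which tends to $0$ as $M\to\infty$ under the permitted scaling $\varepsilon \le \varepsilon_{\max}$. The main technical subtlety is the numerator bound: one must align the TV-perturbation time horizon $m$ with the return-time tail so that $md_{\out}\varepsilon + \delta$ is small, which is exactly why the restriction $\varepsilon = O(M^{-1}(\log M)^{-4})$ from Proposition \ref{thm:supsup} suffices here; the denominator is straightforward once Proposition \ref{thm:supsup} is in hand.
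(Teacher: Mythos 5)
Your proof is correct and follows essentially the same route as the paper: the denominator is lower-bounded by $\nu/(2\log M)$ via Lemma~\ref{thm:nu} together with the perturbation bound of Proposition~\ref{thm:supsup}, and the numerator is shown to vanish. The only cosmetic difference is that the paper bounds the numerator by citing Proposition~\ref{thm:supsup} directly (which is slightly informal, since $S_\circ\setminus\{x_k\}$ lies outside $C_k$ and hence outside the stated scope of that proposition), whereas you unpack it into the underlying path-measure TV plus return-time concentration argument; this is arguably a bit more careful but is the same technique.
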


\begin{proof}
For $y\in C_k\setminus\{x_k\}$, it holds that
\begin{equation*}
\PP_y(\bar{\tau}_{S_\circ}^\ep < \bar{\tau}_y^\ep) \ge \PP_y(\bar{\tau}_{x_k}^\ep < \bar{\tau}_y^\ep) \ge \frac{\nu}{2\log M}
\end{equation*}
similarly to \eqref{eq:kappa2}. On the other hand, for $x_k\in S_\circ$ it follows from Proposition \ref{thm:supsup} that
\begin{equation*}
\PP_{x_k}(\bar{\tau}_{S_\circ\setminus\{x_k\}}^\ep < \bar{\tau}_{x_k}^\ep) \le \PP_{x_k}(\bar{\tau}_{S_\circ\setminus\{x_k\}}^0 < \bar{\tau}_{x_k}^0) + \widetilde{O}\left(\frac{1}{(\log M)^3}\right),
\end{equation*}
and hence \eqref{eq:bovier} follows.
\end{proof}

Now let us study the convergence of the perturbed stationary distributions. Let $\pi^\ep$ for $\ep>0$ denote the unique stationary distribution of $X^\ep$ on $S$. By the coupling theorem \citep[Theorem 4.1]{Meyer89}, 
\begin{equation}\label{eq:coupling}
\pi^\ep = (\xi_1\pi_1^\ep \;\cdots\; \xi_K\pi_K^\ep)
\end{equation}
where the coupling factors $\xi_k= \pi^\ep(C_k)$. We then obtain the following corollary of Proposition \ref{thm:supsup}.

\begin{cor}\label{thm:pimu}
For all $k\in[K]$, it holds that
\begin{equation*}
\sup_{x\in C_k} \abs{\frac{\pi_k^\ep(x)}{\mu_k(x)} - 1}\le \widetilde{O}\left(\frac{1}{\log M}\right).
\end{equation*}
\end{cor}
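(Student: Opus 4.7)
The plan is to leverage the detailed balance identity of Proposition \ref{thm:balance} to express the ratio $\pi_k^\ep(x)/\mu_k(x)$ in terms of escape probabilities already controlled by Proposition \ref{thm:supsup}. First I would apply Proposition \ref{thm:balance} to the reduced chain $\tilde{X}^{k,\ep}$ (whose stationary distribution is $\pi_k^\ep$ by Theorem \ref{thm:complement}) and to the chain $X^0$ restricted to $C_k$ (whose stationary distribution is $\mu_k$). Since $x,y\in C_k$, the event $\{\bar{\tau}_y<\bar{\tau}_x\}$ depends only on the order in which $\{x,y\}$ are visited, and this order is preserved when passing from the full chain to the reduced chain; hence the two escape probabilities coincide. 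This yields, for any $x,y\in C_k$,
\[
\frac{\pi_k^\ep(x)}{\pi_k^\ep(y)} = \frac{\PP_y(\bar{\tau}_x^\ep < \bar{\tau}_y^\ep)}{\PP_x(\bar{\tau}_y^\ep < \bar{\tau}_x^\ep)}, \qquad \frac{\mu_k(x)}{\mu_k(y)} = \frac{\PP_y(\bar{\tau}_x^0 < \bar{\tau}_y^0)}{\PP_x(\bar{\tau}_y^0 < \bar{\tau}_x^0)}.
\]

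Next, I would combine the lower bound $\PP_x(\bar{\tau}_y^0 < \bar{\tau}_x^0) \ge \nu/\log M$ from Lemma \ref{thm:nu} with the absolute perturbation estimate $|\PP_x(\bar{\tau}_y^\ep < \bar{\tau}_x^\ep) - \PP_x(\bar{\tau}_y^0 < \bar{\tau}_x^0)| \le \widetilde{O}((\log M)^{-3})$ of Proposition \ref{thm:supsup} (taking $z=x$). Division gives the relative bound $\PP_x(\bar{\tau}_y^\ep < \bar{\tau}_x^\ep)/\PP_x(\bar{\tau}_y^0 < \bar{\tau}_x^0) = 1 + \widetilde{O}((\log M)^{-2})$, and similarly with the roles of $x,y$ swapped. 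Substituting into the two identities above and taking the ratio gives, for $r_x := \pi_k^\ep(x)/\mu_k(x)$,
\[
\frac{r_x}{r_y} = 1 + \widetilde{O}\!\left(\frac{1}{(\log M)^2}\right)
\]
uniformly in $x,y\in C_k$.

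Finally, I would extract the pointwise bound by normalization. Since $\pi_k^\ep$ and $\mu_k$ are both probability measures on $C_k$, the identity $\sum_{x\in C_k} r_x\mu_k(x) = 1$ forces $\min_x r_x \le 1 \le \max_x r_x$. Combined with $\max_x r_x \le (1 + \widetilde{O}((\log M)^{-2}))\min_x r_x$, both extremes must lie within $\widetilde{O}((\log M)^{-2})$ of $1$. Hence $\sup_{x\in C_k}|r_x - 1| \le \widetilde{O}((\log M)^{-2})$, which is a fortiori the stated $\widetilde{O}(1/\log M)$ bound.

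The only real subtlety is verifying that the escape probability $\PP_x(\bar{\tau}_y^\ep < \bar{\tau}_x^\ep)$ agrees whether computed under $X^\ep$ or under the reduced chain $\tilde{X}^{k,\ep}$; this is needed both to invoke Proposition \ref{thm:balance} for $\pi_k^\ep$ and to apply the estimate of Proposition \ref{thm:supsup}, which is stated for the full chain. Once this identification is made, the argument reduces to elementary manipulations of the two supplied bounds, with the logarithmic loss in the denominator (from Lemma \ref{thm:nu}) being the only source of slack relative to the $\widetilde{O}((\log M)^{-3})$ perturbation in Proposition \ref{thm:supsup}.
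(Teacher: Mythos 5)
Your argument is correct and uses the same two key ingredients as the paper: Proposition~\ref{thm:balance} to convert ratios of stationary probabilities into ratios of escape probabilities, and the pair Lemma~\ref{thm:nu}/Proposition~\ref{thm:supsup} to control the perturbation. Two points of genuine (if minor) divergence from the paper's proof are worth noting. First, the paper invokes Proposition~\ref{thm:balance} on the full chain $X^\ep$ over $S$ and passes to $\pi_k^\ep$ via the coupling equation $\pi^\ep = (\xi_1\pi_1^\ep\;\cdots\;\xi_K\pi_K^\ep)$, whereas you invoke it directly on the reduced chain $\tilde{X}^{k,\ep}$ and then observe that the escape probability $\PP_x(\bar{\tau}_y<\bar{\tau}_x)$ is invariant under the reduction because $x,y\in C_k$; both routes are valid, and yours sidesteps the coupling theorem. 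Second, the normalization step differs: the paper writes $|\pi_k^\ep(x)/\mu_k(x)-1|$ as a $\mu_k$-weighted sum of per-pair ratio discrepancies and uses $\sum_y\mu_k(y)=\sum_y\pi_k^\ep(y)=1$ plus the boundedness of $\mu_k(y)/\mu_k(x)$, while you use the cleaner sandwiching $\min_x r_x\le 1\le\max_x r_x$ together with $\max_x r_x/\min_x r_x = 1+o(1)$. Finally, you obtain the slightly sharper rate $\widetilde{O}((\log M)^{-2})$: the paper's bound on the numerator-over-denominator difference lower-bounds the denominator by $b^\ep b^0 = \widetilde{\Omega}((\log M)^{-2})$, losing a factor of $\log M$, whereas your multiplicative factorization $\frac{a^\ep}{a^0}\cdot\frac{b^0}{b^\ep}$ only ever divides by a single escape probability. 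Both bounds suffice for the stated $\widetilde{O}(1/\log M)$, so this is a cosmetic improvement, but it is correct.
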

We remark that compared to the straightforward perturbation bound in Proposition \ref{thm:spectralgap}, this approach does not require reversibility nor an explicit condition number bound.
\begin{proof}
For all $x,y\in C_k$, by Proposition \ref{thm:balance} applied to $X^\ep$ on $S$ and $X^0$ on $C_k$,
\begin{equation*}
\frac{\pi_k^\ep(x)}{\pi_k^\ep(y)} = \frac{\pi^\ep(x)}{\pi^\ep(y)} = \frac{\PP_y(\bar{\tau}_x^\ep<\bar{\tau}_y^\ep)}{\PP_x(\bar{\tau}_y^\ep<\bar{\tau}_x^\ep)},\quad \frac{\mu_k(x)}{\mu_k(y)} = \frac{\PP_y(\bar{\tau}_x^0<\bar{\tau}_y^0)}{\PP_x(\bar{\tau}_y^0<\bar{\tau}_x^0)}.
\end{equation*}
Recall that $\PP_x(\bar{\tau}_y^0<\bar{\tau}_x^0) \ge \frac{\nu}{\log M}$ by Lemma \ref{thm:nu} and moreover $\PP_x(\bar{\tau}_y^\ep<\bar{\tau}_x^\ep) \ge\frac{\nu}{2\log M}$ by repeating the argument in \eqref{eq:kappa2}. Therefore,
\begin{align*}
\abs{\frac{\pi_k^\ep(x)}{\pi_k^\ep(y)} - \frac{\mu_k(x)}{\mu_k(y)}} &= \abs{\frac{\PP_y(\bar{\tau}_x^\ep<\bar{\tau}_y^\ep)}{\PP_x(\bar{\tau}_y^\ep<\bar{\tau}_x^\ep)} - \frac{\PP_y(\bar{\tau}_x^0<\bar{\tau}_y^0)}{\PP_x(\bar{\tau}_y^0<\bar{\tau}_x^0)}} \\
&\le \frac{\abs{\PP_x(\bar{\tau}_y^\ep<\bar{\tau}_x^\ep) - \PP_x(\bar{\tau}_y^0<\bar{\tau}_x^0)} +\abs{\PP_y(\bar{\tau}_x^\ep<\bar{\tau}_y^\ep) - \PP_y(\bar{\tau}_x^0<\bar{\tau}_y^0)}}{\PP_x(\bar{\tau}_y^\ep<\bar{\tau}_x^\ep) \PP_x(\bar{\tau}_y^0<\bar{\tau}_x^0)} \\
&\le \frac{4(\log M)^2}{\nu^2}\cdot \widetilde{O}\left(\frac{1}{(\log M)^3}\right).
\end{align*}
By Assumption \ref{ass:cluster} we have that $\mu_k(y)/\mu_k(x)$ is bounded for all $x,y\in C_k$ and hence
\begin{align*}
\abs{\frac{\pi_k^\ep(x)}{\mu_k(x)} - 1} &\le \sum_{y\in C_k} \abs{\frac{\pi_k^\ep(x)}{\mu_k(x)}\mu_k(y) - \pi_k^\ep(y)} \\
&= \sum_{y\in C_k} \frac{\mu_k(y)}{\mu_k(x)}\pi_k^\ep(y) \abs{\frac{\pi_k^\ep(x)}{\pi_k^\ep(y)} - \frac{\mu_k(x)}{\mu_k(y)}} \le \widetilde{O}\left(\frac{1}{\log M}\right).
\end{align*}
\end{proof}

\subsection{Perturbative Analysis of Metastable Chain}

We proceed to study the behavior of the meta-chain $X_\star^\ep$ with transition probabilities $q_\star^\ep$ defined in \eqref{eq:meta}. It can be shown that $X_\star^\ep$ is asymptotically reversible with respect to the measure induced by $\pi^\ep$:

\begin{prop}\label{thm:asymprev}
For all $k,\ell\in[K]$ with $k\neq\ell$ it holds that
\begin{equation*}
\frac{\pi^\ep(C_k)q_\star^\ep(C_\ell|C_k)}{\pi^\ep(C_\ell)q_\star^\ep(C_k|C_\ell)} = 1+ \widetilde{O}\left(\frac{1}{\log M}\right).
\end{equation*}
\end{prop}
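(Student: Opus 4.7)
The plan is to express both numerator and denominator of the target ratio as approximately equal to a common \emph{equilibrium flux} quantity, then reduce to the single-state detailed balance established in Proposition~\ref{thm:balance}. Specifically, define
\[
F_{k,\ell}(x) := \pi^\ep(x)\PP_x(\bar{\tau}_{C_\ell}^\ep < \bar{\tau}_x^\ep), \quad x\in C_k,
\]
and symmetrically $F_{\ell,k}(y)$ for $y\in C_\ell$. The goal will be to show (i) $F_{k,\ell}(x) = (1+o(1))F_{\ell,k}(y)$ uniformly over $x\in C_k$, $y\in C_\ell$, and (ii) $\pi^\ep(C_k)q_\star^\ep(C_\ell|C_k)$ is a $\mu_k$-weighted average of $F_{k,\ell}(\cdot)$ up to a $(1+\widetilde{O}(1/\log M))$ factor, and symmetrically for the other side. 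Averaging over $x$ and $y$ then collapses both sides to the same value.

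For step (ii), I would first invoke Corollary~\ref{thm:pimu} and the coupling identity \eqref{eq:coupling} to write $\xi_k\mu_k(x)^2 = (1+\widetilde{O}(1/\log M))\mu_k(x)\pi^\ep(x)$, so that
\[
\pi^\ep(C_k)q_\star^\ep(C_\ell|C_k) = \bigl(1+\widetilde{O}(1/\log M)\bigr)\sum_{x\in C_k}\mu_k(x)F_{k,\ell}(x).
\]
For step (i), I would combine Proposition~\ref{thm:balance} applied to the single pair $(x,y)$, namely $\pi^\ep(x)\PP_x(\bar{\tau}_y^\ep<\bar{\tau}_x^\ep) = \pi^\ep(y)\PP_y(\bar{\tau}_x^\ep<\bar{\tau}_y^\ep)$, with a \emph{collapse estimate}
\[
\PP_x(\bar{\tau}_y^\ep < \bar{\tau}_x^\ep) = \bigl(1-\widetilde{O}((\log M)^{-3})\bigr)\PP_x(\bar{\tau}_{C_\ell}^\ep < \bar{\tau}_x^\ep)
\]
valid for any $y\in C_\ell$, and its symmetric analogue with the roles of clusters swapped. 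Chaining these three identities yields $F_{k,\ell}(x) = (1+\widetilde{O}((\log M)^{-3}))F_{\ell,k}(y)$ uniformly. Substituting this back into step (ii) and using $\sum_{x\in C_k}\mu_k(x)=1$, both $\pi^\ep(C_k)q_\star^\ep(C_\ell|C_k)$ and $\pi^\ep(C_\ell)q_\star^\ep(C_k|C_\ell)$ equal the common value $F_{\ell,k}(y)$ (for any fixed $y$) up to a $(1+\widetilde{O}(1/\log M))$ factor, giving the claim.

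The main obstacle is the collapse estimate, whose upper bound is trivial. For the lower bound, I would write $\PP_x(\bar{\tau}_{C_\ell}^\ep<\bar{\tau}_x^\ep)-\PP_x(\bar{\tau}_y^\ep<\bar{\tau}_x^\ep)=\PP_x(\bar{\tau}_{C_\ell}^\ep<\bar{\tau}_x^\ep<\bar{\tau}_y^\ep)$, and apply the strong Markov property at $\bar{\tau}_{C_\ell}^\ep$ to factor this as at most $\PP_x(\bar{\tau}_{C_\ell}^\ep<\bar{\tau}_x^\ep)\cdot\sup_{z\in C_\ell}\PP_z(\bar{\tau}_x^\ep<\bar{\tau}_y^\ep)$. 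To control the supremum, note that reaching $x\notin C_\ell$ before $y\in C_\ell$ requires an exit of $C_\ell$ before hitting $y$; modelling exits via the reduced chain $\tilde{X}^{\ell,\ep}$ (Theorem~\ref{thm:complement}) with per-step exit probability at most $d_{\out}\ep$, a one-step conditioning argument gives
\[
\PP_z(\bar{\tau}_{C_\ell^c}^\ep < \bar{\tau}_y^\ep) \le d_{\out}\ep\cdot\EE{z}{\tilde{\tau}_y^{\ell,\ep}}.
\]
Combining Lemma~\ref{thm:reduce} with a standard geometric tail sum yields $\EE{z}{\tilde{\tau}_y^{\ell,\ep}}=\widetilde{O}(M)$, so under $\ep\le O(M^{-1}(\log M)^{-4})$ the whole quantity is $\widetilde{O}((\log M)^{-3})$, well inside the tolerance needed. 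The dominant error in the final ratio is therefore not from this collapse step but from the stationary perturbation bound of Corollary~\ref{thm:pimu}, reproducing the stated $1+\widetilde{O}(1/\log M)$.
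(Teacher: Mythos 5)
Your proof is correct and reuses the same basic building blocks as the paper (the detailed balance relation of Proposition~\ref{thm:balance}, the coupling identity \eqref{eq:coupling}, Corollary~\ref{thm:pimu}, and a ``collapse'' estimate that identifies $\PP_x(\bar{\tau}_y^\ep<\bar{\tau}_x^\ep)$ with $\PP_x(\bar{\tau}_{C_\ell}^\ep<\bar{\tau}_x^\ep)$), but it organizes them differently and, more interestingly, proves the collapse estimate by a genuinely different and more elementary route. The paper's proof of the collapse step goes through Proposition~\ref{thm:supsup}, which rests on the path-measure total-variation bound (Lemma~\ref{thm:pathtv}) together with mixing-time concentration; you instead bound $\PP_z(\bar{\tau}_x^\ep<\bar{\tau}_y^\ep)\le\PP_z(\bar{\tau}_{C_\ell^c}^\ep<\bar{\tau}_y^\ep)\le d_{\out}\ep\cdot\EE{z}{\tilde{\tau}_y^{\ell,\ep}}$ by an optional-stopping/Wald argument on the reduced chain and appeal only to Lemma~\ref{thm:reduce} for $\EE{z}{\tilde{\tau}_y^{\ell,\ep}}=\widetilde{O}(M)$. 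This is cleaner in isolation (though the paper needs Proposition~\ref{thm:supsup} for other purposes anyway). Organizationally, you fix a single pair $(x,y)$ and show the ``flux'' $F_{k,\ell}(x)=\pi^\ep(x)\PP_x(\bar{\tau}_{C_\ell}^\ep<\bar{\tau}_x^\ep)$ is $(1+o(1))$-constant across all $x\in C_k$, $y\in C_\ell$ --- effectively recovering the content of the paper's Proposition~\ref{thm:mupq} from scratch --- and then conclude by averaging; the paper instead expands $\pi^\ep(C_k)q_\star^\ep(C_\ell|C_k)$ into a double sum $\sum_{x,y}\pi^\ep(x)\mu_k(x)\mu_\ell(y)\PP_x(\bar{\tau}_y^\ep<\bar{\tau}_x^\ep)$ and matches the symmetric expression termwise by detailed balance. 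The two routes are logically equivalent and both place the dominant $\widetilde{O}(1/\log M)$ error in Corollary~\ref{thm:pimu}, as you correctly note.
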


\begin{proof}
First note that for $x\in C_k$, $y\in C_\ell$ with $k\neq\ell$, by Proposition \ref{thm:supsup},
\begin{align*}
0&\le 1-\frac{\PP_x(\bar{\tau}_y^\ep < \bar{\tau}_x^\ep)}{\PP_x(\bar{\tau}_{C_\ell}^\ep < \bar{\tau}_x^\ep)}\\
&= \frac{1}{\PP_x(\bar{\tau}_{C_\ell}^\ep < \bar{\tau}_x^\ep)} \sum_{z\in C_\ell} \PP_x(\bar{\tau}_{C_\ell}^\ep < \bar{\tau}_x^\ep, X_{\bar{\tau}_{C_\ell}^\ep}^\ep = z) \PP_z(\tau_x^\ep < \tau_y^\ep)\\
&\le \frac{1}{\PP_x(\bar{\tau}_{C_\ell}^\ep < \bar{\tau}_x^\ep)} \sum_{z\in C_\ell} \PP_x(\bar{\tau}_{C_\ell}^\ep < \bar{\tau}_x^\ep, X_{\bar{\tau}_{C_\ell}^\ep}^\ep = z)\cdot \sup_{z\in C_\ell} \abs{\PP_z(\tau_x^\ep < \tau_y^\ep) - \PP_z(\tau_x^0 < \tau_y^0)} \\
&\le \widetilde{O}\left(\frac{1}{(\log M)^3}\right).
\end{align*}
By the definition of $q_\star^\ep$, the coupling equation \eqref{eq:coupling} and Corollary \ref{thm:pimu}, it follows that
\begin{align*}
\pi^\ep(C_k)q_\star^\ep(C_\ell|C_k) &= \pi^\ep(C_k) \sum_{x\in C_k} \mu_k(x)^2 \PP_x(\bar{\tau}_{C_\ell}^\ep < \bar{\tau}_x^\ep) \\
&= \pi^\ep(C_k) \sum_{x\in C_k} \pi_k^\ep(x)\mu_k(x) \PP_x(\bar{\tau}_{C_\ell}^\ep < \bar{\tau}_x^\ep)\\
&\qquad + \pi^\ep(C_k)\sum_{x\in C_k} \mu_k(x)^2\left(1 - \frac{\pi_k^\ep(x)}{\mu_k(x)}\right) \PP_x(\bar{\tau}_{C_\ell}^\ep < \bar{\tau}_x^\ep) \\
&= \sum_{x\in C_k} \pi^\ep(x)\mu_k(x) \PP_x(\bar{\tau}_{C_\ell}^\ep < \bar{\tau}_x^\ep) + \pi^\ep(C_k)q_\star^\ep(C_\ell|C_k) \cdot \widetilde{O}\left(\frac{1}{\log M}\right).
\end{align*}
We expand the first term further as
\begin{align*}
&\sum_{x\in C_k} \pi^\ep(x)\mu_k(x) \PP_x(\bar{\tau}_{C_\ell}^\ep < \bar{\tau}_x^\ep)\\
&= \sum_{x\in C_k}\sum_{y\in C_\ell} \pi^\ep(x)\mu_k(x)\mu_\ell(y) \PP_x(\bar{\tau}_y^\ep < \bar{\tau}_x^\ep)\\
&\qquad + \sum_{x\in C_k}\sum_{y\in C_\ell} \pi^\ep(x)\mu_k(x)\mu_\ell(y) \PP_x(\bar{\tau}_{C_\ell}^\ep < \bar{\tau}_x^\ep) \left(1 - \frac{\PP_x(\bar{\tau}_y^\ep < \bar{\tau}_x^\ep)}{\PP_x(\bar{\tau}_{C_\ell}^\ep < \bar{\tau}_x^\ep)}\right) \\
&= \sum_{x\in C_k}\sum_{y\in C_\ell} \pi^\ep(x)\mu_k(x)\mu_\ell(y) \PP_x(\bar{\tau}_y^\ep < \bar{\tau}_x^\ep)\\
&\qquad + \sum_{x\in C_k} \pi^\ep(x)\mu_k(x) \PP_x(\bar{\tau}_{C_\ell}^\ep < \bar{\tau}_x^\ep) \cdot \widetilde{O}\left(\frac{1}{(\log M)^3}\right).
\end{align*}
Together, we have shown that
\begin{align*}
\pi^\ep(C_k)q_\star^\ep(C_\ell|C_k) =\sum_{x\in C_k}\sum_{y\in C_\ell} \pi^\ep(x)\mu_k(x)\mu_\ell(y) \PP_x(\bar{\tau}_y^\ep < \bar{\tau}_x^\ep)\cdot \left(1+\widetilde{O}\left(\frac{1}{\log M}\right)\right),
\end{align*}
and by symmetry
\begin{align*}
\pi^\ep(C_\ell)q_\star^\ep(C_k|C_\ell) =\sum_{x\in C_k}\sum_{y\in C_\ell} \pi^\ep(y)\mu_k(x)\mu_\ell(y) \PP_y(\bar{\tau}_x^\ep < \bar{\tau}_y^\ep)\cdot \left(1+\widetilde{O}\left(\frac{1}{\log M}\right)\right).
\end{align*}
Finally, since
\begin{align*}
\sum_{x\in C_k}\sum_{y\in C_\ell} \mu_k(x)\mu_\ell(y) \cdot \pi^\ep(x)\PP_x(\bar{\tau}_y^\ep < \bar{\tau}_x^\ep) = \sum_{x\in C_k}\sum_{y\in C_\ell} \mu_k(x)\mu_\ell(y) \cdot \pi^\ep(y)\PP_y(\bar{\tau}_x^\ep < \bar{\tau}_y^\ep)
\end{align*}
due to Proposition \ref{thm:balance}, we conclude the desired statement.
\end{proof}

Together with Assumptions \ref{ass:cluster}, \ref{ass:metamix} and \eqref{eq:coupling}, this immediately implies:
\begin{cor}\label{thm:uniform}
For all $k\in [K]$ and $x\in S$ it holds that $\pi^\ep(C_k) = \Theta(1/K)$ and $\pi^\ep(x) = \Theta(1/KM)$.
\end{cor}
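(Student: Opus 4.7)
The plan is to reduce the pointwise bound to the cluster-mass bound via the coupling identity, and then to estimate $\xi_k := \pi^\ep(C_k)$ by combining the approximate detailed balance of the meta-chain with matching two-sided bounds on $q_\star^\ep$. Indeed, \eqref{eq:coupling} gives $\pi^\ep(x) = \xi_k \pi_k^\ep(x)$ for $x\in C_k$, and Assumption \ref{ass:cluster} already supplies $\pi_k^\ep(x) = \Theta(1/M)$, so it suffices to show $\xi_k = \Theta(1/K)$.

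For this, I would invoke Proposition \ref{thm:asymprev} to obtain the approximate detailed balance $\xi_k q_\star^\ep(C_\ell|C_k) = \xi_\ell q_\star^\ep(C_k|C_\ell)(1+o(1))$ for each $k\ne\ell$, yielding $\xi_k/\xi_\ell = q_\star^\ep(C_k|C_\ell)/q_\star^\ep(C_\ell|C_k)\cdot(1+o(1))$. Combined with $\sum_k \xi_k = 1$, the claim $\xi_k = \Theta(1/K)$ reduces to showing that all these ratios are $\Theta(1)$, which in turn reduces to the matching estimate $q_\star^\ep(C_\ell|C_k) = \Theta(\ep/M)$ for all $k\ne\ell$. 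The lower bound is precisely Assumption \ref{ass:metamix}, so the substantive step is the upper bound $q_\star^\ep(C_\ell|C_k) = O(\ep/M)$.

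To establish this upper bound, I would control the escape probability $\PP_x(\bar{\tau}_{S\setminus C_k}^\ep < \bar{\tau}_x^\ep)$ via a Kac-formula argument: by Markov's inequality applied to the number of escape transitions from $C_k$ in one excursion from $x$ to $x$,
\begin{equation*}
\PP_x(\bar{\tau}_{S\setminus C_k}^\ep < \bar{\tau}_x^\ep) \le \frac{1}{\pi^\ep(x)}\sum_{y\in C_k}\pi^\ep(y)\,p^\ep(S\setminus C_k|y).
\end{equation*}
Substituting $\pi^\ep = \xi_k\pi_k^\ep$ into both numerator and denominator makes the $\xi_k$ factors cancel, so the bound reduces to $(1/\pi_k^\ep(x))\sum_{y\in C_k}\pi_k^\ep(y)\,p^\ep(S\setminus C_k|y) = \Theta(M)\cdot O(\ep/M) = O(\ep)$, using Assumption \ref{ass:cluster} and Assumption \ref{ass:sparse} (at most $n_{\out}$ sparse sources in $C_k$, each contributing outbound probability $O(\ep)$). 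Plugging this uniform estimate into the definition of $q_\star^\ep$ and noting $\sum_x \mu_k(x)^2 = \Theta(1/M)$ gives $q_\star^\ep(C_\ell|C_k) \le O(\ep/M)$. Therefore the balance ratios are $\Theta(1)$, which by $\sum_k \xi_k = 1$ forces $\xi_k = \Theta(1/K)$ and hence $\pi^\ep(x) = \Theta(1/KM)$. The main subtlety lies in this Kac-formula step: one must verify that the circular appearance of $\xi_k$ in both numerator and denominator cancels cleanly, so that the upper bound on $q_\star^\ep$ is established unconditionally on the cluster masses.
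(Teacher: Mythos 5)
Your proof is correct, and you have correctly identified the one genuinely substantive step. The paper dispatches this corollary as an ``immediate'' consequence of Proposition~\ref{thm:asymprev} together with Assumptions~\ref{ass:cluster},~\ref{ass:metamix} and the coupling equation~\eqref{eq:coupling}, but as you observe, the approximate detailed balance $\xi_k/\xi_\ell = q_\star^\ep(C_k|C_\ell)/q_\star^\ep(C_\ell|C_k)\,(1+o(1))$ only yields $\xi_k = \Theta(1/K)$ once one has \emph{matching} two-sided bounds $q_\star^\ep(C_\ell|C_k) = \Theta(\ep/M)$; Assumption~\ref{ass:metamix} supplies only the lower bound, and the upper bound is never proved explicitly in the paper (it is, in fact, used again implicitly in the proof of Theorem~\ref{thm:hitting}). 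Your Kac-formula argument for the upper bound is sound: the expected number of escape transitions from $C_k$ during an excursion from $x$ to $x$ is $\pi^\ep(x)^{-1}\sum_{y\in C_k}\pi^\ep(y)p^\ep(C_k^c|y)$ by the occupation-time representation, Markov's inequality turns this into a bound on $\PP_x(\bar{\tau}_{C_k^c}^\ep<\bar{\tau}_x^\ep)$, and the cluster mass $\xi_k$ cancels between numerator and denominator because all states involved lie in $C_k$, so no circularity arises. Plugging this into $q_\star^\ep(C_\ell|C_k)=\sum_x\mu_k(x)^2\PP_x(\bar{\tau}_{C_\ell}^\ep<\bar{\tau}_x^\ep)\le\sum_x\mu_k(x)^2\PP_x(\bar{\tau}_{C_k^c}^\ep<\bar{\tau}_x^\ep)$ and using $\sum_x\mu_k(x)^2=\Theta(1/M)$ gives the required $O(\ep/M)$. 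This is a correct and somewhat more careful derivation than the paper's one-line claim; alternatively, one could derive the same escape-probability bound $\PP_x(\bar{\tau}_{C_k^c}^\ep<\bar{\tau}_x^\ep)=O(\ep)$ by working with the reduced chain $\tilde X^{k,\ep}$ (whose stationary measure $\pi_k^\ep$ does not involve $\xi_k$ at all) and invoking Proposition~\ref{thm:mupq}, which would stay slightly closer in spirit to the machinery the paper has already built, but your version is entirely self-contained and equally valid.
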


Moreover, $q_\star^\ep(\cdot|C_k)$ serves as an approximation of the escape probabilities from any $x\in C_k$, weighted by the stationary measure.

\begin{prop}\label{thm:mupq}
For $k,\ell\in[K]$ with $k\neq\ell$ it holds that
\begin{equation}\label{eq:mupq1}
\sup_{x\in C_k} \abs{\frac{\mu_k(x)\PP_x(\bar{\tau}_{C_\ell}^\ep < \bar{\tau}_x^\ep)}{q_\star^\ep(C_\ell|C_k)} -1} = \widetilde{O}\left(\frac{1}{\log M}\right)
\end{equation}
and
\begin{equation}\label{eq:mupq2}
\sup_{x\in C_k,y\in C_\ell} \abs{\frac{\mu_k(x)\PP_x(\bar{\tau}_y^\ep < \bar{\tau}_x^\ep)}{q_\star^\ep(C_\ell|C_k)} -1} = \widetilde{O}\left(\frac{1}{\log M}\right).
\end{equation}
\end{prop}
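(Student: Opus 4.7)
The plan is to derive both bounds by combining the detailed-balance identity of Proposition~\ref{thm:balance} with the stationary-measure approximation of Corollary~\ref{thm:pimu} and the ``escape localization'' observation from the proof of Proposition~\ref{thm:asymprev}, together with asymptotic reversibility of $q_\star^\ep$ itself. Since $q_\star^\ep(C_\ell|C_k) = \sum_{x\in C_k} \mu_k(x)^2 \PP_x(\bar{\tau}_{C_\ell}^\ep<\bar{\tau}_x^\ep)$ and $\sum_x \mu_k(x)=1$, the claim \eqref{eq:mupq1} amounts to showing that the quantity $\mu_k(x)\PP_x(\bar{\tau}_{C_\ell}^\ep<\bar{\tau}_x^\ep)$ is essentially independent of $x\in C_k$, with relative error $\widetilde{O}(1/\log M)$.

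First, I would start from the single-state detailed balance for an arbitrary pair $x\in C_k$, $y\in C_\ell$:
\begin{equation*}
\pi^\ep(x)\PP_x(\bar{\tau}_y^\ep<\bar{\tau}_x^\ep) = \pi^\ep(y)\PP_y(\bar{\tau}_x^\ep<\bar{\tau}_y^\ep).
\end{equation*}
Multiply both sides by $\mu_\ell(y)$ and sum over $y\in C_\ell$. On the right-hand side, use $\pi^\ep(y) = \pi^\ep(C_\ell)\pi_\ell^\ep(y) = \pi^\ep(C_\ell)\mu_\ell(y)(1+\widetilde{O}(1/\log M))$ by the coupling equation \eqref{eq:coupling} and Corollary~\ref{thm:pimu}, producing a factor $\sum_{y\in C_\ell}\mu_\ell(y)^2\PP_y(\bar{\tau}_x^\ep<\bar{\tau}_y^\ep)$.

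Next, the key reduction: I would reuse the inequality from the proof of Proposition~\ref{thm:asymprev}, namely that for $x\in C_k$, $y\in C_\ell$,
\begin{equation*}
0\le 1-\frac{\PP_x(\bar{\tau}_y^\ep<\bar{\tau}_x^\ep)}{\PP_x(\bar{\tau}_{C_\ell}^\ep<\bar{\tau}_x^\ep)} \le \widetilde{O}\!\left(\frac{1}{(\log M)^3}\right),
\end{equation*}
and the symmetric statement with $(x,C_k)$ in place of $(y,C_\ell)$. Applied to the left-hand sum this yields $\sum_{y\in C_\ell}\mu_\ell(y)\PP_x(\bar{\tau}_y^\ep<\bar{\tau}_x^\ep) = \PP_x(\bar{\tau}_{C_\ell}^\ep<\bar{\tau}_x^\ep)(1-\widetilde{O}((\log M)^{-3}))$ since the $\mu_\ell$-probabilities sum to one, and on the right-hand side $\sum_{y\in C_\ell}\mu_\ell(y)^2 \PP_y(\bar{\tau}_x^\ep<\bar{\tau}_y^\ep) = q_\star^\ep(C_k|C_\ell)(1-\widetilde{O}((\log M)^{-3}))$ by the definition of the meta-chain. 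Substituting $\pi^\ep(x)=\pi^\ep(C_k)\mu_k(x)(1+\widetilde{O}(1/\log M))$ on the left and applying Proposition~\ref{thm:asymprev} to swap $\pi^\ep(C_\ell)q_\star^\ep(C_k|C_\ell)$ for $\pi^\ep(C_k)q_\star^\ep(C_\ell|C_k)$, the factor $\pi^\ep(C_k)$ cancels and I recover \eqref{eq:mupq1}. Then \eqref{eq:mupq2} is immediate from \eqref{eq:mupq1} by applying the localization inequality once more to replace $\PP_x(\bar{\tau}_{C_\ell}^\ep<\bar{\tau}_x^\ep)$ by $\PP_x(\bar{\tau}_y^\ep<\bar{\tau}_x^\ep)$, with the $\widetilde{O}((\log M)^{-3})$ error absorbed into the $\widetilde{O}(1/\log M)$ bound.

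The main obstacle is bookkeeping of compound errors: each of the steps (detailed balance reweighting, $\pi_k^\ep\to\mu_k$, localization to a single $y$, and asymptotic reversibility of $q_\star^\ep$) contributes a multiplicative error, and I need to verify that the dominant term remains $\widetilde{O}(1/\log M)$ uniformly in $x\in C_k$ (and in $y\in C_\ell$ for \eqref{eq:mupq2}). This requires that $q_\star^\ep(C_\ell|C_k)$ stays bounded away from zero on the relevant scale so that additive errors translate to well-controlled relative errors; Assumption~\ref{ass:metamix} together with Corollary~\ref{thm:uniform} ensures $q_\star^\ep(C_\ell|C_k)\mu_k(x)^{-1} = \Omega(\ep)$ is not washed out by the $\widetilde{O}(1/\log M)$ corrections coming from Corollary~\ref{thm:pimu}. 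With that check in place, dividing through gives the uniform relative-error bound claimed in both \eqref{eq:mupq1} and \eqref{eq:mupq2}.
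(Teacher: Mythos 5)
Your proposal is correct and uses the same core ingredients as the paper's proof (detailed balance from Proposition~\ref{thm:balance}, the stationary-measure approximation of Corollary~\ref{thm:pimu}, and the localization bound $1-\PP_x(\bar{\tau}_y^\ep<\bar{\tau}_x^\ep)/\PP_x(\bar{\tau}_{C_\ell}^\ep<\bar{\tau}_x^\ep)=\widetilde{O}((\log M)^{-3})$ from the proof of Proposition~\ref{thm:asymprev}). The one organizational difference: the paper fixes a single $y\in C_\ell$ and chains four $(1+\widetilde{O}(1/\log M))$-equalities to show that $\mu_k(x)\PP_x(\bar{\tau}_{C_\ell}^\ep<\bar{\tau}_x^\ep)$ is, up to relative error, a constant independent of $x$; then $q_\star^\ep(C_\ell|C_k)$, being the $\mu_k$-weighted average of these quantities, is automatically within the same relative error of that constant, so no explicit appeal to Proposition~\ref{thm:asymprev} is needed (the ratio $\pi^\ep(C_\ell)/\pi^\ep(C_k)$ appears on both sides and cancels). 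You instead sum the detailed-balance identity over $y\in C_\ell$ weighted by $\mu_\ell(y)$ and then invoke Proposition~\ref{thm:asymprev} explicitly to convert $\pi^\ep(C_\ell)q_\star^\ep(C_k|C_\ell)$ back to $\pi^\ep(C_k)q_\star^\ep(C_\ell|C_k)$; this is equivalent but slightly more roundabout. One small inaccuracy in your final paragraph: you worry that a lower bound on $q_\star^\ep(C_\ell|C_k)$ (via Assumption~\ref{ass:metamix} and Corollary~\ref{thm:uniform}) is needed to turn ``additive errors into relative errors,'' but in fact every error term you use (Corollary~\ref{thm:pimu}, the localization inequality, Proposition~\ref{thm:asymprev}) is already multiplicative, so the denominator only needs to be nonzero — which follows from irreducibility of $X^\ep$ for $\ep>0$. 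That check is therefore unnecessary, though harmless.
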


\begin{proof}
Similarly to the proof of Proposition \ref{thm:asymprev}, for any $y\in C_\ell$ we can successively transform
\begin{align*}
\mu_k(x)\PP_x(\bar{\tau}_{C_\ell}^\ep < \bar{\tau}_x^\ep) &= \mu_k(x)\PP_x(\bar{\tau}_y^\ep < \bar{\tau}_x^\ep) \cdot \left(1+\widetilde{O}\left(\frac{1}{\log M}\right)\right)\\
&= \pi_k^\ep(x)\PP_x(\bar{\tau}_y^\ep < \bar{\tau}_x^\ep) \cdot \left(1+\widetilde{O}\left(\frac{1}{\log M}\right)\right) \\
&= \pi_k^\ep(y)\PP_y(\bar{\tau}_x^\ep < \bar{\tau}_y^\ep) \cdot \left(1+\widetilde{O}\left(\frac{1}{\log M}\right)\right) \\
&= \pi_k^\ep(y)\PP_y(\bar{\tau}_{C_k}^\ep < \bar{\tau}_y^\ep) \cdot \left(1+\widetilde{O}\left(\frac{1}{\log M}\right)\right).
\end{align*}
Since the last term is independent of $x$, we also have
\begin{align*}
q_\star^\ep(C_\ell|C_k) &= \sum_{x\in C_k} \mu_k(x)^2 \PP_x(\bar{\tau}_{C_\ell}^\ep < \bar{\tau}_x^\ep) = \pi_k^\ep(y)\PP_y(\bar{\tau}_{C_k}^\ep < \bar{\tau}_y^\ep) \cdot \left(1+\widetilde{O}\left(\frac{1}{\log M}\right)\right),
\end{align*}
verifying \eqref{eq:mupq1}. The proof for \eqref{eq:mupq2} is identical.
\end{proof}

As a corollary, we obtain the promised justification of Assumption \ref{ass:metamix}.

\begin{cor}\label{thm:qlower}
If there exists a sparse edge from $C_k$ to $C_\ell$, it holds that $q_\star^\ep(C_\ell|C_k) = \widetilde{\Omega}(\ep/M)$.
\end{cor}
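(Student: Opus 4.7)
The plan is to combine the transfer principle from Proposition~\ref{thm:mupq} with the trivial lower bound on escape probability coming from the explicit sparse edge. Since \eqref{eq:mupq1} says
\begin{equation*}
q_\star^\ep(C_\ell|C_k) = (1+\widetilde{O}(1/\log M)) \cdot \mu_k(x)\,\PP_x(\bar{\tau}_{C_\ell}^\ep<\bar{\tau}_x^\ep)
\end{equation*}
uniformly for every $x\in C_k$, it suffices to exhibit one $x\in C_k$ at which both factors can be bounded from below by a constant times $1/M$ and $\ep$, respectively.

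First, I would pick $x\in C_k$ to be the source of a sparse edge $(x,y)\in E_s$ with $y\in C_\ell$, which exists by hypothesis. For the stationary weight, Assumption~\ref{ass:cluster} gives $\pi_k^\ep(x)=\Theta(1/M)$, and Corollary~\ref{thm:pimu} transfers this bound to the unperturbed measure, yielding $\mu_k(x)=\Theta(1/M)$. (Alternatively, under the verifiable conditions in Proposition~\ref{thm:spectralgap}, $\mu_k(x)=\Theta(1/M)$ is assumed directly.)

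Next, for the escape probability, I would use the one-step lower bound: conditioning on the first step $X_1^\ep=y$, which occurs with probability $p^\ep(y|x)\ge c\ep$ by Assumption~\ref{ass:sparse}, the event $\bar{\tau}_{C_\ell}^\ep=1<\bar{\tau}_x^\ep$ holds deterministically, since $y\in C_\ell\setminus\{x\}$ forces $X_1^\ep\ne x$. Hence
\begin{equation*}
\PP_x(\bar{\tau}_{C_\ell}^\ep<\bar{\tau}_x^\ep) \ge p^\ep(y|x) \ge c\ep.
\end{equation*}
Multiplying the two bounds yields $\mu_k(x)\PP_x(\bar{\tau}_{C_\ell}^\ep<\bar{\tau}_x^\ep)=\Omega(\ep/M)$, and then \eqref{eq:mupq1} immediately upgrades this to $q_\star^\ep(C_\ell|C_k)=\widetilde{\Omega}(\ep/M)$, absorbing the $1+\widetilde{O}(1/\log M)$ factor into the tilde.

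There is no genuine obstacle here, as the corollary is essentially a sanity check that the quantitative link established in Proposition~\ref{thm:mupq} between the meta-chain transition kernel and the microscopic escape probabilities recovers the expected $\ep$-scaling when a direct sparse edge is present. The only point requiring slight care is justifying that the one-step lower bound indeed puts the chain inside $C_\ell$ without intermediate return to $x$, which is trivial since the transition is direct; everything else is bookkeeping of the constants already furnished by Assumptions~\ref{ass:cluster}, \ref{ass:sparse}.
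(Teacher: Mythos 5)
Your proof is correct and takes essentially the same route as the paper: lower bound the escape probability $\PP_x(\bar{\tau}_{C_\ell}^\ep<\bar{\tau}_x^\ep)$ by the chance of traversing the sparse edge, then lift to $q_\star^\ep(C_\ell|C_k)$ via \eqref{eq:mupq1} of Proposition~\ref{thm:mupq} and the fact that $\mu_k(x)=\Theta(1/M)$. The only difference is that you place $x$ at the source of the sparse edge to get a one-step bound of $c\ep$, whereas the paper starts from an arbitrary $x\in C_k$, first reaches the source $y$ (picking up an extra $\nu/(2\log M)$ factor from Lemma~\ref{thm:nu}), and then takes the edge; both land at $\widetilde\Omega(\ep/M)$.
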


\begin{proof}
Fix $x\in C_k$ and let $(y,z)\in E^s$ with $y\in C_k, z\in C_\ell$. The event $\{\bar{\tau}_{C_\ell}^\ep < \bar{\tau}_x^\ep\}$ occurs if $y$ is hit before returning to $x$ and the edge to $z$ is immediately taken, so that
\begin{align*}
\PP_x(\bar{\tau}_{C_\ell}^\ep < \bar{\tau}_x^\ep) \ge p^\ep(z|y) \PP_x(\bar{\tau}_y^\ep < \bar{\tau}_x^\ep) \ge \frac{\nu c\ep}{2\log M}.
\end{align*}
Hence by Proposition \ref{thm:mupq} we obtain
\begin{equation*}
q_\star^\ep(C_\ell|C_k) = \mu_k(x)\PP_x(\bar{\tau}_{C_\ell}^\ep < \bar{\tau}_x^\ep)\left(1+\widetilde{O}\left(\frac{1}{\log M}\right)\right) \ge \Omega\left(\frac{\ep}{M\log M}\right).
\end{equation*}
\end{proof}

\subsection{Hitting Time Analysis}

To prove Theorem \ref{thm:hitting}, we first derive the expected escape time of a single cluster.

\begin{lemma}\label{thm:escape}
For all $k\in[K]$ and $x\in C_k$, it holds that
\begin{equation*}
\EE{x}{\tau_{C_k^c}^\ep} = \widetilde{\Theta}\left(\frac{M}\ep\right).
\end{equation*}
\end{lemma}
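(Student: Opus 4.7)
The plan is to invoke the renewal identity
\begin{equation*}
\EE{x}{\tau_{C_k^c}^\ep} = \frac{\EE{x}{\bar{\tau}_x^\ep \wedge \bar{\tau}_{C_k^c}^\ep}}{\PP_x(\bar{\tau}_{C_k^c}^\ep < \bar{\tau}_x^\ep)}
\end{equation*}
which follows by strong Markov: successive excursions from $x$ are i.i.d., each ending either with a return to $x$ or an escape from $C_k$ (with probability $p:=\PP_x(\bar{\tau}_{C_k^c}^\ep < \bar{\tau}_x^\ep)$), so the number of excursions before escape is geometric with mean $1/p$ and Wald's identity gives the stated formula. The goal is to show the numerator $R:=\EE{x}{\bar{\tau}_x^\ep \wedge \bar{\tau}_{C_k^c}^\ep}$ is $\widetilde{\Theta}(M)$ and the denominator is $\widetilde{\Theta}(\ep)$, so that their ratio is $\widetilde{\Theta}(M/\ep)$.

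For the numerator, the key comparison is with the reduced chain return time $\tilde{\bar{\tau}}_x^{k,\ep}$, whose mean equals $1/\pi_k^\ep(x) = \Theta(M)$ by Kac's lemma and Assumption~\ref{ass:cluster}. Pathwise $\bar{\tau}_x^\ep \wedge \bar{\tau}_{C_k^c}^\ep \le \tilde{\bar{\tau}}_x^{k,\ep} + 1$, so $R = O(M)$. For the matching lower bound, on $\{\bar{\tau}_x^\ep < \bar{\tau}_{C_k^c}^\ep\}$ the two quantities coincide, and the deficit $\EE{x}{\tilde{\bar{\tau}}_x^{k,\ep}\,1_{\{\bar{\tau}_{C_k^c}^\ep < \bar{\tau}_x^\ep\}}}$ is controlled by Cauchy--Schwarz together with the exponential tail of $\tilde{\bar{\tau}}_x^{k,\ep}$ from Lemma~\ref{thm:reduce} and the upper bound $p=O(\ep)$ (proven below), which is $o(M)$ under the assumption $\ep\le\ep_{\max}$.

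For the denominator, the lower bound is straightforward: by Assumption~\ref{ass:sparse} some sparse edge $(y,z)$ leaves $C_k$, and
\begin{equation*}
p \;\ge\; \PP_x(\bar{\tau}_y^\ep<\bar{\tau}_x^\ep)\cdot p^\ep(z|y) \;\ge\; \widetilde{\Omega}(1)\cdot\Omega(\ep),
\end{equation*}
using Lemma~\ref{thm:nu} to control $\PP_x(\bar{\tau}_y^0<\bar{\tau}_x^0)$ and Proposition~\ref{thm:supsup} to transfer to the perturbed chain. For the upper bound I decompose by the specific edge used at the moment of first escape: the events ``first exit is via $(y,z)$'' over $(y,z)\in E_s$ with $y\in C_k$ are disjoint and partition $\{\bar{\tau}_{C_k^c}^\ep<\bar{\tau}_x^\ep\}$. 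By strong Markov at each visit to $y$, the probability of first exit via $(y,z)$ is bounded by $\EE{x}{V_y^{\pre}}\cdot p^\ep(z|y)$, where $V_y^{\pre}$ counts visits to $y$ before $\bar{\tau}_x^\ep$ or any exit; and $\EE{x}{V_y^{\pre}}\le \pi_k^\ep(y)/\pi_k^\ep(x)=\Theta(1)$ by the Kac-type ratio identity for the reduced chain. Summing over the $O(1)$ sparse edges out of $C_k$ yields $p\le O(\ep)$.

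The main technical obstacle is the denominator upper bound. A naive union bound $p\le\sum_{\ell\ne k}\PP_x(\bar{\tau}_{C_\ell}^\ep<\bar{\tau}_x^\ep)$ overcounts trajectories that visit multiple other clusters before returning to $x$, introducing a spurious factor of $K$. The disjoint first-exit decomposition above circumvents this, but at the cost of needing Kac-ratio control on pre-exit visit counts to source states, which relies crucially on the uniform cluster stationary estimate $\pi_k^\ep(x)=\Theta(1/M)$ from Assumption~\ref{ass:cluster}. Combining $R=\widetilde{\Theta}(M)$ with $p=\widetilde{\Theta}(\ep)$ then delivers $\EE{x}{\tau_{C_k^c}^\ep}=\widetilde{\Theta}(M/\ep)$ uniformly in $x\in C_k$.
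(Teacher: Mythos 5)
Your proof is correct and takes a genuinely different route from the paper's. The paper proves two-sided tail bounds on $\PP_x(\tau_{C_k^c}^\ep > m)$ directly: the upper bound subsamples at the mixing time scale $t_{\mix}=O(\log M)$ and lower-bounds the per-block escape probability by $\Theta(|D_k|\ep/M)$, while the lower bound partitions time into $t_{\mix}$ shifted subchains and upper-bounds the per-step escape probability, then both are summed over $m$. Your argument is instead structural: it appeals to the Wald/geometric-excursion identity $\EE{x}{\tau_{C_k^c}^\ep}=R/p$ with $R=\EE{x}{\bar{\tau}_x^\ep\wedge\bar{\tau}_{C_k^c}^\ep}$ and $p=\PP_x(\bar{\tau}_{C_k^c}^\ep<\bar{\tau}_x^\ep)$, identifies $R=\Theta(M)$ via Kac's lemma for the reduced chain (using $\pi_k^\ep(x)=\Theta(1/M)$ from Assumption~\ref{ass:cluster}), and pins down $p=\widetilde{\Theta}(\ep)$ by a first-exit-edge decomposition controlled through the Kac occupation-time ratio $\pi_k^\ep(y)/\pi_k^\ep(x)=\Theta(1)$. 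Your observation that a naive union bound over target clusters would incur a spurious $K$ factor, and that the disjoint first-exit decomposition into the $O(1)$ edges of $E_{s,k}$ resolves this, is exactly right and is the crux of the argument. The Cauchy--Schwarz handling of the deficit $\EE{x}{\tilde{\bar{\tau}}_x^{k,\ep}1_{\{\bar{\tau}_{C_k^c}^\ep<\bar{\tau}_x^\ep\}}}$ using the second moment from Lemma~\ref{thm:reduce} and $p=O(\ep)$ also checks out under $\ep\le\ep_{\max}$. What the paper's approach buys is explicit, reusable tail decay rates $\exp(-\Theta(\ep m/(M\log M)))$ (the same structure recurs in the analysis of Algorithm~\ref{alg:search} and Proposition~\ref{thm:consistent}); what yours buys is a cleaner identity-driven derivation that makes the $M/\ep$ scaling transparent and localizes the $\log M$ loss to the denominator via Lemma~\ref{thm:nu}.
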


\begin{proof}
Recall from Lemma \ref{thm:reduce} that the mixing time of $\tilde{X}^{k,\ep}$ is $t_{\mix} = O(\log M)$. Also denote the set of states in $C_k$ with outbound edges as $D_k := \{x\in C_k: \exists y\notin C_k, (x,y)\in E_s\}$, so that $1\le |D_k|\le n_{\out}$ By Assumption \ref{ass:sparse}. Since $\tau_{C_k^c}^\ep > m$ implies that $\tilde{X}_t^{k,\ep}=X_t^\ep$ for $t\le m$ and that a sparse edge was not taken at each state of the skipped subchain $\tilde{X}_{t_{\mix}t}^{k,\ep}$ up to $t=\lfloor m/t_{\mix}\rfloor$, it follows that
\begin{align*}
\PP_x(\tau_{C_k^c}^\ep > m) &\le \sup\prod_{t=1}^{\lfloor m/t_{\mix}\rfloor} \PP_{X_{t_{\mix}(t-1)+1}^\ep} (X_{t_{\mix}t+1}^\ep\in C_k) \\
&\le \sup\prod_{t=1}^{\lfloor m/t_{\mix}\rfloor} \left(1 - \PP_{X_{t_{\mix}(t-1)+1}^\ep}(\tilde{X}_{t_{\mix}t}^{k,\ep}\in D_k) \cdot\PP_{\tilde{X}_{t_{\mix}t}^{k,\ep}}(X_{t_{\mix}t+1}^{k,\ep}\notin C_k)\right) \\
&\le \prod_{t=1}^{\lfloor m/t_{\mix}\rfloor} \left(1-\frac{\rho|D_k|}{2M}\cdot c\ep\right) \\
&\le \exp\left(-\frac{\rho c\ep m}{2Mt_{\mix}}\right).
\end{align*}
This yields the upper bound
\begin{align*}
\EE{x}{\tau_{C_k^c}^\ep} = \sum_{m=0}^\infty \PP_x(\tau_{C_k^c}^\ep > m) \le \left(1 - \exp\left(-\frac{\rho c\ep}{2Mt_{\mix}}\right)\right)^{-1} \le O\left(\frac{M\log M}{\ep}\right).
\end{align*}
For the lower bound, consider the partition of $(\tilde{X}_t^{k,\ep})_{t\ge 0}$ into the union of skipped and shifted subchains $(\tilde{X}_{t_{\mix}t+u}^{k,\ep})_{t\ge 0}$ for $0\le u<t_{\mix}$. Suppose that $m\ge 2t_{\mix}$, so each subchain has length at least $2$, and all transition probabilities of each subchain is $\Theta(1/M)$ by Assumption \ref{ass:cluster}. Since not taking a sparse edge at each step of all subchains implies $\tau_{C_k^c}^\ep > m$,
\begin{align*}
&\PP_x(\tau_{C_k^c}^\ep > m)\\
&\ge\inf \prod_{u=0}^{t_{\mix}-1} \prod_{t=1}^{\lfloor m/t_{\mix}\rfloor} \left(1-\PP_{\tilde{X}_{t_{\mix}(t-1)+u+1}^{k,\ep}}(\tilde{X}_{t_{\mix}t+u}^{k,\ep}\in D_k) \cdot \PP_{\tilde{X}_{t_{\mix}t+u}^{k,\ep}}(X_{t_{\mix}t+u+1}^\ep\notin C_k)\right) \\
&\ge\prod_{u=0}^{t_{\mix}-1} \prod_{t=1}^{\lfloor m/t_{\mix}\rfloor} \left(1-\Theta\left(\frac{|D_k|}{M}\cdot d_{\out}\ep\right)\right) \\
&\ge \left(1-\Theta\left(\frac{\ep}{M}\right)\right)^{m-t_{\mix}}.
\end{align*}
Note that while the dependency on $t_{\mix}$ does not explicitly appear in the bound, $t_{\mix}$ still needs to be small enough to argue that the states of each subchain for $t\ge 1$ exist and are sufficiently mixed. Hence it follows that
\begin{align*}
\EE{x}{\tau_{C_k^c}^\ep} = \sum_{m=0}^\infty \PP_x(\tau_{C_k^c}^\ep > m) \ge \Omega\left(\frac{M}{\ep}\right) - 2t_{\mix} \ge \Omega\left(\frac{M}{\ep}\right),
\end{align*}
which concludes the statement.
\end{proof}

\begin{thm}[Theorem \ref{thm:hitting} restated]
Under Assumptions \ref{ass:cluster}-\ref{ass:metamix}, it holds for all $\ep \le\ep_{\max}:= \Theta(M^{-1}(\log M)^{-4})$ that
\begin{equation*}
\EE{(X_{\inn},X_{\out})\sim\DD}{\EE{X_{\inn}}{\tau_{X_{\out}}^\ep}} = \widetilde{\Theta}\left(\frac{KM}{\ep}\right).
\end{equation*}
\end{thm}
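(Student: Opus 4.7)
}

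The plan is to separately establish the upper and lower bounds of $\widetilde{\Theta}(KM/\ep)$ by viewing the full chain $X^\ep$ as successive cluster escapes interleaved with fast within-cluster mixing, and lifting the coarse-grained cluster sequence to the meta-chain $X_\star^\ep$. Let $C_{k^*}, C_{\ell^*}$ denote the clusters containing $X_{\inn}, X_{\out}$, respectively.

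\textbf{Upper bound.} The plan is to decompose $\tau_{X_{\out}}^\ep$ as the time to reach $C_{\ell^*}$ plus the time to hit $X_{\out}$ once inside $C_{\ell^*}$. For the first part, I would view the sequence of clusters visited by $X^\ep$ as a discrete process obtained by recording only cluster changes (i.e., the cluster after each sparse edge crossing). By Lemma~\ref{thm:escape}, the time between consecutive sparse edge crossings is $\widetilde{O}(M/\ep)$ in expectation, regardless of the starting state within the cluster. Conditional on a sparse edge being taken from $C_k$, Proposition~\ref{thm:mupq} together with Assumption~\ref{ass:metamix} yields that the probability of landing in any specific cluster $C_\ell$ is $q_\star^\ep(C_\ell|C_k)/\sum_{m\neq k} q_\star^\ep(C_m|C_k) = \Theta(1/K)$, since both numerator and denominator scale as $\Theta(\ep/M)$ per target and $\Theta(K\ep/M)$ in total (the upper bound on $q_\star^\ep$ follows from Assumption~\ref{ass:cluster} and Lemma~\ref{thm:escape}). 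Hence the cluster-valued subchain behaves like a random walk on the complete graph on $K$ vertices, whose expected hitting time to $C_{\ell^*}$ is $O(K)$. Multiplying by $\widetilde{O}(M/\ep)$ steps per meta-transition gives $\widetilde{O}(KM/\ep)$. For the second part, once $X^\ep$ enters $C_{\ell^*}$, Lemma~\ref{thm:reduce} shows that the reduced chain hits $X_{\out}$ in $\widetilde{O}(M)$ steps with high probability, and the escape time bound of Lemma~\ref{thm:escape} implies the chain is likely to remain in $C_{\ell^*}$ long enough for this to occur. The lower-order term $\widetilde{O}(M)$ is absorbed into $\widetilde{O}(KM/\ep)$.

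\textbf{Lower bound.} For the lower bound, the plan is a Wald-type argument. Let $N$ be the number of sparse edges traversed by $X^\ep$ up to time $\tau_{X_{\out}}^\ep$. Since the trajectory is a valid path from $X_{\inn}$ to $X_{\out}$, Assumption~\ref{ass:sep} gives $\mathbb{E}_\DD[N] \ge \Omega(K)$. Denote by $T_i$ the number of steps between the $(i-1)$th and $i$th sparse edge crossings (with $T_1$ measured from time $0$). By the strong Markov property applied at each crossing time, the state just after the $(i-1)$th crossing lies in some cluster $C_{k_i}$, and the conditional expectation $\mathbb{E}[T_i \mid \mathcal{F}_{\tau_{i-1}}, N \ge i]$ equals the expected cluster escape time from that state, which by Lemma~\ref{thm:escape} is $\widetilde{\Omega}(M/\ep)$ uniformly. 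Then
\begin{equation*}
\mathbb{E}[\tau_{X_{\out}}^\ep] \;\ge\; \mathbb{E}\Bigl[\sum_{i=1}^N T_i\Bigr] \;=\; \sum_{i \ge 1} \mathbb{E}[T_i \mathbf{1}_{N\ge i}] \;\ge\; \widetilde{\Omega}(M/\ep) \cdot \mathbb{E}[N] \;=\; \widetilde{\Omega}(KM/\ep),
\end{equation*}
using the tower property conditionally on $\{N \ge i\}$ and $\mathcal{F}_{\tau_{i-1}}$.

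\textbf{Main obstacle.} The lower bound hinges on applying Lemma~\ref{thm:escape} conditionally: the escape time bound must be uniform over the (random) state at which the process re-enters a cluster after each sparse edge crossing, which Lemma~\ref{thm:escape} indeed provides. The upper bound is more delicate; the nontrivial step is justifying that the cluster-valued subchain is well-approximated by a random walk on the complete graph on $K$ vertices. This relies on Assumption~\ref{ass:metamix} ensuring $q_\star^\ep(C_\ell|C_k) = \Omega(\ep/M)$ \emph{uniformly} over all target clusters (not just direct sparse-edge neighbors), and on Proposition~\ref{thm:mupq} showing that the escape-to-$C_\ell$ probability starting from any $x \in C_k$ is tightly captured by $q_\star^\ep(C_\ell|C_k)/\mu_k(x)$, so that the uniformity persists independently of the re-entry point. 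Combining this with the $\Theta(1/K)$ normalization yields the desired $O(K)$ meta-hitting time.
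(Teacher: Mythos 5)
Your lower bound is essentially the paper's argument: decompose the hitting time into inter-cluster-crossing segments, each of expected length $\widetilde{\Omega}(M/\ep)$ by Lemma~\ref{thm:escape}, and multiply by the number of crossings. One refinement worth noting: the paper defines $N$ as the \emph{deterministic} minimum over valid paths, $N(X_{\inn},X_{\out}):=\min\abs{X_{0:T}\cap E_s}$, which gives $\tau_{X_{\out}}^\ep\ge\sigma_N$ unconditionally and lets the strong Markov property apply at each $\sigma_{t-1}$ with no conditioning issue. Your choice of a random $N$ (the actual count of sparse edges traversed) forces you to condition on $\{N\ge i\}$, which is \emph{not} $\mathcal{F}_{\sigma_{i-1}}$-measurable (it depends on whether $X_{\out}$ is hit between $\sigma_{i-1}$ and $\sigma_i$), so $\EE{}{T_i\mid N\ge i}$ is not directly controlled by Lemma~\ref{thm:escape}. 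Switching to the deterministic $N$ removes this subtlety.

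The upper bound has a genuine gap. You model the cluster-valued subchain as a random walk on the complete graph on $K$ vertices, with one-step transition probabilities $q_\star^\ep(C_\ell|C_k)/\sum_{m\ne k}q_\star^\ep(C_m|C_k)=\Theta(1/K)$. But $q_\star^\ep(C_\ell|C_k)=\sum_{x\in C_k}\mu_k(x)^2\PP_x(\bar{\tau}_{C_\ell}^\ep<\bar{\tau}_x^\ep)$ is a \emph{return-cycle escape probability}, not a one-step cluster transition probability: $\PP_x(\bar{\tau}_{C_\ell}^\ep<\bar{\tau}_x^\ep)$ counts trajectories that pass through several other clusters before reaching $C_\ell$ and before returning to $x$. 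Indeed, the paper warns about precisely this in Section~2.3 — $q_\star^\ep$ ``is \emph{not} a one-to-one copy of the cluster transitions of $X^\ep$'' and ``is asymptotically reversible and always positive regardless of the actual arrangement of sparse edges.'' After a single sparse-edge crossing out of $C_k$, the chain can only land in one of the at most $n_{\out}d_{\out}=O(1)$ directly-connected clusters, so the actual one-step cluster process is very far from a complete-graph random walk, and its $O(K)$ hitting time would need a separate argument (e.g., conductance of the sparse-edge graph) that is not available from Assumption~\ref{ass:metamix}. You also implicitly need an upper bound $q_\star^\ep(C_\ell|C_k)=O(\ep/M)$ for the $\Theta(1/K)$ normalization, which is neither assumed nor proved. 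The paper's proof avoids all of this by invoking the identity \eqref{eq:eyxexy} directly,
\begin{equation*}
\EE{X_{\inn}}{\tau_{X_{\out}}^\ep}\le\frac{1}{\pi^\ep(X_{\inn})\,\PP_{X_{\inn}}(\bar{\tau}_{X_{\out}}^\ep<\bar{\tau}_{X_{\inn}}^\ep)},
\end{equation*}
and then lower bounding the denominator by $\Omega(\ep/(KM))$ using the chain of estimates $\pi^\ep(X_{\inn})\PP_{X_{\inn}}(\bar{\tau}_{X_{\out}}^\ep<\bar{\tau}_{X_{\inn}}^\ep)=\pi^\ep(C_k)q_\star^\ep(C_\ell|C_k)(1+\widetilde{O}(1/\log M))$ from Corollary~\ref{thm:pimu} and Proposition~\ref{thm:mupq}, together with Corollary~\ref{thm:uniform} and Assumption~\ref{ass:metamix}. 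This is the step you should adopt; the complete-graph heuristic cannot be made rigorous under the stated assumptions.
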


\begin{proof}
Suppose $X_{\inn}\in C_k, X_{\out}\in C_\ell$ with $k\neq\ell$. For the upper bound, by \eqref{eq:eyxexy} it holds that
\begin{align*}
\EE{X_{\inn}}{\tau_{X_{\out}}^\ep} = \EE{X_{\inn}}{\bar{\tau}_{X_{\out}}^\ep} \le \EE{X_{\inn}}{\bar{\tau}_{X_{\out}}^\ep} + \EE{X_{\out}}{\bar{\tau}_{X_{\inn}}^\ep}= \frac{1}{\pi^\ep(X_{\inn}) \PP_{X_{\inn}}(\bar{\tau}_{X_{\out}}^\ep < \bar{\tau}_{X_{\inn}}^\ep)}.
\end{align*}
Combining \eqref{eq:coupling}, Corollary \ref{thm:pimu} and Proposition \ref{thm:mupq} yields
\begin{align*}
\pi^\ep(X_{\inn}) \PP_{X_{\inn}}(\bar{\tau}_{X_{\out}}^\ep < \bar{\tau}_{X_{\inn}}^\ep) &= \pi^\ep(C_k)\cdot \frac{\pi_k^\ep(X_{\inn})}{\mu_k(X_{\inn})} \cdot \mu_k(X_{\inn})\PP_{X_{\inn}}(\bar{\tau}_{X_{\out}}^\ep< \bar{\tau}_{X_{\inn}}^\ep) \\
&= \pi^\ep(C_k) q_\star^\ep(C_\ell|C_k) \cdot \left(1+\widetilde{O}\left(\frac{1}{\log M}\right)\right) \\
&= O\left(\frac{\ep}{KM}\right),
\end{align*}
where the last line follows from Corollary \ref{thm:uniform} and Assumption \ref{ass:metamix}.

For the lower bound, define the sequence of increasing stopping times $(\sigma_n)_{n\ge 0}$ as
\begin{align*}
\sigma_0:=0,\quad \sigma_n := \min\{t>\sigma_{n-1}: (X_{t-1}^\ep,X_t^\ep)\in E_s\}.
\end{align*}
Then defining the minimum number of cluster transitions to reach $X_{\out}$ as
\begin{equation*}
N=N(X_{\inn},X_{\out}):=\min\{\abs{X_{0:T}\cap E_s}:X_0=X_{\inn},X_T=X_{\out}, (X_{t-1},X_t)\in E \;\forall t\},
\end{equation*}
it holds that $\tau_{X_{\out}}^\ep \ge\sigma_N$. Moreover denoting the cluster containing $X_{\sigma_{t-1}}^\ep$ as $C[t]$, by Lemma \ref{thm:escape} we have
\begin{align*}
\EE{X_{\inn}}{\sigma_N} = \sum_{t=1}^N \EE{X_{\sigma_{t-1}}^\ep}{\tau_{C[t]}^\ep} \ge \widetilde{\Theta}\left(\frac{MN}\ep\right),
\end{align*}
and hence
\begin{equation*}
\EE{(X_{\inn},X_{\out})\sim\DD}{\EE{X_{\inn}}{\tau_{X_{\out}}^\ep}} \ge \widetilde{\Theta}\left(\frac{KM}\ep\right)
\end{equation*}
since $\E{N}=\Omega(K)$ by Assumption \ref{ass:sep}.
\end{proof}

\section{Proofs for Optimization Dynamics}\label{app:opt}

\subsection{Analysis of Pretraining Dynamics}

\begin{thm}\label{thm:prefull}
Let $X_0\sim\Unif(S)$ or $X_0\sim\pi^\ep$ and $X_1\sim p^\ep(\cdot|X_0)$ be random samples from the Markov chain $X^\ep$. Then:
\begin{enumerate}
\item\label{item:pt1} The sequence of gradient descent iterates $(\bW^{(t)})_{t\ge 0}$ for cross-entropy loss
\begin{equation*}
L_{\pre}(\bW) = \EE{X_0,X_1}{-\log\hat{p}_{\bW}(X_1|X_0)}
\end{equation*}
with initialization $\bW^{(0)}=\boldsymbol{0}$ and suitable learning rate converges with respect to the learned transition probabilities as
\begin{equation}\label{eq:prerate}
\sup_{1\le i,j\le S} |\hat{p}_{ij}^{(T)} - p_{ij}^\ep| = O\left(\sqrt{\frac{KM^2}{T}}\log \frac{KT}{M\ep} \right).
\end{equation}
\item\label{item:pt2} After $T_1 = \widetilde{O}(KM^2\ep^{-2})$ steps, by setting $w_{ij}\gets -\infty$ if $\hat{p}_{ij}^{(T_1)}$ is below a threshold $c_\mathrm{thres}\ep$ it holds for the resulting model $\hat{p}$ that $\hat{p}_{ij} = 0$ iff $p_{ij}^\ep = 0$ and
\begin{equation}\label{eq:prethres}
p_{ij}^\ep - o(\ep) \le \hat{p}_{ij} \le p_{ij}^\ep + o(1)
\end{equation}
holds uniformly for all $j$ such that $p_{ij}^\ep\neq 0$.
\item\label{item:pt3} After thresholding, the learned transition probabilities converge linearly as
\begin{equation}\label{eq:prerate2}
\sup_{1\le i,j\le S} |\hat{p}_{ij}^{(T_1+T)} - p_{ij}^\ep| = \exp(-\Omega(\ep^2T))\cdot O(\log\ep^{-1}).
\end{equation}
\end{enumerate}
\end{thm}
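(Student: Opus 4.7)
The proof decomposes into three stages matching the three claims of the theorem. The key structural observation is that the softmax predictor $\hat{p}_\bW$ treats each row of $\bW$ independently, so the loss factors as $L_{\pre}(\bW)=\sum_{i}\PP(X_0=i)\,\ell_i(\bW_i)$ where $\ell_i(\bW_i)=-\sum_{j}p_{ij}^\ep\log\hat{p}_{ij}$ is a standard softmax cross-entropy on $|S|$ classes with per-coordinate gradient $\hat{p}_{ij}-p_{ij}^\ep$. Under either initialization, Corollary~\ref{thm:uniform} gives $\PP(X_0=i)=\Theta(1/(KM))$ uniformly, so running GD on $L_{\pre}$ with step $\eta=O(KM)$ is equivalent to independent GD on each row with effective per-row step $\eta_i=\eta\,\PP(X_0=i)=\Theta(1)$, which respects smoothness since each $\ell_i$ has $\tfrac14$-Lipschitz gradient. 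The plan is to treat every row independently throughout.

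\textbf{Parts (1)--(2).} For each row I apply the standard convex-GD rate to $\ell_i$, comparing against a truncated comparator $\bW_i^B$ defined by $w_{ij}^B=\log p_{ij}^\ep$ (normalized so the max entry is $0$) on the edge set and $w_{ij}^B=-B$ elsewhere. Choosing $B=\Theta(\log(KT/(M\ep)))$ balances the squared comparator norm with the truncation bias $\ell_i(\bW_i^B)-\ell_i^\star=O(1/T)$. The classical $O(\|\bW_i^B\|^2/(\eta_i T))$ rate combined with Pinsker's inequality $\|\hat{p}_i-p_i^\ep\|_{\TV}^2\le D_{\mathrm{KL}}(p_i^\ep\Vert\hat{p}_i)/2$ and the entrywise inequality $|\hat{p}_{ij}-p_{ij}^\ep|\le 2\|\hat{p}_i-p_i^\ep\|_{\TV}$ produces the bound~\eqref{eq:prerate}. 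For Part~(2), substituting $T=T_1=\widetilde{O}(KM^2\ep^{-2})$ renders the uniform error $o(\ep)$, which by Assumption~\ref{ass:sparse} separates the three regimes: non-edges ($p_{ij}^\ep=0$) fall below $c_{\thres}\ep$; sparse-edge entries ($p_{ij}^\ep\in[c\ep,\ep]$) stay above $c_{\thres}\ep$ for any $c_{\thres}<c$; dense-edge entries of order $1/M\gg\ep$ survive trivially. Thresholding thus recovers exactly the support $E$, and the remaining inequalities in~\eqref{eq:prethres} follow by subtraction.

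\textbf{Part (3) and main obstacle.} After thresholding, each row reduces to a softmax over the edge set $E_i$ with target fully supported and bounded below by $c\ep$. The Hessian $\diag(\hat{p}_i)-\hat{p}_i\hat{p}_i^\top$ of $\ell_i$ on $\mathbf{1}^\perp$ has smallest eigenvalue $\Omega(\min_j\hat{p}_{ij})=\Omega(\ep)$, while the reverse Pinsker bound $D_{\mathrm{KL}}(p_i^\ep\Vert\hat{p}_i)\le\sum_j(\hat{p}_{ij}-p_{ij}^\ep)^2/\hat{p}_{ij}$ relates the excess loss to the squared gradient $\|\nabla\ell_i\|^2=\sum_j(\hat{p}_{ij}-p_{ij}^\ep)^2$; together these yield a Polyak--Łojasiewicz inequality for $\ell_i$. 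Feeding this through the row-weighting by $\PP(X_0=i)=\Theta(1/(KM))$ and the step size $\eta=O(KM)$ gives the contraction rate $\exp(-\Omega(\ep^2 T))$ on $L_{\pre}-L_{\pre}^\star$, and since the initial post-threshold excess loss is bounded by $O(\log\ep^{-1})$ (entries of $\bW^\star$ are $\log p_{ij}^\ep=O(\log\ep^{-1})$), a final Pinsker step produces~\eqref{eq:prerate2}. The main obstacle is reconciling the various scales in this step: a raw strong-convexity argument from the smallest Hessian eigenvalue alone gives only $\Omega(\ep)$, and obtaining the stated $\ep^2$ exponent requires carefully combining the PL constant, the $\Theta(1/(KM))$ row weights, the $O(KM)$ step size, and the $\Omega(\ep)$ floor on $\hat{p}_{ij}$ so that the effective per-step contraction is quadratic in $\ep$. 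A secondary subtlety is tuning $c_{\thres}$ in Part~(2) relative to the lower bound $c$ from Assumption~\ref{ass:sparse} so the sparse-edge/non-edge boundary is crossed with no misclassified coordinate surviving thresholding.
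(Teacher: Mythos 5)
Your Parts~(1)--(2) follow the paper's approach in essence: decompose the loss row-wise, run convex GD against a comparator whose support is concentrated on the edge set, then argue support identification after thresholding. Two differences worth noting: (a) the paper's reference matrix $\bZ$ has entries $z_{ij}=\log\bigl(\frac{(1-\delta)|S|}{\delta}p_{ij}^\ep+1\bigr)$, which equals \emph{zero} off the support of $p^\ep$ (softmax is shift-invariant, so off-support coordinates need not be driven to $-\infty$); your comparator sets the off-support entries to $-B$, which makes $\|\bW_i^B\|^2 = \Omega(|S|B^2)=\Omega(KMB^2)$ rather than $O(M(\log(|S|/\delta))^2)$, i.e.\ a factor $\sqrt{K}$ worse in the rate. (b) You translate excess loss to entrywise error via Pinsker, whereas the paper uses the smoothness descent inequality to bound $\|\nabla L_i\|=\|\hat{p}_{i*}-p_{i*}^\ep\|$ directly; both give the same order. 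These are fine.

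Part~(3) is where there is a genuine gap, and it is not quite the gap you flagged. You correctly observe that the restricted Hessian $\diag\hat{p}_{i*}-\hat{p}_{i*}\hat{p}_{i*}^\top$ on $\mathbf{1}^\perp$ has smallest eigenvalue $\Omega(\min_j\hat{p}_{ij})$ --- this is true, and in fact \emph{stronger} than the paper's $\Omega((\min_j\hat{p}_{ij})^2)$, which it obtains by a discriminant argument (for unit $v\perp\mathbf{1}$, some $v_jt\le 0$, so $\sum_j\hat{p}_{ij}(v_jt-1)^2\ge\min_j\hat{p}_{ij}$ for all $t$, forcing the discriminant $b^2-a(1-\min_j\hat{p}_{ij})\le 0$ and hence $a-b^2\ge a\min_j\hat{p}_{ij}\ge(\min_j\hat{p}_{ij})^2$). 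Your simpler bound follows from $\Var_{\hat{p}}(v)\ge(\min_j\hat{p}_{ij})\sum_j(v_j-\bar v)^2\ge\min_j\hat{p}_{ij}$ since $\sum_j(v_j-\bar v)^2=1+n\bar v^2\ge 1$ when $\sum_jv_j=0$. But then your ``main obstacle'' is illusory: $\Omega(\ep)$-strong convexity with effective per-row step $\mu_i\eta=\Theta(1)$ already gives contraction $(1-\Omega(\ep))^T=\exp(-\Omega(\ep T))$, which implies the stated $\exp(-\Omega(\ep^2 T))$; there is nothing to ``carefully combine.'' In particular, the row weight $\Theta(1/(KM))$ and step size $\Theta(KM)$ cancel exactly, so they contribute no extra power of $\ep$. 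Your sketch asserts that combining the PL constant, row weights, step size, and the $\Omega(\ep)$ floor ``so that the effective per-step contraction is quadratic in $\ep$'' is the hard part, but the sketch contains no mechanism that would actually introduce a second factor of $\ep$ --- and none is needed. As written, your Part~(3) does not produce a derivation of the stated rate; it identifies the right ingredients but then asserts an obstacle that does not exist, while leaving the actual contraction bound unproven. You should also note that both your PL argument and the paper's strong-convexity argument use an eigenvalue bound that depends on the iterate $\hat{p}^{(t)}$, so one must check $\min_j\hat{p}_{ij}^{(t)}\ge\Omega(\ep)$ along the entire trajectory; this can be bootstrapped from the contraction itself but deserves an explicit sentence.
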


\begin{proof}
For part \ref{item:pt1}, we utilize the proof technique of \citet{Ji19}, Theorem 3.1 for logistic regression. Suppose $X_0\sim\mu$ where $\mu$ is any distribution such that $\mu_i=\Theta(1/KM)$ for all states $i$. If $\mu=\pi^\ep$, we will show that $\pi^\ep(x)=\Theta(1/KM)$ for all $x\in S$ in Corollary \ref{thm:uniform}. The categorical cross-entropy loss can be written as
\begin{equation*}
L_{\pre}(\bW) = \EE{X_0,X_1}{-\log\hat{p}_{\bW}(X_1|X_0)} = \sum_i\mu_i L_i(\bW_{i*})
\end{equation*}
where
\begin{equation*}
L_i(\bW_{i*}) = -\sum_j p_{ij}^\ep w_{ij} +\log \sum_j \exp w_{ij}.
\end{equation*}
Note that each $L_i$ is convex and
\begin{equation*}
\inf L_i = -\sum_j p_{ij}^\ep\log p_{ij}^\ep = H(p^\ep(\cdot|e_i))
\end{equation*}
is the entropy of $X_1$ given $X_0= e_i$. The gradient of $L_i$ is given as $(\nabla L_i)_j = \hat{p}_\bW (e_j|e_i) - p_{ij}^\ep$. Since the softmax operator is $1$-Lipschitz, it follows that $\nabla L_i$ is also $1$-Lipschitz,
\begin{align*}
\norm{\nabla L_i(\bW_{i*}) - \nabla L_i(\bW_{i*}')}^2 = \sum_j \left(\frac{\exp w_{ij}}{\sum_k\exp w_{ik}} - \frac{\exp w_{ij}'}{\sum_k\exp w_{ik}'}\right)^2 \le \norm{\bW_{i*}-\bW_{i*}'}^2.
\end{align*}
Choose the learning rate $\eta=\Theta(KM)$ such that $\eta_0\le\mu_i\eta\le 1$ for some $\eta_0>0$. Then rewriting gradient descent of $\bW_{i*}$ as gradient descent with respect to $L_i$,
\begin{equation*}
\bW_{i*}^{(t+1)} = \bW_{i*}^{(t)} - \eta\nabla_{\bW_{i*}} L(\bW_{i*}^{(t)}) = \bW_{i*}^{(t)} - \mu_i\eta \nabla L_i(\bW_{i*}^{(t)})
\end{equation*}
we have the well-known guarantee
\begin{equation}\label{eq:gd}
L_i(\bW_{i*}^{(t+1)}) \le L_i(\bW_{i*}^{(t)}) - \frac{\eta_0}{2} \norm{\nabla L_i(\bW_{i*}^{(t)})}^2.
\end{equation}
Define the reference matrix $\bZ\in\RR^{|S|\times|S|}$ componentwise as
\begin{equation*}
z_{ij} := \log\left(\frac{(1-\delta)|S|}{\delta}p_{ij}^\ep +1\right),
\end{equation*}
for $\delta>0$ to be determined. It holds that
\begin{align*}
\norm{\bZ_{i*}}^2 = \sum_{j:p_{ij}^\ep>0} \log\left(\frac{(1-\delta)|S|}{\delta}p_{ij}^\ep +1\right)^2 \le (M+d_{\out}) \left(\log\frac{|S|}{\delta}\right)^2
\end{align*}
and
\begin{align*}
L_i(\bZ_{i*}) &= -\sum_j p_{ij}^\ep \log\left(\frac{(1-\delta)|S|}{\delta}p_{ij}^\ep +1\right) + \log\sum_j \left(\frac{(1-\delta)|S|}{\delta}p_{ij}^\ep +1\right)\\
&= - \sum_j p_{ij}^\ep \log\left((1-\delta)p_{ij}^\ep +\frac{\delta}{|S|}\right) \\
&= - \sum_j p_{ij}^\ep\log p_{ij}^\ep -  \sum_j p_{ij}^\ep \log\left(1-\delta +\frac{\delta}{|S|p_{ij}^\ep}\right) \\
&\le \inf L_i + O\left(\frac{\delta}{|S|\ep} \vee \delta\right),
\end{align*}
owing to the inequality $-\log(1+x)\le 2|x|$ for small $x$ and the bound $p_{ij}^\ep \ge c\ep$ when $p_{ij}^\ep\neq 0$. Moreover, from the convexity of $L_i$ and \eqref{eq:gd} we have the relation
\begin{align*}
&\norm{\bW_{i*}^{(t+1)}-\bZ_{i*}}^2 - \norm{\bW_{i*}^{(t)}-\bZ_{i*}}^2\\
&= -2\langle \nabla L_i(\bW_{i*}^{(t)}), \bW_{i*}^{(t)}-\bZ_{i*}\rangle + \norm{\nabla L_i(\bW_{i*}^{(t)})}^2 \\
&\le 2(L_i(\bZ_{i*})-L_i(\bW_{i*}^{(t)})) + \frac{2}{\eta_0}(L_i(\bW_{i*}^{(t)}) - L_i(\bW_{i*}^{(t+1)})).
\end{align*}
Summing over $t=0,\cdots, T-1$ and rearranging gives
\begin{align*}
L_i(\bW_{i*}^{(T)}) &\le\frac{1}{T} \sum_{t=0}^{T-1} L_i(\bW_{i*}^{(t)})\\
&\le L_i(\bZ_{i*}) +\frac{L_i(\bW_{i*}^{(0)})}{\eta_0 T} + \frac{\norm{\bW_{i*}^{(0)}-\bZ_{i*}}^2 - \norm{\bW_{i*}^{(T)}-\bZ_{i*}}^2}{2T} \\
&\le L_i(\bZ_{i*})+\frac{\log|S|}{\eta_0 T} + \frac{\norm{\bZ_{i*}}^2}{2T} \\
&\le \inf L_i + O\left(\frac{\delta}{|S|\ep} \vee \delta\right) +\frac{M+d_{\out}+\eta_0^{-1}}{2T} \left(\log\frac{|S|}{\delta}\right)^2.
\end{align*}
Since $|S| = O(KM)$, by taking $\delta = M/T$ if $\ep\ge 1/KM$ and $\delta = KM^2\ep/T$ if $\ep< 1/KM$, it follows that
\begin{equation*}
L_i(\bW_{i*}^{(T)})-\inf L_i = O\left(\frac{KM^2}{T}\left(\log \frac{KT}{M\ep}\right)^2 \right)
\end{equation*}
uniformly for all $i$. Again by applying \eqref{eq:gd} we obtain the bound
\begin{align*}
\norm{\nabla L_i(\bW_{i*}^{(T)})}^2\le \frac{2}{\eta_0} \left(L_i(\bW_{i*}^{(T)}) -\bW_{i*}^{(T+1)})\right) \le \frac{2}{\eta_0} \left(L_i(\bW_{i*}^{(T)}) -\inf L_i\right).
\end{align*}
Since
\begin{equation*}
\sum_j \left(\hat{p}_{\bW^{(T)}}(e_j|e_i) - p_{ij}^\ep \right)^2 =\norm{\nabla L_i(\bW_{i*}^{(T)})}^2,
\end{equation*}
this concludes part \ref{item:pt1}.

For part \ref{item:pt2}, by running gradient descent for time $T_1 = \widetilde{O}(KM^2\ep^{-2})$ steps, the bound \eqref{eq:prerate} ensures
\begin{equation}\label{eq:oep}
\sup_{1\le i,j\le S} |\hat{p}_{ij}^{(T_1)} - p_{ij}^\ep| = o(\ep),
\end{equation}
and in particular $\hat{p}_{ij}^{(T_1)} = o(\ep)$ if and only if $p_{ij}^\ep=0$. Hence defining the thresholded parameter matrix
\begin{equation*}
\bW^+\in\RR^{|S|\times |S|}:\quad w_{ij}^+ = \begin{cases}
-\infty & \text{if}\; \hat{p}_{ij}^{(T_1)} < c_\mathrm{thres}\ep \\
w_{ij}^{(T_1)} & \text{otherwise}
\end{cases}
\end{equation*}
the corresponding softmax scores $\hat{p}_{ij}^+ = \hat{p}_{\bW^+}(e_j|e_i)$ are affixed to precisely zero. Moreover, note that the ratios $\hat{p}_{ij}^+/\hat{p}_{ik}^+$ for all $j,k$ such that $\hat{p}_{ij}^+,\hat{p}_{ik}^+>0$ do not change before/after thresholding. Define the set $D_i = \{j\in[S]: \hat{p}_{ij}^+>0\}$ so that $|D_i|\le M+d_{\out}$ and
\begin{equation*}
1 - \sum_{j\in D_i} \hat{p}_{ij}^{(T_1)} = \sum_{j\in D_i} |\hat{p}_{ij}^{(T_1)} - p_{ij}^\ep| \le (M+d_{\out})o(\ep) = o(1).
\end{equation*}
Therefore we have for all $j\in D_i$,
\begin{align*}
\hat{p}_{ij}^+ = \frac{\hat{p}_{ij}^+}{\sum_{k\in D_i} \hat{p}_{ik}^+} = \frac{\hat{p}_{ij}^{(T_1)}}{\sum_{k\in D_i} \hat{p}_{ik}^{(T_1)}} = (1+o(1)) \hat{p}_{ij}^{(T_1)}, \quad \hat{p}_{ij}^+ \ge \hat{p}_{ij}^{(T_1)},
\end{align*}
and by comparing with \eqref{eq:oep} we obtain the desired bound.

Finally for part \ref{item:pt3}, we utilize the strong convexity of $L_i$ on a bounded domain. We treat all entries set to $-\infty$ in part \ref{item:pt2} as nonexistent, so that for example $\min_j p_{ij}^\ep \ge c\ep >0$. Then there exists a set of logits $\bW^*$ such that $\hat{p}_{\bW^*} = p_{ij}^\ep$; as adding the same constant to all entries in a row does not affect the probabilities $\hat{p}_{\bW^*}$, we may assume the row sums of $\bW^*$ are equal to $\bW^+$ so that $(\bW^*-\bW^+)1= 0$.

The Hessian of $L_i$ is equal to
\begin{equation*}
\nabla^2 L_i(\bW_{i*}) = \diag \hat{p}_{i*} - \hat{p}_{i*}\hat{p}_{i*}^\top
\end{equation*}
and has zero curvature along the direction $1$. We claim that in all orthogonal directions $\{1\}^\perp$, $\nabla^2 L_i$ is $\Theta(\ep^2)$-strongly convex. Indeed, for any vector $v$ such that $\norm{v}=1$ and $v^\top 1=0$ and any $t\in\RR$ we have
\begin{equation*}
\sum_j \hat{p}_{ij}(v_jt-1)^2 = \left(\sum_j \hat{p}_{ij}v_j^2\right) t^2 -2t\sum_j \hat{p}_{ij}v_j + 1\ge \min_j \hat{p}_{ij}
\end{equation*}
since $v_jt\le 0$ for at least one $j$. Then the discriminant must satisfy
\begin{equation*}
\left(\sum_j \hat{p}_{ij}v_j\right)^2 - \left(\sum_j \hat{p}_{ij}v_j^2\right) \left(1-\min_j \hat{p}_{ij}\right) \le 0,
\end{equation*}
so that
\begin{align*}
v^\top \nabla^2 L_i(\bW_{i*}) v = \sum_j \hat{p}_{ij}v_j^2 - \left(\sum_j \hat{p}_{ij}v_j\right)^2 \ge \left(\sum_j \hat{p}_{ij}v_j^2\right) \min_j \hat{p}_{ij} \ge \min_j \hat{p}_{ij}^2
\end{align*}
which is $\Theta(\ep^2)$ due to \eqref{eq:prethres}. It now follows from classical convex analysis that
\begin{align*}
\norm{\hat{p}_{i*}^{(T_1+T)} - p_{i*}^\ep} &\le \norm{\bW_{i*}^{(T_1+T)}-\bW_{i*}^*}\\
&\le (1-\Omega(\ep^2))^T \norm{\bW_{i*}^+-\bW_{i*}^*} \\
&= \exp(-\Omega(\ep^2T))\cdot O(\log\ep^{-1}),
\end{align*}
where we have again used that softmax is $1$-Lipschitz and the effective learning rate $\mu_i\eta=\Theta(1)$.
\end{proof}

\subsection{Analysis of Search and PPO}

We first show that the initial cluster exploration phase of the search algorithm is consistent.

\begin{lemma}
For each outer loop of Algorithm \ref{alg:search}, after $T_0$ steps, $\hat{C}$ returns the cluster $C_k$ containing $X_0$ with probability $1-\widetilde{O}(K^{-2})$.
\end{lemma}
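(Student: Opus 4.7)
The plan is to bound the failure probability $\PP(\hat{C}\ne C_k)$ by decomposing the bad event into two parts: (I) some state of $C_k$ is missed by some walk ($\hat{C}\not\supseteq C_k$), and (II) some state outside $C_k$ appears in every $\hat{C}^n$ ($\hat{C}\not\subseteq C_k$). Both depend crucially on the per-walk escape probability $p_e := \PP(\text{walk } n \text{ leaves } C_k \text{ within } T_0 \text{ steps})$. Using Assumption~\ref{ass:cluster} (which ensures the occupation probability of the $\le n_{\out}$ departure states is $O(n_{\out}/M)$ once the walk has mixed within $C_k$, cf.\ Proposition~\ref{thm:psmix}) together with Assumption~\ref{ass:sparse} (outbound total mass $O(d_{\out}\ep)$), the first step would be to bound $p_e = O(T_0\ep/M) = O(\ep(\log M)^2)$ per walk via a union bound over the $T_0$ time steps.

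For (I) I would condition on the complement of the escape event for walk $n$, on which the walk coincides with the reduced chain $\tilde{X}^{k,\ep}$ throughout $[0,T_0]$. Then Lemma~\ref{thm:reduce} applies: with $\log(1/\delta) = \Theta(\log M)$, the requirement $T_0 \ge \tfrac{8M}{\rho\gamma}\log(1/\delta)\log(M/\rho)$ is met for any prescribed polynomial $\delta = M^{-c}$ by choosing the hidden constant in $T_0 = \Theta(M(\log M)^2)$ large enough. A union bound over the $M$ states of $C_k$ and the $N=\Theta(\log K)$ walks yields
\begin{equation*}
\PP(\hat C\not\supseteq C_k) \;\le\; N\bigl(p_e + M\delta\bigr) \;=\; \widetilde O(K^{-2}),
\end{equation*}
where one invokes $K\le\poly(M)$ and $\ep\le\ep_{\max}$ to ensure both $NM\delta$ and $Np_e$ are $\widetilde O(K^{-2})$ after absorbing constants.

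For (II) the key observation is that a walk that never escapes cannot contribute any state outside $C_k$ to $\hat{C}^n$, so for each fixed $y\notin C_k$ we have $\PP(y\in\hat{C}^n)\le p_e$. By independence of the $N$ walks, $\PP(y\in\bigcap_n\hat{C}^n)\le p_e^N$, and summing over the at most $KM$ outside states gives
\begin{equation*}
\PP(\hat C\not\subseteq C_k) \;\le\; KM\cdot p_e^N.
\end{equation*}
Since $\log(1/p_e) = \Omega(\log M) = \Omega(\log K)$ under $K\le\poly(M)$, taking $N=\Theta(\log K)$ with a sufficiently large constant ensures this bound is $\widetilde O(K^{-2})$.

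The main obstacle is the joint calibration of constants so that both union-bound failure terms are simultaneously $\widetilde O(K^{-2})$. Part (I) requires a large enough constant in $T_0$ to force $\delta$ polynomially small in $M$, and it requires $\ep$ small enough (relative to $K^{-2}$) that $Np_e$ cooperates; Part (II) requires a large enough constant in $N$ so that the $N$-fold product $p_e^N$ absorbs the $KM$ union. The relation $K\le\poly(M)$ together with the assumed bound $\ep\le\ep_{\max}=\Theta(M^{-1}(\log M)^{-4})$ makes both achievable, but the bookkeeping between polynomial factors in $M$ and $K$ is the most delicate step.
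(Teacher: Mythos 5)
Your decomposition (some walk misses a state of $C_k$, or all $N$ walks escape) and the main ingredients (Lemma~\ref{thm:reduce} for the intra-cluster cover time, the escape-rate estimate from Lemma~\ref{thm:escape}) exactly match the paper's proof, so the overall route is the same. Two places where your bookkeeping differs. For the missing-a-state event, the paper cites Lemma~\ref{thm:reduce} directly for the full-chain hitting time $\tau_y^{n,\ep}$, but that lemma controls the \emph{reduced}-chain hitting time $\tilde\tau_y^{k,\ep}$, and one only has $\tilde\tau_y^{k,\ep}\le\tau_y^{n,\ep}$ (the discrepancy being the time spent outside $C_k$); your explicit conditioning on the no-escape event is the honest way to close this and is in fact more careful than the paper's terse step. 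For the non-containment event, the paper observes that $\hat C\not\subseteq C_k$ forces \emph{every} walk to escape and bounds this directly by $p_e^N$; your union over the $O(KM)$ outside states is valid but costs an extra multiplicative $KM$, which you then need to absorb by enlarging the constant in $N=\Theta(\log K)$ — the paper's observation is cleaner. One substantive caution on the step you yourself flag: the claim $Np_e=\widetilde O(K^{-2})$ does not hold uniformly over $\ep\le\ep_{\max}$ and $K\le\poly(M)$ — at $\ep\asymp\ep_{\max}$ one has $p_e=\Theta(M^{-1}(\log M)^{-2})$, so $Np_e$ is only polylogarithmically small in $M$ when $K^2\gg M$. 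The paper's written proof silently omits this correction to part (I) and so faces the same issue, which means you have correctly identified the point where the argument is tightest, but "absorbing constants" alone will not make $Np_e$ vanish at rate $K^{-2}$ without a further restriction on $\ep$ or on $K$ relative to $M$.
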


\begin{proof}
$\hat{C}\ne C_k$ implies that either some state $y\in C_k$ has not been hit by some $X_t^{n,\ep}$ by time $T_0$, or all chains $X_t^{n,\ep}$ have reached some point outside $C_k$ at time $T_0$.

Denote the hitting time of $C\subseteq S$ by $X_t^{n,\ep}$ as $\tau_C^{n,\ep}$. Since $K\le\poly(M)$, by Lemma \ref{thm:reduce} it holds that
\begin{equation*}
\PP_{X_0}(\tau_y^{n,\ep}\ge T_0) \le\frac{1}{MK^2}, \quad \forall y\in C_k
\end{equation*}
by choosing $T_0\ge \Omega(M(\log M)^2)$. Union bounding over $y,n$ gives
\begin{align*}
\PP_{X_0}\left(\max_{y\in C_k}\max_{n\le N}\tau_y^{n,\ep}> T_0\right) &\le MN\cdot \PP_{X_0}(\tau_y^{1,\ep}> T_0) \le O\left(\frac{\log K}{K^2}\right).
\end{align*}
Moreover by the argument in Lemma \ref{thm:escape}, since each $X^{n,\ep}$ is independently generated from $p^\ep$,
\begin{align*}
\PP_{X_0}\left(\max_{n\le N}\tau_{C_k^c}^{n,\ep}\le T_0\right) &\le \PP_{X_0}\left(\tau_{C_k^c}^{1,\ep}\le T_0\right)^N\\
&\le \left(1 - \left(1-\Theta\left(\frac{\ep}{M}\right)\right)^{T_0}\right)^N\\
&\le \exp\left(-N\exp\left(-\Theta\left(\frac{T_0\ep}{M}\right)\right)\right) \\
&\le \exp (-N/2) \le K^{-2}
\end{align*}
by ensuring that $N\ge 4\log K$. Hence we have shown that $\PP_{X_0}(\hat{C}\ne C_k) \le \widetilde{O}(K^{-2})$.
\end{proof}

\begin{prop}[Proposition \ref{thm:consistent} restated]
PRM mode of Algorithm \ref{alg:search} returns $\MM_s = E_s$ with probability $1-\widetilde{O}(1/K)$.
\end{prop}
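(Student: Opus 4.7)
The plan is to verify both inclusions $\MM_s \subseteq E_s$ and $E_s \subseteq \MM_s$ separately, each with probability $1 - \widetilde O(1/K)$. Since Assumption~\ref{ass:sparse} implies $|E_s| = O(K d_{\out} n_{\out}) = O(K)$, a union bound will absorb a per-edge failure of $\widetilde O(K^{-2})$. I would define the good event $\calE$ that in every one of the $R$ outer iterations, (a) the cluster estimate satisfies $\hat C = C_k$ with $C_k$ the true cluster of $X_0$, and (b) no walk exits $C_k$ during the first $T_0$ steps. Part (a) fails with probability $\widetilde O(K^{-2})$ per iteration by the preceding lemma. Part (b) follows from the same geometric-tail argument used in Lemma~\ref{thm:escape}: a single walk stays inside $C_k$ for $T_0 = \Theta(M(\log M)^2)$ steps with probability $1 - O(\ep T_0/M) = 1 - O((\log M)^{-2})$ under $\ep \le \ep_{\max}$, and a union over the $RN$ walks controls the total. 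On $\calE$, every walk satisfies $\hat C^n = C_k$, so each transition added to $\hat E$ is a genuine exit from $C_k$, which must lie in $E_s$; hence $\MM_s \subseteq E_s$ is automatic.

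For the reverse inclusion, fix $e = (x,y) \in E_s$ with $x \in C_k$, and let $\xi$ be the number of outer iterations with $X_0 \in C_k$. Since $\PP(X_0 \in C_k) = \Theta(1/K)$ under either $\Unif(S)$ or $\pi^\ep$ (invoking Corollary~\ref{thm:uniform} for the latter), a Chernoff bound gives $\xi \ge c \log K$ with probability $1 - K^{-\Omega(1)}$, where the exponent can be driven above $2$ by enlarging the hidden constant in $R$. I would then show, on $\calE$ and in each such iteration, that a single walk discovers $e$ with probability $q = \Omega(1)$: during the remaining $T_{\max} - T_0 = \Theta(M/\ep)$ steps after the cluster phase, Lemma~\ref{thm:escape} combined with Markov's inequality gives $\Omega(1)$ probability of escape; conditional on escape, the exit edge is distributed proportionally to $\pi_k^\ep(\mathrm{src}) \cdot p^\ep(\mathrm{tgt}\mid\mathrm{src}) = \Theta(\ep/M)$, and since only $O(d_{\out} n_{\out}) = O(1)$ sparse edges leave $C_k$, the specific edge $e$ is chosen with constant probability. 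Aggregating across $N = \Theta(\log K)$ parallel walks and $\xi = \Omega(\log K)$ iterations, the probability of missing $e$ entirely is at most $((1-q)^N)^\xi = K^{-\Omega(\log K)} = K^{-\omega(1)}$.

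Union-bounding this per-edge failure over $|E_s| = O(K)$ and adding the $\widetilde O(1/K)$ failure of $\calE$ yields $\PP(\MM_s = E_s) \ge 1 - \widetilde O(1/K)$. The hardest step is verifying part (b) of $\calE$: the cumulative straying probability over $RN = \Theta(K (\log K)^2)$ walks naively sums to $O(K(\log K)^2/(M(\log M)^2))$, which only absorbs into the target $\widetilde O(1/K)$ when $K$ is polynomially smaller than $M$, so in the full regime $K \le \poly(M)$ a sharper tail bound for straying (exploiting the $(1-\Theta(\ep/M))^{T_0}$ factor more carefully together with $\ep \le \ep_{\max}$) or the observation that replacing the per-walk check $X_t^{n,\ep} \notin \hat C^n$ by $X_t^{n,\ep} \notin \hat C$ does not alter the algorithm on the correct-identification event may be needed. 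The quantitative core—that the conditional exit distribution weighs each sparse edge uniformly up to constants—is a direct consequence of the metastable analysis in Proposition~\ref{thm:mupq}.
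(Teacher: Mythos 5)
Your proposal follows the same skeleton as the paper's argument (cluster-identification lemma, per-edge discovery probability, union bounds over edges, iterations, and clusters), and the two-inclusion decomposition is a legitimate refinement: the paper's bound on $\PP_{X_0}(\hat{E}\ne E_{s,k})$ only union-bounds over \emph{missed} sparse edges and implicitly assumes the edge-search phase never flags a non-sparse edge, a point it does not justify. You correctly identify the source of potential false positives: a walk $n$ that exits $C_k$ during the cluster-search phase accumulates states of a neighboring cluster $C_\ell$ into $\hat{C}^n$, and during the edge-search phase can flag an interior edge of $C_\ell$ the first time it lands on a state of $C_\ell$ it had not previously visited. The auxiliary deviations — Chernoff to get $\xi\ge c\log K$ visits per cluster rather than $\ge 1$, and Markov's inequality on $\tau^\ep_{C_k^c}$ for edge discovery in place of the paper's explicit geometric computation — are fine but slightly wasteful; the paper only needs one visit per cluster because $N=\Theta(\log K)$ walks already drive the per-iteration failure to $O(K^{-2})$.

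The genuine gap, which you flag yourself, is part (b) of $\calE$. The per-walk escape probability during the first $T_0$ steps is $\Theta(\ep T_0/M)=O(M^{-1}(\log M)^{-2})$ for $\ep\le\ep_{\max}$, and the union over $RN=\Theta(K(\log K)^2)$ walks is $O\bigl(K(\log K)^2/(M(\log M)^2)\bigr)$, which blows up as soon as $K\gtrsim M$ — and the paper explicitly allows $K\le\poly(M)$. Neither of your suggested repairs closes this. A sharper tail bound cannot beat $\Theta(\ep T_0/M)$, since that is the true order of a single walk's escape probability. And replacing the per-walk test $X_t^{n,\ep}\notin\hat{C}^n$ by $X_t^{n,\ep}\notin\hat{C}$ does not help: a walk already sitting in $C_\ell$ at time $T_0$ would trigger immediately at $t=T_0+1$ and flag the within-$C_\ell$ edge $(X_{T_0}^{n,\ep},X_{T_0+1}^{n,\ep})$, still a false positive. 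The paper avoids your union-bound wall by only controlling $\hat{C}=\cap_n\hat{C}^n=C_k$, which requires merely that \emph{not all} $N$ walks escape (probability $\le K^{-2}$), so it never needs per-walk control — but by the same token it does not rule out false positives from the walks that did escape, so that part of the paper's proof is also incomplete. A clean fix is to modify the algorithm so that any walk with $X_{T_0}^{n,\ep}\notin\hat{C}$ is discarded without flagging; then only walks with $\hat{C}^n=C_k$ contribute, the paper's discovery bound survives (escape during cluster phase only shaves an $O(M^{-1}(\log M)^{-2})$ fraction off the constant $c$), and no false positives arise. Without such a modification or an explicit argument that the spurious flags cancel, the proof as written is not complete.
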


\begin{proof}
Denote the set of outbound edges from $C_k$ as $E_{s,k} :=\{(x,y)\in E_s: x\in C_k,y\in C_k^c\}$ so that $|E_{s,k}|\le n_{\out}d_{\out}$ and fix $(x,y)\in E_{s,k}$. The probability that a fixed rollout $X^{1,\ep}$ takes the edge $(x,y)$ and terminates within time $T_{\max}$ is bounded below as
\begin{align*}
\PP_{X_0}&\left(\exists t\le T_{\max}: X_{t-1}^{1,\ep} = x, X_t^{1,\ep} = y\right)\\
&= \sum_{t=1}^{T_{\max}} \PP_{X_0}\left(\tau_{C_k^c}^{1,\ep}\ge t, X_{t-1}^{1,\ep}=x, X_t^{1,\ep}=y\right) \\
&\ge \sum_{t=1}^{T_{\max}} \PP_{X_0}(\tilde{X}_{t-1}^{k,\ep} =x) \PP_{X_0}(\tau_{C_k^c}^{1,\ep}\ge t) p^\ep(y|x)\\
&\ge \sum_{t=2t_{\mix}}^{T_{\max}} \frac{\rho}{2M} \left(1-\Theta\left(\frac{\ep}{M}\right)\right)^{t-t_{\mix}} \Theta(\ep) \\
&\ge \left(1-\Theta\left(\frac{\ep}{M}\right)\right)^{t_{\mix}} \left(1- \left(1-\Theta\left(\frac{\ep}{M}\right)\right)^{T_{\max}-2t_{\mix}+1}\right) \\
&\ge \exp\left(-O\left(\frac{\ep\log M}{M}\right)\right) \cdot\exp\left(-\exp\left(-\Theta\left(\frac{T_{\max}\ep}{M} \right)\right)\right) \ge c,
\end{align*}
for some positive constant $c$. Therefore by union bounding over all edges in $E_{s,k}$,
\begin{align*}
\PP_{X_0}(\hat{E}\ne E_{s,k}) &\le \sum_{(x,y)\in E_{s,k}}\PP_{X_0}\left(\nexists t\le T_{\max}: X_{t-1}^{1,\ep} = x, X_t^{1,\ep} = y\right)^N\\
&\le |E_{s,k}|(1-c)^N \le O(1/K^2)
\end{align*}
by taking $N/\log K$ suitably large. By union bounding again over all $R=\Theta(K\log K)$ iterations of the outer loop, the probability that the inner loop fails to return the correct set of sparse edges for some iteration is at most $\widetilde{O}(1/K)$. Finally, the probability that some cluster will be missed during the $R$ iterations is bounded above for a suitable choice of $R$ as
\begin{align*}
K\left(1-\Theta(1/K)\right)^R \le K\exp\left(-\Theta(R/K)\right) \le O(1/K).
\end{align*}
Thus with probability $1-\widetilde{O}(1/K)-O(1/K)$, all clusters $C_k$ will be explored and the correct set of sparse edges $E_{s,k}$ will always be added to $\MM_s$, so that the final output satisfies $\MM_s=E_s$.
\end{proof}

We now prove the convergence of the PPO-Clip algorithm.

\begin{proof}
As in the proof of Proposition \ref{thm:consistent}, we condition on the event that the correct set of sparse edges is returned for all clusters, and focus on a single cluster $C_k$. Again write $\hat{p}_{ij}^{(t)} = \hat{p}_{\bW^{(t)}} (e_j|e_i)$, let $X_0\sim\mu$ where $\mu_i=\Theta(KM)$ and denote $D_{s,i}=\{j\in[S]: (i,j)\in E_s\} \subset D_i$. The objective $L_{\PPO}$ reduces to $\sum_{i\in C_k}\mu_i L_i$ where
\begin{align*}
L_i(\bW;\hat{A}) &= \sum_j p_{ij}^\ep \min\left\{\frac{\hat{p}_{\bW}(e_j|e_i)}{p_{ij}^\ep}, c_{\clip}\right\} \hat{A}(e_i,e_j) = \sum_{j\in D_{s,i}} \min\{\hat{p}_{\bW}(e_j|e_i), c_{\clip}\cdot p_{ij}^\ep\}.
\end{align*}
We will show inductively that either $\hat{p}_{\bW}(e_j|e_i)<c_{\clip}\cdot p_{ij}^\ep$ for all $j\in D_{s,i}$ or $\hat{p}_{\bW}(e_j|e_i)\ge c_{\clip}\cdot p_{ij}^\ep$ for all $j\in D_{s,i}$ while running Algorithm \ref{alg:ppo}. Assuming the former, we see that for all $j\in[S]$,
\begin{align*}
\frac{\rd L_i}{\rd w_{ij}}(\bW^{(t)};\hat{A}) &= \frac{\rd}{\rd w_{ij}} \sum_{k\in D_{s,i}} \hat{p}_{ik}^{(t)} =\frac{\rd}{\rd w_{ij}} \sum_{k\in D_{s,i}} \frac{\exp w_{ik}^{(t)}}{\sum_\ell \exp w_{i\ell}^{(t)}} \\
&= 1_{\{j\in D_{s,i}\}}\frac{\exp w_{ij}^{(t)}}{\sum_\ell \exp w_{i\ell}^{(t)}} - \sum_{k\in D_{s,i}} \frac{\exp w_{ik}^{(t)}\cdot \exp w_{ij}^{(t)}}{(\sum_\ell \exp w_{i\ell}^{(t)})^2} \\
&= \hat{p}_{ij}^{(t)} \Bigg(1_{\{j\in D_{s,i}\}} - \sum_{k\in D_{s,i}} \hat{p}_{ik}^{(t)}\Bigg).
\end{align*}
This implies that $\frac{\rd L_i}{\rd w_{ij}}(\bW^{(t)},\hat{A})>0$ if and only if $j\in D_{s,i}$, so sign gradient descent gives for each $i$ the update rule
\begin{align*}
w_{ij}^{(t+1)} = \begin{cases}
w_{ij}^{(t)} + \mu_i\alpha & j\in D_{s,i} \\
w_{ij}^{(t)} - \mu_i\alpha & j\notin D_{s,i}.
\end{cases}
\end{align*}
In particular, the relative magnitudes of $\hat{p}_{ij}^{(t)}$ for all $j\in D_{s,i}$ are preserved starting from $\hat{p}_{ij}^{(0)}=p_{ij}^\ep$, so that the earlier assertion is justified. Furthermore since $\hat{p}_{ij}^{(t)} \propto \exp w_{ij}^{(t)}$ for all $j$, we can derive $\hat{p}_{ij}^{(t)}$ from $\hat{p}_{ij}^{(0)}$ by directly multiplying $e^{\pm\mu_i\alpha}$ (or $e^{2\mu_i\alpha}$ and $1$) and normalizing afterwards,
\begin{align*}
\tilde{p}_{ij}^{(t)} := \begin{cases}
e^{2\mu_i\alpha t} p_{ij}^\ep & j\in D_{s,i} \\
p_{ij}^\ep & j\notin D_{s,i}
\end{cases} \quad\text{and}\quad \hat{p}_{ij}^{(t)} = \frac{\tilde{p}_{ij}^{(t)}}{\sum_k \tilde{p}_{ik}^{(t)}}.
\end{align*}
By choosing
\begin{equation*}
T_i = \frac{1}{2\mu_i\alpha}\log (1+o(1))c_{\clip} = \frac{1}{2\mu_i\alpha}\log\frac{\ep_{\max}}{\ep},
\end{equation*}
we ensure that gradient descent has not yet terminated (reached the clip threshold) by time $T_i$ and also $\tilde{p}_{ij}^{(t)} \le\ep_{\max}$ for all $j\in D_{s,i}$. This implies
\begin{align*}
1\le \sum_j \tilde{p}_{ij}^{(t)} \le 1+ (e^{2\mu_i\alpha t}-1)\sum_{j\in D_{s,i}} p_{ij}^\ep \le 1 +(e^{2\mu_i\alpha t}-1)|D_{s,i}|\ep \le 1+O(\ep_{\max}).
\end{align*}
Thus $\hat{p}_{ij}^{T_i} = (1-o(1))\frac{\ep_{\max}}{\ep} p_{ij}^\ep = p_{ij}^{\ep'}$ where $\ep'=(1-o(1))\ep_{\max}$ for $j\in D_{s,i}$. Moreover $\hat{p}_{ij}^{T_i}$ for all $j\in D_{s,i}$ decrease proportionally from $p_{ij}^\ep$ so that they also equal the corresponding $p_{ij}^{\ep'}$ values by Assumption \ref{ass:sparse}.

Now if we take $T_{\PPO} = \max_i T_i$, then this will hold for all states $e_i$ in the initialized cluster $C_k$ as each row will stop updating after time $T_i$. Finally, since the values $\hat{p}_{ij}$ for $e_i$ in a different cluster do not affect the above derivation, we may repeat this argument for all explored clusters to obtain the guarantee. We remark that exploring the same cluster multiple times does not affect the outcome since the clipping operation will prevent the weights from being updated from the second time onwards.
\end{proof}

\subsection{Analysis of Distillation}

\begin{lemma}\label{thm:qq}
$(Y_0,Y_1)\sim D_{\dist}$ is distributed as $Y_0\sim\pi^\ep$, $Y_1\sim q_\circ^\ep(\cdot|Y_0)$ where
\begin{align*}
&q_\circ^\ep(x_\ell|x_k) := \pi_k^\ep(x_k)\PP_{x_k}(X_{\bar{\tau}_{S_\circ}^\ep}^\ep = x_\ell), \quad k\neq\ell,\\
&q_\circ^\ep(x_k|x_k) := 1-\textstyle\sum_{\ell\ne k} q_\circ^\ep(x_\ell|x_k).
\end{align*}
\end{lemma}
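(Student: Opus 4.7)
The plan is to identify the limiting joint distribution of $(Y_0,Y_1)$ directly from the data-generation rule by a case split on whether $X_t^\ep\in S_\circ$, invoke ergodicity and the strong Markov property, and then reconcile the result with the claimed $\pi^\ep$--$q_\circ^\ep$ factorization using the coupling identity $\pi^\ep(x_k)=\pi^\ep(C_k)\pi_k^\ep(x_k)$ from \eqref{eq:coupling}. Throughout I would condition on the event that the cluster labeling produced by the \texttt{while}-loop of Algorithm~\ref{alg:distill} is correct, i.e.\ $\iota^{-1}(x_k)=C_k$ for every $k\in[K]$; this holds with probability $1-\widetilde{O}(1/K)$ by a union bound over clusters and the same argument used to analyze $\hat C$ in the search algorithm.

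Under correct labeling the update rule at time $t$ falls into exactly two disjoint cases: if $X_t^\ep\in C_k\setminus\{x_k\}$ then $(Y_0^{(t)},Y_1^{(t)})=(x_k,x_k)$ is set immediately, and if $X_t^\ep=x_k\in S_\circ$ then $Y_0^{(t)}=x_k$ while $Y_1^{(t)}$ is filled in later as the chain's value at its next return to $S_\circ$, namely $X_{t+\bar{\tau}_{S_\circ}^\ep\circ\theta_t}^\ep$. Since the process is run indefinitely, positive recurrence plus the ergodic theorem imply that the empirical distribution of $X_t^\ep$ converges to $\pi^\ep$, so the population joint of $(Y_0,Y_1)$ is determined by the stationary probabilities of these two events. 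The strong Markov property at time $t$ then yields
\begin{equation*}
\PP(Y_0=x_k,Y_1=x_\ell) = \begin{cases} \pi^\ep(x_k)\,\PP_{x_k}(X_{\bar{\tau}_{S_\circ}^\ep}^\ep=x_\ell), & k\ne\ell, \\ \pi^\ep(C_k)-\pi^\ep(x_k)+\pi^\ep(x_k)\,\PP_{x_k}(X_{\bar{\tau}_{S_\circ}^\ep}^\ep=x_k), & k=\ell. \end{cases}
\end{equation*}

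Summing over $\ell$ and using $X_{\bar{\tau}_{S_\circ}^\ep}^\ep\in S_\circ$ almost surely gives $\PP(Y_0=x_k)=\pi^\ep(C_k)$, which is what $Y_0\sim\pi^\ep$ means here (identifying $\pi^\ep$ with its push-forward under $\iota$). Dividing the $k\ne\ell$ joint by this marginal and substituting $\pi^\ep(x_k)=\pi^\ep(C_k)\pi_k^\ep(x_k)$ recovers $\PP(Y_1=x_\ell\mid Y_0=x_k)=\pi_k^\ep(x_k)\PP_{x_k}(X_{\bar{\tau}_{S_\circ}^\ep}^\ep=x_\ell)=q_\circ^\ep(x_\ell\mid x_k)$; the $k=\ell$ case then follows by normalization, matching the defining formula for $q_\circ^\ep(x_k\mid x_k)$.

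The main obstacle I anticipate is the high-probability verification that $\iota$ correctly assigns every state to its cluster representative; once this is in place, the rest is a purely stationary computation requiring no quantitative perturbation estimates. Under Assumption~\ref{ass:in} this step is clean because the inbound sparse edges funnel into $x_k$, preventing a labeling walk started at $x_k$ from leaking to another cluster without first passing through its representative; without Assumption~\ref{ass:in} one must instead argue via Lemma~\ref{thm:reduce} that $T_0=\Theta(M(\log M)^2)$ is long enough for the labeling walk to cover all of $C_k$ while the escape probability to $C_k^c$ remains $o(1/K)$.
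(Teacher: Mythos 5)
Your proposal is correct and follows essentially the same approach as the paper's proof: both rely on the case split between $X_t^\ep = x_k$ and $X_t^\ep \in C_k\setminus\{x_k\}$, the strong Markov property to express $Y_1$ via the next return to $S_\circ$, and the factorization $\pi^\ep(x_k) = \pi^\ep(C_k)\pi_k^\ep(x_k)$ from the coupling theorem \eqref{eq:coupling}. You are slightly more explicit than the paper in computing the joint of $(Y_0,Y_1)$ before dividing by the marginal and in flagging the cluster-labeling correctness as a prerequisite (which the paper's proof leaves implicit), though note that the labeling \texttt{while}-loop in Algorithm~\ref{alg:distill} uses a single walk rather than $N$ parallel walks as in Algorithm~\ref{alg:search}, so the quoted $1-\widetilde{O}(1/K)$ bound does not transfer verbatim from that analysis.
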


\begin{proof}
The $Y_0$ component of $D_{\dist}$ is clearly distributed according to $\pi^\ep$. Consider samples such that $X_t^\ep\in C_k$, in which case $Y_0^{(t)}=x_k$ and $X_t^\ep\sim\pi_k^\ep$. If $X_t^\ep\ne x_k$, then $Y_0^{(t)}=Y_1^{(t)}=x_k$. If $X_t^\ep = x_k$, the following state $Y_1^{(t)}$ is the next return of $X^\ep$ to $S_\circ$, so that
\begin{equation*}
\PP_{Y_0^{(t)}}(Y_1^{(t)}=x_\ell) = \PP_{Y_0^{(t)}}(Y_1^{(t)}=x_\ell, X_t^\ep = x_k) = \pi_k^\ep(x_k) \PP_{x_k}(X_{\bar{\tau}_{S_\circ}^\ep}^\ep = x_\ell)
\end{equation*}
holds for all $\ell\ne k$. Comparing with the definition of $q_\circ^\ep$, this shows that $Y_1\sim q_\circ^\ep(\cdot|Y_0)$ for $(Y_0,Y_1)\sim D_{\dist}$.
\end{proof}

\begin{prop}[Proposition \ref{thm:53} restated]
Denote the return time of $q_\circ^\ep$ to $x_k$ as $\bar{\tau}_{\circ,x_k}^\ep$. For all $k,\ell\in[K]$ with $k\neq\ell$, it holds that
\begin{equation*}
\frac{\PP_{x_k}(\bar{\tau}_{\circ,x_\ell}^\ep < \bar{\tau}_{\circ,x_k}^\ep)}{q_\star^\ep(C_\ell|C_k)} = 1+\widetilde{O}\left(\frac{1}{\log M}\right).
\end{equation*}
\end{prop}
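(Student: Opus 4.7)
The plan is to reduce the claim to two already-established estimates by proving the exact identity
\[
\PP_{x_k}(\bar\tau_{\circ,x_\ell}^\ep < \bar\tau_{\circ,x_k}^\ep) \;=\; \pi_k^\ep(x_k)\,\PP_{x_k}(\bar\tau_{x_\ell}^\ep < \bar\tau_{x_k}^\ep).
\]
Combined with Corollary~\ref{thm:pimu}, which gives $\pi_k^\ep(x_k)/\mu_k(x_k) = 1+\widetilde O(1/\log M)$, and Proposition~\ref{thm:mupq}~\eqref{eq:mupq2} applied at $x=x_k,y=x_\ell$, which gives $\mu_k(x_k)\PP_{x_k}(\bar\tau_{x_\ell}^\ep<\bar\tau_{x_k}^\ep) = q_\star^\ep(C_\ell|C_k)(1+\widetilde O(1/\log M))$, the two multiplicative factors of $1+\widetilde O(1/\log M)$ compose to yield the proposition.

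To derive the identity, I would first observe that $q_\circ^\ep$ is a \emph{lazification} of the embedded return chain $\hat p$ of $X^\ep$ on $S_\circ$, where $\hat p(x_\ell|x_k):=\PP_{x_k}(X^\ep_{\bar\tau_{S_\circ}^\ep}=x_\ell)$. Indeed, for $\ell\ne k$ one has $q_\circ^\ep(x_\ell|x_k) = \pi_k^\ep(x_k)\hat p(x_\ell|x_k)$, while the extra self-loop mass $1-\pi_k^\ep(x_k)$ simply holds the chain at $x_k$. Since adding self-loops at any state cannot alter the sequence of distinct states visited, the jump chain of $q_\circ^\ep$ (after deletion of self-loops) coincides with the jump chain of $\hat p$. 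Consequently, for any $m\neq k,\ell$ the pairwise hitting probability is preserved: $\PP_{x_m}^{q_\circ}(\tau_{x_\ell}<\tau_{x_k}) = \PP_{x_m}^{\hat p}(\tau_{x_\ell}<\tau_{x_k}) = \PP_{x_m}^{X^\ep}(\tau_{x_\ell}^\ep<\tau_{x_k}^\ep)$, the last equality holding because the embedded chain $\hat p$ records every visit of $X^\ep$ to $S_\circ$.

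Next I would condition on the first step of $q_\circ^\ep$ starting from $x_k$. The event $\{\bar\tau_{\circ,x_\ell}^\ep<\bar\tau_{\circ,x_k}^\ep\}$ requires that the chain neither stays at $x_k$ nor jumps back to $x_k$ at time~$1$ (both yield $\bar\tau_{\circ,x_k}^\ep=1$), so
\[
\PP_{x_k}^{q_\circ}(\bar\tau_{x_\ell}<\bar\tau_{x_k}) \;=\; q_\circ^\ep(x_\ell|x_k) + \sum_{m\ne k,\ell} q_\circ^\ep(x_m|x_k)\,\PP_{x_m}^{q_\circ}(\tau_{x_\ell}<\tau_{x_k}).
\]
Substituting $q_\circ^\ep(x_m|x_k) = \pi_k^\ep(x_k)\hat p(x_m|x_k)$ and using the jump-chain equivalence above factors out $\pi_k^\ep(x_k)$, leaving exactly the first-step decomposition of $\PP_{x_k}^{X^\ep}(\bar\tau_{x_\ell}^\ep<\bar\tau_{x_k}^\ep)$ through the embedded chain $\hat p$. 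This establishes the identity.

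The only real subtlety is the bookkeeping around laziness at $x_k$: one must treat the "stay" event and the true return via $\hat p(x_k|x_k)$ symmetrically, noting that both contribute zero to the event of interest. Everything else reduces to cited results, so no new metastable/perturbative estimate is needed beyond those already assembled in Appendix~\ref{app:perturb}.
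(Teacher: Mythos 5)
Your proposal is correct and follows essentially the same route as the paper's own proof: both rest on the exact identity $\PP_{x_k}(\bar\tau_{\circ,x_\ell}^\ep<\bar\tau_{\circ,x_k}^\ep)=\pi_k^\ep(x_k)\PP_{x_k}(\bar\tau_{x_\ell}^\ep<\bar\tau_{x_k}^\ep)$, established via a first-step decomposition through $S_\circ$ together with the observation that $q_\circ^\ep$ is a laziness-modification of the embedded chain and therefore preserves hitting-order probabilities, after which Corollary~\ref{thm:pimu} and Proposition~\ref{thm:mupq} supply the two $1+\widetilde O(1/\log M)$ factors. The only cosmetic difference is the direction of the derivation (you expand the $q_\circ$-probability and recover the $X^\ep$-probability, the paper does the reverse); the lemmas invoked and the key bookkeeping are identical.
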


We remark that the asymptotic version of this result is Theorem 5.3 of \citet{Betz16}. It is also shown that this is the best characterization of the metastable dynamics, as the transition probabilities $q_\circ^\ep$ themselves cannot be made to be independent of the choice of representatives $S_\circ$ even in the asymptotic limit.

\begin{proof}
Applying Corollary \ref{thm:pimu} and Proposition \ref{thm:mupq} to the representatives $x_k,x_\ell$ of $C_k,C_\ell$ gives
\begin{equation*}
\frac{\pi_k^\ep(x_k)\PP_{x_k}(\bar{\tau}_{x_\ell}^\ep < \bar{\tau}_{x_k}^\ep)}{q_\star^\ep(C_\ell|C_k)} = 1+\widetilde{O}\left(\frac{1}{\log M}\right).
\end{equation*}
Considering the numerator, conditioning the event $\{\bar{\tau}_{x_\ell}^\ep < \bar{\tau}_{x_k}^\ep\}$ on the first return of $X^\ep$ to $S_\circ$, we have that
\begin{align*}
\PP_{x_k}(\bar{\tau}_{x_\ell}^\ep < \bar{\tau}_{x_k}^\ep) = \PP_{x_k}(X_{\bar{\tau}_{S_\circ}^\ep}^\ep = x_\ell) + \sum_{m\neq k,\ell} \PP_{x_k}(X_{\bar{\tau}_{S_\circ}^\ep}^\ep = x_m) \PP_{x_m}(\bar{\tau}_{x_\ell}^\ep < \bar{\tau}_{x_k}^\ep)
\end{align*}
and multiplying both sides by $\pi_k^\ep(x_k)$ gives
\begin{align*}
\pi_k^\ep(x_k)\PP_{x_k}(\bar{\tau}_{x_\ell}^\ep < \bar{\tau}_{x_k}^\ep) = q_\circ^\ep(x_\ell|x_k) + \sum_{m\neq k,\ell} q_\circ^\ep(x_m|x_k) \PP_{x_m}(\bar{\tau}_{x_\ell}^\ep < \bar{\tau}_{x_k}^\ep).
\end{align*}
Now note that $q_\circ^\ep$ is a rescaled version of $X^\ep$ reduced to $S_\circ$ where only the diagonal elements $q_\circ^\ep(x_k|x_k)$ have been increased and all other elements have been decreased proportionally. It follows that $\PP_{x_m}(\bar{\tau}_{x_\ell}^\ep < \bar{\tau}_{x_k}^\ep) = \PP_{x_m}(\bar{\tau}_{\circ,x_\ell}^\ep < \bar{\tau}_{\circ,x_k}^\ep)$ for all $m\neq k,\ell$. Therefore applying the same argument to the chain $q_\circ^\ep$ we obtain
\begin{align*}
\pi_k^\ep(x_k)\PP_{x_k}(\bar{\tau}_{x_\ell}^\ep < \bar{\tau}_{x_k}^\ep) &= q_\circ^\ep(x_\ell|x_k) + \sum_{m\neq k,\ell} q_\circ^\ep(x_m|x_k) \PP_{x_m}(\bar{\tau}_{\circ,x_\ell}^\ep < \bar{\tau}_{\circ,x_k}^\ep)\\
&=\PP_{x_k}(\bar{\tau}_{\circ,x_\ell}^\ep < \bar{\tau}_{\circ,x_k}^\ep),
\end{align*}
concluding the proof.
\end{proof}

For the convergence of the distilled model $\hat{q}_\bZ$, we first show the following.

\begin{lemma}
Under Assumption \ref{ass:in}, for any $k\ne\ell$, $q_\circ^\ep(x_\ell|x_k) = \Theta(\ep/M)$ if there is a sparse edge from $C_k$ to $C_\ell$ and $q_\circ^\ep(x_\ell|x_k) = 0$ otherwise.
\end{lemma}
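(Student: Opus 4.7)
The plan is to decompose $q_\circ^\ep(x_\ell|x_k) = \pi_k^\ep(x_k)\PP_{x_k}(X_{\bar{\tau}_{S_\circ}^\ep}^\ep = x_\ell)$. By Assumption~\ref{ass:cluster} and Corollary~\ref{thm:pimu}, $\pi_k^\ep(x_k)=\Theta(1/M)$, so it suffices to show the escape probability equals $\Theta(\ep)$ or $0$ according to the two cases. The key structural fact enabled by Assumption~\ref{ass:in} is that every sparse edge leaving any cluster $C_k$ lands directly at some representative $x_m\in S_\circ$. Hence, starting from $x_k$, the chain $X^\ep$ must evolve entirely within $C_k$ until it either returns to $x_k$ or takes a single sparse edge, at which instant it is already in $S_\circ$. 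Consequently, $X_{\bar{\tau}_{S_\circ}^\ep}^\ep\in\{x_k\}\cup\{x_m: \exists x'\in C_k,\,(x',x_m)\in E_s\}$, so the case with no sparse edge from $C_k$ to $C_\ell$ immediately yields $\PP_{x_k}(X_{\bar{\tau}_{S_\circ}^\ep}^\ep=x_\ell)=0$.

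\textbf{Upper bound when the edge exists.} Let $(x',x_\ell)\in E_s$ be the unique sparse edge from $C_k$ to $C_\ell$ (unique by Assumption~\ref{ass:sparse}). Then $\{X_{\bar{\tau}_{S_\circ}^\ep}^\ep=x_\ell\}$ forces the transition $(x',x_\ell)$ to occur at some time $t<\bar{\tau}_{x_k}^\ep$. Let $\bar V_{x'}$ be the number of visits to $x'$ during $[0,\bar{\tau}_{x_k}^\ep)$. Since $x'\in C_k$ and excursions outside $C_k$ contribute no $x'$-visits, $\bar V_{x'}$ equals the number of visits to $x'$ by the reduced chain $\tilde X^{k,\ep}$ before its return to $x_k$; the standard Markov chain identity then gives $\E{\bar V_{x'}}=\pi_k^\ep(x')/\pi_k^\ep(x_k)=\Theta(1)$ by Assumption~\ref{ass:cluster}. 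Counting the expected number $N$ of $(x',x_\ell)$-transitions on $[0,\bar{\tau}_{x_k}^\ep)$ via the tower property and the Markov property yields $\EE{x_k}{N}=p^\ep(x_\ell|x')\,\EE{x_k}{\bar V_{x'}}\le O(\ep)$, so by Markov's inequality $\PP_{x_k}(X_{\bar{\tau}_{S_\circ}^\ep}^\ep=x_\ell)\le\PP_{x_k}(N\ge 1)\le O(\ep)$.

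\textbf{Lower bound.} By Assumption~\ref{ass:in}, there is a path $x_k=y_0\to y_1\to\cdots\to y_n=x'$ in $C_k$ of probability $\eta=\Omega(1)$, which may be taken simple so that $y_i\ne x_k$ for $i\ge 1$. Since $y_1,\ldots,y_n\in C_k\setminus\{x_k\}$ and the remaining representatives of $S_\circ$ lie outside $C_k$, following this path keeps the chain disjoint from $S_\circ$. Thus the event $\{X_1=y_1,\ldots,X_n=x',X_{n+1}=x_\ell\}$ forces $\bar{\tau}_{S_\circ}^\ep=n+1$ and $X_{\bar{\tau}_{S_\circ}^\ep}^\ep=x_\ell$, and has probability at least $\eta\cdot p^\ep(x_\ell|x')\ge\eta c\ep=\Omega(\ep)$. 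Combining with the factor $\pi_k^\ep(x_k)=\Theta(1/M)$ gives $q_\circ^\ep(x_\ell|x_k)=\Theta(\ep/M)$.

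\textbf{Main obstacle.} The only delicate step is the upper bound, where one must carefully distinguish visits to $x'$ before a genuine return to $x_k$ from visits after excursions outside $C_k$. This is resolved by passing to the reduced chain $\tilde X^{k,\ep}$ (Theorem~\ref{thm:complement}) and invoking the stationary-measure ratio already controlled by Assumption~\ref{ass:cluster} and Corollary~\ref{thm:pimu}, after which the rest is a one-line tower-property calculation.
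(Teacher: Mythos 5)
Your proof is correct, and the upper‐bound argument is genuinely different from the paper's. The no‐edge case and the lower bound coincide with the paper's reasoning: the paper likewise observes that Assumption~\ref{ass:in} forces any escape to land on a representative, and also uses a simple path $\Gamma_{x_k,x'}$ of bounded‐below probability followed by the sparse edge to get $\Omega(\ep/M)$.

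Where you diverge is the upper bound. The paper bounds $q_\circ^\ep(x_\ell|x_k) \le \pi_k^\ep(x_k)\PP_{x_k}(\bar{\tau}_{x_\ell}^\ep < \bar{\tau}_{x_k}^\ep)$, then invokes the quantitative perturbation machinery (Corollary~\ref{thm:pimu}, Proposition~\ref{thm:mupq}) to replace this by $q_\star^\ep(C_\ell|C_k)(1+\widetilde{O}(1/\log M))$ and asserts $q_\star^\ep(C_\ell|C_k) = O(\ep/M)$. That last assertion is attributed in part to Assumption~\ref{ass:metamix}, which in fact is a \emph{lower} bound $q_\star^\ep(C_\ell|C_k)=\Omega(\ep/M)$; the matching upper bound on $q_\star^\ep$ is not stated as a separate result, so the paper's citation here is imprecise. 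Your approach sidesteps this entirely: you pass to the reduced chain $\tilde X^{k,\ep}$, use the standard identity $\EE{x_k}{\bar V_{x'}}=\pi_k^\ep(x')/\pi_k^\ep(x_k)=\Theta(1)$ for the expected number of visits to $x'$ before return to $x_k$, and conclude via the tower property and Markov's inequality that $\PP_{x_k}(X_{\bar{\tau}_{S_\circ}^\ep}^\ep=x_\ell)\le p^\ep(x_\ell|x')\cdot\Theta(1)=O(\ep)$. This is more elementary, self-contained, and does not rely on the $(1+o(1))$ approximation layer at all; it is arguably the cleaner route. One small simplification: you cite Corollary~\ref{thm:pimu} for $\pi_k^\ep(x_k)=\Theta(1/M)$, but Assumption~\ref{ass:cluster} already asserts this directly, so the perturbative corollary is not needed anywhere in your proof.
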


\begin{proof}
If there is no sparse edge from $C_k$ to $C_\ell$, the first return of $X^\ep$ to $S_\circ$ starting from $x_k$ cannot be $x_\ell$, since $X^\ep$ must first travel to a different cluster $C_n\ne C_\ell$ to escape $C_k$ where it will inevitably hit $x_n$. Thus $\PP_{x_k}(X_{\bar{\tau}_{S_\circ}^\ep}^\ep =x_\ell)=0$ and $q_\circ^\ep(x_\ell|x_k) = 0$.

Suppose there exists a sparse edge $(x',x_\ell)$ where $x'\in C_k$. Note that
\begin{equation*}
q_\circ^\ep(x_\ell|x_k) \le \pi_k^\ep(x_k) \PP_{x_k}(\bar{\tau}_{x_\ell}^\ep <\bar{\tau}_{x_k}^\ep) = q_\star^\ep(C_\ell|C_k)\left(1+ \widetilde{O}\left(\frac{1}{\log M}\right)\right) = O(\ep/M)
\end{equation*}
by Corollary \ref{thm:pimu}, Proposition \ref{thm:mupq} and Assumption \ref{ass:metamix}. Moreover letting $\Gamma_{x_k,x'}$ be a simple path with positive $p^\ep$-probability in $C_k$, it holds that
\begin{equation*}
q_\circ^\ep(x_\ell|x_k) \ge \pi_k^\ep(x_k) \PP_{x_k}(\Gamma_{x_k,x'}) p^\ep(x',x_\ell) \ge \Omega(\ep/M),
\end{equation*}
proving the assertion.
\end{proof}

The proof of Proposition \ref{thm:unchained} then follows by repeating the analysis of Theorem \ref{thm:prefull}, replacing dimension $|S|$ by $K$, the maximum number of outgoing edges $M+d_{\out}$ by the number of sparse edges $d_{\out}$, and the probability lower threshold $\Theta(\ep)$ by $\Theta(\ep/M)$. The learning rate $\eta=\Theta(K)$ is justified since $\PP(Y_0=x_k) = \pi^\ep(C_k) = \Theta(1/K)$ by Corollary \ref{thm:uniform}. This results in a convergence rate of $O(\log KT/T)$ for the initial stage and $\exp(-\Omega(\ep^2 T/M^2))$ after thresholding. The details are omitted. \qed

We also require the following dynamical properties of $q_\circ^\ep$:

\begin{lemma}\label{thm:matcha}
The stationary distribution $\pi_\circ^\ep$ of $q_\circ^\ep$ on $S_\circ$ satisfies $\pi_\circ^\ep(x_k) = \Theta(1/K)$, moreover, $\EE{x_k}{\tau_{\circ,x_\ell}^\ep} =O(KM/\ep)$ for all $k,\ell$.
\end{lemma}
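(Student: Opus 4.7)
\textbf{Proof plan for Lemma~\ref{thm:matcha}.} The plan is to first identify the stationary distribution $\pi_\circ^\ep$ explicitly via a censoring argument, then control the hitting times by combining the detailed-balance identity of Proposition~\ref{thm:balance} with the escape-probability bounds from Proposition~\ref{thm:53}.

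For the first claim, I would interpret $q_\circ^\ep$ as a lazy version of the \emph{censored chain} $T(x_k,x_\ell):=\PP_{x_k}(X_{\bar\tau_{S_\circ}^\ep}^\ep=x_\ell)$ on $S_\circ$ induced by recording successive returns of $X^\ep$ to $S_\circ$. Writing $q_\circ^\ep = I + \diag(\pi_k^\ep(x_k))(T-I)$ (which matches the definition both off- and on-diagonal by Lemma~\ref{thm:qq}), any stationary $\pi_\circ^\ep$ of $q_\circ^\ep$ must satisfy $\pi_\circ^\ep(x_k)\pi_k^\ep(x_k)\propto \tilde\pi(x_k)$, where $\tilde\pi$ is stationary for $T$. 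A standard censoring/Kac argument (or direct ergodic-average reasoning: each visit to $S_\circ$ takes one unit of time, so the asymptotic frequency of visiting $x_k$ equals $\pi^\ep(x_k)/\pi^\ep(S_\circ)$) identifies $\tilde\pi(x_k) = \pi^\ep(x_k)/\pi^\ep(S_\circ)$. Combined with the coupling formula $\pi^\ep(x_k)=\pi^\ep(C_k)\pi_k^\ep(x_k)$, this yields $\pi_\circ^\ep(x_k)\propto \pi^\ep(C_k)$, and normalization forces
\begin{equation*}
\pi_\circ^\ep(x_k)=\pi^\ep(C_k)=\Theta(1/K)
\end{equation*}
by Corollary~\ref{thm:uniform}. (As a sanity check, one can verify $\pi_\circ^\ep q_\circ^\ep = \pi_\circ^\ep$ directly by substituting the formula for $q_\circ^\ep$ and using $\sum_k \pi^\ep(x_k)T(x_k,x_\ell)=\pi^\ep(x_\ell)$.)

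For the hitting-time bound, I would apply Proposition~\ref{thm:balance} to the chain $q_\circ^\ep$, which gives
\begin{equation*}
\pi_\circ^\ep(x_k)\PP_{x_k}(\bar\tau_{\circ,x_\ell}^\ep<\bar\tau_{\circ,x_k}^\ep)=\frac{1}{\EE{x_k}{\bar\tau_{\circ,x_\ell}^\ep}+\EE{x_\ell}{\bar\tau_{\circ,x_k}^\ep}}.
\end{equation*}
By Proposition~\ref{thm:53}, $\PP_{x_k}(\bar\tau_{\circ,x_\ell}^\ep<\bar\tau_{\circ,x_k}^\ep)=(1+o_M(1))q_\star^\ep(C_\ell|C_k)=\Omega(\ep/M)$ thanks to Assumption~\ref{ass:metamix}. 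Inserting the just-established $\pi_\circ^\ep(x_k)=\Theta(1/K)$ yields
\begin{equation*}
\EE{x_k}{\bar\tau_{\circ,x_\ell}^\ep}\le \EE{x_k}{\bar\tau_{\circ,x_\ell}^\ep}+\EE{x_\ell}{\bar\tau_{\circ,x_k}^\ep}=O(KM/\ep),
\end{equation*}
and since $k\ne\ell$ implies $\EE{x_k}{\tau_{\circ,x_\ell}^\ep}=\EE{x_k}{\bar\tau_{\circ,x_\ell}^\ep}$ (the case $k=\ell$ is trivial), this finishes the argument.

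I expect the only subtle step to be the identification of $\pi_\circ^\ep$: making precise that the stationary distribution of the censored chain $T$ on $S_\circ$ equals $\pi^\ep|_{S_\circ}/\pi^\ep(S_\circ)$, and that slowing each state down by the factor $\pi_k^\ep(x_k)$ reweights the stationary measure by $1/\pi_k^\ep(x_k)$ to give $\pi_\circ^\ep(x_k)\propto\pi^\ep(C_k)$. Once this is in hand, everything else is a one-line consequence of the detailed-balance identity and results already established in the appendix, so the work is essentially bookkeeping.
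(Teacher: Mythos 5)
Your proof is correct, and the second part (the hitting-time bound via Proposition~\ref{thm:balance} together with Proposition~\ref{thm:53} and Assumption~\ref{ass:metamix}) follows the paper's argument essentially verbatim. For the first part, however, you take a genuinely different and in fact sharper route. The paper establishes $\pi_\circ^\ep(x_k)=\Theta(1/K)$ by bounding the ratio $\pi_\circ^\ep(x_k)/\pi_\circ^\ep(x_\ell)$: it applies the balance identity (Proposition~\ref{thm:balance}) to the chain $q_\circ^\ep$, relates the escape probabilities of $q_\circ^\ep$ to those of $q_\star^\ep$ via Proposition~\ref{thm:53}, and then shows the ratio of $q_\star^\ep$ probabilities is $\Theta(1)$ using the asymptotic-reversibility and uniformity results. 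Your approach instead \emph{identifies $\pi_\circ^\ep$ exactly}: writing $q_\circ^\ep = I + \diag(\pi_k^\ep(x_k))(T-I)$ for the censored chain $T$ on $S_\circ$ and using that state-dependent laziness at rate $\pi_k^\ep(x_k)$ reweights the stationary measure by $1/\pi_k^\ep(x_k)$, the standard censoring/stochastic-complementation fact $\tilde\pi(x_k)\propto\pi^\ep(x_k)$ together with the coupling formula $\pi^\ep(x_k)=\pi^\ep(C_k)\pi_k^\ep(x_k)$ gives the clean identity $\pi_\circ^\ep(x_k)=\pi^\ep(C_k)$. This buys you a closed-form expression (and avoids needing the $\Theta(1)$ ratio bound on $q_\star^\ep$, which the paper attributes a bit loosely to Assumption~\ref{ass:metamix} alone), at the price of having to invoke the censored-chain stationary-distribution fact, which is standard and already implicit in the paper's use of Meyer's stochastic complementation machinery. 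Both are valid; yours is arguably the more transparent derivation.
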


\begin{proof}
By Proposition \ref{thm:balance} and Proposition \ref{thm:53}, we have that
\begin{align*}
\frac{\pi_\circ^\ep(x_k)}{\pi_\circ^\ep(x_\ell)} = \frac{\PP_{x_\ell}(\bar{\tau}_{\circ,x_k}^\ep < \bar{\tau}_{\circ,x_\ell}^\ep)}{\PP_{x_k}(\bar{\tau}_{\circ,x_\ell}^\ep < \bar{\tau}_{\circ,x_k}^\ep)} = \frac{q_\star^\ep(C_k|C_\ell)}{q_\star^\ep(C_\ell|C_k)} \left( 1+\widetilde{O}\left(\frac{1}{\log M}\right)\right) = \Theta(1)
\end{align*}
due to Assumption \ref{ass:metamix}. Then each $\pi_\circ^\ep(x_k)$ must be of order $1/K$. It further follows from an application of \eqref{eq:eyxexy} that
\begin{align*}
\EE{x_k}{\tau_{\circ,x_\ell}^\ep} \le \frac{1}{\pi_\circ^\ep(x_k)\PP_{x_k}(\bar{\tau}_{\circ,x_\ell}^\ep < \bar{\tau}_{\circ,x_k}^\ep)} =\Theta\left(\frac{KM}{\ep}\right).
\end{align*}
\end{proof}

\begin{thm}[Theorem \ref{thm:soda} restated]
For all $k\ne\ell$, $\hat{q}_{\bZ^+}(x_\ell|x_k) = \Theta(1)$ if there exists a sparse edge from $C_k$ to $C_\ell$ or $0$ if not. Moreover, the hitting time $\tau_{x_\ell}^+$ of $x_\ell\in S_\circ$ by $\hat{q}_{\bZ^+}$ satisfies $\EE{x_k}{\tau_{x_\ell}^+} = O(K)$.
\end{thm}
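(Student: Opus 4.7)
The plan is to exploit the fact that, after distillation and time rescaling, $\hat{q}_{\bZ^+}$ agrees with the lazy meta-chain $q_\circ^\ep$ up to a uniform multiplicative speed-up on all non-self transitions. First, Proposition~\ref{thm:unchained} lets me replace $\hat{q}_{\bZ^{(T_{\dist})}}$ by its target $q_\circ^\ep$ with entrywise error $K^{-\omega(1)}$, negligible relative to the $\Theta(\ep/M)$-sized off-diagonal entries. The additive shift of $\beta=\Theta(\log(M/\ep))$ on off-diagonal logits multiplies each off-diagonal softmax weight by $e^\beta=\Theta(M/\ep)$ while leaving the diagonal weight fixed at $q_\circ^\ep(x_k|x_k)\approx 1$. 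Combined with the preceding lemma (which gives $q_\circ^\ep(x_\ell|x_k)=\Theta(\ep/M)$ when a sparse edge $C_k\to C_\ell$ exists and $0$ otherwise), the unnormalized entries become $\Theta(1)$ on sparse-connected pairs and zero elsewhere. Since each row has $O(1)$ nonzero off-diagonal entries (Assumption~\ref{ass:sparse}), the row sum is $\Theta(1)$, and after normalization $\hat{q}_{\bZ^+}(x_\ell|x_k)=\Theta(1)$ on sparse-connected pairs and $0$ otherwise, proving the first claim.

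For the hitting-time bound, I would use the following invariance: at each $x_k$, the off-diagonal entries of $\hat{q}_{\bZ^+}(\cdot|x_k)$ are obtained from those of $q_\circ^\ep(\cdot|x_k)$ by a common multiplicative factor (the shared $e^\beta$ followed by the same row normalizer). Hence the two chains, when restricted to non-self transitions, have \emph{identical jump kernels}; they differ only in the geometric waiting time at each visited state. The waiting time at $x_m$ has mean $\Theta(M/\ep)$ under $q_\circ^\ep$ (since the total escape probability $\sum_{\ell\ne m}q_\circ^\ep(x_\ell|x_m)=\Theta(\ep/M)$ uniformly, using that each cluster has at least one outbound sparse edge) and $\Theta(1)$ under $\hat{q}_{\bZ^+}$. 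Letting $N_{k\ell}$ be the number of non-self transitions needed to reach $x_\ell$ from $x_k$ along the shared jump kernel, a Wald-type identity after conditioning on the jump chain yields
\begin{align*}
\EE{x_k}{\tau_{\circ,x_\ell}^\ep}=\Theta(M/\ep)\cdot\E{N_{k\ell}},\qquad \EE{x_k}{\tau_{x_\ell}^+}=\Theta(1)\cdot\E{N_{k\ell}}.
\end{align*}
Combining with Lemma~\ref{thm:matcha}, which gives $\EE{x_k}{\tau_{\circ,x_\ell}^\ep}=O(KM/\ep)$, forces $\E{N_{k\ell}}=O(K)$, whence $\EE{x_k}{\tau_{x_\ell}^+}=O(K)$ as desired.

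The main technical point is the jump-kernel identity in the second step; once that is in place, the hitting-time estimate reduces to Lemma~\ref{thm:matcha} plus a uniform multiplicative speed-up. Verifying the identity only requires checking that the conditional distribution $\tilde{q}_{k\ell}/\sum_{m\ne k}\tilde{q}_{km}$ is invariant under scaling every off-diagonal $\tilde{q}_{k\ell}$ by the same constant $e^\beta$, which is immediate. Error bookkeeping from the distillation approximation is routine because the error $K^{-\omega(1)}$ is of smaller order than every quantity being estimated, so all $\Theta$-bounds are preserved.
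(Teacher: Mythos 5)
Your proposal is correct and follows essentially the same route as the paper: both prove the first claim by observing that the additive logit shift $\beta$ multiplies each off-diagonal entry of $q_\circ^\ep$ by $e^\beta=\Theta(M/\ep)$, turning the $\Theta(\ep/M)$ escape probabilities into $\Theta(1)$ after renormalization over $O(1)$ nonzero terms, and both prove the hitting-time bound by noting that $q_\circ^\ep$ is a lazy version of $\hat{q}_{\bZ^+}$ with the same jump kernel and geometric holding times of mean $\Theta(M/\ep)$, then dividing the $O(KM/\ep)$ bound from Lemma~\ref{thm:matcha} by this factor. The paper phrases the comparison directly via the lazy-chain relation $q_\circ^\ep(x_\ell|x_k)=\lambda_k\hat{q}_{\bZ^+}(x_\ell|x_k)+(1-\lambda_k)\delta_{k\ell}$ and the factor $\min_k\lambda_k^{-1}$, whereas you unpack it as a Wald identity conditioned on the shared jump chain; these are the same argument stated at different levels of explicitness.
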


\begin{proof}
The chain $q_\circ^\ep$ can be retrieved from $\hat{q}_{\bZ^+}$ by making it `lazy.' Indeed, note that $\hat{q}_{\bZ^+}$ is computed from the distilled model $q_\circ^\ep$ as
\begin{align*}
\hat{q}_{\bZ^+}(x_\ell|x_k) = \begin{cases}
\displaystyle \frac{e^\beta q_{\circ}^\ep(x_\ell|x_k)}{q_{\circ}^\ep(x_k|x_k) + e^\beta\sum_{\ell'\ne k} q_{\circ}^\ep(x_\ell|x_k)} & \ell\ne k,\\
\displaystyle \frac{q_{\circ}^\ep(x_k|x_k)}{q_{\circ}^\ep(x_k|x_k) + e^\beta\sum_{\ell'\ne k} q_{\circ}^\ep(x_\ell|x_k)} & \ell = k.
\end{cases}
\end{align*}
Since the sum in the denominator is over at most $d_{\out}+1$ nonzero terms, by choosing $e^\beta = \Theta(M/\ep)$ we ensure that $\hat{q}_{\bZ^+}(x_\ell|x_k) = \Theta(1)$ for those terms that are nonzero. Conversely, by viewing the logits of $q_\circ^\ep$ as obtained by subtracting $\beta$ from $\bZ^+$, we have
\begin{align*}
q_\circ^\ep(x_\ell|x_k) = \begin{cases}
\displaystyle \frac{e^{-\beta} \hat{q}_{\bZ^+}(x_\ell|x_k)}{\hat{q}_{\bZ^+}(x_k|x_k) + e^{-\beta}\sum_{\ell'\ne k} \hat{q}_{\bZ^+}(x_\ell|x_k)} & \ell\ne k,\\
\displaystyle \frac{\hat{q}_{\bZ^+}(x_k|x_k)}{\hat{q}_{\bZ^+}(x_k|x_k) + e^{-\beta}\sum_{\ell'\ne k} \hat{q}_{\bZ^+}(x_\ell|x_k)} & \ell = k.
\end{cases}
\end{align*}
This may be expressed as
\begin{align*}
q_\circ^\ep(x_\ell|x_k) = \lambda_k \hat{q}_{\bZ^+}(x_\ell|x_k) + (1-\lambda_k) \delta_{k\ell}
\end{align*}
where
\begin{equation*}
\lambda_k = \frac{1}{e^\beta \hat{q}_{\bZ^+}(x_k|x_k) + \sum_{\ell'\ne k} \hat{q}_{\bZ^+}(x_\ell|x_k)} = e^{-\beta} q_\circ^\ep(x_k|x_k) + \sum_{\ell'\ne k} q_\circ^\ep(x_\ell|x_k) = O(\ep/M).
\end{equation*}
Hence $q_\circ^\ep$ is equivalent to the lazy chain obtained from $\hat{q}_{\bZ}^+$ by inserting additional self-transitions of each $x_k$ with probability $1-\lambda_k$. It follows that the expected hitting time $\EE{x_k}{\tau_{\circ,x_\ell}^\ep}$ is larger than $\EE{x_k}{\tau_{x_\ell}^+}$ by at least a factor of $\min_k \lambda_k^{-1} = \Omega(M/\ep)$. Comparing with Lemma \ref{thm:matcha} proves the assertion.
\end{proof}

\section{Proofs for Hardness Results}\label{app:hard}

We first review the definition of a group action.

\begin{defn}[group action]\label{def:group}
Let $(G,\circ)$ be a group with identity $e_G$ and $\calR$ be any set. $G$ is said to \emph{act} on $\calR$ (from the left) if there exists a map $\cdot:G\times\calR\to\calR$ (the \emph{group action}) satisfying the following two axioms:
\begin{enumerate}
    \item (identity) $e_G\cdot r=r$ for all $r\in\calR$,
    \item (composition) $g_1\cdot(g_2\cdot r) = (g_1\circ g_2)\cdot r$ for all $g_1,g_2\in G$ and $r\in\calR$.
\end{enumerate}
It follows that the map $r\mapsto g\cdot r$ is a bijection for all $g\in G$, with inverse $r\mapsto g^{-1}\cdot r$.
\end{defn}

\subsection{Proof of Theorem \ref{thm:sda}}

Let $\calP'\subseteq\calP$ be such that $|\calP'| = \SD(\calP,\calI)$, $h_p$ are pairwise orthogonal and $\calI_p$ are equal for all $p\in\calP'$. This ensures that $f_\theta(x,\calI_p(x))$ is independent of $p$. Choosing $p$ uniformly randomly from $\calP'$, the variance of the gradient of $L$ with respect to $p$ is computed as
\begin{align*}
\Var_p \nabla L(\theta;p)&= \min_{u\in\RR} \EEbig{p}{\left(-2\langle \nabla f_\theta, h_p-f_\theta\rangle_\HH - u\right)^2}\\
&\le \EEbig{p}{4\langle \nabla f_\theta, h_p\rangle_\HH^2}\\
&= \frac{4}{|\calP'|} \sum_{p\in\calP'} \langle\nabla f_\theta, h_p\rangle_\HH^2 \\
&\le \frac{4\norm{\nabla f_\theta}_\HH^2}{|\calP'|} \le\delta^3.
\end{align*}
It follows from Chebyshev's inequality that for all $\theta$,
\begin{align*}
\PP\left(\norm{\nabla L(\theta;p) - \EE{p}{\nabla L(\theta;p)}} \ge\delta\right) \le \delta.
\end{align*}
Thus if all queries to $\nabla L$ are adversarially corrupted with $\delta$ to return $\EE{p}{\nabla L(\theta;p)}$ when $\norm{\nabla L(\theta;p) - \EE{p}{\nabla L(\theta;p)}} <\delta$, the $n$ successive queries will not reveal any information on the ground truth $p$ with probability $1-n\delta$ by a union bound. Hence after running the algorithm, the loss is bounded below by the random guessing error with any fixed $\theta$. Noting that replacing $f_\theta$ with $\bar{f}_\theta(x):=\max\{\min\{f_\theta(x),1\}, -1\}$ does not increase the loss, it follows that
\begin{align*}
\EEbig{p}{L(\theta;p)}& \ge \EEbig{p}{\norm{h_p-\bar{f}_\theta}_\HH^2} \\
&= 1+\norm{\bar{f}_\theta}_\HH^2-2\EEbig{p}{\langle h_p,\bar{f}_\theta\rangle_\HH} \\
&= 1+\norm{\bar{f}_\theta}_\HH^2-\frac{2}{|\calP'|} \left\langle \sum_{p\in\calP'} h_p,\bar{f}_\theta\right\rangle_\HH \\
&= 1+\Norm{\bar{f}_\theta - \frac{1}{|\calP'|} \sum_{p\in\calP'} h_p}_\HH^2-\frac{1}{|\calP'|^2} \sum_{p,p'\in\calP'}\left\langle  h_p,h_{p'}\right\rangle_\HH \\
& \ge 1 - \frac{1}{|\calP'|},
\end{align*}
uniformly for all $\theta$. \qed

\subsection{Proof of Theorem \ref{thm:sqexp}}

Note that $\SD(\calP;\calP)=1$ is trivial. We give three constructions realizing the lower bounds in order of strictness of the additional information constraint.

\paragraph{Lower bounding $\SD(\calP;\varnothing)$.} Suppose $p=p^\ep$ is any kernel satisfying Assumptions \ref{ass:cluster}, \ref{ass:sparse} and $\DD$ is any distribution satisfying Assumption \ref{ass:new}. We first analyze the case where the lower bound of Assumption \ref{ass:new} is $\Omega(\log K)$ and $M\ge\Omega(K)$, since the proof is slightly more involved. Let $s_{k,1},\cdots,s_{k,|C_k|}$ be a labeling of all states in $C_k$ such that all nodes with outbound sparse edges are contained in the first $n_{\out}$ nodes. Define the quantity
\begin{equation*}
q := \left\lfloor\frac{\min_{k\in K} |C_k|}{n_{\out}} \right\rfloor = \Theta(M).
\end{equation*}
For a vector $v\in\ZZ_q^K$, define the corresponding permutation of $S$ (also denoted by $v$) as
\begin{align*}
v(s_{k,i}) = \begin{cases}
s_{k,(i+v_k n_{\out}-1 \mod qn_{\out}) +1} & 1\le i\le qn_{\out}\\
s_{k,i} & qn_{\out} < i\le |C_k|.
\end{cases}
\end{align*}
That is, the first $qn_{\out}$ states of $C_k$ are cyclically shifted by $v_k$-multiples of $n_{\out}$ and the remaining states are left fixed. Denote the pushforward kernel of $p^\ep$ induced by $v$ as
\begin{equation*}
v_\sharp p^\ep(y|x) = p^\ep(v^{-1}(y)|v^{-1}(x)).
\end{equation*}
It is clear that $v_\sharp p^\ep\in\calP$ for all $v$, moreover, Assumptions \ref{ass:cluster}, \ref{ass:sparse} hold when replacing $E_s$ by $E_s(v_\sharp p^\ep)$ since $v$ only permutes states within clusters and does not affect the sparse structure.

\begin{lemma}\label{thm:hamming}
Let $d_H$ denote the Hamming distance on $\ZZ_q^K$. For any two $v,v'\in\ZZ_q^K$ it holds that
\begin{equation*}
    \abs{E_s(v_\sharp p^\ep)} \le n_{\out}d_{\out} K
\end{equation*}
and
\begin{equation*}
\abs{E_s(v_\sharp p^\ep) \cap E_s(v'_\sharp p^\ep)} \le n_{\out}d_{\out} (K-d_H(v,v')).
\end{equation*}
\end{lemma}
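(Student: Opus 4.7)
My plan is to track how the cyclic shifts act on the sparse-edge sources. For the first inequality $|E_s(v_\sharp p^\ep)| \le n_{\out} d_{\out} K$, I would just apply Assumption \ref{ass:sparse} to $v_\sharp p^\ep$: since $v$ is a bijection of $S$ that preserves cluster membership, the pushforward kernel inherits the same structural constraints (at most $n_{\out}$ sparse sources per cluster, each with at most $d_{\out}$ outgoing sparse edges), giving the desired count directly by summing over the $K$ clusters.

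For the second inequality, the key observation I want to exploit is that the sparse sources of $v_\sharp p^\ep$ inside $C_k$ form precisely the block
\begin{equation*}
B_k(v) := \{s_{k,v_k n_{\out}+1},\, s_{k,v_k n_{\out}+2},\, \ldots,\, s_{k,(v_k+1) n_{\out}}\},
\end{equation*}
because the original sparse sources $\{s_{k,1},\ldots,s_{k,n_{\out}}\}$ are cyclically shifted by $v_k n_{\out}$ positions and $v_k \in \{0,\ldots,q-1\}$ keeps the image inside the first $qn_{\out}$ states. As $v_k$ ranges over $\ZZ_q$, the blocks $B_k(0),\ldots,B_k(q-1)$ partition the first $qn_{\out}$ states of $C_k$ and are therefore pairwise disjoint. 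In particular, when $v_k \ne v'_k$, no sparse edge originating in $C_k$ can lie in both $E_s(v_\sharp p^\ep)$ and $E_s(v'_\sharp p^\ep)$, because its source would need to belong to $B_k(v) \cap B_k(v') = \varnothing$.

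From there the bound is a simple count: any edge in the intersection must originate in a cluster $C_k$ with $v_k = v'_k$, and there are exactly $K - d_H(v,v')$ such clusters, each contributing at most $n_{\out} d_{\out}$ sparse edges by the first part of the lemma. The only delicate point I anticipate is keeping the modular arithmetic in the definition of $v$ straight when verifying the block identification, but once $B_k(v)$ is pinned down as the image of the sparse-source set under $v$, the argument reduces to disjointness of shifted cosets within $\{1,\ldots,qn_{\out}\}$.
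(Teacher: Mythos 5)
Your argument is correct and follows essentially the same route as the paper: track the image of the original sparse-source set $\{s_{k,1},\ldots,s_{k,n_{\out}}\}$ under each shift and observe that a shared sparse edge forces its source to lie in both shifted images, whence $v_k = v'_k$. Your formulation via the disjoint blocks $B_k(0),\ldots,B_k(q-1)$ is in fact a slightly tidier presentation of the paper's pointwise argument, since it makes explicit why no coincidence is possible when $v_k \ne v'_k$ regardless of which index $i$ the two preimages use.
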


\begin{proof}
Suppose for some $x\in C_k,y\in C_\ell$ with $k\neq \ell$ we have $(x,y) \in E_s(v_\sharp p^\ep)$. Then $(v^{-1}(x),v^{-1}(y)) \in E_s(p^\ep)$ so that $v^{-1}(x)=s_{k,i}$ for some $1\le i\le n_{\out}$, hence $x=v(s_{k,i})=s_{k,i+v_k n_{\out}}$. If at the same time $(x,y) \in E_s(v'_\sharp p^\ep)$, it must hold that $s_{k,i+v_k n_{\out}} = s_{k,i+v_k' n_{\out}}$ and so $k$ must satisfy $v_k=v_k'$. There are exactly $K-d_H(v,v')$ such clusters and at most $n_{\out}d_{\out}$ sparse edges leading out of each cluster, concluding the second bound. The first bound follows from the second by setting $v=v'$.
\end{proof}
We construct a well-separated subset of $\ZZ_q^K$ via the Gilbert-Varshamov bound.

\begin{lemma}[Gilbert-Varshamov bound]\label{thm:gv}
The maximum size $A_q(K,d)$ of a code of length $K$ over an alphabet of size $q$ with minimum Hamming distance $d$ satisfies
\begin{equation*}
A_q(K,d) \ge \frac{q^K}{\Vol_q(K,d-1)}, \quad \Vol_q(K,d)=\sum_{i=0}^d\binom{K}{i}(q-1)^i.
\end{equation*}
Moreover for $\tau\in [0,1-1/q]$ it holds that
\begin{equation*}
\Vol_q(K,\tau K) \le q^{H_q(\tau)K},
\end{equation*}
where $H_q$ is the $q$-ary entropy function
\begin{equation*}
H_q(\tau) = \tau\log_q(q-1) -\tau\log_q \tau -(1-\tau)\log_q(1-\tau).
\end{equation*}
\end{lemma}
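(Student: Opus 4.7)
The plan is to prove the two parts separately, using only standard combinatorial arguments; no machinery from the preceding metastability analysis is needed here, since this lemma is used purely as a black-box coding-theoretic ingredient for constructing the well-separated family of permutations in $\ZZ_q^K$ via Lemma~\ref{thm:hamming}.

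For the lower bound on $A_q(K,d)$, I would give the classical greedy sphere-covering construction. Starting from an empty code $\mathcal{C}\subseteq\ZZ_q^K$, repeatedly add any point whose Hamming distance to every current codeword is at least $d$, until no such point remains. By construction the resulting code has minimum distance $\ge d$. At termination, every $v\in\ZZ_q^K$ must lie within Hamming distance $d-1$ of some codeword (otherwise we could extend $\mathcal{C}$, contradicting termination), so the Hamming balls of radius $d-1$ centred at codewords cover $\ZZ_q^K$. Each such ball has size exactly $\Vol_q(K,d-1)$, giving $|\mathcal{C}|\cdot\Vol_q(K,d-1)\ge q^K$, which rearranges to the stated bound on $A_q(K,d)$.

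For the entropy bound on the volume, I would work directly from the definition $q^{H_q(\tau)}=(q-1)^\tau\tau^{-\tau}(1-\tau)^{-(1-\tau)}$. This rewrites each term as
\[
q^{-H_q(\tau)K}\binom{K}{i}(q-1)^i \;=\; \binom{K}{i}\tau^i(1-\tau)^{K-i}\cdot\left(\frac{\tau}{(q-1)(1-\tau)}\right)^{\tau K - i}.
\]
The hypothesis $\tau\le 1-1/q$ is equivalent to $\tau/((q-1)(1-\tau))\le 1$, so for indices $i\le\tau K$ the extra factor on the right is at most $1$. Summing $i$ from $0$ to $\lfloor\tau K\rfloor$ and comparing to the full binomial expansion
\[
\sum_{i=0}^K\binom{K}{i}\tau^i(1-\tau)^{K-i}=1
\]
yields $q^{-H_q(\tau)K}\Vol_q(K,\tau K)\le 1$, as required.

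The only nontrivial step is keeping track of the inequality $\tau/((q-1)(1-\tau))\le 1$ and pairing it with the summation range $i\le\tau K$ so that the exponent $\tau K-i$ is nonnegative and the factor can be bounded by $1$ rather than the other way; this is exactly where the hypothesis $\tau\le 1-1/q$ enters, and is the main bookkeeping obstacle in an otherwise routine calculation.
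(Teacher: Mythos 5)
The paper states this lemma without proof, treating it as a black-box classical result from coding theory, so there is no paper proof to compare against. Your argument is the standard textbook one and is correct: the greedy sphere-covering argument for the bound on $A_q(K,d)$ is the usual Gilbert--Varshamov proof, and the algebraic rewriting
\[
q^{-H_q(\tau)K}\binom{K}{i}(q-1)^i = \binom{K}{i}\tau^i(1-\tau)^{K-i}\left(\frac{\tau}{(q-1)(1-\tau)}\right)^{\tau K - i}
\]
together with $\tau\le 1-1/q\iff \tau/((q-1)(1-\tau))\le 1$ and comparison against the Bernoulli$(\tau)$ binomial sum is the standard proof of the volume-vs-entropy bound. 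You correctly isolate the role of the hypothesis $\tau\le 1-1/q$ (it makes the extra factor $\le 1$ precisely when the exponent $\tau K - i$ is nonnegative, i.e.\ for the indices in the sum). One could add the minor bookkeeping remark that $\tau K$ should be read as $\lfloor\tau K\rfloor$ and that the $\tau=0$ endpoint uses the convention $\tau^{-\tau}=1$, but these do not affect correctness.
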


While the classical bound only guarantees the existence of large subsets with linear (at least $K/q$) overlapping bits, we can obtain a better separation by scaling $q$ (equivalently, $M$) along with $K$. In particular, choosing the relative overlap as $O(\log q/q)$ (note the use of the natural logarithm), we obtain:
\begin{lemma}
For $C>0$, $\tau = 1-\log q/(Cq)$ and sufficiently large $q$, it holds that
\begin{equation}\label{eq:gv}
A_q\left(K,\tau K\right) \ge q^{\frac{\log\log q}{2Cq}K}.
\end{equation}
\end{lemma}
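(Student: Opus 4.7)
The plan is to apply the Gilbert-Varshamov bound of Lemma~\ref{thm:gv} directly, reducing the task to sharpening the $q$-ary entropy bound $H_q(\tau)$ in the near-trivial regime $\tau = 1 - \log q/(Cq)$. Set $\epsilon := 1-\tau = \log q/(Cq)$; first verify the admissibility hypothesis $\tau \le 1-1/q$, which reduces to $\log q \ge 1/C$ and holds for $q$ sufficiently large. Since $\Vol_q(K,\tau K-1) \le \Vol_q(K,\tau K) \le q^{H_q(\tau)K}$, the Gilbert-Varshamov bound yields $A_q(K,\tau K) \ge q^{(1-H_q(\tau))K}$, so it suffices to prove $1 - H_q(\tau) \ge \log\log q/(2Cq)$.

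I would then expand each of the three terms of $H_q(\tau)$ for $\tau$ close to $1$. The term $\tau\log_q(q-1)$ equals $1-\epsilon + O(1/(q\log q))$, using $\log_q(q-1) = 1 - 1/(q\log q) + O(1/(q^2\log q))$ derived from $\log(q-1) = \log q - 1/q - O(1/q^2)$. The term $-\tau\log_q\tau = -(1-\epsilon)\log_q(1-\epsilon)$ Taylor expands as $\epsilon/\log q + O(\epsilon^2/\log q) = 1/(Cq) + O(\log q/q^2)$. The dominant correction is the last term: substituting $\log(1/\epsilon) = \log q - \log\log q + \log C$,
$$-\epsilon\log_q\epsilon = \frac{\epsilon\log(1/\epsilon)}{\log q} = \epsilon - \frac{\log\log q}{Cq} + \frac{\log C}{Cq}.$$
Summing the three contributions, the $\pm\epsilon$ pieces cancel and one obtains
$$1 - H_q(\tau) = \frac{\log\log q}{Cq} - \frac{1 + \log C}{Cq} + O\!\left(\frac{1}{q\log q}\right),$$
where the last error absorbs the $\log_q(q-1)$ and higher-order Taylor corrections.

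The main obstacle is purely bookkeeping: one must confirm that the absolute error term on the right is smaller than $\log\log q/(2Cq)$, i.e., that $\log\log q$ exceeds roughly $2(1+\log C)$ plus a vanishing correction. This clearly holds once $q$ is sufficiently large in terms of $C$, giving $1 - H_q(\tau) \ge \log\log q/(2Cq)$. Substituting back into the Gilbert-Varshamov inequality yields the claimed $A_q(K,\tau K) \ge q^{\log\log q\cdot K/(2Cq)}$, completing the proof.
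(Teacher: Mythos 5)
Your proposal is correct and follows essentially the same route as the paper: apply the Gilbert--Varshamov bound from the preceding lemma, reduce to showing $1-H_q(\tau)\ge \log\log q/(2Cq)$, expand the three terms of $H_q(\tau)$ near $\tau=1$, observe the $\pm\epsilon$ cancellation so that $-\epsilon\log_q\epsilon$ supplies the dominant $\log\log q/(Cq)$ correction, and absorb the constant $(1+\log C)/(Cq)$ and lower-order terms by taking $q$ large. The only cosmetic difference is that the paper tracks the error via the explicit two-sided inequality $0\le x-\log(1+x)\le x^2$ whereas you use Taylor expansion with big-$O$ bookkeeping; both yield the same leading behavior and the same conclusion.
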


\begin{proof}
Using the inequality $0\le x-\log(1+x)\le x^2$ for $\abs{x}\le 1/2$, we may bound
\begin{align*}
H_q(\tau) &= \left(1-\frac{\log q}{Cq}\right) \log_q(q-1) -\left(1-\frac{\log q}{Cq}\right) \log_q \left(1-\frac{\log q}{Cq}\right) -\frac{\log q}{Cq}\log_q \frac{\log q}{Cq} \\
&= 1-\frac{\log \log q}{Cq} + \frac{\log C}{Cq} + \left(\frac{1}{\log q} - \frac{1}{Cq}\right) \left(\log\left(1-\frac{1}{q}\right) - \log\left(1-\frac{\log q}{Cq}\right) \right) \\
&\le 1-\frac{\log \log q}{Cq} + \frac{\log C}{Cq} + \frac{1}{\log q} \left(-\frac{1}{q} + \frac{\log q}{Cq} +\frac{(\log q)^2}{C^2q^2}\right) \\
&\le 1-\frac{\log \log q}{2Cq}
\end{align*}
for sufficiently large $q$. The statement follows from Lemma \ref{thm:gv}.
\end{proof}
Now denote by $\calV\subset\ZZ_q^K$ the $\tau K$-separated subset of size $A_q(K,\tau K)$. For any distinct $v,v'\in\calV$, by Lemma \ref{thm:hamming} we have
\begin{equation*}
\abs{E_s(v_\sharp p^\ep) \cap E_s(v'_\sharp p^\ep)} \le n_{\out}d_{\out} (K-\tau K) = n_{\out}d_{\out}\left(\frac{\log q}{Cq}\right) K.
\end{equation*}
On the other hand, it holds that
\begin{equation*}
\abs{\MM_{v_\sharp p^\ep}(X_{\inn},X_{\out}) \cap E_s(v_\sharp p^\ep)} =\abs{\MM_{p^\ep}(v^{-1}(X_{\inn}),v^{-1}(X_{\out})) \cap E_s(p^\ep)}  \ge c\log K
\end{equation*}
due to Assumption \ref{ass:sep}, and similarly for $v'$. Since $M\ge cK$, we may choose $C$ a large enough constant so that
\begin{equation*}
C > \frac{n_{\out}d_{\out}}{c}\cdot \frac{K\log q}{q\log K}.
\end{equation*}
Then since
\begin{equation*}
\abs{\MM_{v_\sharp p^\ep}(X_{\inn},X_{\out}) \cap E_s(v_\sharp p^\ep)}, \abs{\MM_{v'_\sharp p^\ep}(X_{\inn},X_{\out}) \cap E_s(v'_\sharp p^\ep)} > \abs{E_s(v_\sharp p^\ep) \cap E_s(v'_\sharp p^\ep)}
\end{equation*}
it must hold that
\begin{equation*}
\MM_{v_\sharp p^\ep}(X_{\inn},X_{\out}) \cap E_s(v_\sharp p^\ep) \neq \MM_{v_\sharp' p^\ep}(X_{\inn},X_{\out}) \cap E_s(v_\sharp' p^\ep).
\end{equation*}
Without loss of generality, we may suppose there exists a sparse edge $\xi = (X_{t-1},X_t)$ of $v_\sharp p^\ep$ that is included in the left-hand side of the above but not the right. Since $\calA(\xi)$ is included in the computation of $r_{\calA,v_\sharp p^\ep}$ along $\MM_{v_\sharp p^\ep}(X_{\inn},X_{\out})$ but not of $r_{\calA,v_\sharp' p^\ep}$ along $\MM_{v_\sharp' p^\ep}(X_{\inn},X_{\out})$, denoting $\calA^-:= \calA|_{S\times S\setminus\{\xi\}}$, it follows that
\begin{align*}
&\langle h_{v_\sharp p^\ep}, h_{v'_\sharp p^\ep}\rangle_{\HH}\\
&=
\EEbig{X_{\inn},X_{\out},\calA}{h_{v_\sharp p^\ep}(X_{\inn},X_{\out},\calA) h_{v'_\sharp p^\ep}(X_{\inn},X_{\out},\calA)}\\
&= \EEbig{X_{\inn},X_{\out},\calA}{\phi\circ r_{\calA,v_\sharp p^\ep}(\MM_{v_\sharp p^\ep}(X_{\inn},X_{\out})) \phi\circ r_{\calA,v'_\sharp p^\ep}(\MM_{v_\sharp' p^\ep}(X_{\inn},X_{\out}))}\\
&= \EEbig{X_{\inn},X_{\out},\calA^-}{\EEbig{\calA(\xi)|\calA^-}{\phi\circ r_{\calA,v_\sharp p^\ep}(\MM_{v_\sharp p^\ep}(X_{\inn},X_{\out}))} \phi\circ r_{\calA,v'_\sharp p^\ep}(\MM_{v_\sharp' p^\ep}(X_{\inn},X_{\out}))}\!.
\end{align*}
Adopting the shorthand $g1_A = g$ if $A$ holds and $g1_A = e_G$ if not, we may write
\begin{equation*}
r_{\calA,v_\sharp p^\ep}(\MM_{v_\sharp p^\ep}(X_{\inn},X_{\out})) = \left(\calA_L\circ \calA(X_{t-1},X_t) \circ\calA_R \right) \cdot \psi(X_{\inn})
\end{equation*}
where
\begin{align*}
& \calA_L = \calA(X_{T-1},X_T) 1_{\{(X_{T-1},X_T)\in E_s(v_\sharp p^\ep)\}} \circ\cdots \calA(X_{t},X_{t+1}) 1_{\{(X_{t},X_{t+1})\in E_s(v_\sharp p^\ep)\}},\\
& \calA_R = \calA(X_{t-2},X_{t-1}) 1_{\{(X_{t-2},X_{t-1})\in E_s(v_\sharp p^\ep)\}} \circ\cdots \calA(X_0,X_1) 1_{\{(X_0,X_1)\in E_s(v_\sharp p^\ep)\}}.
\end{align*}
Since $\calA(X_{t-1},X_t)$ is uniformly distributed on $G$, the composition $\calA_L\circ \calA(X_{t-1},X_t) \circ\calA_R$ is also uniformly distributed on $G$ conditioned on $\calA^-$; note that this step relies crucially on the existence of left and right inverses in $G$. Hence
\begin{equation*}
\EEbig{\calA(\xi)|\calA^-}{\phi\circ r_{\calA,v_\sharp p^\ep}(\MM_{v_\sharp p^\ep}(X_{\inn},X_{\out}))} = 0
\end{equation*}
for all $X_{\inn},X_{\out},\calA^-$, so that $\langle h_{v_\sharp p^\ep}, h_{v'_\sharp p^\ep}\rangle_{\HH} = 0$. Thus the subset $\{h_{v_\sharp p^\ep}:v\in\calV\}$ of $\HH$ is orthogonal with size at least
\begin{equation*}
A_q\left(K,\tau K\right) \ge q^{\frac{\log\log q}{2Cq}K} \ge K^{\Omega(\log\log K)}
\end{equation*}
if $q = \Theta(K)$, which grows faster than any polynomial in $K$. In the case that $q/K\to\infty$, we may choose a smaller $q'=cK$ at the beginning to obtain the same lower bound. Therefore we have shown that $\SD(\calP;\varnothing) \ge K^{\omega(1)}$.

Finally, if the minimum length in Assumption \ref{ass:sep} is instead set to scale linearly as $cK$ for some $0<c<1$, it suffices to set $C=1$ and choose $q$ (equivalently $M$) a large enough constant satisfying
\begin{equation*}
n_{\out}d_{\out} \frac{\log q}{q} < c
\end{equation*}
to apply the same argument. Then the lower bound \eqref{eq:gv} becomes exponential in $K$.

\begin{figure}[t]
\centering
\begin{tikzpicture}

\draw[fill=lightgray!50] (-1,0) circle [radius=1];

\draw[fill=lightgray!50] (7,0) circle [radius=1];

\node (A) at (0,0) {\(\bullet\)};
\node (B) at (1.5,0) {\(\bullet\)};
\node (C) at (3,0) {\(\bullet\)};
\node (D) at (4.5,0) {\(\bullet\)};
\node (E) at (6,0) {\(\bullet\)};

\draw[->, bend left=20] (A) to (B);
\draw[->, dashed, bend left=20] (B) to (C);
\draw[->, bend left=20] (C) to (D);
\draw[->, bend left=20] (D) to (E);
\draw[->, bend left=20] (E) to (D);
\draw[->, bend left=20] (D) to (C);
\draw[->, dashed, bend left=20] (C) to (B);
\draw[->, bend left=20] (B) to (A);

\end{tikzpicture}
\caption{Sparse edge construction for the no-search scenario. The two circles represent the original dense clusters. Dashed edges have probability $\ep$.}
\label{fig1}
\end{figure}
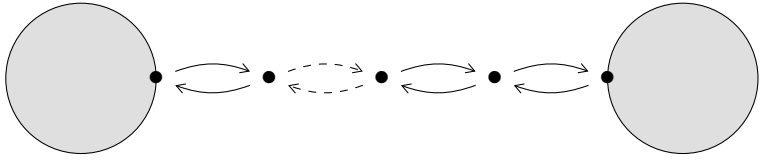

\paragraph{Lower bounding $\SD(\calP;\MM)$.} We take an arbitrary kernel $p\in\PP$ and make the following modification, pictured in Figure \ref{fig1}. Each sparse edge $(x,y)$ is replaced by a set of states $z_0=x,z_1,\cdots,z_{q-1},z_q=y$ such that each neighboring pair $z_t,z_{t+1}$ for $t\in\ZZ_q$ are connected to each other via bidirectional edges of probability $O(1)$. However, one specific pair $z_t,z_{t+1}$ is to be connected to each other with probability $\ep$. Denote this index by $v_j$ where $j\in [J]$ numbers the set of directly connected clusters; it holds that $J=|E_s|=\Theta(K)$ by Assumption \ref{ass:sparse}. The points $z_t$ for $t\le v_j$ and for $t>v_j$ are appended to the clusters containing $x$ and $y$, respectively. Since this is a bounded number of points all connected by constant probability edges, the extended clusters are still rapidly mixing. Then each vector $v=(v_j)\in \ZZ_q^J$ determines a kernel $p_v\in \calP$, and
\begin{equation*}
\abs{E_s(p_v) \cap E_s(p_{v'})} =2(J-d_H(v,v'))
\end{equation*}
holds for all $v,v'\in\ZZ_q^J$. We now repeat the argument from before to show orthogonality: by Lemma \ref{thm:gv} there exists a $\tau J$-separated subset $\calV\subset\ZZ_q^J$ of size $A_q(J,\tau J)$ for $\tau=1-\log q/q$, and by choosing $q$ large enough we can ensure
\begin{equation*}
\abs{\MM_{p_v}(X_{\inn},X_{\out}) \cap E_s(p_v)} \ge cK > \left(\frac{2\log q}{q}\right) J \ge \abs{E_s(p_v) \cap E_s(p_{v'})}.
\end{equation*}
The key aspect of this construction is that the support of $p_v$ does not depend on $v$, as only the position of the sparse edge among $q$ candidates change with $v$. This implies that querying the path $\MM_{p_v}(X_{\inn},X_{\out})$ (and indeed the entire set of edges) does not reveal any information about the ground truth $v$, and $\{p_v:v\in\calV\}$ satisfies the conditions to compute the SQ dimension with access to $\MM$. Hence we have shown that
\begin{equation*}
\SD(\calP;\MM) \ge A_q(J,\tau J) \ge e^{\Omega(K)}.
\end{equation*}

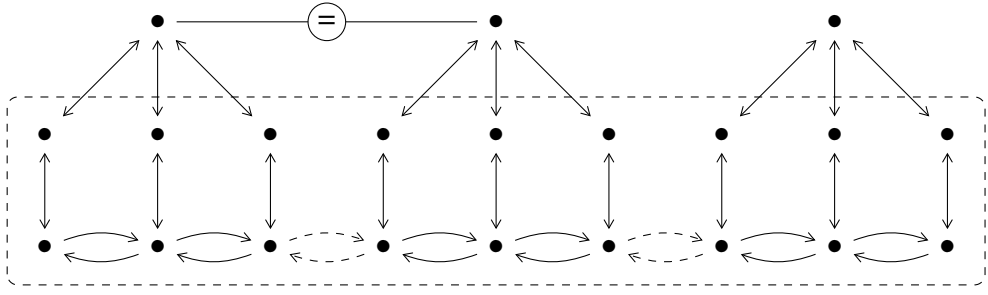
\begin{figure}[t]
\centering
\begin{tikzpicture}

\node (A) at (0,0) {\(\bullet\)};
\node (B) at (1.5,0) {\(\bullet\)};
\node (C) at (3,0) {\(\bullet\)};
\node (D) at (4.5,0) {\(\bullet\)};
\node (E) at (6,0) {\(\bullet\)};
\node (F) at (7.5,0) {\(\bullet\)};
\node (G) at (9,0) {\(\bullet\)};
\node (H) at (10.5,0) {\(\bullet\)};
\node (I) at (12,0) {\(\bullet\)};

\node (b1) at (1.5,1.5) {\(\bullet\)};
\node (e1) at (6,1.5) {\(\bullet\)};
\node (h1) at (10.5,1.5) {\(\bullet\)};
\node (b2) at (1.5,3) {\(\bullet\)};
\node (e2) at (6,3) {\(\bullet\)};
\node (h2) at (10.5,3) {\(\bullet\)};

\node (a) at (0,1.5) {\(\bullet\)};
\node (c) at (3,1.5) {\(\bullet\)};
\node (d) at (4.5,1.5) {\(\bullet\)};
\node (f) at (7.5,1.5) {\(\bullet\)};
\node (g) at (9,1.5) {\(\bullet\)};
\node (i) at (12,1.5) {\(\bullet\)};

\draw[->, bend left=20] (A) to (B);
\draw[->, bend left=20] (B) to (C);
\draw[->, dashed, bend left=20] (C) to (D);
\draw[->, bend left=20] (D) to (E);
\draw[->, bend left=20] (E) to (F);
\draw[->, dashed, bend left=20] (F) to (G);
\draw[->, bend left=20] (G) to (H);
\draw[->, bend left=20] (H) to (I);
\draw[->, bend left=20] (I) to (H);
\draw[->, bend left=20] (H) to (G);
\draw[->, dashed, bend left=20] (G) to (F);
\draw[->, bend left=20] (F) to (E);
\draw[->, bend left=20] (E) to (D);
\draw[->, dashed, bend left=20] (D) to (C);
\draw[->, bend left=20] (C) to (B);
\draw[->, bend left=20] (B) to (A);

\draw[<->] (B) to (b1);
\draw[<->] (b1) to (b2);
\draw[<->] (E) to (e1);
\draw[<->] (e1) to (e2);
\draw[<->] (H) to (h1);
\draw[<->] (h1) to (h2);

\draw[<->] (A) to (a);
\draw[<->] (C) to (c);
\draw[<->] (D) to (d);
\draw[<->] (F) to (f);
\draw[<->] (G) to (g);
\draw[<->] (I) to (i);

\draw[<->] (a) to (b2);
\draw[<->] (c) to (b2);
\draw[<->] (d) to (e2);
\draw[<->] (f) to (e2);
\draw[<->] (g) to (h2);
\draw[<->] (i) to (h2);

\draw (b2) -- (e2);
\draw[fill=white] (3.75,3) circle (0.25);
\node at (3.75,3) {=};

\draw[dashed, rounded corners] (-0.5,-0.5) rectangle (12.5,2);

\end{tikzpicture}
\caption{Graph construction for the local search scenario ($r=2$). Dashed edges have probability $\ep$. A local neighborhood of maximum distance one from the original graph is shown in the dashed box, which the learner is assumed to have full access to.}
\label{fig2}
\end{figure}

\paragraph{Lower bounding $\SD(\calP;\nbd(\MM))$.} We construct the graph depicted in Figure \ref{fig2} as follows. We start with $nK$ clusters of size $M$ laid out in side by side and connect neighboring clusters with bidirectional edges of probability $\ep$. From all states extend a `rod' of probability $O(1)$ bidirectional edges of bounded length $r$ (the vertically arranged states, here $r=2$), similarly to Figure \ref{fig1}. Join the endpoints of all rods originating from each cluster into a single `endpoint' state. For $\DD$, we assume that $(X_{\inn},X_{\out})$ are sampled only from the original clusters (bottom horizontal line of states) and $\MM_p$ only returns paths along this line.

Choose a size $K$ subset $B$ of the low probability edges to be sparse edges, viewed as a subset of $[nK]$. For each of the edges $(x,y)$ not in $B$, identify the endpoint states of the clusters containing $x,y$. The identified clusters will merge into a single larger rapidly mixing cluster, so that $(x,y)$ is indeed no longer a sparse edge. Denote the resulting kernel as $p_B$. We have the following intersection constraint bound:
\begin{lemma}
For any $n\ge 5$, there exists a size $e^{\Omega(K)}$ set $\BB$ of size $K$ subsets of $[nK]$ such that $|B\cap B'|\le cK$ for all $B,B'\in\BB$.
\end{lemma}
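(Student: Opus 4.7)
The plan is to use a standard Gilbert--Varshamov / probabilistic existence argument: sample $B_1,\ldots,B_N$ of size $K$ uniformly and independently from $[nK]$, and show that with positive probability no pair has intersection exceeding $cK$.

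First I would analyze a single pair. For fixed $i\neq j$, conditional on $B_i$ the size $|B_i\cap B_j|$ is hypergeometric with parameters $(nK,K,K)$ and mean $K^2/(nK)=K/n$. Hoeffding's tail inequality for the hypergeometric distribution then gives
\[
\PP(|B_i\cap B_j|\ge cK)\le\exp\!\left(-2K(c-1/n)^2\right)=:e^{-\beta K}
\]
for any $c>1/n$, with $\beta:=2(c-1/n)^2>0$. Since $n\ge 5$, there is ample slack to set, e.g., $c=1/2$, producing a fixed positive $\beta\ge 2(3/10)^2$ independent of $K$.

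Next I would take $N=\lceil e^{\beta K/4}\rceil$ and union bound over the $\binom{N}{2}$ pairs:
\[
\PP(\exists\,i<j:|B_i\cap B_j|>cK)\le\binom{N}{2}e^{-\beta K}\le\tfrac{1}{2}e^{-\beta K/2}<1
\]
for all sufficiently large $K$. Hence some realization $\BB=\{B_1,\ldots,B_N\}$ achieves the desired bound and has $|\BB|=N=e^{\Omega(K)}$, proving the lemma.

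The only real obstacle is that Hoeffding's bound is most commonly stated for i.i.d.\ Bernoulli sampling, whereas here we are sampling without replacement; however, this extension is classical, and Hoeffding's original 1963 paper already handles the hypergeometric case. A small additional point worth highlighting is that the constant $c$ is under our control: taking $c$ close to $1/n$ (with $n\ge 5$ fixed but possibly enlarged if needed) makes $cK$ strictly smaller than the $\Omega(K)$ lower bound on $|\MM_{p_B}(X_{\inn},X_{\out})\cap E_s(p_B)|$ that the surrounding orthogonality argument in the bound on $\SD(\calP;\nbd(\MM))$ requires.
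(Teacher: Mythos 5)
Your argument is correct but takes a genuinely different route from the paper. The paper uses a deterministic greedy construction in the spirit of the Gilbert--Varshamov bound: it adds $K$-subsets one at a time, notes that each newly added $B$ rules out at most $\binom{K}{cK}\binom{nK-cK}{K-cK}$ future candidates, and divides the total count $\binom{nK}{K}$ by this ``ball volume'' to conclude $|\BB|\ge e^{\Omega(K)}$. You instead use the probabilistic method: sample $N$ subsets uniformly and independently, observe that each pairwise intersection is hypergeometric with mean $K/n$, apply Hoeffding's inequality for sampling without replacement to get $\PP(|B_i\cap B_j|\ge cK)\le e^{-\beta K}$ with $\beta=2(c-1/n)^2>0$, and union bound. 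Both arguments are standard and give an exponential lower bound; the greedy one is constructive and avoids any concentration inequality, while yours is shorter and makes the dependence of the exponent on $c$ and $n$ fully explicit, at the small cost of invoking the hypergeometric Hoeffding bound. One thing worth stating explicitly in your version: if two samples coincide, $|B_i\cap B_j|=K>cK$, so the union-bound event already covers non-distinctness, and the surviving realization automatically consists of $N$ distinct subsets. Also your remark at the end about choosing $c$ to match the $\Omega(K)$ lower bound in the $\SD(\calP;\nbd(\MM))$ construction is exactly the right sanity check, and consistent with how the paper leaves $c$ as a tunable constant.
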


\begin{proof}
We construct $\BB$ via a greedy algorithm similarly to the Gilbert-Varshamov bound. Start with $\BB=\varnothing$ and add any $B\subset[nK]$ of size $K$ not already in $\BB$ to $\BB$. Each new element blocks at most
\begin{equation*}
\binom{K}{cK}\binom{nK-cK}{K-cK}
\end{equation*}
elements from being added to $\BB$. Hence the maximum size of $\BB$ is at least
\begin{equation*}
\binom{nK}{K} \binom{K}{cK}^{-1} \binom{nK-cK}{K-cK}^{-1} = \binom{nK}{K} \binom{K}{cK}^{-2} \ge O(n^K K^{-1/2})\cdot 2^{-2K} \ge e^{\Omega(K)}
\end{equation*}
for $n\ge 5$.
\end{proof}
Then $\abs{E_s(p_B) \cap E_s(p_{B'})} = |B\cap B'| \le cK$ and we can repeat the same argument to show orthogonality. Furthermore by taking $r$ sufficiently large, the local neighborhood of any path contained in the bottom horizontal line (which must be contained in the dashed area in Figure \ref{fig2}) is isomorphic for all $B\in\BB$, since it cannot query the endpoint states to identify which clusters are actually connected. Hence $\{p_B:B\in\BB\}$ satisfies the conditions to compute the SQ
dimension with access to $\nbd(\MM)$, and thus
\begin{equation*}
\SD(\calP;\nbd(\MM)) \ge |\BB| \ge e^{\Omega(K)}.
\end{equation*}
\qed

\end{document}